\documentclass{article}

\pdfoutput=1

\usepackage[utf8]{inputenc} 
\usepackage[T1]{fontenc}    
\usepackage{hyperref}       
\usepackage{url}            
\usepackage{booktabs}       
\usepackage{amsfonts}       
\usepackage{nicefrac}       
\usepackage{microtype}      
\usepackage{hyperref,fullpage,graphicx,amsmath,amsfonts,subcaption,amssymb,bm,url,breakurl,epsfig,epsf,color,MnSymbol,mathbbol,fmtcount,semtrans,caption,multirow,comment}
\usepackage{wrapfig}
\usepackage{mathtools}

  \bibliographystyle{alpha}

\usepackage{dsfont}

\usepackage{hyperref,graphicx,amsmath,amsfonts,amssymb,bm,url,breakurl,epsfig,epsf,color,MnSymbol,mathbbol,fmtcount,semtrans,caption,subcaption,multirow,comment, boldline}
\usepackage[noend]{algpseudocode}
\usepackage{wrapfig}
\usepackage{enumitem}
\usepackage{amssymb}

\usepackage[utf8]{inputenc} 
\usepackage[T1]{fontenc}    
\usepackage{url}            
\usepackage{booktabs}       
\usepackage{amsfonts}       
\usepackage{nicefrac}       
\usepackage{microtype}      

\makeatletter
\providecommand*{\boxast}{%
  \mathbin{
    \mathpalette\@boxit{*}%
  }%
}
\newcommand*{\@boxit}[2]{%
  \sbox0{$\m@th#1\Box$}%
  \ifx#1\displaystyle \ht0=\dimexpr\ht0+.05ex\relax \fi
  \ifx#1\textstyle \ht0=\dimexpr\ht0+.05ex\relax \fi
  \ifx#1\scriptstyle \ht0=\dimexpr\ht0+.04ex\relax \fi
  \ifx#1\scriptscriptstyle \ht0=\dimexpr\ht0+.065ex\relax \fi
  \sbox2{$#1\vcenter{}$}
  \rlap{%
    \hbox to \wd0{%
      \hfill
      \raisebox{%
        \dimexpr.5\dimexpr\ht0+\dp0\relax-\ht2\relax
      }{$\m@th#1#2$}%
      \hfill
    }%
  }%
  \Box
}
\makeatother

  \makeatletter
\def\BState{\State\hskip-\ALG@thistlm}
\makeatother

  \usepackage{mathtools}

\usepackage{titlesec}

\usepackage{tikz}
\usepackage{pgfplots}
\usetikzlibrary{pgfplots.groupplots}

\setcounter{secnumdepth}{4}

\titleformat{\paragraph}
{\normalfont\normalsize\bfseries}{\theparagraph}{1em}{}
\titlespacing*{\paragraph}
{0pt}{3.25ex plus 1ex minus .2ex}{1.5ex plus .2ex}

\usepackage{movie15}

\usepackage{caption}
\usepackage[bottom,hang,flushmargin]{footmisc} 

\setlength{\captionmargin}{30pt}

\newcommand{\tsn}[1]{{\left\vert\kern-0.25ex\left\vert\kern-0.25ex\left\vert #1 
    \right\vert\kern-0.25ex\right\vert\kern-0.25ex\right\vert}}

\definecolor{darkred}{RGB}{150,0,0}
\definecolor{darkgreen}{RGB}{0,150,0}
\definecolor{darkblue}{RGB}{0,0,200}
\hypersetup{colorlinks=true, linkcolor=darkred, citecolor=darkgreen, urlcolor=darkblue}

\newtheorem{theorem}{Theorem}[section]

\newtheorem{assumption}{Assumption}

\newtheorem{lemma}[theorem]{Lemma}
\newtheorem{corollary}[theorem]{Corollary}
\newtheorem{propo}[theorem]{Proposition}

\newtheorem{remark}[subsection]{Remark}


\newcommand{\beq}{\begin{equation}}

\newcommand{\eeq}{\end{equation}}

\newcommand{\nn}{\nonumber}

\newcommand{\A}{{\mtx{A}}}




\newcommand{\Ub}{{\mtx{U}}}

\newcommand{\V}{{\mtx{V}}}

\newcommand{\Sb}{\mtx{S}}
\newcommand{\Gb}{{\mtx{G}}}

\newcommand{\diag}[1]{\text{diag}(#1)}

\newcommand{\Dc}{{\cal{D}}}

\newcommand{\Pb}{{\mtx{P}}}
\newcommand{\Tb}{{\mtx{T}}}

\newcommand{\Cb}{{\mtx{C}}}
\newcommand{\Eb}{{\mtx{E}}}

\newcommand{\Gc}{{\cal{G}}}

\newcommand{\bSi}{{\boldsymbol{{\Sigma}}}}

\newcommand{\Db}{{\mtx{D}}}
\newcommand{\db}{{\vct{d}}}

\newcommand{\Iden}{{\mtx{I}}}
\newcommand{\M}{{\mtx{M}}}

\newcommand{\z}{{\vct{z}}}

\newcommand{\tn}[1]{\|{#1}\|_{\ell_2}}

\newcommand{\Cc}{\mathcal{C}}

\newcommand{\Rc}{\mathcal{R}}

\newcommand{\Nn}{\mathcal{N}}

\newcommand{\vb}{\vct{v}}

\newcommand{\Ic}{{\mathcal{I}}}

\newcommand{\cb}{\mtx{c}}

\newcommand{\w}{\vct{w}}

\newcommand{\wh}{{\hat{\mtx{w}}}}

\newcommand{\s}{\vct{s}}
\newcommand{\ab}{\vct{a}}
\newcommand{\bb}{\vct{b}}
\newcommand{\ub}{{\vct{u}}}

\newcommand{\h}{\vct{h}}

\newcommand{\g}{{\vct{g}}}

\newcommand{\Zb}{\mtx{Z}}


\newcommand{\yh}{\hat{\y}}

\newcommand{\gt}{\tilde{\g}}

\newcommand{\fronorm}[1]{\left\|#1\right\|_{F}}

\newcommand{\twonorm}[1]{\left\|#1\right\|_{\ell_2}}

\newcommand{\abs}[1]{\left|#1\right|}

\newcommand{\x}{\vct{x}}

\newcommand{\y}{\vct{y}}

\newcommand{\W}{\mtx{W}}



\definecolor{emmanuel}{RGB}{255,127,0}

\newcommand{\R}{\mathbb{R}}

\newcommand{\Var}{\textrm{Var}}

\renewcommand{\P}{\operatorname{\mathbb{P}}}
\newcommand{\E}{\operatorname{\mathbb{E}}}

\newcommand{\e}{\mathbf{e}}
\newcommand{\eb}{\vct{e}}
\renewcommand{\i}{\imath}

\newcommand{\vct}[1]{\bm{#1}}
\newcommand{\mtx}[1]{\bm{#1}}


\newcommand{\Id}{\text{\em I}}

\newcommand{\X}{{\mtx{X}}}
\newcommand{\Y}{{\mtx{Y}}}
\newcommand{\Vb}{{\mtx{V}}}

\newcommand{\Rb}{{\mtx{R}}}

\numberwithin{equation}{section} 

\def \endprf{\hfill {\vrule height6pt width6pt depth0pt}\medskip}
\newenvironment{proof}{\noindent {\bf Proof} }{\endprf\par}


\newcommand\zero{\mtx{0}}

\def\ind{\mathds{1}}


\newcommand{\Pib}{\boldsymbol{\Pi}}

\newcommand{\eqD}{\stackrel{\rm (D)}{=\joinrel=}}
\def\Wh{\widehat{\mtx{W}}}
\def\one{\vct{1}}
\def\wh{\widehat{\vct{w}}}
\def\bh{\widehat{\vct{b}}}
\newcommand{\bea}{\begin{align}}
\newcommand{\eea}{\end{align}}
\newcommand{\rP}{\stackrel{{P}}{\longrightarrow}}
\newcommand{\rD}{\stackrel{{(D)}}{\longrightarrow}}
\DeclarePairedDelimiterX{\inp}[2]{\langle}{\rangle}{#1, #2}
\def\nn{\notag}

\def\mub{\vct{\mu}}
\def\tb{\vct{t}}
\newcommand{\simiid}{\stackrel{iid}{\sim}}
\newcommand{\vp}{\vspace{3pt}}

\newcommand{\Ac}{\mathcal{A}}
\newcommand{\pib}{\boldsymbol{\pi}}
\newcommand{\alphab}{\boldsymbol{\alpha}}
\newcommand{\Sigmab}{\boldsymbol{\Sigma}}
\newcommand{\Deltab}{\boldsymbol{\Delta}}
\newcommand{\betab}{\boldsymbol{\beta}}
\newcommand{\fb}{\mathbf{f}}
\newcommand{\Gt}{\widetilde\Gb}

\newcommand{\gwt}{\widetilde{\g}}

\newcommand{\G}{\mtx{G}}
\newcommand{\pibt}{\widetilde{\pib}}

\newcommand{\nub}{\vct{\nu}}



%

\usepackage{authblk}

\title{Theoretical Insights Into Multiclass Classification:\\ A High-dimensional Asymptotic View}
\author{Christos Thrampoulidis\thanks{Department of Electrical and Computer Engineering, University of California, Santa Barbara, CA}\hspace{20pt} Samet Oymak\thanks{Department of Electrical and Computer Engineering, University of California, Riverside, CA}\hspace{20pt} Mahdi Soltanolkotabi\thanks{Ming Hsieh Department of Electrical Engineering, University of Southern California, Los Angeles, CA}}

%

\begin{document}

\maketitle

\begin{abstract}
Contemporary machine learning applications often involve classification tasks with many classes. Despite their extensive use, a precise understanding of the statistical properties and behavior of classification algorithms is still missing, especially in modern regimes where the number of classes is rather large. In this paper, we take a step in this direction by providing the first asymptotically precise analysis of linear multiclass classification. Our theoretical analysis allows us to precisely characterize how the test error varies over different training algorithms, data distributions, problem dimensions as well as number of classes, inter/intra class correlations and class priors. Specifically, our analysis reveals that the classification accuracy is highly distribution-dependent with different algorithms achieving optimal performance for different data distributions and/or training/features sizes. Unlike linear regression/binary classification, the test error in multiclass classification relies on intricate functions of the trained model (e.g., correlation between some of the trained weights) whose asymptotic behavior is difficult to characterize. This challenge is already present in simple classifiers, such as those minimizing a square loss. Our novel theoretical techniques allow us to overcome some of these challenges. The insights gained may pave the way for a precise understanding of other classification algorithms beyond those studied in this paper.
\end{abstract}



\addtocontents{toc}{\protect\setcounter{tocdepth}{0}}

\section{Introduction}
Multiclass classification is fundamental to a large number of real-world machine learning applications that demand the ability to automatically distinguish between thousands of different classes. Applications include essentially any problem with categorical outputs spanning natural language processing \cite{sutskever2014sequence}, where a seq2seq decoder has to choose the correct word token, reinforcement learning \cite{jang2016categorical,mei2020global}, where the agent has to choose the correct action, to recommendation systems, where the model should recommend the correct movie out of many other options. For instance, YouTube's recommendation system is modeled as an extreme multiclass problem with more than a million classes where each video corresponds to a viable class \cite{covington2016deep}.

The growing list of applications motivate an in-depth exploration of multiclass classification algorithms. Despite their extensive use however, a precise understanding of the statistical properties and behavior of classification algorithms is still missing with many open questions: \emph{What is the total and per class test accuracy? How does this quantity depend on various problem parameters such as data distributions, problem dimensions, etc.? What is the highest test accuracy achievable by any algorithm? What is the best algorithm for  each scenario? Which algorithm achieves the highest accuracy on rare or minority classes? How does the answer to the above question change in modern regimes where the number of classes is large?}



Asymptotic analysis in modern high-dimensional regimes where the number of training data and feature sizes grow in tandem with each other provides a promising setting for precisely quantifying the accuracy of classification algorithms as a function of problem variables and resolving the questions above. However, despite the rich literature on precise high-dimensional estimation and more recently binary classification, multiclass classification is an under-explored venue possibly due to the difficulty of capturing the intricate dependencies between the classes even for relatively simple linear classifiers.


\vspace{0.1in}
\noindent\textbf{Contributions.}~~We initiate a precise asymptotic study of linear multiclass classification in the modern high-dimensional regime, where the sizes of the training data and of the feature vectors grow large at a proportional rate. A key promise of such a precise analysis is that it allows us to accurately compare between different classification algorithms and data models. Compared to linear regression/binary classification, we identify the following crucial challenge: \emph{the test accuracy in multiclass classification relies on intricate cross-correlations between the trained weights of the classifier.} This has two consequences that drive our analysis. First, in order to obtain sharp asymptotics on the test error of any classifier, it is a prerequisite to precisely quantify the asymptotics of these cross-correlations. Second, the test error does not depend on the correlations in closed-form expressions. Thus, to compare between different classifiers, we need efficient numerical and analytic means to evaluate the test error in terms of the correlation matrices. Interestingly, we show that these challenges are already present in simple classifiers, such as minimizing the square loss, and in stylized distributional settings, such as Gaussian features. Our contributions are as follows:

%
%

\noindent$\bullet$~We study two different data models: a Gaussian Mixtures Model (GMM) and a Multinomial Logit Model (MLM) with Gaussian features. For each one of them, we provide a precise characterization of total and class-wise test accuracy for three different training algorithms: (i) a least-squares (LS) based classifier, (ii) a weighted least-squares (WLS) based classifier, and (iii) a simple per class averaging (Avg) estimator. For the least-squares based classifiers, we develop a new technique to overcome the technical challenge of characterizing the limiting behavior of the weights' cross-correlations. For the per class averaging classifier, we show that it is Bayes optimal for a GMM with equal priors. 

\noindent$\bullet$~We discuss efficient means of evaluating the test accuracy as a function of the weights' cross-correlations. This, together with the derived asymptotic formulae for the latter, lead to the first precise high-dimensional characterization of how the total/class-wise accuracy varies for different algorithms, data distributions, problem dimensions as well as number of classes, the inter/intra class correlations and class priors. For special problem geometries, we derive precise conditions on the data distribution and on the relative size of the training set over which each of the two studied algorithms dominates.




\noindent$\bullet$~We present and discuss numerical simulations that corroborate our theoretical findings. For instance, with an eye towards making classification algorithms more fair/equitable, we use our precise characterization of the class-wise accuracy to demonstrate how different algorithms behave in the presence of rare/minority classes. We also empirically compare the algorithms studied in this paper to other popular losses such as cross-entropy minimization. This allows us to better understand the performance of various algorithms in modern regimes of large number of classes.

\vspace{0.1in}
\noindent\textbf{Related Work.}~~There is a classical body of algorithmic work on multiclass classification, e.g.,  \cite{crammer2001algorithmic,lee2004multicategory,weston1998multi,bredensteiner1999multicategory,dietterich1994solving} and several empirical studies of their comparative performance \cite{rifkin2004defense,furnkranz2002round,allwein2000reducing,pal2005support}. A more recent extension of this line of work investigates the effect of the loss function in deep neural networks \cite{hou2016squared,gajowniczek2017generalized,kumar2018robust,bosman2020visualising,demirkaya2020exploring}. 
Algorithms for extreme multiclass problems with huge number of classes has also been studied in several \cite{choromanska2013extreme,yen2016pd,rawat2019sampled,kuznetsov2015rademacher} works. On the theory front, numerous works 
  have  investigated consistency \cite{zhang2004statistical,lee2004multicategory,tewari2007consistency,pires2013cost,pires2016multiclass} and finite-sample behavior \cite{koltchinskii2002empirical,guermeur2002combining,allwein2000reducing, li2018multi, cortes2016structured,lei2015multi,maurer2016vector,lei2019data} of multiclass classification algorithms. Our work differs from this literature in that we are interested in \emph{precise} characterizations of the test accuracy rather than order-wise bounds. Here we focus on linear classifiers, but we consider the modern high-dimensional regime in which both the sample size and the features' dimension are large. 


Specifically, our theoretical approach to linear multiclass classification fits in the rapidly growing literature on  \emph{sharp} high-dimensional asymptotics of convex optimization-based estimators \cite{donoho2006compressed,Sto,oymRank,Cha,TroppEdge,DMM,montanariLasso,TroppEdge,StoLasso,OTH13,COLT,karoui2013asymptotic,karoui15,donoho2016high,IT,TSP18,Master,miolane2018distribution,wang2019does,celentano2019fundamental,hu2019asymptotics,bu2019algorithmic,NIPS2019_9404,javanmard2020precise}.  Most of this line of work studies linear models and regression problems. More recently there has been a surge of interest in sharp analysis of a variety of methods tailored to binary classification models \cite{NIPS, huang2017asymptotic,candes2018phase,sur2019modern,mai2019large,logistic_regression,svm_abla,salehi2019impact,taheri2020sharp,deng2019model,montanari2019generalization,liang2020precise,kini2020analytic,mignacco2020role,lolas2020regularization,taheri2020fundamental}. Nevertheless, none of these prior works have yet considered multiclass classification settings. Our paper unveils the salient features of the multiclass setting and shows that corresponding results from the binary setting do not directly apply here. We emphasize that this is the case even for seemingly simple one-vs-all (OVA) classifiers, such as minimizing the square-loss, that involve training a single binary classifier per class \cite{rifkin2004defense}. The key technical tool behind our sharp analysis is the convex Gaussian min-max Theorem (CGMT) \cite{COLT,StoLasso}. However, a ``naive" application of the CGMT on the original optimization of the classifier does not allow us to compute all the necessary correleations between the classfier's weights to precisely capture the total/class-wise errors. Instead, our key idea is to formulate an artificial optimization problem, which captures the missing correlations and at the same time conveniently allows us to leverage the CGMT.
%
%


\vspace{0.1in}
\noindent\textbf{Notation.}~~We use $[k]$ to denote $\{1,\ldots,k\}$. We use boldface lowercase letters $\x,\y,\mub,\ldots$ to denote vectors and boldface uppercase letters $\X,\Y,\M,\ldots$ for matrices. We write $\eb_\ell$ for the $\ell$-th standard basis vector in $\R^k$. We also write $\Iden_k, \zero_{k\times k} $ and $\one_k$ for the $k\times k$ identity and all-zeros matrices and the $k\times 1$ all-ones vectors. For a vector $\vct{c}\in\R^k$ we write $\arg\max \vct{c}$ to denote the index of its largest entry, i.e.,~$\arg\max \vct{c}={\arg\max}_{j\in[k]}\text{ }\vct{c}_i$. The superscript $\dagger$ denotes pseudoinverse. We use $Q(x)$ for the tail of a standard Gaussian (Q-function).
  Finally, we reserve variables $G_0,G_1,\ldots,G_k\simiid\Nn(0,1)$ to denote i.i.d. standard Gaussians.

\section{Problem formulation}
We focus on multiclass classification problems with $k$ classes. Specifically, we assume the training data consists of $n$ feature/label pairs $\{(\vct{x}_i,Y_i)\}_{i=1}^n$ with $\vct{x}_i\in\R^d$ representing the features and $Y_i\in\{1,2,\ldots,k\}$ the associated labels representing one of $k$ classes. It will be convenient to also model the labels as one-hot encoded vectors $\vct{y}_i\in\R^k$ representing one of $k$ classes with one-hot encoding, i.e.,~$\vct{y}_i=\vct{e}_{Y_i}$. Therefore, when convenient we shall use $\{(\vct{x}_i,\vct{y}_i)\}_{i=1}^n$ to represent the training data. Throughout, we shall use $$\X=\begin{bmatrix} \x_1 & \x_2 & \ldots & \x_n \end{bmatrix}\in\R^{d\times n}\,\quad\text{and}\quad\Y=\begin{bmatrix} \vct{y}_1 & \vct{y}_2 & \ldots & \vct{y}_n\end{bmatrix} \in\R^{k\times n}\,,$$
 to denote the matrix of features and their labels aggregated into a matrix, respectively.
We shall also use $\mtx{Y}_\ell\in\R^n$ to denote the $\ell$-th row of $\Y$. In our analysis we focus on training linear classifiers. Specifically, we use 
$$\mtx{W} = \begin{bmatrix} 
\w_1 & \w_2&  \cdots & \w_k
\end{bmatrix}^T
\in\R^{k\times d}\quad \text{and} \quad\vct{b}\in\R^k$$
to denote the weights and biases of this linear model, respectively. The overall input-output relationship of the classifier in this case is a function that maps an input vector $\vct{x}\in\R^d$ into an output of size $k$ via $\vct{x}\mapsto \mtx{W}\vct{x}+\vct{b}\in\R^k,$ where a training algorithm is used to train the corresponding weights $\mtx{W}\in\R^{k\times d}$ and biases $\vct{b}\in\R^k$. Next we detail the data models and training algorithms that are formally studied in this paper. We end this section by discussing how the test error can be calculated for the different data models. 

\subsection{Data Models}\label{sec:data_models}
In our theoretical analysis we assume the training data $\{(\vct{x}_i,Y_i)\}_{i=1}^n$  (alternatively $\{(\vct{x}_i,\vct{y}_i)\}_{i=1}^n$) are generated i.i.d.~according to $(\vct{x},Y)$/$(\vct{x},\vct{y})$. We consider two models for the distribution of $(\vct{x},\vct{y})$ which we detail next. In both models we shall use mean/regressor vectors $\{\vct{\mu}_\ell\}_{\ell=1}^k\in\R^d$ and aggregate them into columns of a matrix of the form $$\mtx{M}:=\begin{bmatrix}\vct{\mu}_1& \vct{\mu}_2 &\ldots & \vct{\mu}_k\end{bmatrix}\in\R^{d\times k}.$$ In the first model, these vectors represent the mean of the features conditioned on the class, i.e.,~$\vct{\mu}_\ell=\E\big[\vct{x}| Y=\ell\big]$, whereas in the second model these vectors can be viewed as regressor coefficients. We shall refer to $\{\vct{\mu}_\ell\}_{\ell=1}^k$/$\mtx{M}$ as ``mean'' vectors/matrix in both models. We denote the Grammian matrix of means as $\Sigmab_{\mub,\mub} = \M^T\M.$ Furthermore, we shall use $\mu_\ell:=\twonorm{\vct{\mu}_\ell}$ to denote the norm of the mean vector $\vct{\mu}_\ell$.

\vspace{0.1in}
\noindent\textbf{Gaussian Mixture Model (GMM).}~~In this model each example $(\vct{x},Y)$ belongs to class $\ell\in[k]$ with probability $\pi_\ell$, i.e., $\P\{Y= \ell \}= \pi_\ell$. We let $\pib=\begin{bmatrix}\pi_1 & \pi_2 & \ldots & \pi_k \end{bmatrix}^T\in\R^k$ denote the vector of priors which of course obeys $\pib\geq \zero$ and $\one^T\pib=1$. Also, we model the class conditional density of an example in class $\ell$ with an isotropic Gaussian centered at a mean vector $\mub_\ell$. In particular, we say that a data point $(\x,Y)$ (or its one-hot encoded representation $(\x,\vct{y})$) follows the GMM model when
\begin{align}\label{eq:GM_model}
\P\{Y = \ell \} = \pi_\ell\qquad\text{and}\qquad\x=\mub_Y + \z,~\z\sim\mathcal{N}(\zero,\sigma^2\Iden_d).
\end{align} 
We note that for a training set summarized by the feature and label matrices $\X$ and $\Y$ with columns generated i.i.d.~according to the above distribution we have: $\X = \M\Y + \Zb$ where   $\mtx{Z}\in\R^{d\times n}$ is a Gaussian noise matrix with i.i.d.~$\mathcal{N}(0,\sigma^2)$ entries. 

\vspace{0.1in}
\noindent\textbf{Multinomial Logit Model (MLM).}~~In this model we assume that feature vectors $\x$ are distributed i.i.d.~$\Nn(\zero,\mtx{I}_d)$ and that the conditional density of the class labels is given by the soft-max function. 
Concretely, we say that a data point $(\x,Y)$ (or its one-hot encoded representation $(\x,\vct{y})$) follows the multinomial logit model when
\begin{align}\label{eq:log_model}
\x\sim\mathcal{N}(\zero,\Iden_d)\qquad\text{and}\qquad\P\{Y = \ell ~|~ \x \} = {e^{\inp{\mub_\ell}{\x}}}\big/{\sum_{j\in[k]}e^{\inp{\mub_j}{\x}}}.
\end{align} 

\subsection{Classification algorithms}
\label{classalg}
As mentioned earlier, in this paper we focus on training linear classifiers of the form $\vct{x}\mapsto \mtx{W}\vct{x}+\vct{b}$ with $\mtx{W}\in\R^{k\times d}$ denoting the weights and $\vct{b}\in\R^k$ the offset values.


\vp
\noindent\textbf{Least-squares (LS).} In this approach 
we train a linear classifier $\vct{x}\mapsto \mtx{W}\vct{x}+\vct{b}$ via a least-squares fit to the training data: 
$$
(\Wh,\bh):=\arg\min_{\W,\bb} \frac{1}{2n}\sum_{i=1}^n \twonorm{\mtx{W}\vct{x}_i+\vct{b}-\vct{y}_i}^2=\frac{1}{2n}\fronorm{\mtx{W}\mtx{X}+\vct{b}\vct{1}_n^T-\mtx{Y}}^2.
$$



\vp
\noindent\textbf{Class averaging (Avg).}
This approach uses the following weight and offset values
$$\Wh:=\frac{1}{n}\Y\X^T\quad\text{and}\quad\bh:= \frac{1}{n}\Y\one.$$
Let $n_\ell$ be the number of training data from class $\ell$ then, equivalently, 
$
\widehat{\vct{w}}_\ell
=\frac{n_\ell}{n}\left(\frac{1}{n_\ell}\sum_{i:\text{ }Y_i=\ell}^n\vct{x}_i\right)\text{ and }\bh_\ell
=\frac{n_\ell}{n}.
$
Therefore, this classifier picks weights according to the empirical mean of features of each class  multiplied by the relative frequency of that class and the offset value as the fraction of data points from that class. We note that this algorithm has the same classification performance as the outcome of the ridge-regularized least-squares with infinite regularization.

\vp
\noindent\textbf{Weighted Least-squares (WLS).}
This is a variation of the Least-squares approach where we fit a weighted least squares loss of the form
$$(\Wh,\bh):=\arg\min_{\mtx{W},\vct{b}}~ \frac{1}{2n}\fronorm{\left(\mtx{W}\mtx{X}+\vct{b}\vct{1}_n^T-\mtx{Y}\right)\mtx{D}}^2.$$
Here, $\mtx{D}\in\R^{n\times n}$ is a diagonal matrix with the ith diagonal entry equal to $D_{ii}=\omega_\ell$ when the i-th data point is from class $\ell$ (i.e.~$Y_i=\ell$) and $\omega_\ell\geq 0,~\ell\in[k]$ denote the weights. Aggregating the weights into a vector of the form $\vct{\omega}=\begin{bmatrix}\omega_1 & \omega_2 & \ldots & \omega_k\end{bmatrix}^T\in\R^k$ we can rewrite $\mtx{D}$ in the form
$$
\mtx{D}=\text{diag}\left(\mtx{Y}^T\vct{\omega}\right).
$$
In this approach the loss associated to data points to class $\ell$ is weighted by a factor $\omega_\ell^2$.  For instance, if the class priors are known, a natural choice might be $\omega_\ell=1/\sqrt{\pi_\ell}$.
Such a weighted approach allows the classification algorithm to focus on rare/minority classes which are not well represented in the training data.


\vp
\noindent\textbf{Cross-entropy (CE).} In this approach the best weight/offset values are determined by fitting a cross entropy loss 
$
(\Wh,\bh):=\arg\min_{\mtx{W}, \vct{b}} \frac{1}{n}\sum_{i=1}^n \log\Big(\frac{\sum_{\ell=1}^ke^{\langle \widehat{\vct{w}}_\ell, \vct{x}_i\rangle+\vct{b}_\ell}}{e^{\langle \widehat{\vct{w}}_{Y_i}, \vct{x}_i\rangle+\vct{b}_{Y_i}}}\Big).
$
Theoretical analysis for CE is substantially more involved and we defer it to future work. Nevertheless, we compare with this classifier in our numerical simulations. 

\subsection{Class-wise and total test classification error}\label{sec:test}
Let $\Wh,\bh$ denote the parameters of a trained classifier. Now consider a fresh data sample $(\vct{x},Y)$ generated according to the same distribution as the training data. Once, we have learned the parameters $\Wh,\bh$ of the classifier, the class $\widehat{Y}$ predicted by the classifier is made by a winner takes it all strategy, as follows,
$
\widehat{Y}={\arg\max}_{j\in[k]}\text{ }\inp{\wh_j}{\x} + \bh_j.
$
Therefore, the classification error condition on the the true label being $c$, which we shall refer to as the \emph{class-wise test error}, is equal to
\begin{align}
\label{eq:class_errorc}
\P_{e|c}:=\P\big\{\widehat{Y}\neq Y \big | Y=c\big\}=\P\big\{ \inp{\wh_c}{\x} + \bh_c \leq \max_{j\neq c}~\inp{\wh_j}{\x} + \bh_j \big\}.
\end{align}
Correspondingly, the \emph{total classification error} is given by
\begin{align}\label{eq:class_error}
\hspace{-0.25in}\P_e := \P\big\{\widehat{Y}\neq Y\big\}= \P\Big\{ \arg\max_{j\in[k]} \{ \inp{\wh_j}{\x} + \bh_j \} \neq Y \} \Big\}=\P\big\{ \inp{\wh_Y}{\x} + \bh_Y \leq \max_{j\neq Y}~\inp{\wh_j}{\x} + \bh_j \big\}.
\end{align}
For both the GMM and MLM, the classification error depends on the vector of intercepts $\bh\in\R^k$ and the following key ``correlation" matrices:
\vp
$$\mtx{\Sigma}_{\w,\w} := \Wh\Wh^T\quad\text{and}\quad \mtx{\Sigma}_{\w,\mub} := \Wh\M.$$

\textbf{GMM.}~~In model \eqref{eq:GM_model}, the test error probability is explicitly given by
\begin{align}\label{eq:Pe_GMM_gen}
\P_e =\P\Big\{ \arg\max\text{ }\left(\sigma\,\g+\bh + \Sigmab_{\w,\mub}\eb_Y\right) \neq Y \Big\},~~\text{where}~\g\sim\Nn\left(\zero,\mtx{\Sigma}_{\w,\w}\right),
\end{align}
and $Y$ is independent of $\g$ with probability mass function $\P\{Y=\ell\} = \pi_\ell,~~ \ell\in[k].$

\vp
\textbf{MLM.}~~In model \eqref{eq:log_model}, the test error probability is explicitly given by
\begin{align}
\label{eq:Pe_log_gen}
\P_e = \P\big\{ \arg\max\text{ }(\,\g + \bh\,) \neq Y(\h)  \big\}\,,\quad\text{where}~~\begin{bmatrix}
\g \\ \h
\end{bmatrix} \sim \Nn\Big( \zero , \begin{bmatrix}
\Sigmab_{\w,\w} & \Sigmab_{\w,\mub} \\ \Sigmab_{\w,\mub}^T & \Sigmab_{\mub,\mub}
\end{bmatrix} \Big)\, ,
\end{align}
and $\P\{Y(\h)=\ell\} = {e^{\h_\ell}}/{\sum_{j\in[k]} e^{\h_j}},~~ \ell\in[k].$

\vspace{0.1in}
\noindent\textbf{Calculating the class-wise/total misclassifcation errors.} The identities \eqref{eq:Pe_GMM_gen} and \eqref{eq:Pe_log_gen} (see Section \ref{sec:proof_Pe_gen} for a proof) as well as similar ones for the class-wise test error demonstrate that the total/class-wise errors only depend on the correlation matrices $\mtx{\Sigma}_{\w,\w}$ and $\mtx{\Sigma}_{\w,\mub}$, the offset values $\bh$ and the the class conditional means.
For instance, as we show in the supplementary for GMM the class-wise errors are given by %
\bea\label{eq:condPintro}
\P_{e|c}= 1 - \P\big\{\Sb_c^{{1}/{2}}\, \z \geq \tb_c \big\},
\end{align}
where $\vct{z}$ is a Gaussian random vector distributed as $\mathcal{N}(\vct{0},\sigma^2\mtx{I}_{k-1})$, $\Sb_c\in\R^{{(k-1)}\times{(k-1)}}$ is a symmetric matrix such that its $i,j$ element is given by $[\Sb_c]_{ij}:=\inp{\wh_c-\wh_j}{\wh_c-\wh_i}$ and $\tb_c\in\R^{k-1}$ a vector with entries $[\tb_c]_i:=\inp{\wh_i-\wh_c}{\mub_c} +  (\bh_i-\bh_c)$. Similarly, based on \eqref{eq:condPintro} the total classification error in GMM is equal to $\P_{e}=\sum_{\ell=1}^k \pi_\ell \P_{e|c}=1-\sum_{\ell=1}^k\pi_\ell \P\big\{ \Sb_c^{{1}/{2}}\, \z \geq \tb_c \big\}$. As also detailed in the supplementary, the class-wise/total test errors for MLM similarly depends on quantities of the form $\P\{\mtx{A}\vct{z}\ge \vct{t} \}$ with $\vct{z}$ a standard Gaussian random vector, $\mtx{A}$ and $\vct{t}$ depending only on correlation matrices, conditional means and classifier offset-values; see Section \ref{sec:last1}. There are a variety of algorithmic approaches to calculate $\P\{\mtx{A}\vct{z}\ge \vct{t} \}$ once $\mtx{A}$ and $\vct{t}$ are known based on Monte Carlo methods. Analytic bounds on this quantity have also been studied in the literature, e.g., \cite{hashorva2003multivariate,sathe1980note}; see more details in Section \ref{Testerrcal}. 


\subsection{High-dimensional regime}
This paper derives sharp asymptotic formulae for the class-wise and total classification error of averaging and (weighted) LS algorithms for GMM and MLM. We defer all our proofs to the appendix. All our results hold in the following high-dimensional regime with finite $k$.
\begin{assumption}\label{ass:HD}
We focus on a double asymptotic regime where $n,d\rightarrow\infty$ at a fixed ratio $\gamma=d/n>0$. 
\end{assumption}
For the (weighted) least-squares classifier, we focus here in the overdetermined regime $\gamma< 1$. However, our approach is also directly applicable to regularized (or min-norm) LS/WLS in the overparameterized regime $\gamma>1$. 

For a sequence of random variables $\mathcal{X}_{n,d}$ that converges in probability 
 to some constant $c$ 
 in the limit above, we simply write $\mathcal{X}_{n,d}\rP c$. For a random vector/matrix $\vb_{n,d}$/$\Vb_{n,d}$ and a deterministic vector/matrix $\cb$/$\Cb$, the expressions $\vb_{n,d}\rP \cb$ and $\Vb_{n,d}\rP \Cb$ are to be understood entry-wise.
 
%



\section{Results for Gaussian Mixture Model}\label{gmm sec}
In this section we discuss the asymptotics of the intercepts/correlation matrices for the averaging and the LS classifiers for the GMM. The derived formulas can be directly plugged in \eqref{eq:Pe_GMM_gen} and \eqref{eq:condPintro} to obtain asymptotics for the total and class-wise test error, respectively.   We end this section by also characterizing the Bayes optimal estimator in this model when priors are balanced $\pi_\ell=1/k, \ell\in[k]$. Additional results on the performance of Weighted LS are deferred to the appendix.

\subsection{Class averaging classifier}

\begin{propo}\label{thm:ave_GM}
Consider data generated according to GMM in an asymptotic regime with any $\gamma>0$. For the averaging estimator discussed in Section \ref{classalg}, the following high-dimensional limits hold
\begin{subequations}\label{eq:ave0}
\bea
\bh &\rP \pib \label{eq:ave01}\,, \quad \Sigmab_{\w\mub} \rP \diag{\pib}\,  \cdot\Sigmab_{\mub,\mub}\,,
 \\
\Sigmab_{\w,\w} &\rP \gamma\sigma^2\cdot\diag{\pib} + \diag{\pib}\cdot\Sigmab_{\mub,\mub}\cdot\diag{\pib}\,.  
\end{align}
\end{subequations}
\end{propo}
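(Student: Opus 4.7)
The plan is to exploit the explicit GMM factorization $\X=\M\Y+\Zb$ (with $\Zb\in\R^{d\times n}$ having i.i.d.~$\mathcal{N}(0,\sigma^2)$ entries) to write every quantity of interest in closed form and then identify the limit of each piece via concentration/law of large numbers. The single structural observation that drives everything is that $\Y\Y^T\in\R^{k\times k}$ is \emph{diagonal}, with $\ell$-th diagonal entry equal to $n_\ell:=|\{i:Y_i=\ell\}|$. Since $k$ is fixed and $n_\ell$ is a sum of $n$ i.i.d.~Bernoulli$(\pi_\ell)$ variables, the standard LLN gives $\tfrac{1}{n}\Y\Y^T \rP \diag{\pib}$.

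\textbf{Step 1 (intercept).} From $\bh=\tfrac{1}{n}\Y\one$ we read off $\bh_\ell=n_\ell/n$, which by the above yields \eqref{eq:ave01}.

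\textbf{Step 2 (cross-correlation $\Sigmab_{\w,\mub}$).} Substitute $\Wh=\tfrac{1}{n}\Y\X^T=\tfrac{1}{n}\Y\Y^T\M^T+\tfrac{1}{n}\Y\Zb^T$ into $\Sigmab_{\w,\mub}=\Wh\M$ to obtain
\[
\Sigmab_{\w,\mub} \;=\; \underbrace{\tfrac{1}{n}\Y\Y^T\Sigmab_{\mub,\mub}}_{\text{(A)}} \;+\; \underbrace{\tfrac{1}{n}\Y\Zb^T\M}_{\text{(B)}}.
\]
Term (A) tends in probability to $\diag{\pib}\,\Sigmab_{\mub,\mub}$ by Step~0. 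For (B), the $(\ell,j)$ entry equals $\tfrac{1}{n}\sum_{i:Y_i=\ell}\inp{\mub_j}{\z_i}$, a sum of centered i.i.d.~Gaussians of variance $\sigma^2\|\mub_j\|^2$, hence concentrates to $0$ at rate $1/\sqrt{n}$ (using boundedness of $\|\mub_j\|$ implied by existence of $\Sigmab_{\mub,\mub}$'s limit). This gives the claimed formula for $\Sigmab_{\w,\mub}$.

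\textbf{Step 3 (auto-correlation $\Sigmab_{\w,\w}$), the main technical step.} Expanding $\Wh\Wh^T=\tfrac{1}{n^2}\Y\X^T\X\Y^T$ yields four terms corresponding to $\X^T\X=\Y^T\Sigmab_{\mub,\mub}\Y+\Y^T\M^T\Zb+\Zb^T\M\Y+\Zb^T\Zb$. The first term handles exactly as in Step~2 and contributes $\diag{\pib}\,\Sigmab_{\mub,\mub}\,\diag{\pib}$, while the two cross terms involving a single $\Zb$ factor vanish by the same Gaussian-sum argument. The interesting contribution is the pure-noise quadratic $\tfrac{1}{n^2}\Y\Zb^T\Zb\Y^T$. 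For its $(\ell,m)$-entry,
\[
\bigl[\tfrac{1}{n^2}\Y\Zb^T\Zb\Y^T\bigr]_{\ell m} \;=\; \tfrac{1}{n^2}\sum_{i:Y_i=\ell}\sum_{j:Y_j=m}\inp{\z_i}{\z_j},
\]
where off-diagonal $(\ell\neq m)$ terms vanish because the $\inp{\z_i}{\z_j}$ are centered with variance $\sigma^4 d$ and there are $n_\ell n_m=\Theta(n^2)$ of them, so their average concentrates to $0$ (the $d/n$ scaling cancels). For the diagonal $\ell=\ell$ entry, write it as $\tfrac{1}{n^2}\bigl\|\sum_{i:Y_i=\ell}\z_i\bigr\|^2$; conditional on $\{n_\ell\}$ the inner sum is $\mathcal{N}(0,n_\ell\sigma^2\Iden_d)$, so by standard $\chi^2$ concentration its squared norm equals $n_\ell\sigma^2 d\,(1+o_P(1))$. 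Dividing by $n^2$ and using $n_\ell/n\rP\pi_\ell$ and $d/n\to\gamma$ gives the limit $\gamma\sigma^2\pi_\ell$. Combining diagonal and off-diagonal yields $\gamma\sigma^2\diag{\pib}$, and summing with the $\Sigmab_{\mub,\mub}$ contribution produces the stated formula.

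\textbf{Expected obstacles.} There are no deep obstacles: the proof is essentially an application of LLN/$\chi^2$ concentration together with the clean bilinear structure of the averaging estimator. The only point requiring mild care is the diagonal of the pure-noise term in Step~3, where the correct $\gamma\sigma^2$ scaling only appears because $d/n\to\gamma$ is finite; this is where the high-dimensional regime (Assumption~\ref{ass:HD}) enters nontrivially. Everything else reduces to routine entrywise concentration since $k$ is held fixed.
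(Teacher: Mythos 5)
Your proposal is correct and follows essentially the same route as the paper: both exploit the factorization $\X=\M\Y+\Zb$, the diagonality of $\Y\Y^T$ (equivalently, the disjoint supports of the rows $\Y_\ell$), and reduce everything to the law of large numbers plus $\chi^2$ concentration, with the diagonal of the pure-noise quadratic supplying the $\gamma\sigma^2\cdot\diag{\pib}$ term exactly as in the paper's item (iii). The only cosmetic difference is that the paper works one weight vector at a time via $\wh_i=\frac{n_i}{n}\mub_i+\frac{\sigma\twonorm{\Y_i}}{n}\g_i$ with independent $\g_i$, whereas you work at the matrix level; your variance heuristic for the off-diagonal noise entries (summing variances of the non-independent $\inp{\z_i}{\z_j}$) is slightly loose, but the exact bilinear-form variance $n_\ell n_m\sigma^4 d/n^4=O(d/n^2)$ gives the same conclusion, so the argument stands.
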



The above result allows us to precisely characterize the behavior of the averaging estimator in the high-dimensional regime. Let us consider a few special cases.

 \vspace{0.1in}
\noindent\textbf{Two classes.}~~Consider the special case with two classes with class priors $\pi_1=1-\pi_2=:\pi$. In this case we can compute the class-wise misclassification probabilities $\P_{e|1}$ and $\P_{e|2}$ explicitly.  Specifically using \eqref{eq:ave0}, we have
$
 \mtx{S}_1 = \twonorm{\pi\mub_1 - (1-\pi)\mub_2}^2 + \gamma\sigma^2$ and  $t_1=(1-2\pi) + (1-\pi)\inp{\mub_1}{\mub_2} - \pi \twonorm{\mub_1}^2.
$
Substituting the latter two in \eqref{eq:condPintro} we arrive at
 $
 \P_{e|1} \rP Q\big(\frac{\pi\twonorm{\mub_1}^2 - (1-\pi)\inp{\mub_1}{\mub_2} + 2\pi - 1}{\sqrt{\twonorm{\pi\mub_1 - (1-\pi)\mub_2}^2 + \gamma\sigma^2}}\big)\,.
 $
In the case of equal priors $\pi=\pi_1=\pi_2=1/2$, antipodal and equal energy of the means, i.e.,~$\mub_1=-\mub_2$ and $\mu:=\twonorm{\vct{\mu}_1}=\twonorm{\vct{\mu}_2}$, we can use the above to conclude that
$ \P_{e|1}= \P_{e|2}=\frac{1}{2}\P_e = \frac{1}{2}Q\big( \sqrt{\frac{\mu^2}{\mu^2+\gamma\sigma^2}}  \big).$
This formula recovers the result of \cite{mignacco2020role} for this special case. Also, as mentioned in \cite{mignacco2020role}, the formula matches the Bayes optimal error computed in \cite{lelarge2019asymptotic} for Gaussian mean vectors. This shows that the class averaging method is Bayes optimal in this very simple setting.  In Section \ref{sec:Bayes}, we generalize this result to multiple classes: we show that the average estimator is (asymptotically) Bayes optimal for balanced classes and equal-energy Gaussian means for any $k\geq 2$. 

 \vspace{0.1in}
\noindent\textbf{Orthogonal means, equal priors and equal energy.}~~Next we focus on a special case with orthogonal means $\inp{\mub_i}{\mub_j}=0,~i\neq j\in[k]$ of equal energy $\mu^2:=\twonorm{\mub_\i}^2$ and of equal priors $\pi_i=\pi={1}\big/{k}$ for $i\in[k]$. In this case, the class-wise miss-classification error converges to 
$
\P_{e|c} \rP 1 - \P\{ \Sb_c^{{1}/{2}}\, \z > \tb \},
$
where 
$\Sb_c=\pi(\pi \mu^2+\gamma\sigma^2)(\Iden_{k-1} + \one_{k-1}\one_{k-1}^T)$ and $\tb = -\pi \mu^2\one_{k-1}$. Defining $$u_{\rm Avg}:=\frac{\mu^2}{\sigma}\sqrt{\frac{1}{\mu^2+k{\gamma\sigma^2}}}\,,$$ after some algebraic manipulations the total classification error of the averaging estimator in this case is given by
$$ \P_{e|c}=\P_{e, \rm Avg} \rP \P\big\{  G_0 + \max_{j\in[k-1]} G_j \geq \,u_{\rm Avg} \big\},$$
where  $G_0,\ldots,G_{k-1}\simiid\Nn(0,1).$

\subsection{Least-squares classifier}
This section focuses on characterizing the intercepts and correlation matrices for the least-squares classifier. To present our results, we assume that the  Grammian matrix has eigenvalue decomposition 
\begin{align}\label{eq:eigen}
\Sigmab_{\mub,\mub}=\M^T\M=\Vb\Sigmab^2\Vb^T, \qquad \Sigmab\succ\mathbf{0}_{r\times r},~\Vb\in\R^{k\times r},~ r\leq k.
\end{align}
with $\Sigmab$ a diagonal positive-definite matrix  and $\Vb$ an orthonormal matrix obeying $\Vb^T\Vb=\Iden_r$.

\begin{theorem}\label{thm:LS_GM}
Consider data generated according to GMM in an asymptotic regime with $\gamma<1$.
In addition to \eqref{eq:eigen}, define the following two positive (semi)-definite matrices:
$
\Pb := \diag{\pib}-\pib\pib^T\succeq \zero_{k\times k}$ and $\Deltab:=\sigma^2\Iden_r + \Sigmab\Vb^T\Pb\Vb\Sigmab \succ \zero_{r\times r}.$
Then, for the least-squares linear classifier $\left(\Wh,\bh\right)$ the following limits are true asymptotically
\begin{subequations}\label{eq:LS_GM}
\begin{align}\label{eq:LS_GMa}
\bh &\rP\pib-\Pb\Vb\Sigmab\Deltab^{-1}\Sigmab\Vb^T\pib\,,\quad
\Sigmab_{\w,\mub} \rP \Pb\Vb\Sigmab\Deltab^{-1}\Sigmab\Vb^T \,,\\
\Sigmab_{\w,\w} &\rP\frac{\gamma}{(1-\gamma)\sigma^2} \Pb+ \Pb\Vb\Sigmab \Deltab^{-1} \Big(\Deltab^{-1} - \frac{\gamma}{(1-\gamma)\sigma^2}\Iden_r\Big) \Sigmab\Vb^T \Pb\label{eq:LS_GMb}\,.
\end{align}
\end{subequations}


\end{theorem}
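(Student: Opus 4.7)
The plan is a direct high-dimensional computation based on the closed form of the LS solution and the signal/noise decomposition induced by the SVD of $\M$.

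The first-order optimality conditions of the LS problem give $\bh = \bar\y - \Wh\bar\x$ and $\Wh = \widetilde\Y\Xt^T(\Xt\Xt^T)^{-1}$, where $\bar\x := \tfrac1n\X\one$, $\bar\y := \tfrac1n\Y\one$, $\Xt := \X - \bar\x\one^T$, and $\widetilde\Y := \Y - \bar\y\one^T$. By the law of large numbers $\bar\y \rP \pib$ and $\bar\x \rP \M\pib$, so the formula for $\bh$ in \eqref{eq:LS_GMa} is an immediate consequence of the limit of $\Sigmab_{\w,\mub}$. Substituting $\X = \M\Y + \Zb$ gives $\Xt = \M\widetilde\Y + \Zb\boldsymbol{\Pi}$ with $\boldsymbol{\Pi} := \I_n - \tfrac1n\one\one^T$; using the SVD $\M = \Ub\Sigmab\Vb^T$ from \eqref{eq:eigen}, define
\[
\mathbf{R} := \Ub^T\Xt = \Sigmab\Vb^T\widetilde\Y + \g\boldsymbol{\Pi}, \qquad \mathbf{S} := \Ub_\perp^T\Xt = \mathbf{H}\boldsymbol{\Pi},
\]
where $\g := \Ub^T\Zb$ and $\mathbf{H} := \Ub_\perp^T\Zb$ are independent Gaussian matrices with i.i.d.\ $\Nn(0,\sigma^2)$ entries. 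A Schur-complement block inversion of $\Xt\Xt^T$ then yields the compact identities
\[
\Wh\Ub = \widetilde\Y\,\mathbf{P}_\mathbf{S}^\perp\mathbf{R}^T(\mathbf{R}\mathbf{P}_\mathbf{S}^\perp\mathbf{R}^T)^{-1}, \qquad \Wh\Ub_\perp = \widetilde\Y\,\mathbf{P}_\mathbf{R}^\perp\mathbf{S}^T(\mathbf{S}\mathbf{P}_\mathbf{R}^\perp\mathbf{S}^T)^{-1},
\]
so that $\Sigmab_{\w,\mub} = (\Wh\Ub)\Sigmab\Vb^T$ and $\Sigmab_{\w,\w} = (\Wh\Ub)(\Wh\Ub)^T + (\Wh\Ub_\perp)(\Wh\Ub_\perp)^T$.

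For the signal piece, observe that $\widetilde\Y\one = 0$ and $\mathbf{R}\one = \mathbf{S}\one = 0$ force $\boldsymbol{\Pi}\mathbf{P}_\mathbf{S}^\perp\boldsymbol{\Pi}$ to be an orthogonal projection onto a random subspace of $\{\one\}^\perp$ of dimension $n-d+r-1$. Together with the rotational invariance of $\g, \mathbf{H}$, Hanson--Wright-type concentration then gives $\tfrac1n\widetilde\Y\mathbf{P}_\mathbf{S}^\perp\widetilde\Y^T \rP (1-\gamma)\Pb$ and $\tfrac1n\mathbf{R}\mathbf{P}_\mathbf{S}^\perp\mathbf{R}^T \rP (1-\gamma)\Deltab$, with all cross terms linear in $\g$ vanishing by independence from $\widetilde\Y, \mathbf{H}$. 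Substituting into the formula for $\Wh\Ub$ yields $\Sigmab_{\w,\mub} \rP \Pb\Vb\Sigmab\Deltab^{-1}\Sigmab\Vb^T$, as claimed, together with the signal contribution $(\Wh\Ub)(\Wh\Ub)^T \rP \Pb\Vb\Sigmab\Deltab^{-2}\Sigmab\Vb^T\Pb$ to $\Sigmab_{\w,\w}$.

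The main obstacle is the noise contribution $(\Wh\Ub_\perp)(\Wh\Ub_\perp)^T$. Factoring $\boldsymbol{\Pi}\mathbf{P}_\mathbf{R}^\perp\boldsymbol{\Pi} = \mathbf{Q}\mathbf{Q}^T$ for an orthonormal $\mathbf{Q}\in\R^{n\times(n-r-1)}$ and setting $\widetilde{\mathbf{H}} := \mathbf{H}\mathbf{Q}$, a push-through identity reduces this quadratic form to $\widetilde\Y\mathbf{Q}(\widetilde{\mathbf{H}}^T\widetilde{\mathbf{H}})^\dagger\mathbf{Q}^T\widetilde\Y^T$. Conditionally on $\mathbf{R}$, the matrix $\widetilde{\mathbf{H}}$ is $\Nn(0,\sigma^2)$ i.i.d.\ of shape $(d-r)\times(n-r-1)$, so the inverse-Wishart trace identity $\E[(\widetilde{\mathbf{H}}\widetilde{\mathbf{H}}^T)^{-1}] = \tfrac{1}{\sigma^2(n-d-2)}\I_{d-r}$ together with rotational invariance forces $\E[(\widetilde{\mathbf{H}}^T\widetilde{\mathbf{H}})^\dagger] = \tfrac{d-r}{\sigma^2(n-r-1)(n-d-2)}\I_{n-r-1}$. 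A concentration upgrade of this identity, combined with the already-derived limit $\tfrac1n\widetilde\Y\mathbf{P}_\mathbf{R}^\perp\widetilde\Y^T \rP \Pb - \Pb\Vb\Sigmab\Deltab^{-1}\Sigmab\Vb^T\Pb$, yields $(\Wh\Ub_\perp)(\Wh\Ub_\perp)^T \rP \tfrac{\gamma}{(1-\gamma)\sigma^2}\big(\Pb - \Pb\Vb\Sigmab\Deltab^{-1}\Sigmab\Vb^T\Pb\big)$. Summing the signal and noise pieces recovers \eqref{eq:LS_GMb}. The delicate point throughout is verifying that the cross terms linear in the Gaussian factors $\g$ and $\mathbf{H}$ vanish at the correct rate, which ultimately rests on independence of $\Zb$ from $\Y$.
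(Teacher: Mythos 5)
Your proposal is correct, and it takes a genuinely different route from the paper. The paper never uses the closed-form LS solution: it splits the multi-output objective into $k$ single-output problems, applies the CGMT to each to obtain the limits of $\bh_\ell$, $\M^T\wh_\ell$ and $\twonorm{\wh_\ell}$, and then recovers the off-diagonal entries of $\Sigmab_{\w,\w}$ through the observation (Lemma~\ref{lem:joint_LS}) that $\wh_\ell+\wh_c$ solves yet another single-output LS, combined with the polarization identity. You instead work directly with the normal equations, split $\R^d$ into the $r$-dimensional signal subspace spanned by $\Ub$ and its complement, use the Frisch--Waugh--Lovell/Schur identities for $\Wh\Ub$ and $\Wh\Ub_\perp$, and pass to the limit via projection concentration and inverse-Wishart moments; this delivers the whole matrix $\Sigmab_{\w,\w}$, cross-correlations included, in a single pass with no need for the joint-LS trick. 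I verified the bookkeeping: your signal and noise contributions $\Pb\Vb\Sigmab\Deltab^{-2}\Sigmab\Vb^T\Pb$ and $\tfrac{\gamma}{(1-\gamma)\sigma^2}\big(\Pb-\Pb\Vb\Sigmab\Deltab^{-1}\Sigmab\Vb^T\Pb\big)$ indeed sum to \eqref{eq:LS_GMb}, the push-through identity $\widetilde{\mathbf H}^T(\widetilde{\mathbf H}\widetilde{\mathbf H}^T)^{-2}\widetilde{\mathbf H}=(\widetilde{\mathbf H}^T\widetilde{\mathbf H})^\dagger$ is valid, and the Wishart dimensions ($(d-r)\times(n-r-1)$, so $\E[(\widetilde{\mathbf H}\widetilde{\mathbf H}^T)^{-1}]=\tfrac{1}{\sigma^2(n-d-2)}\Iden_{d-r}$) are right. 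What each route buys: yours is more elementary and self-contained for plain LS, but it hinges on the explicit pseudoinverse and therefore does not extend to the weighted LS, MLM, or regularized variants treated in the paper, which is exactly what the CGMT machinery is designed to handle.

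Two steps deserve more care in a full write-up. First, the ``concentration upgrade'' of the bilinear form $\widetilde\Y\mathbf{Q}(\widetilde{\mathbf H}^T\widetilde{\mathbf H})^\dagger\mathbf{Q}^T\widetilde\Y^T$ around its conditional mean is asserted but not argued; it is standard for $\gamma$ bounded away from $1$ (variance bounds for bilinear forms in inverse Wishart, or a resolvent/leave-one-out argument), but it needs a proof or a citation. Second, the intercept is not quite ``immediate'': in $\bh=\bar{\y}-\Wh\bar{\x}$ the term $\Wh\bar{\z}$ appears with $\twonorm{\bar{\z}}\approx\sigma\sqrt{\gamma}$ \emph{not} vanishing, so you must use that $\Wh$ depends on $\Zb$ only through $\Zb\boldsymbol{\Pi}$, which is independent of $\Zb\one$, to conclude $\Wh\bar{\z}\rP\zero$; as written, this cross term is glossed over.
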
 

The above result allows us to precisely characterize the behavior of the least-squares classifier in the high-dimensional regime. 
In Section \ref{sec:ortho_app}, we specialize \eqref{eq:LS_GM} to the case of orthogonal means. Compared to the weight vectors $\wh_i, i\in[k]$ of the class averaging classifier that are also (asymptotically) orthogonal when means are orthogonal, this is \emph{not} the case for  LS. We show next that these spurious correlations only hurt the classification error when classes are balanced.
\begin{propo}\label{cor:LS_ortho}
Consider the case of orthogonal, equal energy-means $\Sigmab_{\mub,\mub}=\mu\Iden_k$, balanced priors $\pi_i=1/k,~i\in[k]$ and $\gamma<1$. 
Setting $u_{\rm LS}:=\frac{\mu^2}{\sigma}\sqrt{ \frac{1-\gamma}{\mu^2+k{\gamma\sigma^2}}}\,,$  it holds that
$$\P_{e,\rm LS} \rP \P\big\{  G_0 + \max_{j\in[k-1]} G_j \geq u_{\rm LS} \big\}.$$ Specifically, since $u_{\rm LS} = u_{\rm Avg}\sqrt{1-\gamma}\,<\,u_{\rm Avg}$, the averaging estimator strictly outperforms LS for all $0<\gamma< 1$ and $k\geq 2$ in this setting.
\end{propo}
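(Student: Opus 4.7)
The plan is to specialize the asymptotic formulas of Theorem \ref{thm:LS_GM} to the symmetric setting at hand, reduce all three quantities $\bh,\Sigmab_{\w,\mub},\Sigmab_{\w,\w}$ to explicit scalar multiples of a single matrix, plug the result into the class-wise error formula \eqref{eq:condPintro}, and finally simplify the Gaussian probability to an extreme-value form from which the stated comparison with $u_{\rm Avg}$ is immediate.

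First, I specialize the matrices entering Theorem \ref{thm:LS_GM}. With orthogonal equal-energy means we have $\Vb=\Iden_k$ and $\Sigmab=\mu\,\Iden_k$, while balanced priors give $\pib=(1/k)\one_k$ and hence $\Pb=(1/k)\Iden_k-(1/k^2)\one_k\one_k^T$. Two identities drive the whole computation: $\Pb\one_k=\zero$ and $\Pb^2=(1/k)\Pb$. The first implies $\Pb\pib=\zero$, so \eqref{eq:LS_GMa} yields $\bh \rP (1/k)\one_k$. Moreover, since $\Deltab=\sigma^2\Iden_k+\mu^2\Pb$ acts as the scalar $a:=\sigma^2+\mu^2/k$ on the range of $\Pb$ and as $\sigma^2$ on its kernel, one gets $\Pb\Deltab^{-1}=(1/a)\Pb$. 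Substituting into \eqref{eq:LS_GMa}--\eqref{eq:LS_GMb} and using $\Pb^2=(1/k)\Pb$ collapses the formulas to $\Sigmab_{\w,\mub} \rP (\mu^2/a)\Pb$ and, after a short scalar algebra, $\Sigmab_{\w,\w} \rP \alpha\,\Pb$ with $\alpha:=(\mu^2+k\gamma\sigma^2)/\bigl((1-\gamma)\,k\,a^2\bigr)$.

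Next, I plug these into the class-wise error expression \eqref{eq:condPintro}. Using $[\Pb]_{ii}=(k-1)/k^2$ and $[\Pb]_{ij}=-1/k^2$ for $i\neq j$, a direct expansion of $[\Sb_c]_{ij}=\inp{\wh_c-\wh_j}{\wh_c-\wh_i}$ gives $\Sb_c \rP (\alpha/k)\bigl(\Iden_{k-1}+\one_{k-1}\one_{k-1}^T\bigr)$ independently of $c$, while $\tb_c \rP -(\mu^2/(ak))\,\one_{k-1}$ (the bias contribution vanishes because $\bh$ is constant across classes). Representing $\Sb_c^{1/2}\z$ in distribution as $\sigma\sqrt{\alpha/k}\,\bigl(G_0\one_{k-1}+(G_1,\ldots,G_{k-1})^T\bigr)$ for i.i.d.~$G_0,\ldots,G_{k-1}\sim\Nn(0,1)$, the event $\{\Sb_c^{1/2}\z \geq \tb_c\}$ becomes $\{G_0+\min_j G_j \geq -u\}$, which by symmetry of the Gaussians equals $\{G_0+\max_j G_j \leq u\}$, where $u=\mu^2/\bigl(ak\sigma\sqrt{\alpha/k}\bigr)$. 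A one-line scalar simplification using $ak=k\sigma^2+\mu^2$ shows $u=u_{\rm LS}$. Averaging $\P_{e|c}$ over the (equal) priors yields $\P_{e,\rm LS}\rP \P\{G_0+\max_j G_j \geq u_{\rm LS}\}$.

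For the comparison, observe directly from the formulas that $u_{\rm LS}=u_{\rm Avg}\sqrt{1-\gamma}<u_{\rm Avg}$ for every $\gamma\in(0,1)$. Since $u\mapsto \P\{G_0+\max_j G_j\geq u\}$ is strictly decreasing, this gives $\P_{e,\rm LS}>\P_{e,\rm Avg}$, completing the proof. I expect no real obstacle beyond the matrix bookkeeping in the second paragraph; the crucial trick is recognizing that $\Pb$ and $\Deltab$ share eigenspaces so that $\Pb\Deltab^{-1}$ is a scalar multiple of $\Pb$, which is what makes all of the Theorem~\ref{thm:LS_GM} expressions collapse.
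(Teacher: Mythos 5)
Your proof is correct and follows essentially the same route as the paper: specialize Theorem \ref{thm:LS_GM} to the symmetric setting, read off $\Sb_c$ and $\tb_c$ in \eqref{eq:condPintro}, and use the $\Iden+\one\one^T$ Gaussian decomposition (the paper's Lemma \ref{lem:rank1}, which you reproduce inline). The only difference is cosmetic: you exploit the projector structure of $\Pb$ (so that $\Pb\Deltab^{-1}$ is a scalar multiple of $\Pb$) to collapse the formulas, whereas the paper passes through the entry-wise orthogonal-means expressions of its Corollary \ref{cor:LS_ortho} before specializing to equal energies and balanced priors.
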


\subsection{Bayes estimator for the balanced Gaussian Mixture Model}\label{sec:Bayes}
To check how far the above algorithms are from the lowest misclassification error achievable by any algorithm in this section, we consider a Bayesian setting with Gaussian mean vectors and we derive the Bayes-optimal risk for the case of equal priors. 
Recall that the Bayes estimator $
\hat{Y} = \arg\max_{\ell\in[k]} \P\{ Y=\ell~|~\X,\Y,\x \}
$ minimizes the risk $\P_{e} = \P\{ \hat{Y}\neq Y \}= \E_{\X,\Y,\x,Y}\left[\mathbb{1}[\hat{Y}\neq Y]\right]$.

\begin{propo}\label{propo:Bayes}
Consider $\mub_i\simiid\Nn(\zero,\frac{\mu^2}{d}\Id_d)$
and $\pi_i={1}/{k}$ for all $i\in[k]$. Set $u_{\rm Bayes}:=\frac{\mu^2}{\sigma}\frac{1}{ \sqrt{\mu^2 + k{\gamma\sigma^2}}}.$ Then, the Bayes risk converges to 
$\P\Big\{ G_0 + \max_{\ell\in[k-1]}  G_\ell \geq u_{\rm Bayes}  \Big\}.$
\end{propo}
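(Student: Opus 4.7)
The plan is to leverage Gaussian--Gaussian conjugacy to write the Bayes classifier in closed form, show that it reduces asymptotically to the class-averaging rule, and then read off the limiting error from Proposition~\ref{thm:ave_GM}.

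\textbf{Step 1 (closed-form Bayes rule).} Since $\mub_\ell\sim\Nn(\zero,(\mu^2/d)\Iden_d)$ is conjugate to the class-conditional likelihood $\x\mid\mub_\ell\sim\Nn(\mub_\ell,\sigma^2\Iden_d)$, the posterior $\mub_\ell\mid\X,\Y$ is Gaussian with mean $\alpha_\ell\xbar_\ell$ and isotropic covariance $\tau_\ell^2\Iden_d$, where $\xbar_\ell=(1/n_\ell)\sum_{i:Y_i=\ell}\x_i$, $\alpha_\ell=n_\ell\mu^2/(n_\ell\mu^2+d\sigma^2)$, and $\tau_\ell^2=\sigma^2\mu^2/(n_\ell\mu^2+d\sigma^2)$. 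Integrating out $\mub_\ell$ yields the predictive $p(\x\mid Y=\ell,\X,\Y)=\Nn(\alpha_\ell\xbar_\ell,(\sigma^2+\tau_\ell^2)\Iden_d)$. Combined with the uniform class prior and $n_\ell/n\rP 1/k$ (so that $\alpha_\ell\to\alpha:=\mu^2/(\mu^2+k\gamma\sigma^2)$ and $\tau_\ell^2$ become class-independent), the MAP rule reduces, after dropping class-independent terms, to $\hat Y=\arg\max_\ell\big(\inp{\x}{\xbar_\ell}-\tfrac{\alpha}{2}\twonorm{\xbar_\ell}^2\big)+o(1)$.

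\textbf{Step 2 (reduction to averaging, then apply Proposition~\ref{thm:ave_GM}).} Standard concentration gives $\twonorm{\xbar_\ell}^2\rP\mu^2+k\gamma\sigma^2$ uniformly in $\ell$, with fluctuations of order $1/\sqrt{d}$, so the penalty $(\alpha/2)\twonorm{\xbar_\ell}^2$ contributes an $\ell$-independent constant up to $o(1)$ and drops out of the $\arg\max$. Thus asymptotically $\hat Y=\arg\max_\ell\inp{\x}{\xbar_\ell}$, which for balanced priors is exactly the class-averaging classifier of Section~\ref{classalg} (its weights and offsets scale each class by the common factor $n_\ell/n\to 1/k$, which does not affect the $\arg\max$). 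The Gaussian prior on the means forces $\Sigmab_{\mub,\mub}=\M^T\M\rP\mu^2\Iden_k$, since $\twonorm{\mub_\ell}^2\to\mu^2$ and each off-diagonal $\inp{\mub_\ell}{\mub_m}$ has variance $\mu^4/d\to 0$. This places us in the ``orthogonal, equal-energy, equal-priors'' regime analyzed immediately after Proposition~\ref{thm:ave_GM}, whose formula yields $\P_e\rP\P\{G_0+\max_{j\in[k-1]}G_j\geq u_{\rm Avg}\}$ with $u_{\rm Avg}=\mu^2/(\sigma\sqrt{\mu^2+k\gamma\sigma^2})$, which is precisely $u_{\rm Bayes}$.

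\textbf{Main obstacle.} The most delicate point is controlling the $o(1)$ corrections in the reduction to $\arg\max\inp{\x}{\xbar_\ell}$: since the gaps $\inp{\x}{\xbar_1-\xbar_\ell}$ are themselves $\Theta(1)$, a union bound over the finitely many classes is needed to guarantee that the vanishing errors from $\twonorm{\xbar_\ell}^2$ and from the fluctuations of $n_\ell/n$ about $1/k$ do not flip the $\arg\max$. A secondary point is that Proposition~\ref{thm:ave_GM} was stated for deterministic $\M$; one must verify continuity in $\Sigmab_{\mub,\mub}$ (immediate from the closed-form expressions in \eqref{eq:ave0}) or else recompute the limit directly by decomposing $\inp{\x}{\xbar_1-\xbar_\ell}$ into its $\mub_1,\z,\mub_\ell,\zb_\ell$ constituents. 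Under this decomposition only $\twonorm{\mub_1}^2\to\mu^2$ and $\inp{\z}{\mub_\ell+\zb_\ell}$ survive in the high-dimensional limit; the latter are iid conditional on $\z$ with variance $\twonorm{\z}^2(\mu^2+k\gamma\sigma^2)/d\to\sigma^2(\mu^2+k\gamma\sigma^2)$, yielding after standardization the Gaussian form stated in the proposition.
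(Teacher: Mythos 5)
Your proposal is correct and follows essentially the same route as the paper: there too, the Bayes rule (which equals ML under equal priors) is made explicit by integrating out the Gaussian prior on the means — your conjugacy step, carried out in the paper by completing the square — and is then shown to reduce to an argmax of $\tfrac{n_j}{n}\inp{\x}{\overline{\x}_j}$ up to asymptotically class-independent corrections, after which the limiting error is obtained by exactly the four-term decomposition/CLT argument you sketch at the end (signal $\pi\mu^2$ against independent Gaussians of variance $\pi(\pi\mu^2+\gamma\sigma^2)$ per class). The only cosmetic difference is that the paper computes this limit directly instead of invoking the averaging-classifier formula of Proposition \ref{thm:ave_GM} (that identification is made in the discussion following the proposition, not in its proof), and your acknowledged need to control the $o(1)$ class-dependent terms and the randomness of $\Sigmab_{\mub,\mub}$ is precisely the step the paper handles (somewhat tersely) via $n_\ell/n\rP 1/k$ and the vanishing of the $\log\Ic(j)$ differences.
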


Under Gaussian prior, the means are asymptotically orthogonal and equal-energy. As shown earlier, in this setting, $\P_{e,\rm Avg}\rP\P\big\{ G_0 + \max_{\ell\in[k-1]}  G_\ell \geq u_{\rm Avg}  \big\}$. But, $u_{\rm Avg}=u_{\rm Bayes}$. Thus, the averaging method is (asymptotically) Bayes optimal for equal-norm, orthogonal means and balanced classes. An analogous result was derived in \cite{lelarge2019asymptotic,mignacco2020role}, but only for binary classification.

\section{Results for Multinomial Logit Model}
In this section we discuss the asymptotics of the intercepts/correlation matrices for MLM. 
We present results for arbitrary mean-vectors as well as special cases where the means are mutually orthogonal. Recall the eigenvalue decomposition of the Grammian $\Sigmab_{\mub,\mub}=\Vb\Sigmab^2\Vb^T$ in \eqref{eq:eigen}. In order to state our results, it is convenient to introduce the following probability vectors in $\R^k$ and $\R^{k^2}$:
\bea\label{eq:alphas_gen}
\pib := \E\Big[\frac{e^{\Vb\Sigmab\g}}{\one_k^Te^{\Vb\Sigmab\g}} \Big]\in\R^k~~\text{and}~~ \Pib := \E\Big[\frac{\left(e^{\Vb\Sigmab\g}\right)\left(e^{\Vb\Sigmab\g}\right)^T}{\left(\one_k^Te^{\Vb\Sigmab\g}\right)^2}  \Big]\in\R^{k\times k},~~\text{where}~\g\sim\Nn(\zero,\Iden_r).
\end{align}
Note that $\pib$ and $\Pib$ are the first and second moments of the soft-max mapping of  $\Vb\Sigmab\g\sim\Nn\left(\zero,\Sigmab_{\mub,\mub}\right)$. 
In fact, for the MLM in \eqref{eq:log_model} 
it holds that $$\P\{Y=\ell\}= \E[\P\{Y=\ell\,|\,\x\}]= \E\big[\frac{e^{\eb_\ell^T\Vb\Sigmab\g}}{\one_k^Te^{\Vb\Sigmab\g}}\big]=\pib_\ell,~\ell\in[k]$$  since $\M^T\x$ is distributed as ${\Vb\Sigmab\g}$. Thus, $\pib$ is the vector of class priors  (which explains the slight abuse of notation here in relation to our notation for the class priors of the GMM).

\subsection{Class averaging classifier}

\begin{propo}\label{propo:ave_log}
Consider data generated according to MLM in an asymptotic regime with any $\gamma>0$. For the averaging classifier, the following high-dimensional limits hold
\begin{subequations}\label{eq:ave_soft}
\begin{align}
\bh &\rP \pib\,,
\quad\Sigmab_{\w,\mub} \rP  \left(\diag{\pib}-\Pib\right)\cdot{\Sigmab_{\mub,\mub}}\,,
\label{eq:ave_soft_2}
\\
\Sigmab_{\w,\w} &\rP \gamma\cdot\diag{\pib} + \left(\diag{\pib}-\Pib\right)\cdot{\Sigmab_{\mub,\mub}}\left(\diag{\pib}-\Pib\right)\,.
\label{eq:ave_soft_3}
\end{align}
\end{subequations}
\end{propo}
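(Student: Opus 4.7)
The averaging classifier admits the closed-form $\wh_\ell=\frac{1}{n}\sum_i\ind[Y_i=\ell]\,\x_i$ and $\bh_\ell=\frac{1}{n}\sum_i\ind[Y_i=\ell]$, so unlike for the least-squares estimators I would not need the CGMT machinery: the entire proof reduces to a law of large numbers (LLN) combined with Gaussian integration by parts. Let $p_\ell(\x):=\P\{Y=\ell\mid\x\}=e^{\inp{\mub_\ell}{\x}}/\sum_j e^{\inp{\mub_j}{\x}}$ denote the soft-max conditional probabilities. The bias limit $\bh\rP\pib$ is then immediate from the weak LLN, using the identity $\P\{Y=\ell\}=\E[p_\ell(\x)]=\pib_\ell$ noted just after \eqref{eq:alphas_gen}.

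For the cross-correlation matrix $\Sigmab_{\w,\mub}=\Wh\M$, its $(\ell,j)$ entry equals $\frac{1}{n}\sum_i\ind[Y_i=\ell]\,\inp{\mub_j}{\x_i}$ and by LLN converges to $\E[p_\ell(\x)\,\inp{\mub_j}{\x}]$. Since $\x\sim\Nn(\zero,\Iden_d)$, Stein's lemma gives $\E[p_\ell(\x)\,\x]=\E[\nabla p_\ell(\x)]$, and differentiating the soft-max yields $\nabla p_\ell(\x)=p_\ell(\x)\bigl(\mub_\ell-\sum_m p_m(\x)\,\mub_m\bigr)$. Taking expectations produces $\ub_\ell:=\E[p_\ell(\x)\,\x]=\pib_\ell\mub_\ell-\sum_m\Pib_{\ell m}\mub_m$, or, in matrix form, $\mtx{U}:=[\ub_1\cdots\ub_k]=\M(\diag{\pib}-\Pib)$. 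Contracting with $\mub_j$ and assembling entries gives $\Sigmab_{\w,\mub}\rP(\diag{\pib}-\Pib)\Sigmab_{\mub,\mub}$, as claimed.

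The weight Grammian $\Sigmab_{\w,\w}=\Wh\Wh^T$ is the most delicate piece. I would expand $\wh_\ell^T\wh_m=\frac{1}{n^2}\sum_{i,j}\ind[Y_i=\ell]\ind[Y_j=m]\,\x_i^T\x_j$ and split it into distinct-index ($i\neq j$) and coincident-index ($i=j$) sums. By sample independence the off-diagonal piece has mean $\frac{n-1}{n}\ub_\ell^T\ub_m$, which in matrix form yields $\mtx{U}^T\mtx{U}=(\diag{\pib}-\Pib)\Sigmab_{\mub,\mub}(\diag{\pib}-\Pib)$. The diagonal piece vanishes for $\ell\neq m$ and otherwise equals $\frac{1}{n^2}\sum_i\ind[Y_i=\ell]\twonorm{\x_i}^2$ with expectation $\frac{1}{n}\E[p_\ell(\x)\twonorm{\x}^2]$; I would evaluate the latter by decomposing $\x=\x_\parallel+\x_\perp$ along and orthogonal to $\mathrm{span}\{\mub_j\}_{j\le k}$ and exploiting that $p_\ell$ depends on $\x$ only through $\x_\parallel$ while $\twonorm{\x_\perp}^2$ is independent of $p_\ell$ with mean $d-r$. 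This gives $\E[p_\ell(\x)\twonorm{\x}^2]=d\pib_\ell+O(1)$, so the diagonal piece contributes $\gamma\pib_\ell\delta_{\ell m}$, producing the $\gamma\,\diag{\pib}$ summand.

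The main technical obstacle is concentration rather than identifying the means: the ``noise-times-noise'' piece $(\wh_\ell-\ub_\ell)^T(\wh_m-\ub_m)$ has a non-vanishing mean of order $\gamma\pib_\ell\delta_{\ell m}$, so merely bounding $\twonorm{\wh_\ell-\ub_\ell}$ is too crude. The key input I would use is that the covariance of the summand $\ind[Y=\ell]\,\x-\ub_\ell$ has operator norm $O(1)$---on the subspace orthogonal to $\mathrm{span}\{\mub_j\}$ the indicator and the Gaussian decouple, so the covariance there is $\pib_\ell\,\Iden$ up to a rank-$r$ correction---even though its trace grows like $d$. A direct variance computation based on $\tr(AB)\le\opnorm{A}\tr(B)$ then bounds $\Var(\wh_\ell^T\wh_m)$ by $O(1/n)$, upgrading the mean calculations of the previous paragraphs to convergence in probability.
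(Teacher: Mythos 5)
Your proposal is correct and follows essentially the same route as the paper's proof: a law-of-large-numbers computation for $\bh$, Gaussian integration by parts applied to the soft-max (the paper's Lemma~\ref{lem:IBP}, your Stein's-lemma step) for $\Sigmab_{\w,\mub}$, and for $\Sigmab_{\w,\w}$ the same split into coincident- and distinct-index sums together with the decomposition of $\x$ into its components along and orthogonal to $\mathrm{span}\{\mub_j\}$, which produces the $\gamma\,\diag{\pib}$ term. The only cosmetic differences are that you work in the ambient $d$-dimensional space rather than reducing to the $r$-dimensional variable $\g=\Ub^T\x$ first, and that you spell out a variance bound via $\opnorm{\cov(\ind[Y=\ell]\x)}=O(1)$, which the paper leaves implicit in its $\rP$ statements.
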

Using Gaussian decomposition in \eqref{eq:Pe_log_gen} and checking from \eqref{eq:ave_soft} that $\Sigmab_{\w,\w}-\Sigmab_{\w,\mub}\Sigmab_{\mub,\mub}^\dagger\Sigmab_{\w,\mub}^T \rP \gamma\cdot\diag{\pib}$ the test error obtains the following explicit form:
\begin{align}\label{eq:simple_log_Pe_ave}
\P_{e,\rm Avg} \rP \P\big\{ \arg\max \left\{ \sqrt{\gamma}\cdot\diag{\sqrt{\pib}}\cdot\gwt + \left(\diag{\pib}-\Pib\right)\cdot \Vb\Sigmab \cdot\g  +\pib \right\} \neq Y(\g) \big\},
\end{align}
where $\gwt\sim\Nn(\zero,\Iden_k)$, $\g\sim\Nn(\zero,\Iden_r)$ and $\P\{Y(\g)=c\} ={e^{\eb_c^T\Vb\Sigmab\g}}\big/{\sum_{j\in[k]} e^{\eb_j^T\Vb\Sigmab\g}},c\in[k]$.

\subsection{Least-squares classifier}

This section focuses on characterizing the intercepts and correlation matrices for the least-squares classifier. We also use the result to characterize conditions under which LS outperforms averaging. 

\begin{theorem}\label{propo:LS_log}
Consider data generated according to MLM in an asymptotic regime with  $0<\gamma<1$. Recall the notation in \eqref{eq:alphas_gen}. For the LS classifier, the following high-dimensional limits hold.
\begin{subequations}\label{eq:LS_soft}
\begin{align}
\bh &\rP \pib\,,
\quad\Sigmab_{\w,\mub} \rP  \left(\diag{\pib}-\Pib\right)\cdot{\Sigmab_{\mub,\mub}}\,,
\label{eq:ls_soft_2}
\\
\Sigmab_{\w,\w} &\rP \frac{\gamma}{1-\gamma}\cdot\left(\diag{\pib}-\pib\pib^T\right) + \frac{1-2\gamma}{1-\gamma}\cdot\left(\diag{\pib}-\Pib\right)\cdot\Sigmab_{\mub,\mub}\cdot\left(\diag{\pib}-\Pib\right)\,.
\label{eq:ls_soft_3}
\end{align}
\end{subequations}
%
\end{theorem}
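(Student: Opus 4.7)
Since $\x_i\sim\Nn(\zero,\Iden_d)$ in MLM and the labels depend on $\x_i$ only through $\M^T\x_i$, my first move is to separate the ``signal'' directions (the column span of $\M$) from the ``noise'' directions. Write the economy SVD $\M = \Ub_\M\Sigmab\Vb^T$ with $\Ub_\M\in\R^{d\times r}$ orthonormal, and let $\Ub_\perp\in\R^{d\times(d-r)}$ span the orthogonal complement. Set $\G := \Ub_\M^T\X\in\R^{r\times n}$ (columns i.i.d.\ $\Nn(\zero,\Iden_r)$) and $\tilde\Zb := \Ub_\perp^T\X\in\R^{(d-r)\times n}$; then $\tilde\Zb$ has i.i.d.\ $\Nn(0,1)$ entries and is independent of $(\G,\Y)$. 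Reparametrize $\W = \A\Ub_\M^T + \B\Ub_\perp^T$ with $\A\in\R^{k\times r}$ and $\B\in\R^{k\times(d-r)}$, so that the LS program reads $\min_{\A,\B,\bb}\tfrac{1}{2n}\|\A\G + \B\tilde\Zb + \bb\one^T - \Y\|_F^2$.

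\textbf{Reduction and the easy correlations.} Since $\gamma<1$, $\tilde\Zb$ has full row rank almost surely, so minimizing out $\B$ in closed form gives $\widehat\B = \Cb\,\tilde\Zb^\dagger$ with $\Cb := \Y-\A\G-\bb\one^T$ and $\tilde\Zb^\dagger = \tilde\Zb^T(\tilde\Zb\tilde\Zb^T)^{-1}$; the residual objective is $\tfrac{1}{2n}\|\Cb\,\Pb_\perp\|_F^2$ with $\Pb_\perp := \Iden_n - \tilde\Zb^\dagger\tilde\Zb$ the projection onto $\ker\tilde\Zb$, a uniformly random codimension-$(d-r)$ subspace of $\R^n$ independent of $(\G,\Y)$. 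Haar invariance yields $\E[\Cb\Pb_\perp\Cb^T\,|\,\Cb] = \tfrac{n-(d-r)}{n}\Cb\Cb^T$ with $O(1/\sqrt n)$ fluctuations, and combined with the LLN on the i.i.d.\ columns of $\Cb$, the objective converges pointwise (and, via strong convexity, its minimizers converge) to $\tfrac{1-\gamma}{2}\,\E[\|\y-\A\g-\bb\|^2]$. Its first-order conditions give $\bh\rP\pib$ and $\widehat\A\rP\A^\star := \E[\y\g^T]$. Gaussian integration-by-parts applied entrywise, using the softmax gradient $\partial_j\,\mathrm{softmax}(\bu)_c = \mathrm{softmax}(\bu)_c(\delta_{cj}-\mathrm{softmax}(\bu)_j)$, yields $\A^\star = (\diag{\pib}-\Pib)\Vb\Sigmab$; since $\Wh\M = \widehat\A\,\Ub_\M^T\M = \widehat\A\Sigmab\Vb^T$, we get $\Sigmab_{\w,\mub}\rP(\diag{\pib}-\Pib)\Sigmab_{\mub,\mub}$, which is \eqref{eq:ls_soft_2}.

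\textbf{The main obstacle: the limit of $\widehat\B\widehat\B^T$.} To finish via $\Sigmab_{\w,\w} = \widehat\A\widehat\A^T+\widehat\B\widehat\B^T$, I must handle $\widehat\B\widehat\B^T = \Cb\,\tilde\Zb^T(\tilde\Zb\tilde\Zb^T)^{-2}\tilde\Zb\,\Cb^T$, which is more delicate than the Frobenius-norm reduction above because an inverse-square appears rather than a projection. The key step is to use the SVD $\tilde\Zb=\Ub_{\tilde\Zb}\Sigma\Vb_{\tilde\Zb}^T$ with $\Vb_{\tilde\Zb}$ Haar-distributed on $O(n)$ and independent of $\Sigma$; Haar integration then gives $\E[\widehat\B\widehat\B^T\,|\,\Cb,\Sigma] = \tfrac{\tr((\tilde\Zb\tilde\Zb^T)^{-1})}{n}\,\Cb\Cb^T$. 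Combining the Marchenko--Pastur inverse moment $\tr((\tilde\Zb\tilde\Zb^T)^{-1})\rP\gamma/(1-\gamma)$, concentration of the random bilinear form, and the LLN limit $\tfrac{1}{n}\Cb\Cb^T\rP \diag{\pib}-\A^\star(\A^\star)^T-\pib\pib^T$ (using $\E[\y\y^T]=\diag{\pib}$), I obtain $\widehat\B\widehat\B^T\rP \tfrac{\gamma}{1-\gamma}\bigl[\diag{\pib}-(\diag{\pib}-\Pib)\Sigmab_{\mub,\mub}(\diag{\pib}-\Pib)-\pib\pib^T\bigr]$. Adding $\widehat\A\widehat\A^T\rP(\diag{\pib}-\Pib)\Sigmab_{\mub,\mub}(\diag{\pib}-\Pib)$ and collecting the coefficients $1-\tfrac{\gamma}{1-\gamma} = \tfrac{1-2\gamma}{1-\gamma}$ recovers exactly \eqref{eq:ls_soft_3}.
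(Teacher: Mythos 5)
Your derivation is correct and arrives at exactly \eqref{eq:ls_soft_2}--\eqref{eq:ls_soft_3}, but by a genuinely different route than the paper. The paper cannot apply the CGMT to the multi-output square loss directly, so it analyzes each row $\wh_\ell$ through a separate PO/AO reduction (using the same signal/noise split of $\x_i$ along the column span of $\M$ that you use to decouple the labels from the Gaussian part), and then recovers the off-diagonal entries of $\Sigmab_{\w,\w}$ via the observation that $\wh_\ell+\wh_c$ solves another single-output least-squares problem (Lemma \ref{lem:joint_LS}), invoking the CGMT a second time and polarizing $\twonorm{\wh_\ell+\wh_c}^2$. You instead exploit the exact solvability of the square loss: profiling out the component $\B$ of $\W$ orthogonal to the span of the means reduces the problem to finitely many effective parameters, the reduced objective converges by the LLN to a strongly convex limit whose minimizer is computed by the same Gaussian integration-by-parts as Lemma \ref{lem:IBP}, and the noise-block Gram matrix $\widehat\B\widehat\B^T$ is evaluated through Haar invariance of the row space of $\widetilde{\Zb}$ together with the Marchenko--Pastur inverse moment $\tr\big((\widetilde{\Zb}\widetilde{\Zb}^T)^{-1}\big)\rP\gamma/(1-\gamma)$. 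What your route buys is that the entire matrix $\Sigmab_{\w,\w}$, cross-correlations included, comes out in one shot, with elementary tools and no pairwise-sum trick; what the paper's route buys is machinery that does not rely on a closed-form solution, and hence extends with little change to the weighted least-squares classifier (where the label-dependent weights correlate the weighting matrix with $\Y$ and make a direct random-matrix computation considerably messier) and to regularized or non-smooth convex losses.

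One step needs to be made precise. In your analysis of $\widehat\B\widehat\B^T=\Cb\,\widetilde{\Zb}^T(\widetilde{\Zb}\widetilde{\Zb}^T)^{-2}\widetilde{\Zb}\,\Cb^T$, the matrix $\Cb$ is evaluated at the minimizers $(\widehat\A,\bh)$ of the reduced objective $\frac{1}{2n}\fronorm{\Cb(\A,\bb)\Pb_\perp}^2$, and these minimizers depend on $\widetilde{\Zb}$ through $\Pb_\perp$; so the conditional-expectation identity $\E[\widehat\B\widehat\B^T\,|\,\Cb,\Sigma]=\frac{\tr((\widetilde{\Zb}\widetilde{\Zb}^T)^{-1})}{n}\Cb\Cb^T$ is not literally valid as written, since $\Cb$ and the Haar factor of $\widetilde{\Zb}$ are not independent. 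The fix is standard and consistent with what you already prove: first use the consistency $\widehat\A\rP\A^\star$, $\bh\rP\pib$ to replace $\Cb(\widehat\A,\bh)$ by $\Cb(\A^\star,\pib)$, which \emph{is} independent of $\widetilde{\Zb}$, and then check that the remainder terms vanish because $\|\Y\widetilde{\Zb}^\dagger\|$, $\|\G\widetilde{\Zb}^\dagger\|$ and $\|\one_n^T\widetilde{\Zb}^\dagger\|$ are $O(1)$ for $\gamma<1$ (as $\sigma_{\min}(\widetilde{\Zb})\gtrsim\sqrt{n}$) while the multiplying factors $\widehat\A-\A^\star$ and $\bh-\pib$ tend to zero in probability. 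With that perturbation argument spelled out, your proof is complete.
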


It is interesting to observe that \eqref{eq:ls_soft_2} is identical to \eqref{eq:ave_soft_2}.  However, the cross-correlations in $\Sigmab_{\w,\w}$ differ. We prove below that this leads to an improved performance of the LS classifier for large sample sizes. First, Theorem \ref{propo:LS_log} can be used to check that 
$$
\Sigmab_{\w,\w}-\Sigmab_{\w,\mub}\Sigmab_{\mub,\mub}^\dagger\Sigmab_{\w,\mub}^T \rP \frac{\gamma}{1-\gamma}\left(\diag{\pib} - \pib\pib^T - \left(\diag{\pib}-\Pib\right)\Sigmab_{\mub,\mub}\left(\diag{\pib}-\Pib\right)\right).\nn
$$
Thus, the only change in the test-error formula compared to \eqref{eq:simple_log_Pe_ave} is the term $\gamma\cdot\diag{\pib}$ substituted by the matrix above. 
\begin{propo}\label{propo:gamma_star}
Assume orthogonal, equal-energy means $\Sigmab_{\mub,\mub}=\mu^2\Iden_k$, $k\geq 2$. 
Let 
$$\gamma_\star
 = \frac{\mu^2 k}{(k-1)^2}\,\Big(\,1-k\E\Big[\frac{e^{2\mu G_1}}{\left(\sum_{\ell\in[k]}e^{\mu G_\ell}\,\right)^2} \Big]\,\Big)^2\in(0,1).$$
Then, with probability 1 as $n\rightarrow\infty$, 
$\P_{e,\rm LS} < \P_{e,\rm Avg} \Longleftrightarrow \gamma< \gamma_\star.$
\end{propo}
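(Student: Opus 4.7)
The plan is to exploit the permutation symmetry of the orthogonal, equal-energy setup in order to reduce the test-error comparison between the LS and averaging classifiers to a one-dimensional comparison of two scalar residual-noise variances. First, under the assumption $\Sigmab_{\mub,\mub}=\mu^2\Iden_k$, the Gaussian vector $\Vb\Sigmab\g$ appearing in \eqref{eq:alphas_gen} is exchangeable in its $k$ coordinates, so $\pib=(1/k)\one_k$ and
\[
\Pib \;=\; p\,\Iden_k + \tfrac{1-kp}{k(k-1)}\bigl(\one_k\one_k^T-\Iden_k\bigr),\qquad p:=\E\!\left[\tfrac{e^{2\mu G_1}}{\left(\sum_\ell e^{\mu G_\ell}\right)^2}\right],
\]
since every diagonal entry equals $p$ by symmetry and each row of $\Pib$ sums to $\pib_i=1/k$. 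Introducing the projector $P_\perp:=\Iden_k-\tfrac{1}{k}\one_k\one_k^T$ and the scalar $\alpha:=(1-kp)/(k-1)$, a short calculation yields the key identities $\diag{\pib}-\Pib=\alpha P_\perp$, $\diag{\pib}-\pib\pib^T=\tfrac{1}{k}P_\perp$, and $P_\perp^2=P_\perp$.

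Substituting these identities and $\Sigmab_{\mub,\mub}^\dagger=\mu^{-2}\Iden_k$ into Proposition \ref{propo:ave_log} and Theorem \ref{propo:LS_log}, one finds that both classifiers share the same asymptotic limits $\bh\rP (1/k)\one_k$ and $\Sigmab_{\w,\mub}\rP \alpha\mu^2 P_\perp$, but differ in their residual (Schur-complement) covariances:
\[
\Sigmab_{\w,\w}^{\rm Avg}-\Sigmab_{\w,\mub}^{\rm Avg}\Sigmab_{\mub,\mub}^\dagger(\Sigmab_{\w,\mub}^{\rm Avg})^T \rP \tfrac{\gamma}{k}\Iden_k,\qquad \Sigmab_{\w,\w}^{\rm LS}-\Sigmab_{\w,\mub}^{\rm LS}\Sigmab_{\mub,\mub}^\dagger(\Sigmab_{\w,\mub}^{\rm LS})^T \rP \tfrac{\gamma}{1-\gamma}\!\left(\tfrac{1}{k}-\mu^2\alpha^2\right)\!P_\perp.
\]
Decomposing $\g=\Sigmab_{\w,\mub}\Sigmab_{\mub,\mub}^\dagger\h+\tilde\g = \alpha\h+\tilde\g$, where $\tilde\g$ is centered Gaussian, independent of $\h\sim\Nn(\zero,\mu^2\Iden_k)$, with the residual covariance above, the test-error formula \eqref{eq:Pe_log_gen} becomes $\P_e = \P\{\arg\max(\alpha\h+\tilde\g+\bh)\neq Y(\h)\}$ for both estimators.

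Here is the crucial reduction: since $\arg\max$ is invariant under translations along $\one_k$, the common bias $\bh=(1/k)\one_k$ and any $\one_k$-component of $\tilde\g$ drop out, so only the projection of $\tilde\g$ onto $\mathrm{range}(P_\perp)$ matters. For both classifiers that projection is centered Gaussian with covariance $s\cdot P_\perp$, where $s_{\rm Avg}:=\gamma/k$ and $s_{\rm LS}:=\tfrac{\gamma}{1-\gamma}\!\left(\tfrac{1}{k}-\mu^2\alpha^2\right)$. Since the ``signal'' $\alpha\h$ is identical across the two classifiers, the misclassification probability is a strictly increasing function of the scalar $s$; this can be shown by coupling $\tilde\g_{\rm Avg}\eqD \tilde\g_{\rm LS}+\vct{z}$ (or the reverse, depending on sign) with $\vct{z}$ an independent centered Gaussian in $\mathrm{range}(P_\perp)$ of covariance $|s_{\rm Avg}-s_{\rm LS}|P_\perp$, and then observing that on the event where the top-two coordinates of $\alpha\h+\tilde\g+\bh$ are arbitrarily close (which has positive probability) the extra noise strictly enlarges the misclassification event. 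Consequently $\P_{e,\rm LS}<\P_{e,\rm Avg}\Longleftrightarrow s_{\rm LS}<s_{\rm Avg}$, and rearranging the latter gives exactly $\gamma<k\mu^2\alpha^2=\tfrac{k\mu^2(1-kp)^2}{(k-1)^2}=\gamma_\star$.

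Finally, verifying $\gamma_\star\in(0,1)$: positivity follows from strict Jensen applied to $X_1:=e^{\mu G_1}/\sum_\ell e^{\mu G_\ell}\in(0,1)$, which yields $p=\E[X_1^2]<\E[X_1]=1/k$ strictly for $\mu>0$, so $1-kp>0$; the upper bound $\gamma_\star\leq 1$ is forced by positive semi-definiteness of the LS residual covariance (the Schur complement of the limiting joint covariance of $(\g,\h)$ must be PSD), which requires $\tfrac{1}{k}-\mu^2\alpha^2\geq 0$, and strictness follows from the non-degeneracy of $\tilde\g_{\rm LS}$ on $\mathrm{range}(P_\perp)$ under MLM with $\mu>0$ and $\gamma\in(0,1)$. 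The main obstacle is the monotonicity step: making rigorous that injecting additional Gaussian noise in the $P_\perp$-subspace \emph{strictly} increases the misclassification probability of the softmax-labelled argmax event, which requires a careful positive-probability argument showing that, conditionally on $\h$, the winning and runner-up coordinates of $\alpha\h+\tilde\g+\bh$ are close enough on a non-null event for the extra noise to actually flip the argmax.
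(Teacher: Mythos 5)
Your reduction is correct and in fact mirrors the paper's own route: the symmetry identities $\pib=\tfrac1k\one_k$, $\diag{\pib}-\Pib=\alpha P_\perp$ with $\alpha=\tfrac{1-kp}{k-1}$, the two Schur complements $\tfrac{\gamma}{k}\Iden_k$ versus $\tfrac{\gamma}{1-\gamma}\big(\tfrac1k-\mu^2\alpha^2\big)P_\perp$, the observation that the $\one_k$-direction and the common bias are irrelevant to the $\arg\max$, and the algebra turning $s_{\rm LS}<s_{\rm Avg}$ into $\gamma<k\mu^2\alpha^2=\gamma_\star$ all agree with the paper (which packages the ``$\one_k$-direction is irrelevant'' step as part 1 of its Lemma \ref{lem:Slepian_multi}).

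The genuine gap is exactly the step you flag as ``the main obstacle'': strict monotonicity of the misclassification probability in the scalar noise level $s$. Your coupling argument does not work, because the misclassification event is \emph{not} monotone under injection of independent Gaussian noise when the label $Y(\h)$ is softmax-distributed rather than equal to $\arg\max \h$. Conditionally on $(\h,Y=c)$ with $c$ not the largest coordinate of $\h$ (an event of positive probability), the correct-classification region $\{\g_c-\g_j\ge \sqrt{\alpha/s}\,(\h_j-\h_c)\ \forall j\neq c\}$ \emph{grows} as $s$ increases, so adding the extra noise $\vct{z}$ can flip wrong decisions to correct ones; there is no pathwise inclusion of events, and hence no amount of ``top-two coordinates are close with positive probability'' strictness can rescue a coupling that does not yield monotone containment in the first place. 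Monotonicity here is an averaged statement: one must show that the loss on realizations where the label agrees with the signal's argmax outweighs the gain on realizations where it does not. This is precisely what the paper's Lemma \ref{lem:Slepian_multi}, part 2, establishes through the lengthy Gaussian integration-by-parts computation proving $\tfrac{\mathrm{d}\Gc(s)}{\mathrm{d}s}<0$ for the softmax-weighted product of $Q$-functions; some argument of this analytic, averaged type (or an alternative proof of the same derivative sign) is required and is missing from your proposal. The remaining minor looseness (strictness of $p<\tfrac1k$ via Jensen, and $\gamma_\star\le 1$ via positive semi-definiteness of the LS Schur complement) matches the paper and is fine.
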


\section{Numerical Results}

This section validates our theory via numerical experiments and provides further insights on multiclass classification. See also Section \ref{sec:additionalnum} for more extensive experiments. We study the class-wise/total test misclassification error in both GMM and MLM for different sample sizes, number of classes and class priors. In line with Section \ref{classalg} we consider four algorithms: (i) Averaging (Avg), (ii) LS, (iii) Weighted LS (WLS) with the $i$th class weighted by $\omega_\ell^2=1/\pi_\ell$, (iv) Cross-Entropy (CE).

Figures \ref{fig11} and \ref{fig12} focus on GMM with $k=9$ classes, $d=300$ and $\tn{\mub_i}^2=15$. To model different class prior probabilities, we use the distribution $
\pi_{1}=\pi_2=\pi_3=0.5, \pi_{4}=0.5, \pi_5=0.5, \pi_6=0.25, \pi_{7}=0.25, \pi_8=0.25, \pi_9=1/21.$
We consider three scenarios: (a) orthogonal means, equal prior ($\pi_i=1/9$); (b) orthogonal means, different prior; (c) correlated means with pairwise correlation coefficient equal to $0.5$ (i.e.,~$\langle \vct{\mu}_i,\vct{\mu}_j\rangle/(\twonorm{\vct{\mu}_i}\twonorm{\vct{\mu}_j})=0.5$ for $i\neq j$) and different priors as discussed above.
Figure \ref{fig11} shows the test miss-classification errors as a function of $\gamma:=d/n$. In all scenarios our theoretical predictions are a near perfect match to the empirical performance. In scenario (a), class-wise averaging achieves the lowest error as predicted by Proposition \ref{propo:Bayes}. However, in scenario (b) where the means have different norms the averaging method has higher misclassification error compared with CE, LS and WLS for large sample sizes (small $\gamma$). We note that both LS and WLS achieve lower errors compared with CE as the sample size grows. Scenario (c) is similar to (b). However, due to class correlations, the errors are uniformly higher. Figure \ref{fig12} shows the corresponding class-wise miss-classification errors for the smallest $\gamma$ in Figure \ref{fig11} ($\gamma=0.117$). In scenario (a), errors are equal which is expected given the equal class priors. In scenarios (b) and (c) however, due to different priors, large classes 7,8,9 achieve best accuracy. The performance difference is most visible for the averaging approach. LS mitigates this issue to some extent, while WLS creates the flattest class-wise errors suggesting that it can reduce the miss-classification error on small/minority classes.

Figure \ref{fig13} focuses on orthogonal classes with varying number of classes $k$ where $\tn{\mub_i}^2=15$ and $d\in\{50,100,200\}$ with $kd/n=k\gamma$ fixed at $k\gamma=20/11$. It plots the ratio of the empirical error probability and our theoretical prediction as $k$ grows until $k=d$. Two observations are worth mentioning here. (1) The accuracy of our predictions noticeably improves as the problem dimension $d,n$ grow as expected given the asymptotic nature of our analysis. Interestingly, the convergence appears to be noticeably faster (as a function of $d$) for the LS rather than the Averaging classifier. (2) Our theoretical results formally require that $k$ is fixed while $d$ (and $n$) grow large. Yet, the presented experimental results suggest that they might also hold for large $k$ under the shown scaling. This is a fascinating research question that we believe is worth investigating further. 

Figure \ref{fig14} provides experiments on  MLM with $k=9$ orthogonal classes. Unlike GMM, CE achieves the best performance in MLM. In Figure \ref{fig14} (a), classes have same norms $\tn{\mu_i}=10$, while in Figure \ref{fig14} (b) we have quadrupled the norms of classes 7,8,9 and doubled the norms of classes 4,5,6. This disparity between the norms seems to help improve the CE accuracy, but hurt LS/averaging accuracy for small $\gamma$. Finally, Figure \ref{fig14} (c) shows the class-wise probability of error associated with (b) for $\gamma=0.117$ and demonstrates that LS outperforms averaging. 



\begin{figure}[t!]
	\centering
	\begin{subfigure}{1.8in}
	\begin{tikzpicture}
	\node at (0,0) {\includegraphics[scale=0.22]{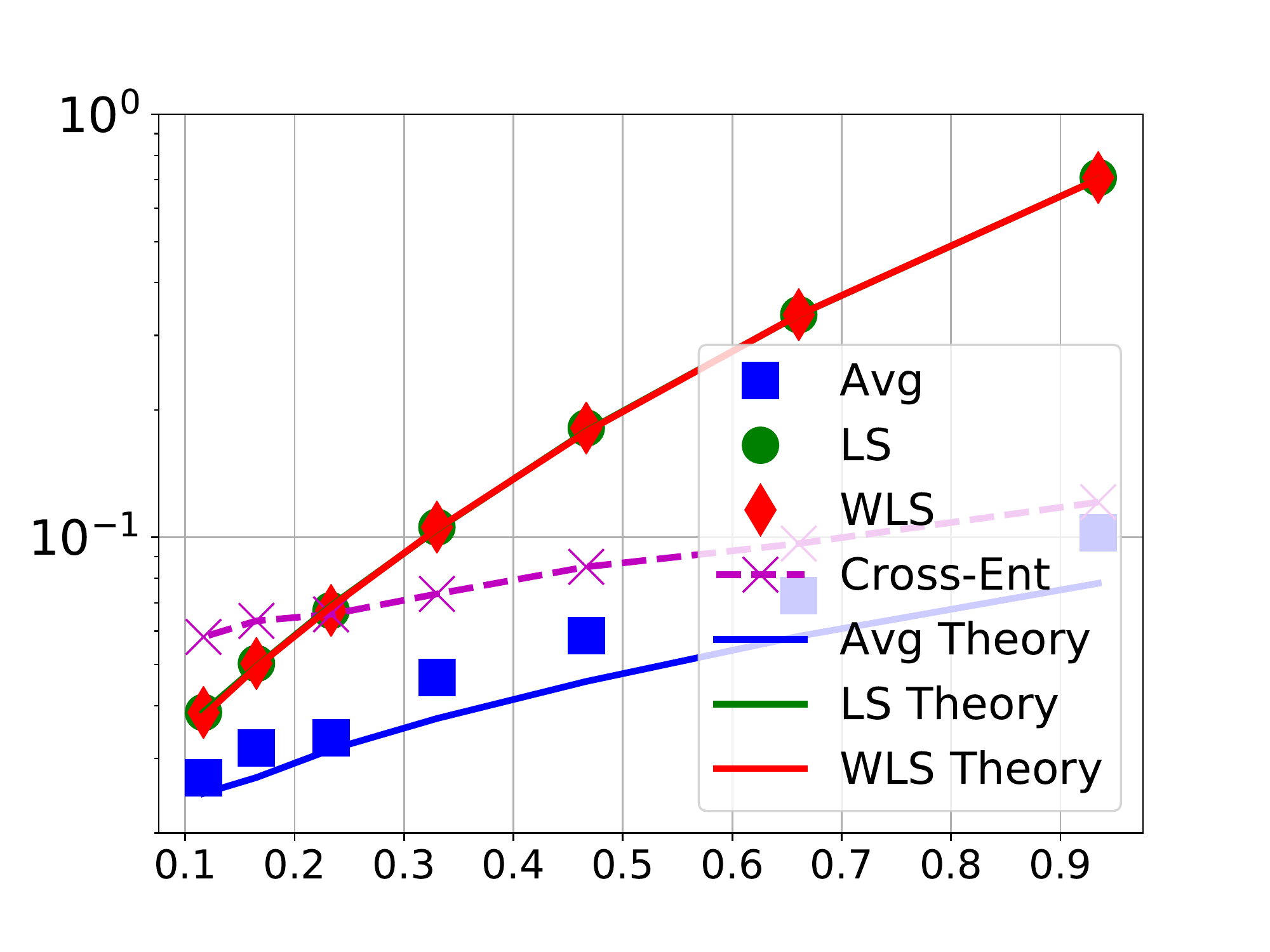}};
	\node at (-2.35,0) [rotate=90,scale=1.]{Prob. of Error};
	\node at (0,-1.68) [scale=1.]{$\gamma$};
	\end{tikzpicture}
	\label{fig:covariance}
	\end{subfigure}
	\begin{subfigure}{1.8in}
	\begin{tikzpicture}
	\node at (0,0) {\includegraphics[scale=0.22]{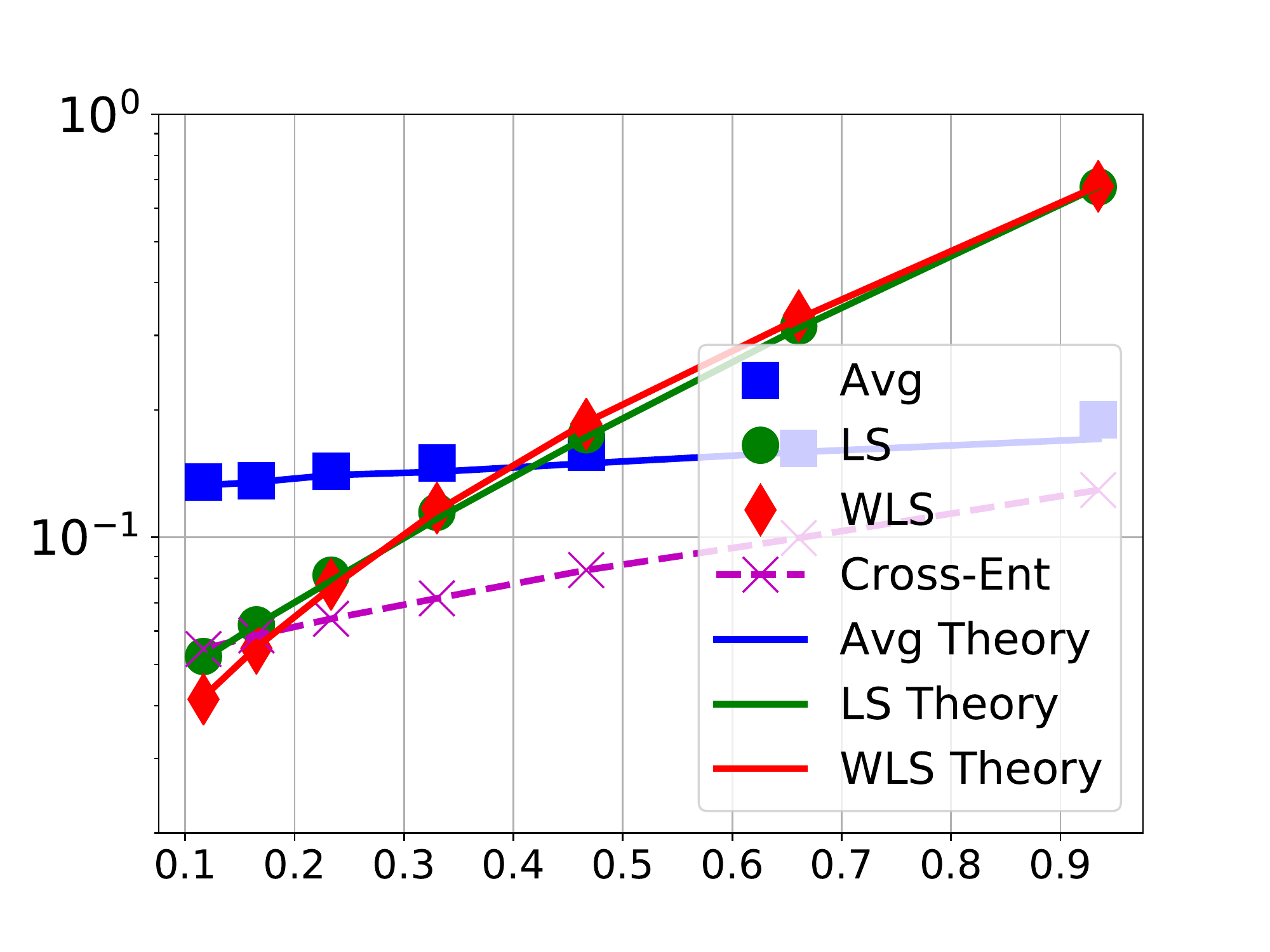}};
	\node at (-2.35,0) [rotate=90,scale=1.]{};
	\node at (0,-1.68) [scale=1.]{$\gamma$};
	\end{tikzpicture}
	\end{subfigure}
	\begin{subfigure}{1.8in}
	\begin{tikzpicture}
	\node at (0,0) {\includegraphics[scale=0.22]{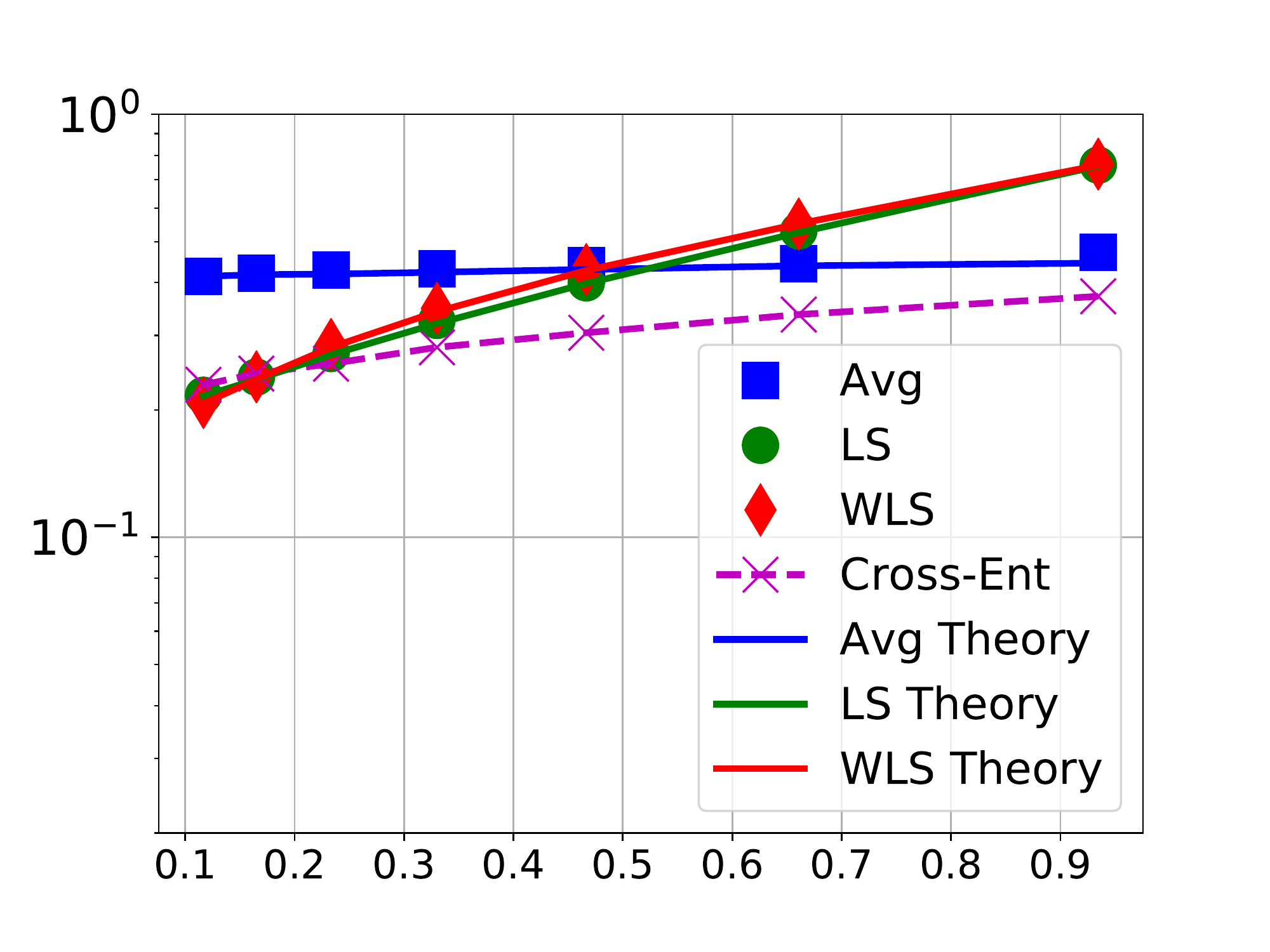}};
	\node at (-2.35,0) [rotate=90,scale=1.]{};
	\node at (0,-1.68) [scale=1.]{$\gamma$};
	\end{tikzpicture}
		\end{subfigure}
		\caption{GMM with $k=9,d=300$. (a) orthogonal, equal prior, (b) orthogonal, different prior, (c) correlated, different prior.}\label{fig11}
\end{figure}

\begin{figure}[t!]
	\centering
	\begin{subfigure}{1.8in}
	\begin{tikzpicture}
	\node at (0,0) {\includegraphics[scale=0.22]{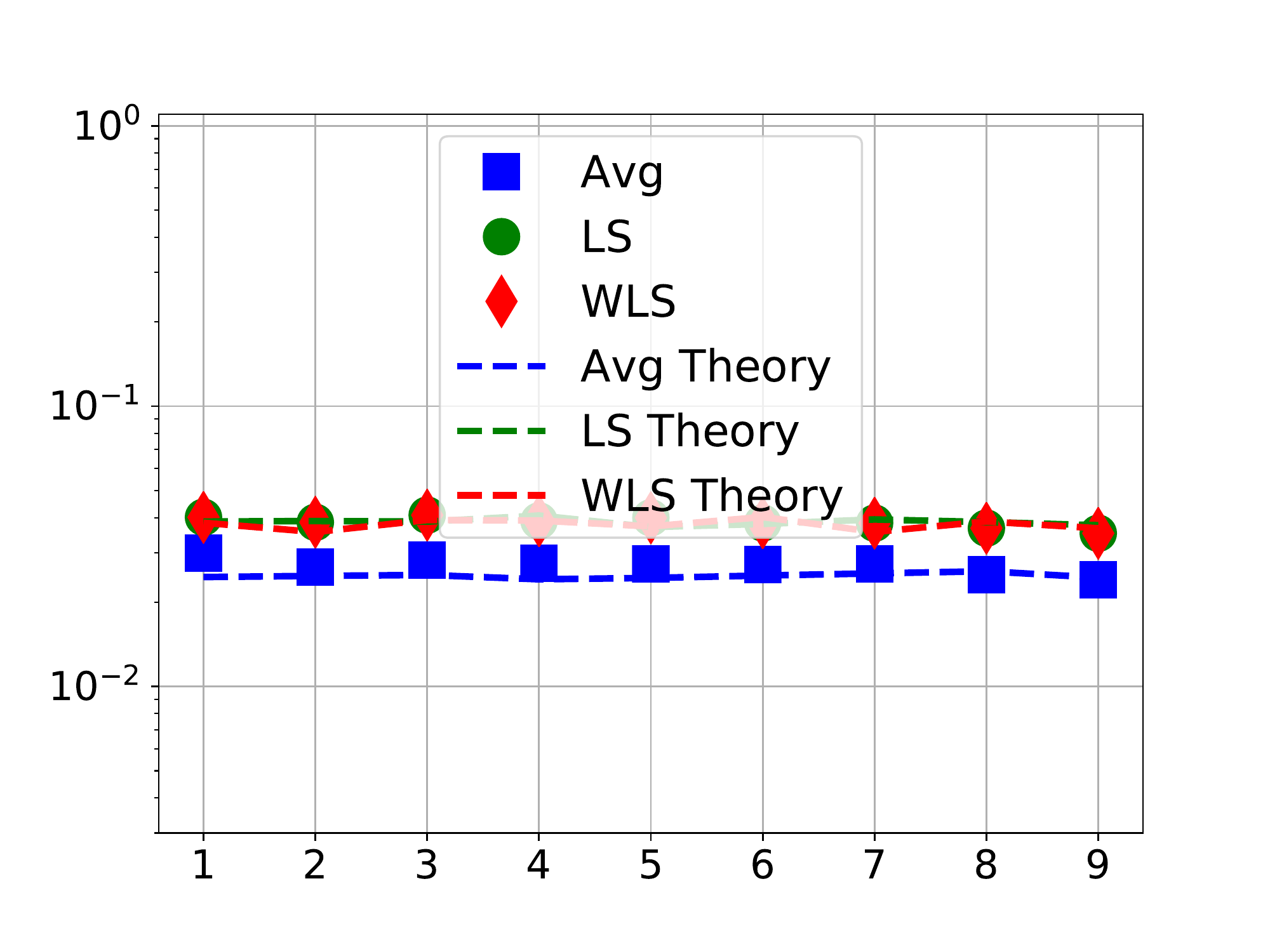}};
	\node at (-2.35,0) [rotate=90,scale=1.]{Class-wise Prob. of Error};
	\node at (0,-1.68) [scale=1.]{Class ID};
	\end{tikzpicture}
	\end{subfigure}
	\begin{subfigure}{1.8in}
	\begin{tikzpicture}
	\node at (0,0) {\includegraphics[scale=0.22]{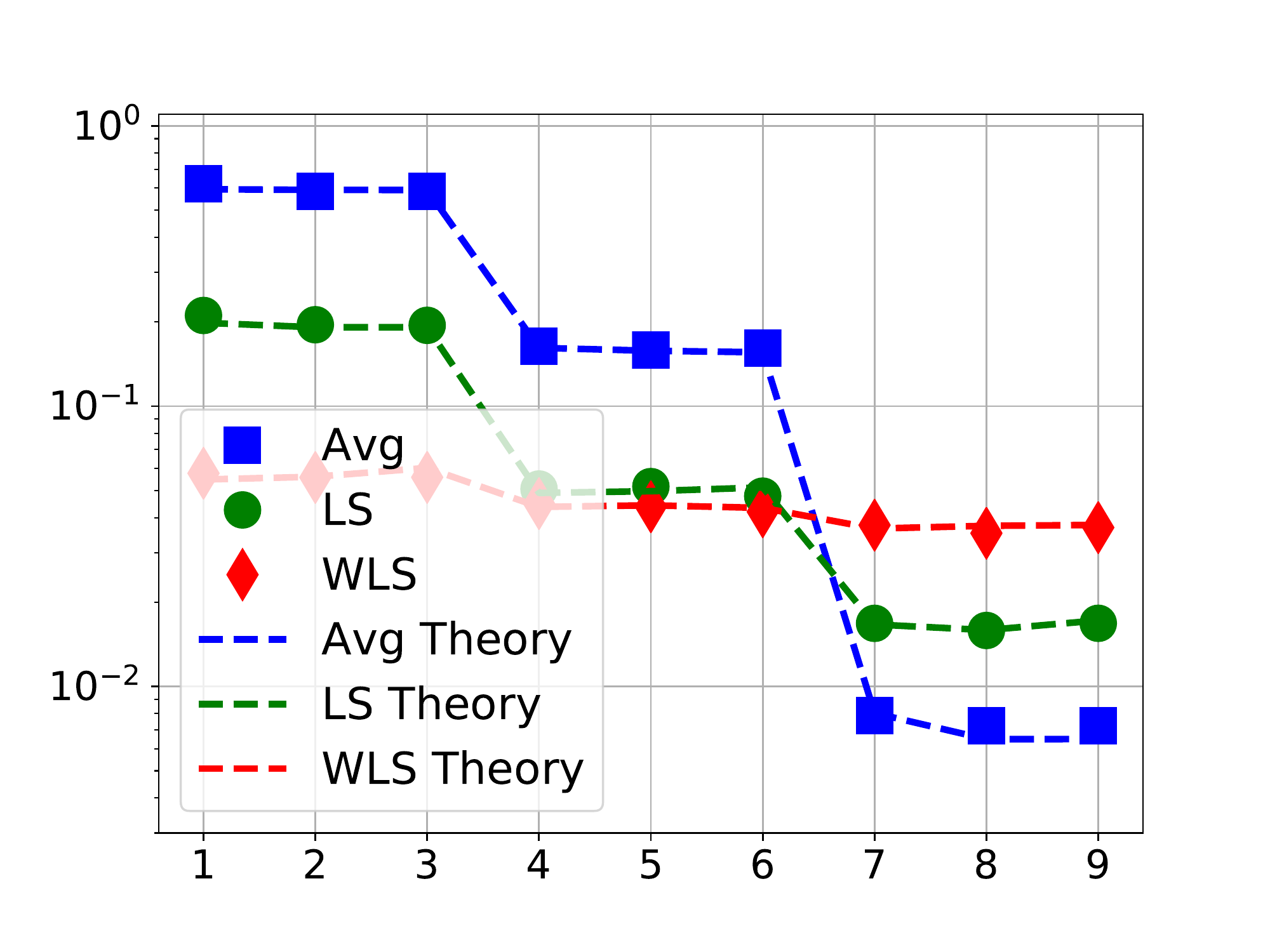}};
	\node at (-2.35,0) [rotate=90,scale=1.]{};
	\node at (0,-1.68) [scale=1.]{Class ID};
	\end{tikzpicture}
	\label{fig:positive}
	\end{subfigure}
	\begin{subfigure}{1.8in}
	\begin{tikzpicture}
	\node at (0,0) {\includegraphics[scale=0.22]{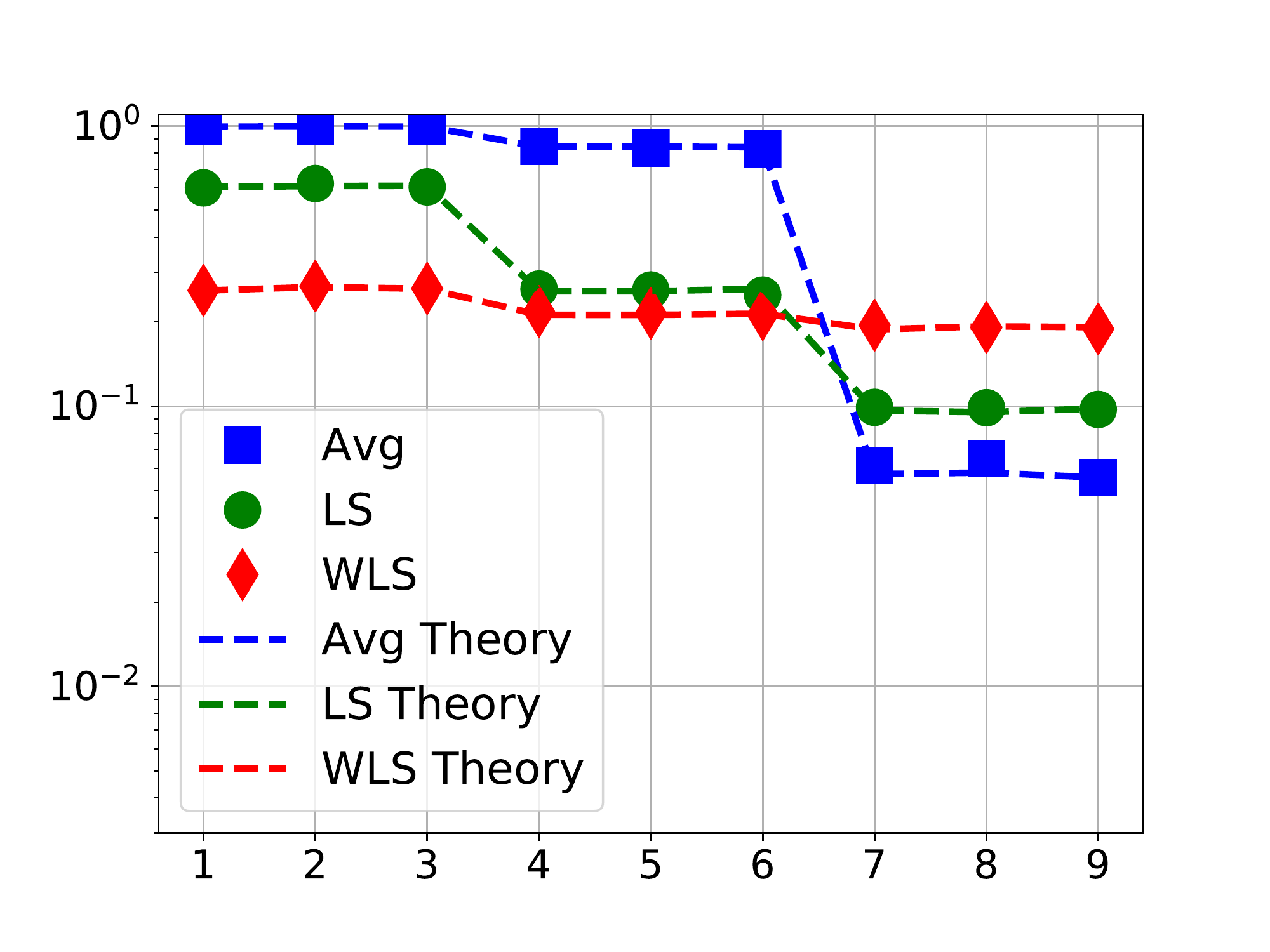}};
	\node at (-2.35,0) [rotate=90,scale=1.]{};
	\node at (0,-1.68) [scale=1.]{Class ID};
	\end{tikzpicture}
		\end{subfigure}
		\caption{Class-wise probability of errors corresponding to Figure \ref{fig11} with $\gamma=0.117$.}\label{fig12}
\end{figure}
\begin{figure}[t!]
	\centering
	\begin{subfigure}{1.8in}
	\begin{tikzpicture}
	\node at (0,0) {\includegraphics[scale=0.22]{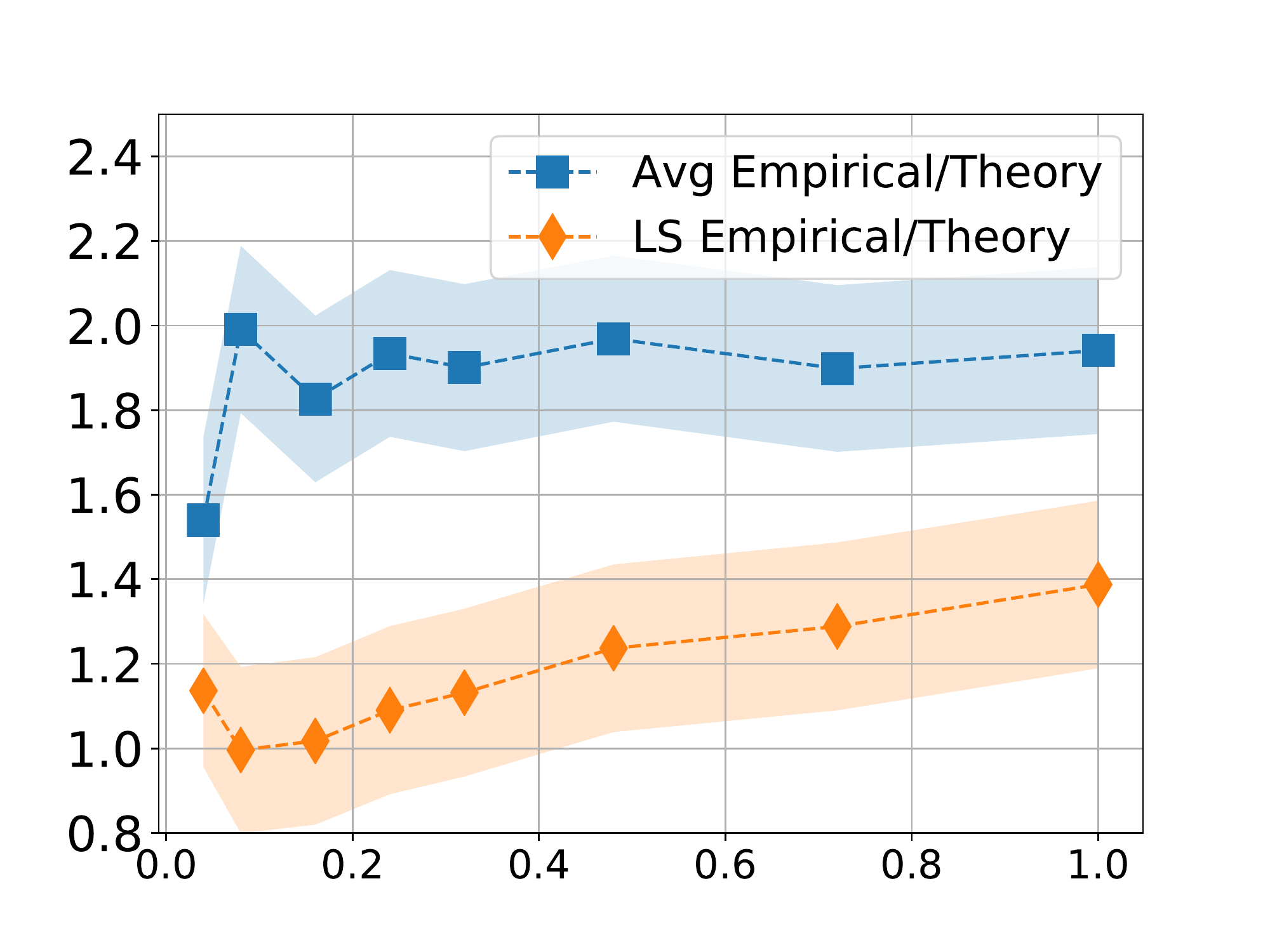}};
	\node at (-2.35,0) [rotate=90,scale=1.3]{$\frac{\text{Empirical Prob. Error}}{\text{Theory Prob. Error}}$};
	\node at (0,-1.68) [scale=1.]{\# of Classes / d};
	\end{tikzpicture}
	\end{subfigure}
	\begin{subfigure}{1.8in}
	\begin{tikzpicture}
	\node at (0,0) {\includegraphics[scale=0.22]{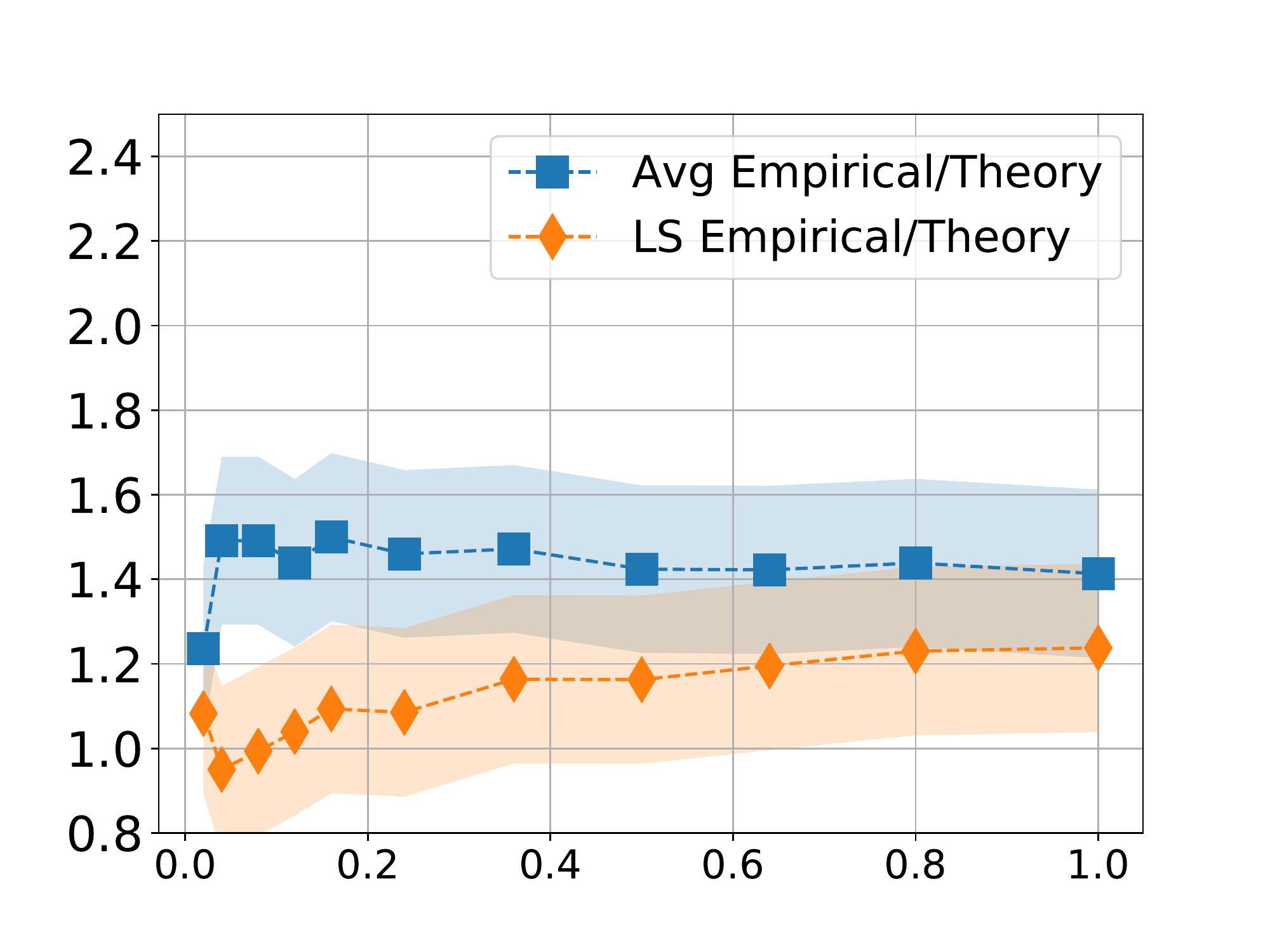}};
	\node at (-2.35,0) [rotate=90,scale=1.]{};
	\node at (0,-1.68) [scale=1.]{\# of Classes / d};
	\end{tikzpicture}
	\end{subfigure}
	\begin{subfigure}{1.8in}
	\begin{tikzpicture}
	\node at (0,0) {\includegraphics[scale=0.22]{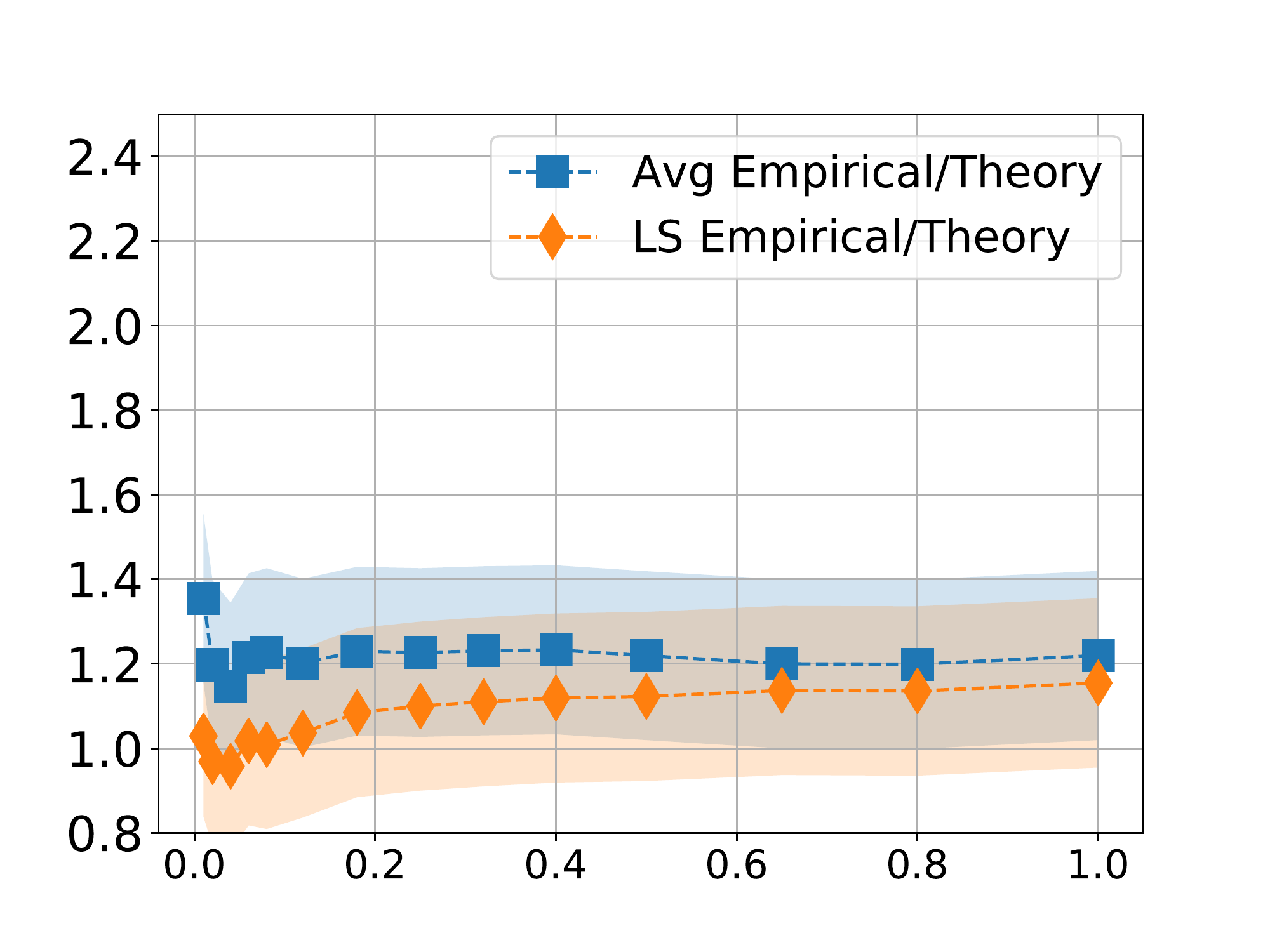}};
	\node at (-2.35,0) [rotate=90,scale=1.]{};
	\node at (0,-1.68) [scale=1.]{\# of Classes / d};
	\end{tikzpicture}
		\end{subfigure}
		\caption{GMM, $K/d$ is varied from $0$ to $1$ while keeping $K\gamma$ constant for (a) $d=50$, (b) $d=100$, (c) $d=200$.}\label{fig13}
		\vspace{-0.5cm}
\end{figure}
\begin{figure}[t!]
	\centering
	\begin{subfigure}{1.8in}
	\begin{tikzpicture}
	\node at (0,0) {\includegraphics[scale=0.22]{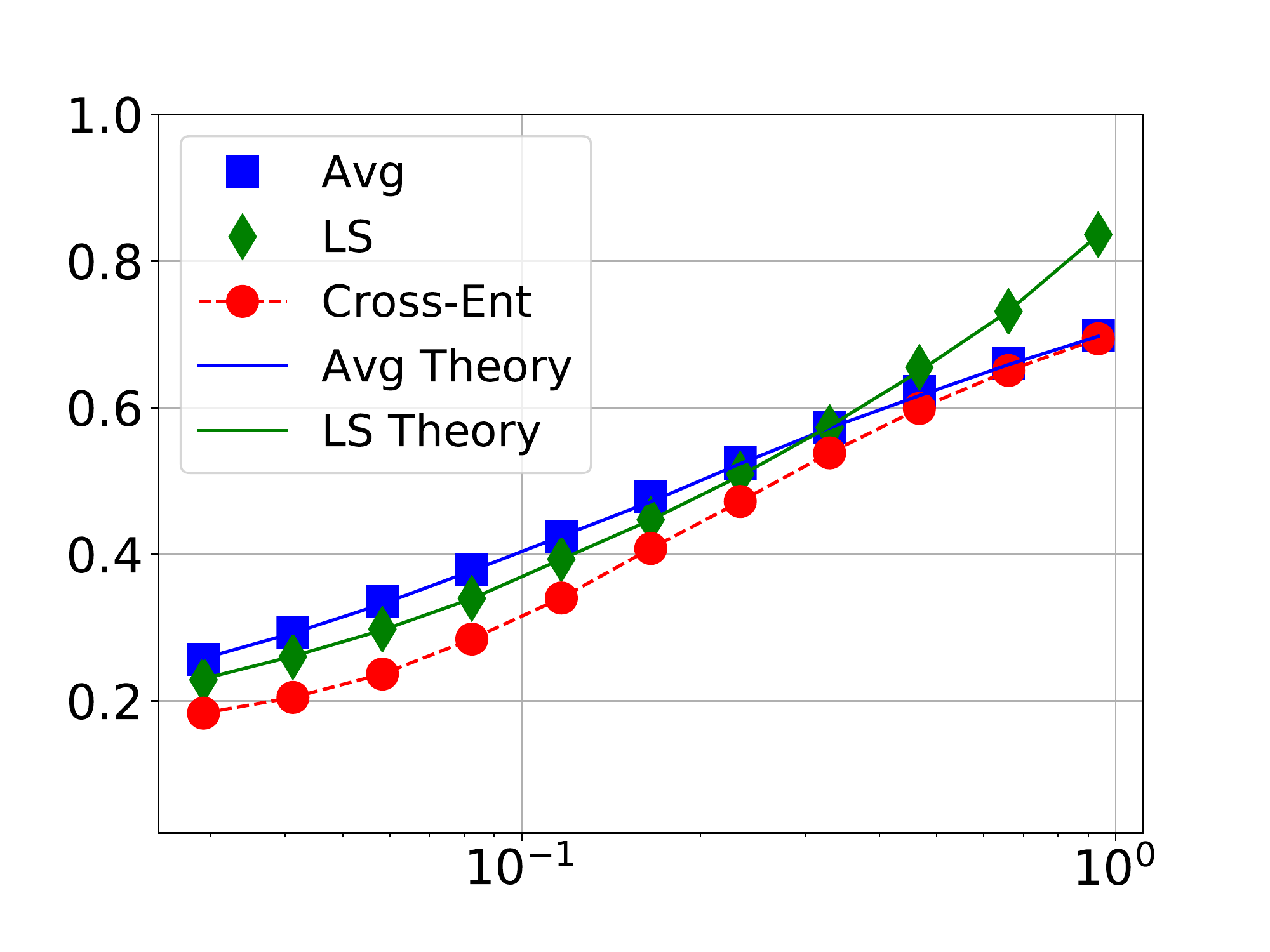}};
	\node at (-2.35,0) [rotate=90,scale=1.]{Prob. of Error};
	\node at (0,-1.68) [scale=1.]{$\gamma$};
	\end{tikzpicture}
	\end{subfigure}
	\begin{subfigure}{1.8in}
	\begin{tikzpicture}
	\node at (0,0) {\includegraphics[scale=0.22]{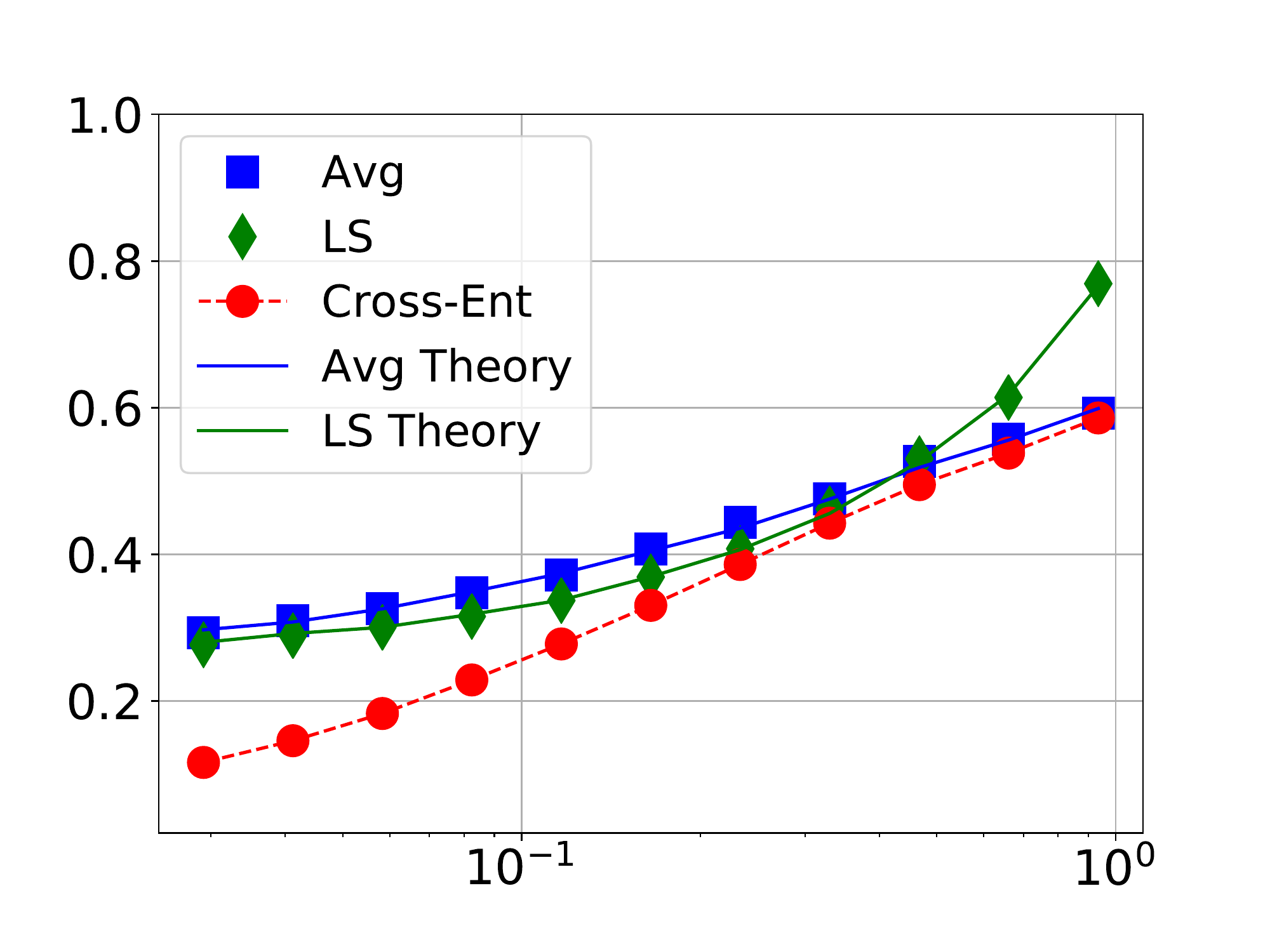}};
	\node at (-2.35,0) [rotate=90,scale=1.]{};
	\node at (0,-1.68) [scale=1.]{$\gamma$};
	\end{tikzpicture}
	\end{subfigure}
	\begin{subfigure}{1.8in}
	\begin{tikzpicture}
	\node at (0,0) {\includegraphics[scale=0.22]{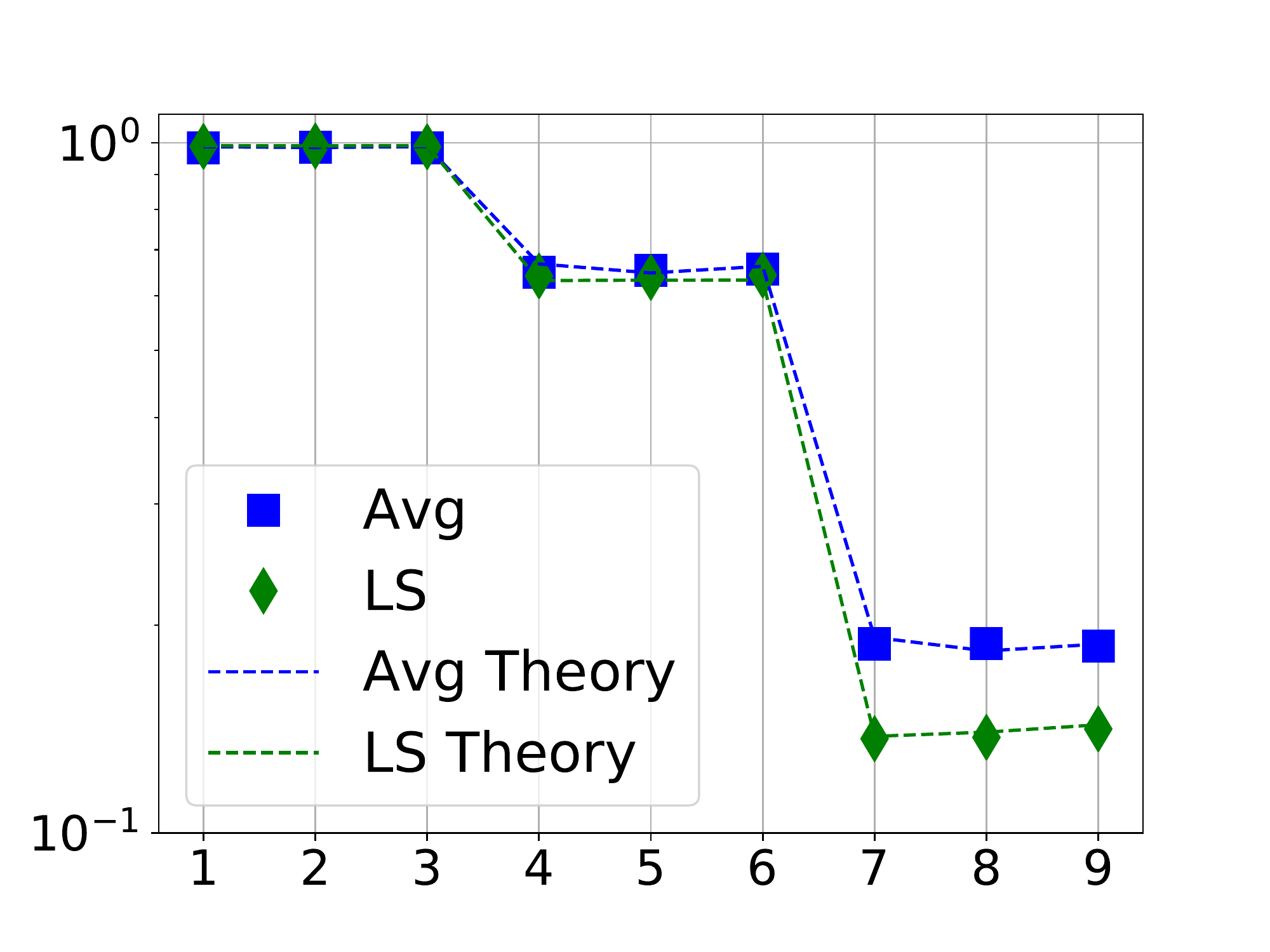}};
	\node at (-2.35,0) [rotate=90,scale=1.]{Class-wise Prob. of Error};
	\node at (0,-1.68) [scale=1.]{Class ID};
	\end{tikzpicture}
		\end{subfigure}
		\caption{MLM with orthogonal means for (a) equal norms and (b) different norms. (c) Class-wise probability errors for (b).}\label{fig14}
\end{figure}

\section{Proof outline for least-squares: key ideas and challenges}\label{sec:key2}
In this section, we provide a proof sketch for the analysis of the multiclass least-squares (LS) classifier. 

Specifically, we discuss our approach towards specifying the high-dimensional limits of the key quantities needed to evaluate the classification error:
$
\bb, \Sigmab_{\w,\mub},
$ and, $ \Sigmab_{\w,\w}$. For simplicity, we focus here on the performance of the LS classifier GMM. We note that our proofs for the MLM and the Weighted Least-Squares (WLS) classifiers follow the same general strategy, but in some parts require more involved and intricate analysis and derivations. Our proof follows the following general steps; see the appendix for complete details and derivations.


\vp
\noindent\textbf{Step I: Decomposing the loss across classes.} Recall from Section \ref{classalg} that the multiclass LS classifier produces a linear classifier $\vct{x}\mapsto \mtx{W}\vct{x}+\vct{b}$ via a least-squares fit to the training data:
\begin{align}
(\Wh,\bh):=\frac{1}{2n}\fronorm{\mtx{W}\mtx{X}+\vct{b}\vct{1}_n^T-\mtx{Y}}^2.  \label{eq:LS_def_app}
\end{align}
Notice that the objective function above is separable. That is, 
\begin{align*}
\frac{1}{2n}\fronorm{\mtx{W}\mtx{X}+\vct{b}\vct{1}_n^T-\mtx{Y}}^2=\frac{1}{2n}\sum_{\ell=1}^k\twonorm{\mtx{X}^T\vct{w}_\ell+b_\ell\vct{1}_n-\mtx{Y}_\ell}^2.
\end{align*}
 Hence, for each $\ell\in[k]$, 
\bea
(\wh_\ell,\widehat{b}_\ell) = \arg\min_{\vct{w}_\ell, {b}_\ell}\text{ }\frac{1}{2n}\twonorm{\mtx{X}^T\vct{w}_\ell+b_\ell\vct{1}_n-\mtx{Y}_\ell}^2. \label{eq:LS_sep}
\end{align}
This decomposition is convenient for analysis as it is easier to compute the statistical properties of the simple single-output LS in \eqref{eq:LS_sep} compared to the multi-output objective in \eqref{eq:LS_def_app}. Indeed, as we show, in Step III, this simplification will eventually allow us to compute the high-dimensional behavior of the following key quantities for all $\ell\in[k]$: (i) the intercept $\bh_\ell$, (ii) the mean-correlations $\inp{\wh_\ell}{\mub_c},~c\in[k]$, (iii) the norm $\twonorm{\wh_\ell}$. 

\vp\noindent\textbf{Step II: Reduction to an Auxiliary Optimization (AO) problem via CGMT.} To calculate the high-dimensional statistical behavior of  \eqref{eq:LS_sep} we use the Convex Gaussian min-max Theorem (CGMT) \cite{StoLasso,COLT} framework. We provide a brief introduction of the CGMT machinery in Section \ref{sec:back_CGMT}. Roughly stated, this framework allows us to replace a \emph{Primary Optimization} (PO) problem of the form \eqref{eq:LS_sep} with an \emph{Auxiliary Optimization} (AO) problem that is simpler to analyze, but is predictive of the behavior of the latter. For instance, for the PO in \eqref{eq:LS_sep} in the GMM, after some algebraic manipulations, the AO problem takes the form
\begin{align}
\label{PFAO}
\frac{1}{2}\left(\min_{\vct{w}_\ell, {b}_\ell}\text{ } \frac{1}{\sqrt{n}}\twonorm{ \sigma\twonorm{\vct{w}_\ell}\vct{g}+\mtx{Y}^T\mtx{M}^T\vct{w}_\ell+b_\ell\vct{1}_n-\mtx{Y}_\ell}+\frac{1}{\sqrt{n}}\sigma \vct{h}^T\vct{w}_\ell\right)_{+}^2,
\end{align}
where $(x)_{+}:=\max(0,x)$ and $\vct{g}\in\R^n$ and $\vct{h}\in\R^d$ are two independent Gaussian random vectors distributed as $\mathcal{N}(\vct{0},\mtx{I}_n)$ and $\mathcal{N}(\vct{0},\mtx{I}_d)$.

\vp\noindent\textbf{Step III: Simplification of the AO and computing $\mtx{\Sigma}_{\w,\mub}$ and $\vct{b}$.}
In this step we carry out a series of intricate calculations to further simplify \eqref{PFAO} and characterize its various asymptotic properties. At a high-level, we follow the principled machinery introduced in \cite{COLT,Master}, organizing our analysis in three intermediate steps: (a) Scalarization; (b) Convergence analysis; and (c) Deterministic analysis. We note that each one of these intermediate steps for the multiclass setting is more involved than in previously considered regression and binary classification settings. The detailed derivations are deferred to the Appendix \ref{sec:proof_LS_GM_main}. At the end of this analysis step, we have computed the high-dimensional behavior of the intercepts $\bh_\ell,~\ell\in[k]$, the mean-correlations $\inp{\wh_\ell}{\mub_c},~\ell,c\in[k]$, the norms $\twonorm{\wh_\ell},~\ell\in[k]$ and the LS training loss $ \twonorm{\mtx{X}^T\wh_\ell+\bh_\ell\vct{1}_n-\mtx{Y}_\ell}$. In particular, for GMM these calculations allow us to conclude the following limits for all $\ell\in[k]$:

\begin{align}
\widehat{b}_\ell \rP \pi_\ell\left(1-(\eb_\ell-\pib)^T\Vb\Sigmab\Deltab^{-1}\Sigmab\Vb^T\pib\right), \qquad
\M^T\wh_\ell  \rP  {\pi_\ell}\Vb\Sigmab\Deltab^{-1}\Sigmab\Vb^T\left(\eb_\ell-\pib\right),\quad \label{eq:ell_lims_sketch}
\end{align}
and 
\begin{align}\label{eq:norm_lims_sketch}
\twonorm{\w_\ell}^2 \rP \frac{\gamma}{(1-\gamma)\sigma^2}\pi_\ell(1-\pi_\ell)  + \pi_\ell^2 \left(\eb_\ell-\pib\right)^T\Vb\Sigmab \Deltab^{-1} \left(\Deltab^{-1} - \frac{\gamma}{(1-\gamma)\sigma^2}\Iden_{r}\right) \Sigmab\Vb^T \left(\eb_\ell - \pib \right) \,.
\end{align}
where  $\Deltab:=\sigma^2\Iden_r + \Sigmab\Vb^T\Pb\Vb\Sigmab \succ \zero_{r\times r}$ and $
\Pb := \diag{\pib}-\pib\pib^T$.

Expressing \eqref{eq:ell_lims_sketch} in matrix form leads to \eqref{eq:LS_GMa} in Theorem \ref{thm:LS_GM}. Thus, it remains to prove \eqref{eq:LS_GMb}, i.e., to determine the high-dimensional limit of $\Sigma_{\w,\w}$. Note that \eqref{eq:norm_lims_sketch} already determines the diagonal entries of $\Sigma_{\w,\w}$. However, thus far, our analysis treats the optimization of each classifier $\wh_\ell,~\ell\in[k]$ independently and provides no information for the cross-correlation $\inp{\wh_\ell}{\wh_c},\ell\neq c\in[k]$

\vp\noindent\textbf{Step IV: Computing $\mtx{\Sigma}_{\w,\w}$ and capturing cross-correlations.}
The final and most involved part of our analysis is characterizing the asymptotic behavior of $\mtx{\Sigma}_{w,w}$. To see why this is particularly challenging note that the reduction from \eqref{eq:LS_def_app} to \eqref{eq:LS_sep} ``breaks" the dependence of all 
$\wh_1,\wh_2,\ldots,\wh_k$ on the \emph{same} feature matrix $\X$. Capturing this dependence is crucial in determining  the ``cross-correlations" $\inp{\wh_\ell}{\wh_c},~\ell\neq c$. As noted in Section \ref{sec:test} the matrix $\mtx{\Sigma}_{\w,\w}$ is needed to calculate the class-wise and total miss-classification errors. Unfortunately, the CGMT is \emph{not} directly applicable to the multi-output LS optimization in \eqref{eq:LS_def_app}. Our idea to circumvent this challenge builds on the following simple observation: the vector $\wh_{\ell,c} = \wh_\ell + \wh_c$ is itself the solution to another simple single-output LS problem.
\begin{lemma}\label{lem:joint_LS_sketch}
For $\ell\neq c\in[k]$, let $\wh_\ell$, $\wh_c$ be the $\ell$ and $c$-th row of $\widehat{\mtx{W}}$ which is the solution to the multi-output least-squares minimization \eqref{eq:LS_def_app}. Denote $\wh_{\ell,c}:=\wh_\ell+\wh_c$. Then, $\wh_{\ell,c}$ is a minimizer in the following single-output least-squares problem:
\begin{align}
\wh_{\ell,c} = \arg\min_{\w, b} \frac{1}{2n}\twonorm{\Y_{\ell} + \Y_{c} - \X^T \w - b\one_n }^2\,.\nn
\end{align}
\end{lemma}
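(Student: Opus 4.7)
My plan is to exploit two elementary facts: the separability of the multi-output LS objective \eqref{eq:LS_def_app} across its $k$ output rows (already invoked in Step I), and the linearity of the least-squares argmin map in its target vector.

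First, I would note that
$$\fronorm{\mtx{W}\X + \vct{b}\one_n^T - \Y}^2 = \sum_{j=1}^k \twonorm{\X^T\w_j + b_j\one_n - \Y_j}^2,$$
where the right-hand side separates cleanly in the decision variables as well, so $(\wh_\ell,\hat b_\ell)$ and $(\wh_c,\hat b_c)$ are independent minimizers of the single-output LS in \eqref{eq:LS_sep} with targets $\Y_\ell$ and $\Y_c$ respectively.

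Next, I would rewrite each single-output subproblem in the compact form $\min_\theta \twonorm{\mtx{A}\theta - \vct{t}}^2$ with augmented design $\mtx{A} := [\X^T~\one_n]\in\R^{n\times(d+1)}$, parameter $\theta := (\w;b)\in\R^{d+1}$, and target $\vct{t}=\Y_j$. The set of minimizers of any such quadratic is the affine subspace $\mtx{A}^\dagger\vct{t} + \ker(\mtx{A})$, which is additive in $\vct{t}$: if $\theta_\ell$ and $\theta_c$ are minimizers for targets $\vct{t}_\ell$ and $\vct{t}_c$, then $\theta_\ell+\theta_c \in \mtx{A}^\dagger(\vct{t}_\ell+\vct{t}_c) + \ker(\mtx{A})$ is itself a minimizer for target $\vct{t}_\ell+\vct{t}_c$. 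Applied with $(\vct{t}_\ell,\vct{t}_c) = (\Y_\ell,\Y_c)$ and $(\theta_\ell,\theta_c) = ((\wh_\ell;\hat b_\ell),(\wh_c;\hat b_c))$, reading off the first $d$ coordinates gives that $\wh_\ell+\wh_c$ is a minimizer of $\frac{1}{2n}\twonorm{\Y_\ell+\Y_c - \X^T\w - b\one_n}^2$, which is exactly the statement of the lemma.

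I do not anticipate any real obstacle. Under the standing assumption $\gamma<1$ the $d+1$ columns of $\mtx{A}$ are almost surely linearly independent in both data models, so the minimizer is in fact unique and the identity $\wh_{\ell,c}=\wh_\ell+\wh_c$ holds pathwise. The real usefulness of the lemma is downstream: it reduces the analysis of a cross-correlation to an instance of the same single-output AO machinery already built in Steps II--III, applied now to the target $\Y_\ell+\Y_c$. From the resulting high-dimensional limit of $\twonorm{\wh_\ell+\wh_c}^2$ one then recovers $\inp{\wh_\ell}{\wh_c}$ via the polarization identity $2\inp{\wh_\ell}{\wh_c} = \twonorm{\wh_\ell + \wh_c}^2 - \twonorm{\wh_\ell}^2 - \twonorm{\wh_c}^2$, supplying the last missing ingredient of $\mtx{\Sigma}_{\w,\w}$ in \eqref{eq:LS_GMb}.
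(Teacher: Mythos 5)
Your proposal is correct and rests on the same underlying fact as the paper's proof: the least-squares optimality conditions are linear in the target, so minimizers for targets $\Y_\ell$ and $\Y_c$ add to give a minimizer for $\Y_\ell+\Y_c$. The paper verifies this by summing the normal (KKT) equations $\X\left(\Y_\ell-\X^T\wh_\ell-\widehat{b}_\ell\one_n\right)=0$ and $\X\left(\Y_c-\X^T\wh_c-\widehat{b}_c\one_n\right)=0$, whereas you package it as additivity of the affine solution set $\mtx{A}^\dagger\vct{t}+\ker(\mtx{A})$ for the augmented design $\mtx{A}=[\X^T~\one_n]$ — an equivalent argument that also makes the bias stationarity condition explicit.
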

Thanks to Lemma \ref{lem:joint_LS_sketch}, we can use the CGMT to characterize the limiting behavior of $\twonorm{\wh_\ell+\wh_c}$. These calculations are similar to (but, in certain cases, such as for weighted least-squares, more involved than) those in Steps II and III above. Now note that an asymptotic characterization of $\twonorm{\wh_\ell+\wh_c}$ immediately yields the asymptotic characterization of $\inp{\wh_\ell}{\wh_c}$ as
\begin{align}\label{eq:inp_from_norms}
\inp{\wh_\ell}{\wh_c} = \frac{\twonorm{\wh_\ell+\wh_c}^2-\twonorm{\wh_\ell}^2-\twonorm{\wh_c}^2}{2},
\end{align} 
and $\twonorm{\wh_\ell}, \twonorm{\wh_c}$ are already computed in Step IV (cf. \eqref{eq:norm_lims_sketch}). For the GMM, the analysis in this step allow us to calculate the asymptotic behavior of $\mtx{\Sigma}_{w,w}$ as promised in \eqref{eq:LS_GMb} in Theorem \ref{thm:LS_GM}:
\begin{align*}
\Sigmab_{\w,\w} &\rP\frac{\gamma}{(1-\gamma)\sigma^2} \Pb+ \Pb\Vb\Sigmab \Deltab^{-1} \Big(\Deltab^{-1} - \frac{\gamma}{(1-\gamma)\sigma^2}\Iden_r\Big) \Sigmab\Vb^T \Pb.
\end{align*}

\subsection{Background on the CGMT}\label{sec:back_CGMT}

The CGMT is an extension of Gordon's Gaussian min-max inequality (GMT) \cite{Gor88}. In the context of high-dimensional inference problems, Gordon's inequality was first successfully used in the study oh sharp phase-transitions in noiseless Compressed Sensing \cite{Sto,Cha,TroppEdge,Sto}. More recently, \cite{StoLasso} (see also \cite[Sec.~10.3]{TroppEdge}) discovered that Gordon's inequality is essentially tight for certain convex problems. A concrete and general formulation of this idea was given by \cite{COLT} and was called the CGMT. 


In order to summarize the essential ideas, consider the following two Gaussian processes:
\begin{subequations}\label{eq:POAO}
\begin{align}
X_{\w,\ub} &:= \ub^T \G \w + \psi(\w,\ub),\label{eq:PO_obj}\\
Y_{\w,\ub} &:= \twonorm{\w} \g^T \ub + \twonorm{\ub} \h^T \w + \psi(\w,\ub),\label{eq:AO_obj}
\end{align}
\end{subequations}
where: $\G\in\mathbb{R}^{n\times d}$, $\g \in \mathbb{R}^n$, $\h\in\mathbb{R}^d$, they all have entries iid Gaussian; the sets $\mathcal{S}_{\w}\subset\R^d$ and $\mathcal{S}_{\ub}\subset\R^n$ are compact; and, $\psi: \mathbb{R}^d\times \mathbb{R}^n \to \mathbb{R}$. For these two processes, define the following (random) min-max optimization programs, which are refered to as the \emph{primary optimization} (PO) problem and the \emph{auxiliary optimization} AO:
\begin{subequations}
\begin{align}\label{eq:PO_loc}
\Phi(\G)&=\min\limits_{\w \in \mathcal{S}_{\w}} \max\limits_{\ub\in\mathcal{S}_{\ub}} X_{\w,\ub},\\
\label{eq:AO_loc}
\phi(\g,\h)&=\min\limits_{\w \in \mathcal{S}_{\w}} \max\limits_{\ub\in\mathcal{S}_{\ub}} Y_{\w,\ub}.
\end{align}
\end{subequations}

%
%
If the sets $\mathcal{S}_{\w}$ and $\mathcal{S}_{\ub}$ are convex and \emph{bounded}, and $\psi$ is continuous \emph{convex-concave} on $\mathcal{S}_{\w}\times \mathcal{S}_{\ub}$, then, for any $\nu \in \mathbb{R}$ and $t>0$, it holds \cite[Thm.~3]{COLT}:
\begin{equation}\label{eq:cgmt}
\mathbb{P}\left( \abs{\Phi(\G)-\nu} > t\right) \leq 2\,\mathbb{P}\left(  \abs{\phi(\g,\h)-\nu} > t \right).
\end{equation}
In words, concentration of the optimal cost of the AO problem around $q^\ast$ implies concentration of the optimal cost of the corresponding PO problem around the same value $q^\ast$.  Asymptotically, if we can show that $\phi(\g,\h)\rP q^\ast$, then we can conclude that $\Phi(\G)\rP q^\ast$. Moreover, starting from \eqref{eq:cgmt} and under appropriate strict convexity conditions, the CGMT shows that concentration of the optimal solution of the AO problem implies concentration of the optimal solution of the PO around the same value. For example, if minimizers of \eqref{eq:AO_loc} satisfy $\twonorm{\w_\phi(\g,\h)} \rP \alpha^\ast$ for some $\alpha^\ast>0$, then, the same holds true for the minimizers of \eqref{eq:PO_loc}: $\twonorm{\w_\Phi(\G)} \rP \alpha^\ast$. Thus, one can analyze the AO to infer corresponding properties of the PO, the premise being of course that the former is simpler to handle than the latter. 

In \cite{Master}, the authors introduce a principled machinery that allows to (a) express a quite general family of convex inference optimization problems  in the form of the PO and (b) properly analyze the corresponding AO. In particular, the analysis of the AO is performed in three intermediate steps. First, the (random) optimization over vector variables is simplified to an easier optimization over only few scalar variables, termed the ``{scalarized AO}". After the scalarization step, it is possible to establish (uniform) convergence of the scalarized AO to a deterministic min-max optimization problem over only a few scalar variables. The convergence step is followed by the analysis of the latter deterministic problem, which leads to the desired asymptotic characterizations. Our proofs outlined in Section \ref{sec:key2} follow this general strategy, but the new idea introduced in Step IV therein is key to capture the asymptotic behavior of the off-diagonal entries of 	$\Sigmab_{\w\w}$.


\section{Future Directions}

This work aims at initiating a precise asymptotic study of multiclass classifiers that provides a promising setting for resolving a rich set of open questions regarding the (comparative) performance of classification algorithms as a function of the involved problem variables.
 As mentioned, even understanding the statistical performance of one-vs-all multiclass classifiers does not follow directly from the existing literature on binary classifiers. Extending the results of this paper to the one-vs-all logistic and SVM classifiers would allow for a principled comparison among these different choices. A possibly more challenging, albeit mathematically intriguing and practically relevant task, is characterizing the asymptotics of more complicated  (non-separable) losses, such as the cross-entropy loss. For this, even characterizing the asymptotic behavior of the correlations $\Sigmab_{\w,\mub}$ requires new ideas. The previously mentioned study of ``extreme multiclass classification" in which the number of classes $k$ is very large is another fascinating direction. 

%
%

\section*{Acknowledgments}
 C.~Thrampoulidis is partially supported by the NSF under Grant Numbers CCF-2009030 and HDR-1934641. S.~Oymak is partially supported by the NSF award CNS-1932254. M.~Soltanolkotabi is supported by the Packard Fellowship in Science
and Engineering, a Sloan Research Fellowship in Mathematics, an NSF-CAREER under award
$\#1846369$, the Air Force Office of Scientific Research Young Investigator Program (AFOSR-YIP)
under award $\#$FA$9550-18-1-0078$, DARPA Learning with Less Labels (LwLL) and FastNICS programs, and NSF-CIF awards $\#1813877$ and $\#2008443$. 

%

\bibliography{compbib}

\newpage
\appendix
\addtocontents{toc}{\protect\setcounter{tocdepth}{3}}
\tableofcontents

\newpage
\section{Additional Numerical Results}
\label{sec:additionalnum}

\begin{figure}[b!]
	\centering
\centering
\begin{tikzpicture}[scale=0.9,every node/.style={scale=0.9}]
		\node at (0,0) {\includegraphics[scale=0.45]{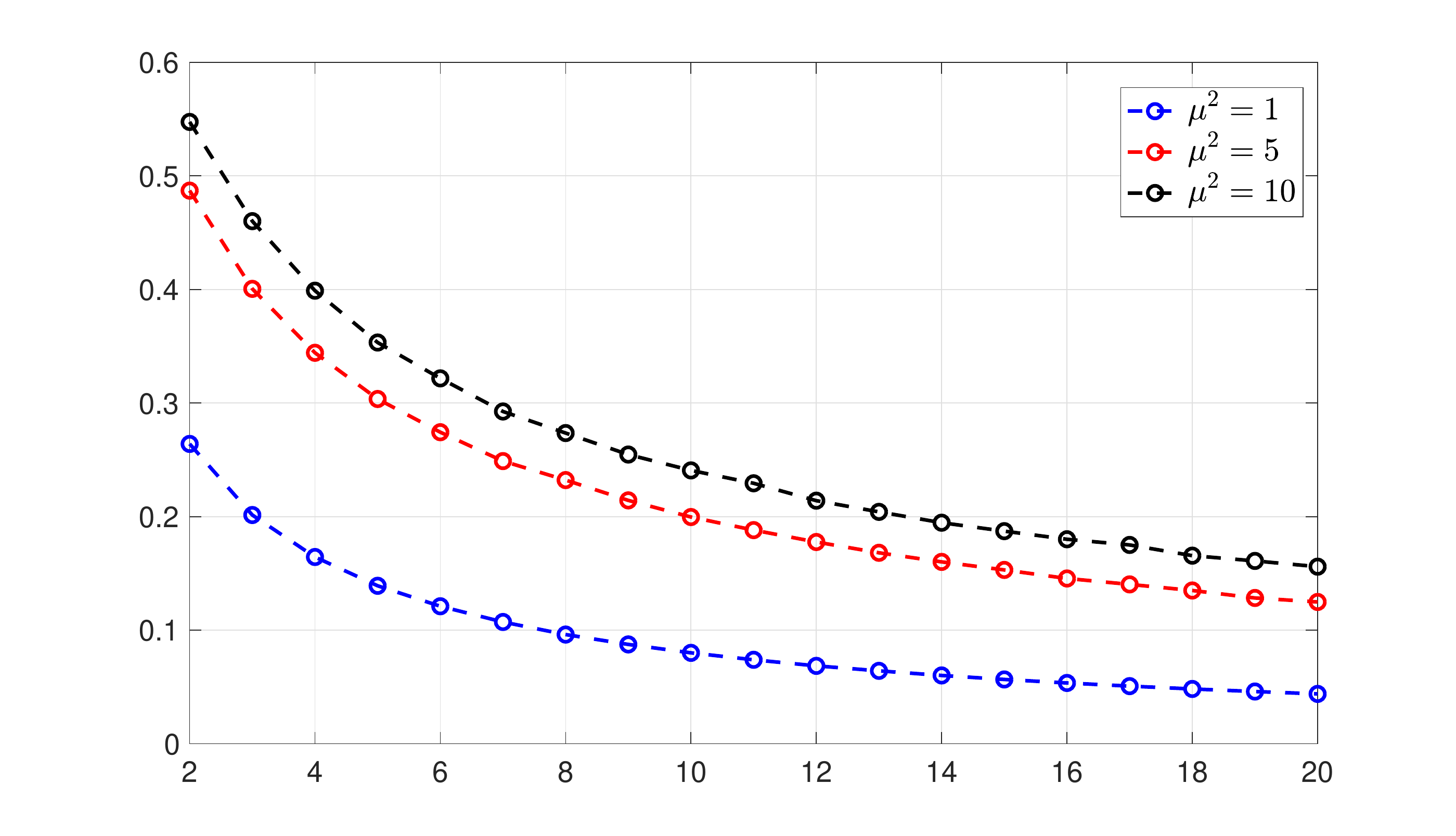}};
		\node at (-5.58,0) [rotate=90,scale=1.]{Transition point $\gamma_\star$};
		\node at (0,-3.5) [scale=1]{Number of classes $k$};
		\end{tikzpicture} \vspace{-10pt}
\caption{\small{The threshold $\gamma_\star$ of Proposition \ref{propo:gamma_star} as a function of the number of classes $k$ and the means' energy $\mu$. LS provably outperforms class-averaging for $\gamma<\gamma_\star$.}}
\label{fig:gamma_star}
\end{figure}

\begin{figure}[b!]
	\centering
	\begin{subfigure}{2.5in}
	\begin{tikzpicture}
	\node at (0,0) {\includegraphics[scale=0.3]{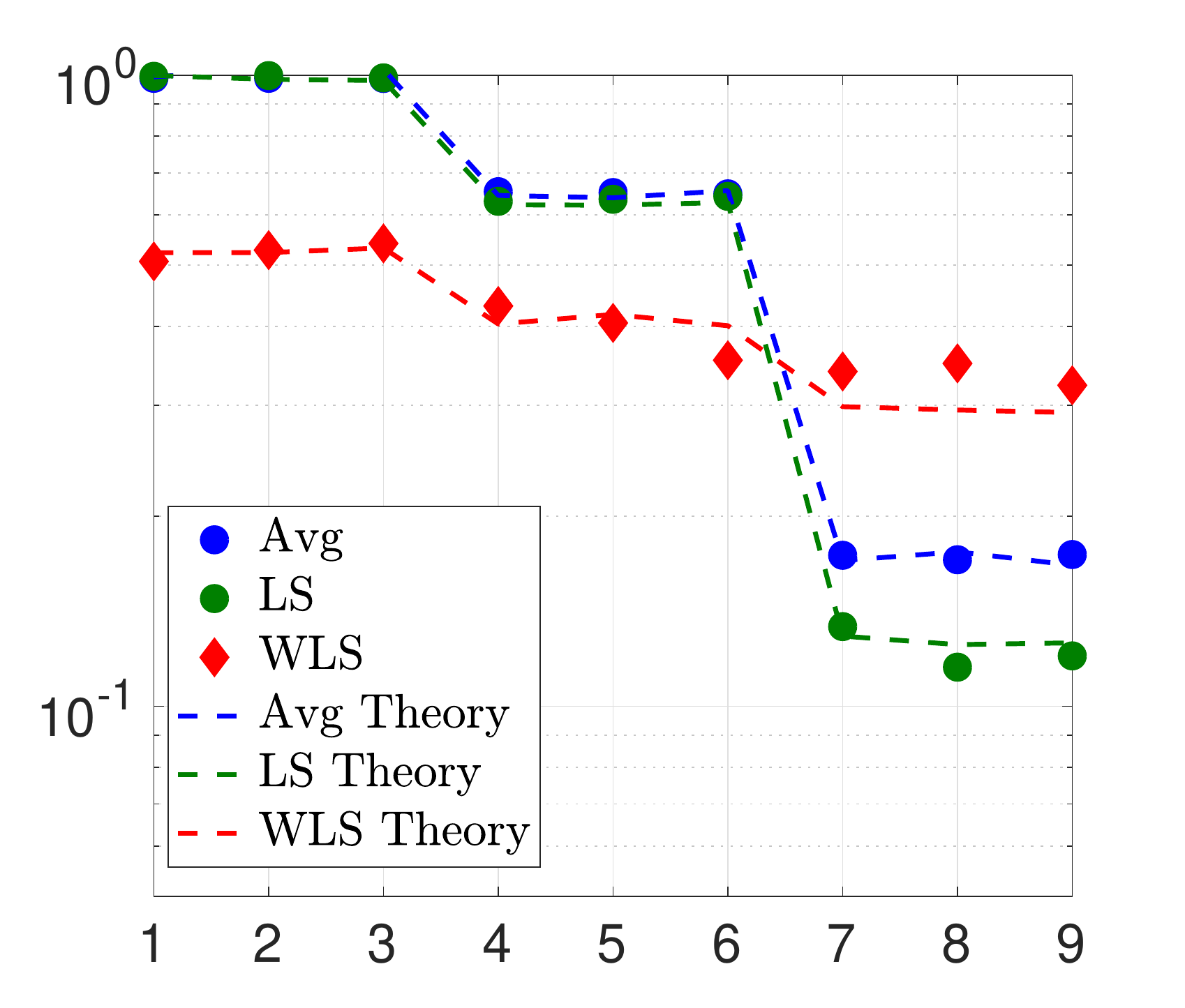}};
	\node at (-2.90,0) [rotate=90,scale=1]{Class-wise Prob. of Error};
	\node at (0,-2.26) [scale=1]{Class ID};
	\end{tikzpicture}\vspace{-5pt}
	\end{subfigure}
	\begin{subfigure}{2.5in}
	\begin{tikzpicture}
	\node at (0,0) {\includegraphics[scale=0.3]{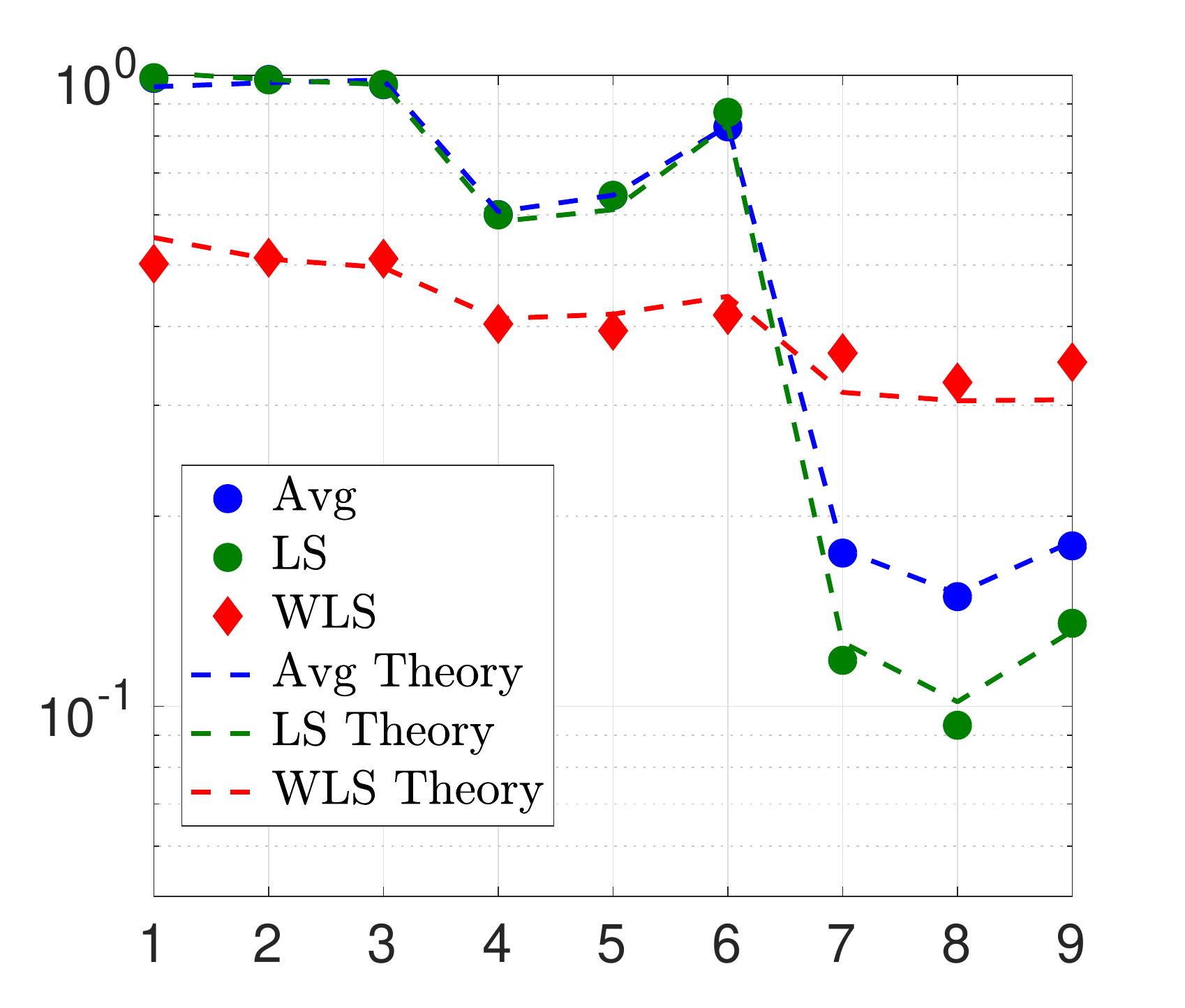}};
	\node at (-2.90,0) [rotate=90,scale=1.]{};
	\node at (0,-2.26) [scale=1.]{Class ID};
	\end{tikzpicture}\vspace{-5pt}
	\end{subfigure}
		\caption{
		Class-wise probabilities of error for MLM with (a) orthogonal means and (b) correlated means. 
		}\label{fig:MLM_class}
\end{figure}

\begin{figure}[b!]
	\centering
	\begin{subfigure}{1.8in}
	\begin{tikzpicture}
	\node at (0,0) {\includegraphics[scale=0.22]{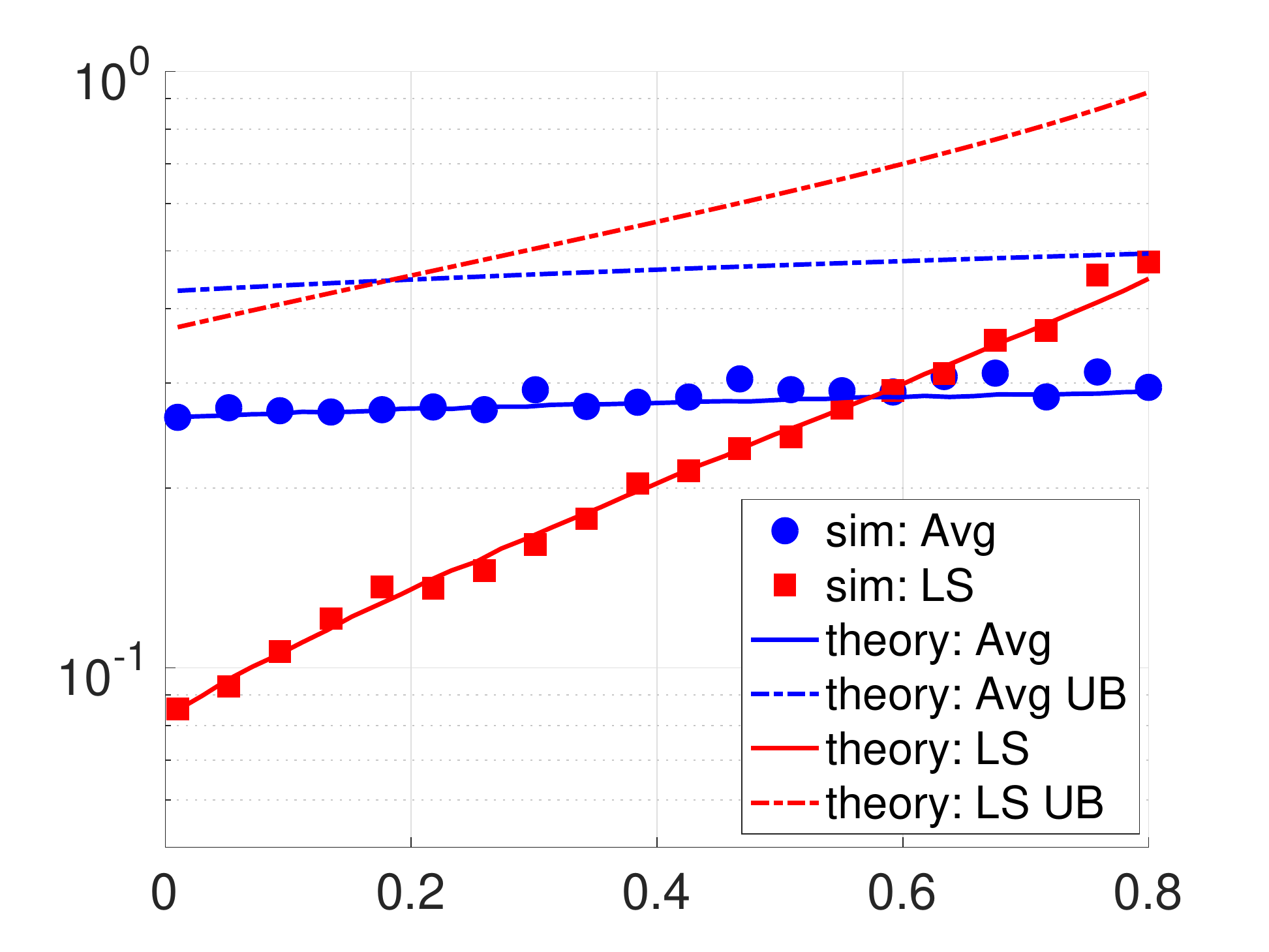}};
	\node at (-2.35,0) [rotate=90,scale=1]{Prob. of error};
	\node at (0,-1.72) [scale=1]{$\gamma$};
	\end{tikzpicture}\vspace{-5pt}
	\end{subfigure}
		\begin{subfigure}{1.8in}
	\begin{tikzpicture}
	\node at (0,0) {\includegraphics[scale=0.22]{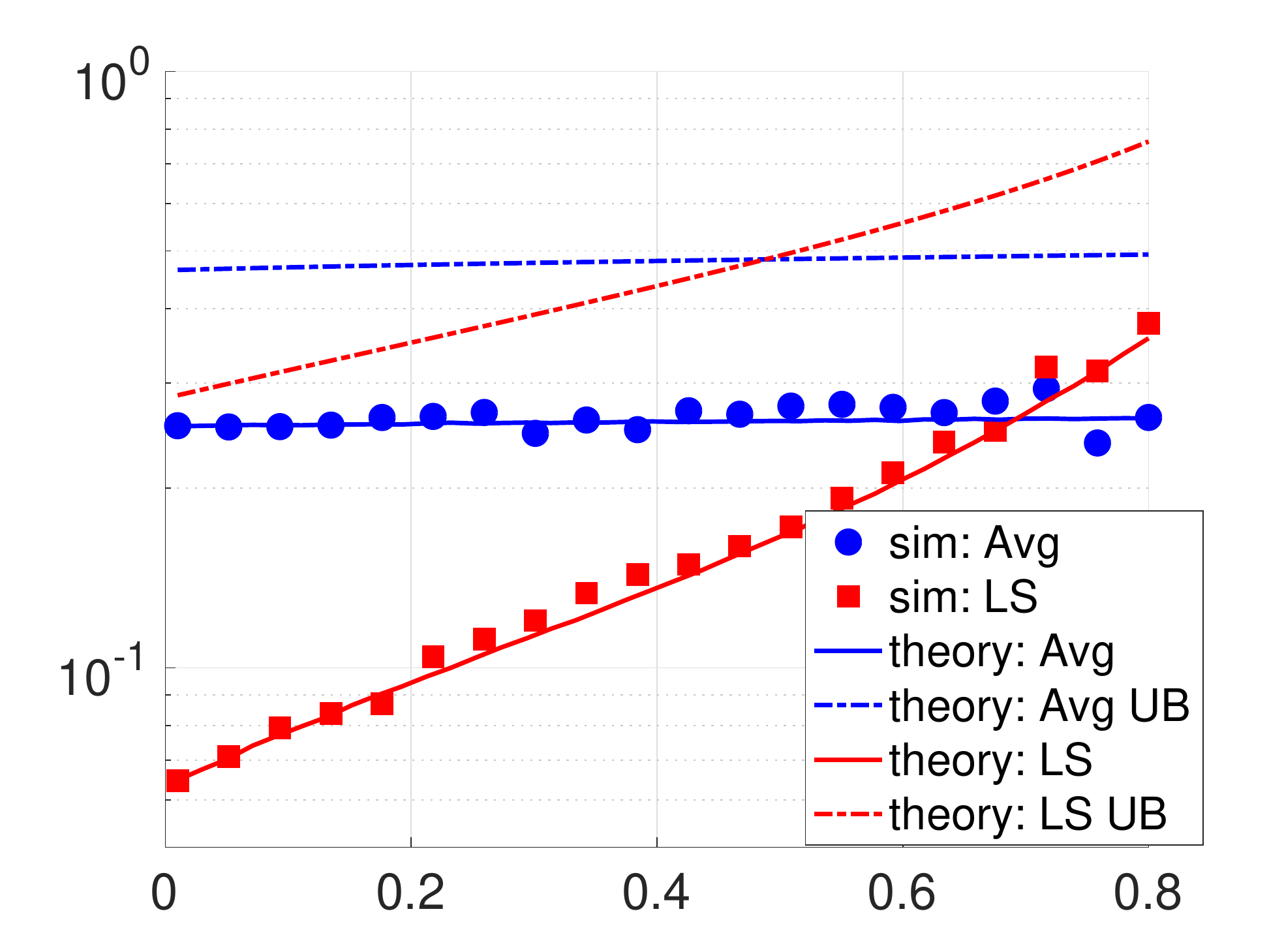}};
	\node at (-2.35,0) [rotate=90,scale=1.]{};
	\node at (0,-1.72) [scale=1.]{$\gamma$};
	\end{tikzpicture}\vspace{-5pt}
		\end{subfigure}\vspace{-0.16cm}
	\begin{subfigure}{1.8in}
	\begin{tikzpicture}
	\node at (0,0) {\includegraphics[scale=0.22]{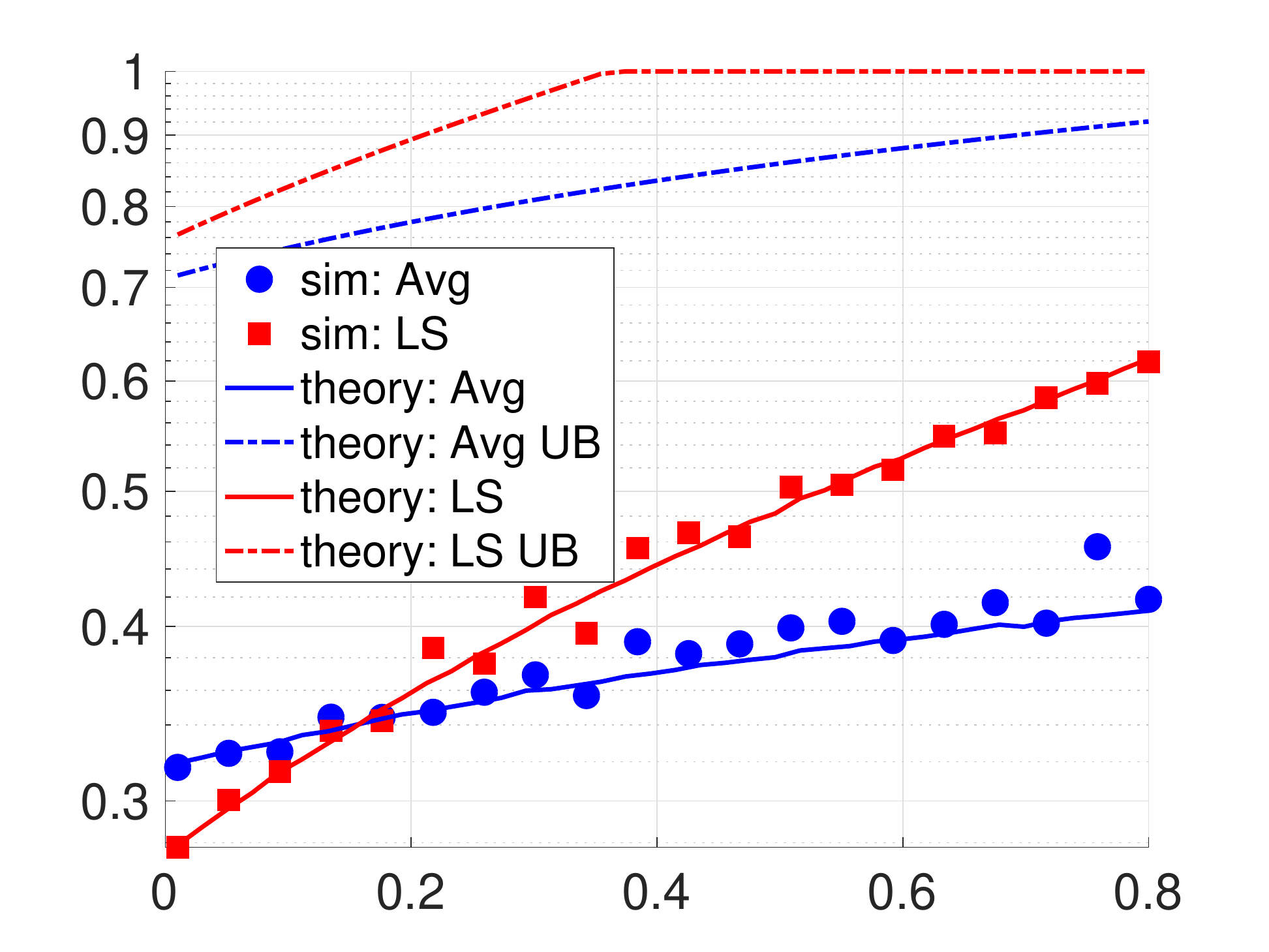}};
	\node at (-2.35,0) [rotate=90,scale=1.]{};
	\node at (0,-1.72) [scale=1.]{$\gamma$};
	\end{tikzpicture}\vspace{-5pt}
	\end{subfigure}
		\caption{Precise theoretical predictions compared to a theoretical upper bound (UB) obtained by Union bound that does \emph{not} require knowledge of the off-diagonal entries of $\Sigmab_{\w,\w}$. GMM with orthogonal means and (a) equal priors, different strengths; (b) different priors, different strengths; (c) different priors, equal strengths. See text for details.
		}\label{fig:UB}
\end{figure}

In this section, we provide further numerical experiments.

First, in Figure \ref{fig:gamma_star} we investigate the question: 
\text{\em{When does least-squares provably outperform averaging?}}
Our Proposition \ref{propo:gamma_star} provides a fundamental transition point in sample complexity above which least-squares is provably better than averaging under MLM. In Figure \ref{fig:gamma_star}, we visualize $\gamma_\star$ as a function of different number of classes as well as different levels of mean energy. Least-squares outperform averaging in the region below the lines displayed in Figure \ref{fig:gamma_star}. Our key message is that least-squares work better when the sample complexity is higher and the problem is less noisy. As the number of classes $k$ increase, the problem becomes more difficult/noisy and we require a larger sample complexity to ensure classifier achieves a similar amount of accuracy as small $k$. Following this intuition, as $k$ increases, $\gamma_\star$ shifts smaller due to larger sample requirement. Similarly energy $\mu$ directly controls the noise level of the problem, i.e.,~larger $\mu$ results in a larger signal-to-noise ratio. Thus, as we increase $\mu$, $\gamma_\star$ increases as well because same test accuracy can be achieved with smaller sample size.

Second, Figure \ref{fig:MLM_class} provides further experiments on the class-wise probabilities of the MLM model with $k=9$ classes for $\gamma=0.1$. Classes 1,2,3 have norms $\twonorm{\mub_i}=15$, while we have quadrupled the norms of classes 7,8,9 and doubled the norms of classes 4,5,6. In scenario (a) the means are orthogonal and in scenario (b) the means are highly correlated. The WLS shown corresponds to the following choice of weights: $\omega_i^2=1/\pib_\ell, \ell\in[k]$, where $\pib_\ell$ is the $\ell$th entry of the vector $\pib$ in \eqref{eq:alphas_gen}. The theoretical predictions for the class-wise error probabilities are computed using formula \eqref{eq:MLM_class}. As was the case for the GMM in Figure \ref{fig12}, we see that WLS creates the flattest class-wise errors. 

Finally, in Figure \ref{fig:UB} we investigate the following question: 
{\em{To what extent pairwise class correlations are necessary for performance prediction?}} Specifically, we consider a GMM setup with $k=5$ classes and orthogonal means under three scenarios:  (a) $\pi_1=\pi_2=\pi_3=\pi_4=\pi_5$, $4\twonorm{\mub_1}=4\twonorm{\mub_2}=2\twonorm{\mub_3}=2\twonorm{\mub_4}=\twonorm{\mub_5}=4\sqrt{3}$; (b)  $4\pi_1=4\pi_2=2\pi_3=2\pi_4=\pi_5$, $4\twonorm{\mub_1}=4\twonorm{\mub_2}=2\twonorm{\mub_3}=2\twonorm{\mub_4}=\twonorm{\mub_5}=4\sqrt{3}$; (c) $4\pi_1=4\pi_2=2\pi_3=2\pi_4=\pi_5$, $\twonorm{\mub_1}=\twonorm{\mub_2}=\twonorm{\mub_3}=\twonorm{\mub_4}=\twonorm{\mub_5}=\sqrt{3}$. The solid lines are exact performance predictions based on our theory for averaging and least-squares estimators. The dashed lines are the theoretical upper bounds, which do \emph{not} require the knowledge of cross-correlations between the classes (i.e.,~off-diagonal entries of $\bSi_{\w,\w}$ are unknown). These bounds are calculated by applying a union bound to  the class-wise probabilities $\P_{e|c}, ~c\in[k]$ in \eqref{eq:4UB} and further appropriately bounding the off-diagonal entries of $\bSi_{\w,\w}$ in terms of  the self-correlations of the classes, i.e.,~only the diagonal entries of $\bSi_{\w,\w}$. Please see Section \ref{sec:UB} for details. Overall, the bounds shown only depend on $\bh, \bSi_{\w,\mub}$ and $\diag{\bSi_{\w,\w}}$, which can all be obtained by studying the properties of isolated least-squares on individual classes without understanding their pairwise relations. 
While this suggests a simpler method to calculate theoretical bounds, there is a visible gap between such upper bounds and exact bounds and this gap is particularly more visible in the third scenario (c), where the bound becomes vacuous for LS. The gap remains visible in scenarios (a) and (b). At least, in these two cases comparing the bounds for averaging and LS to each other reveals the transition in performance gain between the two estimators. However, the cross-point of the curves does not coincide with the true one. This empirical study emphasizes the fact that pairwise correlations are indeed critical for exact asymptotic analysis and naive approaches cannot reproduce, in general, the results of our sharp analysis.

\section{Additional Results on Weighted Least-squares classifiers}
\subsection{WLS for GMM}
\label{GMWLS}
We now focus on characterizing the intercepts/correlation matrices for the WLS classifier.

\begin{theorem}\label{thm:WLS_GM}
Consider data generated according to GMM and $\gamma<1$. Consider a weighted LS classifier with weights $\mtx{D}=\diag{\omega_1,\ldots,\omega_k}$ and let $\eta$ be the unique solution to $\sum_{\ell=1}^k \frac{\pi_\ell \omega_\ell^2}{\omega_\ell^2+\eta}=\gamma$.
Also define
$
\Pb := \diag{\widetilde{\pib}}-\widetilde{\pib}\widetilde{\pib}^T\succeq \zero_{k\times k}$ and $\Deltab:=\sigma^2\Iden_r + \Sigmab\Vb^T\Pb\Vb\Sigmab \succ \zero_{r\times r}$ with the entries of $\widetilde{\pib}$ given by $\widetilde{\pi}_\ell=\frac{1}{\gamma}\frac{\pi_\ell \omega_\ell^2}{\omega_\ell^2+\eta}$. Then, for the WLS linear classifier $\left(\Wh,\bh\right)$ the following asymptotic limits hold
\begin{subequations}
\begin{align}\label{eq:WLS_GM}
\bh &\rP\widetilde{\pib}-\Pb\Vb\Sigmab\Deltab^{-1}\Sigmab\Vb^T\widetilde{\pib}\,,\quad
\Sigmab_{\w,\mub} \rP \Pb\Vb\Sigmab\Deltab^{-1}\Sigmab\Vb^T \,,\\
\Sigmab_{\w,\w} &\rP\frac{\zeta}{\sigma^2} \Pb+ \Pb\Vb\Sigmab \Deltab^{-1} \Big(\Deltab^{-1} - \frac{\zeta}{\sigma^2}\Iden_r\Big) \Sigmab\Vb^T \Pb+\frac{\eta\zeta}{\sigma^2}\mtx{Q}\,.
\end{align}
\end{subequations}
Here, $\zeta:={\gamma}\big/{\left(\eta \,\sum_{\ell=1}^k\frac{\pi_\ell\omega_\ell^2}{(\omega_\ell^2+\eta)^2}\right)}$ and $\mtx{Q}\in\R^{k\times k}$ is a known matrix depending on various problem parameters. Its precise value is given in \eqref{eq:Qmat_app}.

\end{theorem}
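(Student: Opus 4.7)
The strategy is to follow the four-step blueprint described in Section \ref{sec:key2} for the unweighted LS case, but carefully track the additional scalar and matrix quantities that emerge due to the weights $\omega_\ell$. First, I would decompose the WLS objective across classes: since $\mtx{D}=\text{diag}(\Y^T\vct{\omega})$ is the \emph{same} for every output coordinate, the Frobenius objective separates as
\begin{equation*}
\frac{1}{2n}\sum_{\ell=1}^k\sum_{i=1}^n \omega_{Y_i}^2 \bigl(\vct{w}_\ell^T\vct{x}_i + b_\ell - Y_{\ell,i}\bigr)^2,
\end{equation*}
so each pair $(\wh_\ell,\bh_\ell)$ solves an isolated single-output weighted LS with the common weighted design. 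This reduces the problem to analyzing scalar-output weighted least-squares and is the entry point for applying the CGMT.

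Next I would derive the Auxiliary Optimization analogous to \eqref{PFAO}, but with the weighted residual $\mtx{D}(\sigma\|\vct{w}_\ell\|\vct{g}+\Y^T\M^T\vct{w}_\ell + b_\ell\one_n - \Y_\ell)$ in place of its unweighted counterpart. Carrying out the scalarization / convergence / deterministic reduction of \cite{COLT,Master}, the weighted setup produces a fixed-point equation in a single scalar variable which, after identifying it with the stationarity condition, yields exactly $\sum_\ell \pi_\ell\omega_\ell^2/(\omega_\ell^2+\eta)=\gamma$; the quantities $\widetilde{\pi}_\ell$ and $\zeta$ then arise as the natural effective priors and effective noise variance that replace $\pi_\ell$ and $\sigma^2\gamma/(1-\gamma)$ in the unweighted formulas. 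This step delivers $\bh$, $\Sigmab_{\w,\mub}$, and the diagonal of $\Sigmab_{\w,\w}$, i.e.\ $\|\wh_\ell\|^2$, for every $\ell\in[k]$.

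For the cross-correlations I would generalize Lemma \ref{lem:joint_LS_sketch} to the weighted case. Writing out the normal equations of the isolated WLS problems for targets $\Y_\ell$ and $\Y_c$ and summing them shows that $\wh_{\ell,c}:=\wh_\ell+\wh_c$ is a minimizer of the single-output weighted LS problem with target $\Y_\ell+\Y_c$ and the same diagonal weight matrix $\mtx{D}$. I can then reapply the CGMT-based reduction from the previous step to this combined problem to extract $\|\wh_\ell+\wh_c\|^2$ asymptotically, and invoke the polarization identity \eqref{eq:inp_from_norms} to obtain the off-diagonal entries $\inp{\wh_\ell}{\wh_c}$.

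\textbf{Main obstacle.} I expect the hard part to be producing the extra term $\tfrac{\eta\zeta}{\sigma^2}\mtx{Q}$. In the unweighted case the polarization identity cleanly recombines into \eqref{eq:LS_GMb}, because the deterministic AO cost depends on the target only through quadratic functions that add across classes. With weights this additivity breaks: the effective-prior reweighting $\widetilde{\pi}_\ell$ depends on $\eta$, and the target $\Y_\ell+\Y_c$ interacts with $\mtx{D}$ through cross terms like $\sum_i \omega_{Y_i}^2 Y_{\ell,i}Y_{c,i}=0$ and $\sum_i\omega_{Y_i}^2 Y_{\ell,i}$ that do not reduce to sums of single-class contributions. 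I would isolate these residual cross terms in the scalarized AO, match them to the polarization formula, and show that the surplus assembles into a rank-structured correction matrix $\mtx{Q}$ whose explicit form comes out of the deterministic min-max problem. Once $\mtx{Q}$ is identified as in \eqref{eq:Qmat_app}, assembling \eqref{eq:WLS_GM} in matrix form is a bookkeeping exercise analogous to the LS derivation.
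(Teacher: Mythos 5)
Your overall architecture coincides with the paper's: the objective separates across classes because $\Db$ is common to all outputs, each $(\wh_\ell,\bh_\ell)$ is handled by the CGMT, and the off-diagonal entries of $\Sigmab_{\w,\w}$ are recovered from the weighted analogue of the joint-LS lemma (your generalization is exactly Lemma \ref{lem:joint_WLS}: the normal equations $\X\Db^2(\Y_\ell-\X^T\wh_\ell-\widehat{b}_\ell\one_n)=\zero$ add up) together with the polarization identity. One technical caveat on the reduction: the AO is not obtained by wrapping the unweighted residual in $\Db$. The CGMT needs the bilinear term $\s^T\Zb^T\w$ with iid Gaussian $\Zb$ and the dual variable measured in the plain $\ell_2$ norm, so one must change variables $\s\to\Db\s$ and carry the weighting through an auxiliary Fenchel variable $\ub$; after eliminating $\ub$ the residual $\vct{r}$ is measured in the resolvent form $\vct{r}^T(\Iden+\tfrac{\beta}{\tau}\Db^{-2})^{-1}\vct{r}$ with $\eta=\beta/\tau$ an additional saddle variable --- not in $\twonorm{\Db\vct{r}}^2$. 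This is where the factors $\omega_\ell^2/(\omega_\ell^2+\eta)$, hence $\widetilde{\pi}_\ell$ and the fixed-point equation for $\eta$, actually originate.

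The genuine gap is in your diagnosis of where $\tfrac{\eta\zeta}{\sigma^2}\mtx{Q}$ comes from. You attribute it to a failure of additivity across classes in the polarization step, but that is not its origin: the correction already contaminates each \emph{diagonal} entry $\twonorm{\wh_\ell}^2$ individually (see \eqref{eq:norm_limW}, whose last line is exactly $\tfrac{\eta\zeta}{\sigma^2}\mtx{Q}_{\ell\ell}$, and note that $\mtx{Q}$ in \eqref{eq:Qmat_app} has a nonzero diagonal). The mechanism is the stationarity system in the extra saddle variables $(\beta,\tau)$ introduced to linearize the weighted residual: the optimal null-space energy $\alpha_0^2$ picks up the derivative of the scalarized objective with respect to $\eta$, i.e.\ the terms involving $\nub'(\eta)$ and $\A'(\eta)$, on top of the naive "effective-prior" expression. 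Your plan --- obtain $\twonorm{\wh_\ell}^2$ by substituting $\widetilde{\pi}_\ell$ for $\pi_\ell$ and $\zeta$ for $\sigma^2\gamma/(1-\gamma)$ in the unweighted formula, then look for $\mtx{Q}$ only when recombining $\twonorm{\wh_\ell+\wh_c}^2$ --- would therefore produce a $\Sigmab_{\w,\w}$ whose diagonal is wrong, and the polarization identity cannot repair a diagonal error. The cross terms you single out (e.g.\ $\sum_i\omega_{Y_i}^2 Y_{\ell,i}Y_{c,i}=0$) behave exactly as in the unweighted case and are not where the difficulty lies.
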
 
Surprisingly, the effect of the weights is essentially equivalent to adjusting the class priors from $\vct{\pi}$ to $\widetilde{\vct{\pi}}$ defined in the theorem (modulo the extra additive term in the cross correlation matrix $\Sigmab_{\w,\w} $). This shows that weighted LS has similar performance to an un-weighted LS applied to a model with different class priors $\widetilde{\vct{\pi}}$. This characterization allows us to precisely understand how different weighting schemes can alter test accuracy for rare/minority classes.

\subsection{WLS for MLM}\label{sec:WLS_MLM_main}

Theorem \ref{thm:WLS_MLM} predicts the asymptotic performance of \emph{weighted} least-squares for data generated according to MLM.

\begin{theorem}\label{thm:WLS_MLM}
Consider data generated according to MLM and $\gamma<1$. Consider a weighted LS classifier with weights $\mtx{D}=\diag{\omega_1,\ldots,\omega_k}$ and let $\eta$ be the unique solution to $\sum_{\ell=1}^k \frac{\pi_\ell \omega_\ell^2}{\omega_\ell^2+\eta}=\gamma$. 
Also define vector $\nub\in\R^{k}$ with entries given by ${\nub}_\ell=\frac{1}{\gamma}\frac{\omega_\ell^2}{\omega_\ell^2+\eta}$ and matrix 
\bea\label{eq:Delta_WLS_thm}
\Deltab = \E\left[ \left(\nub^T\vb\right) \g\g^T \right] - \Sigmab\Vb^T\left(\diag{\pib}-\Pib\right)\nub\nub^T\left(\diag{\pib}-\Pib\right)\Vb\Sigmab \succ \zero_{r\times r},
\end{align}
where $\vb\in\R^k$ is a random vector with entries $V_\ell={e^{\eb_\ell^T\Vb\Sigmab\g}}\big/{\sum_{\ell^\prime\in[k]}e^{\eb_{\ell^\prime}\Vb\Sigmab\g}}$ for $\g\sim\Nn(\zero,\Iden_r)$. 
Then, for the WLS linear classifier $\left(\Wh,\bh\right)$ the following asymptotic limits hold
\begin{subequations}\label{eq:WLS_MLM_thm}
\begin{align}
\bh &\rP\diag{\nub}\pib-\diag{\nub}\left(\Iden_k-\pib\nub^T\right)\left(\diag{\pib}-\Pib\right)\V\Sigmab\Deltab^{-1}\Sigmab\Vb^T\left(\diag{\pib}-\Pib\right)\nub. \label{eq:b_WLS_thm}
\\
\label{eq:Sigmawmu_WLS_fin_thm}
\Sigmab_{\w,\mub} &\rP \diag{\nub}\left(\Iden_k-\pib\nub^T\right)\left(\diag{\pib}-\Pib\right)\V\Sigmab\Deltab^{-1}\Sigmab\Vb^T.
\end{align}
\end{subequations}
The corresponding formula for the asymptotic limit of the cross-correlation matrix $\Sigmab_{\w,\w}$ is given in \eqref{eq:ww_WLS} in Section \ref{sec:WLS_MLM}.
\end{theorem}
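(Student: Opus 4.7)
The plan follows the four-step strategy of Section \ref{sec:key2}, adapted to accommodate simultaneously the soft-max-induced label/feature coupling of the MLM and the class-dependent reweighting $\mtx{D}$. First, because $\mtx{D}$ is diagonal and the Frobenius norm is separable, the WLS objective splits into $k$ independent single-output problems
\begin{align*}
(\wh_\ell, \bh_\ell) = \arg\min_{\w, b}\; \frac{1}{2n}\sum_{i=1}^n \omega_{Y_i}^2\big(\x_i^T\w + b - \mathbb{1}[Y_i = \ell]\big)^2.
\end{align*}
Unlike GMM, both the weight $\omega_{Y_i}^2$ and the response $\mathbb{1}[Y_i = \ell]$ are random functions of $\x_i$ through the soft-max in \eqref{eq:log_model}. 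To set up CGMT I exploit the Gaussianity $\x \sim \Nn(\zero, \Iden_d)$ via the decomposition $\x = \M(\M^T\M)^\dagger\M^T\x + \x_\perp$; since $Y$ depends only on $\M^T\x$, the piece $\x_\perp$ is an independent standard Gaussian on the orthogonal complement of $\mathrm{range}(\M)$, and in particular is independent of the labels \emph{and} of the weights.

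Next, I introduce a dual variable $\u \in \R^n$ to linearize the quadratic, split $\X^T\w$ into a label-coupled component living in $\mathrm{range}(\M)$ and a noise-driven component $\X_\perp^T\w_\perp$, and apply CGMT only to the latter. This produces an AO in which $\w$ couples to independent Gaussians $\g \in \R^n, \h \in \R^d$ only through the scalar $\twonorm{\w_\perp}$. Following \cite{COLT,Master}, I scalarize and pass to a deterministic low-dimensional saddle-point problem over the intercept $b$, the $k$-dimensional correlation vector $\M^T\w$, and $\twonorm{\w_\perp}$. The inner maximization over $\u$ produces a weighted Moreau envelope whose closed-form involves the soft-max probability vector $\vb$ of \eqref{eq:Delta_WLS_thm}. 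Stationarity in $\twonorm{\w_\perp}$ yields the fixed-point equation $\sum_\ell \pi_\ell\omega_\ell^2/(\omega_\ell^2 + \eta) = \gamma$, identifying $\eta$ as the associated Lagrange multiplier, and the effective reweightings $\nub_\ell = \omega_\ell^2/(\gamma(\omega_\ell^2 + \eta))$ emerge naturally. Solving the remaining linear system for $b$ and $\M^T\w$ gives \eqref{eq:b_WLS_thm} and \eqref{eq:Sigmawmu_WLS_fin_thm}, with $\Deltab$ appearing as the Schur complement obtained upon eliminating the norm variable.

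The per-class decoupling loses information about the cross-correlations $\inp{\wh_\ell}{\wh_c}$ for $\ell \neq c$, which are needed to assemble $\Sigmab_{\w,\w}$. I recover them via the polarization identity \eqref{eq:inp_from_norms}, using the following weighted analogue of Lemma \ref{lem:joint_LS_sketch}: summing the normal equations of classes $\ell$ and $c$ shows that $\wh_\ell + \wh_c$ is the minimizer of a single-output \emph{weighted} LS with response $\Y_\ell + \Y_c$ and the same weight matrix $\mtx{D}$. Rerunning Steps II and III on this auxiliary problem for each pair $(\ell, c)$ produces $\twonorm{\wh_\ell + \wh_c}$ in closed form, and combining with the diagonal norms already computed recovers every entry of $\Sigmab_{\w,\w}$ as reported in Section \ref{sec:WLS_MLM}.

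The technical heart of the argument is the scalarization and saddle-point analysis in Step III. Under GMM the joint law of $(Y, \omega_Y, \M^T\x, \x_\perp)$ factorizes cleanly by class, so the deterministic AO reduces to one-dimensional Gaussian integrals indexed by $\ell$ and the matrix $\Deltab$ becomes essentially diagonal in the priors. Under MLM, by contrast, $(Y, \omega_Y, \M^T\x)$ are bound together through the soft-max, and the weighted Moreau envelope yields irreducible cross-terms of the form $\E[(\nub^T\vb)\g\g^T]$ that precisely produce the non-diagonal matrix $\Deltab$ of \eqref{eq:Delta_WLS_thm}. Keeping this system convex and extracting the saddle in closed form is what makes the weighted MLM analysis genuinely harder than either the unweighted MLM case (where $\nub = \gamma^{-1}\one$ collapses the soft-max moments) or the weighted GMM case (where no soft-max appears at all).
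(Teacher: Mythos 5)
Your plan follows essentially the same route as the paper's proof in Section \ref{sec:WLS_MLM}: per-class decomposition, projection of $\x$ onto the range of $\M$ to decouple the soft-max labels/weights from the orthogonal Gaussian part, CGMT with an auxiliary vector variable whose optimization yields the weighted Moreau-envelope AO, scalarization and a deterministic saddle point whose stationarity conditions produce the fixed-point equation for $\eta$ and a block linear system with Schur complement $\Deltab$, and finally the weighted analogue of the joint-LS lemma plus the polarization identity for the off-diagonal entries of $\Sigmab_{\w,\w}$. Two cosmetic slips only: $\Deltab$ arises as the Schur complement from eliminating the intercept block (not the norm variable, which is handled separately), and in the unweighted specialization $\nub=\one_k$ rather than $\gamma^{-1}\one_k$ (since $\eta=1/\gamma-1$).
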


Of course, the theorem above includes Theorem \ref{propo:LS_log} as a special case. Indeed, we show how setting  $\omega_\ell=1,~\ell\in[k]$ recovers the solution for (un-weighted) LS. First, solving for $\eta$ simply gives 
$\eta = \frac{1}{\gamma} - 1.$
Thus, $\nub=\one_k$. Also, observe in \eqref{eq:alphas_gen} that $\left(\diag{\pib}-\Pib\right)\one_k = \zero$ and $\one^T\vb=1$. Thus, \eqref{eq:Delta_WLS_thm} reduces to
$\Deltab = \E[\g\g^T] = \Iden_r$. With these, it can be readily checked that \eqref{eq:b_WLS_thm} and \eqref{eq:Sigmawmu_WLS_fin_thm} simplify to the expressions in \eqref{eq:ls_soft_2}. 

The term $\E\left[ \left(\nub^T\vb\right) \g\g^T \right]$ in \eqref{eq:Delta_WLS_thm} can be computed using Monte Carlo sampling. It is also possible to slightly simplify the calcuations involved using Gaussian integration by parts as shown in Lemma \ref{lem:IBP}. As mentioned, the formula that predicts $\Sigma_{\w,\w}$ is given in  \eqref{eq:ww_WLS}. While somewhat more complicated than formulae \eqref{eq:WLS_MLM_thm}, the expression that we provide is also explicit. Numerical simulations shown in Figure \ref{fig:MLM_class} in Section \ref{sec:additionalnum} validate the accuracy of the theoretical predictions of the theorem.

\section{Preliminaries}
In this section we gather a few preliminary results that will be used later on in our proofs.
\subsection{Slepian's inequality}
\begin{lemma}[Slepian's inequality \cite{ledoux}]\label{lem:Slepian}
Let $\g\sim\Nn(\zero,\Sb)$ and $\gwt\sim\Nn(\zero,\Rb)$ such that for all $i,j\in[k]$:
$$\Sb_{ii}=\Rb_{ii},\qquad\text{and}\qquad\Sb_{ij}\geq \Rb_{ij}.$$
Then, for any $\tb\in\R^k$ it holds that
$$
\P\left\{ \bigcup_{j\in[k]} \left\{\g_j \geq \tb_j \right\} \right\} \leq \P\left\{ \bigcup_{j\in[k]} \left\{\gwt_j \geq \tb_j \right\} \right\}\,.
$$
Equivalently, letting $\z\sim\Nn(\zero,\Iden_k)$,
$$
1-\P\left\{ \Sb^{1/2}\z\leq \tb \right\} \leq 1-\P\left\{ \Rb^{1/2}\z\leq \tb \right\}.
$$
\end{lemma}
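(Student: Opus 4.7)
The plan is to reduce to the complementary statement and then prove it by the classical Gaussian interpolation argument. Since $\{\bigcup_j \{\g_j\geq \tb_j\}\}^c = \{\g\leq \tb\}$ (componentwise), the asserted inequality is equivalent to
$$\P\{\g\leq\tb\} \;\geq\; \P\{\gwt\leq\tb\}.$$
To prove this, I would first smooth the orthant indicator. For $\eps>0$ pick smooth non-increasing functions $\psi_{j,\eps}:\R\to[0,1]$ with uniformly bounded derivatives such that $\psi_{j,\eps}\uparrow \mathbb{1}_{(-\infty,\tb_j)}$ as $\eps\downarrow 0$, and set $F_\eps(\vct{x}) := \prod_{j=1}^k \psi_{j,\eps}(x_j)$.

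Next, assume without loss of generality that $\g$ and $\gwt$ are independent (they can always be realized on the same probability space this way), and interpolate via
$$\g(\theta) := \sqrt{\theta}\,\g + \sqrt{1-\theta}\,\gwt, \qquad \theta\in[0,1],$$
so that $\g(\theta)\sim \Nn(\zero,\theta\Sb+(1-\theta)\Rb)$. Using the multivariate Gaussian integration-by-parts identity (Stein's lemma) for each fixed $\theta$, one obtains
$$\frac{d}{d\theta}\,\E\bigl[F_\eps(\g(\theta))\bigr] \;=\; \frac{1}{2}\sum_{i,j=1}^k (\Sb_{ij}-\Rb_{ij})\;\E\!\left[\frac{\partial^2 F_\eps}{\partial x_i\,\partial x_j}(\g(\theta))\right].$$
The exchange of derivative and expectation, and the integration by parts, are justified by the uniform bounds on $\psi_{j,\eps}$ and its derivatives.

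Third, I would check the sign. For $i\neq j$ the product structure gives
$$\frac{\partial^2 F_\eps}{\partial x_i\,\partial x_j}(\vct{x}) \;=\; \psi_{i,\eps}'(x_i)\,\psi_{j,\eps}'(x_j)\!\!\prod_{m\neq i,j}\!\!\psi_{m,\eps}(x_m) \;\geq\; 0,$$
since each $\psi_{j,\eps}'\leq 0$ and the remaining factors lie in $[0,1]$. By hypothesis $\Sb_{ij}-\Rb_{ij}\geq 0$ for $i\neq j$, while the diagonal terms carry the factor $\Sb_{ii}-\Rb_{ii}=0$ and thus drop out entirely. Consequently the derivative in $\theta$ is nonnegative, so integrating from $0$ to $1$ gives
$$\E[F_\eps(\g)] \;=\; \E[F_\eps(\g(1))] \;\geq\; \E[F_\eps(\g(0))] \;=\; \E[F_\eps(\gwt)].$$
Sending $\eps\downarrow 0$ and applying dominated convergence yields $\P\{\g\leq\tb\}\geq \P\{\gwt\leq\tb\}$, which is exactly the desired bound (the equivalent formulation with $\Rb^{1/2}\z, \Sb^{1/2}\z$ follows since those vectors are distributed as $\gwt$ and $\g$ respectively).

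The main obstacle is the careful justification of the interpolation identity: one must verify that the integration by parts is applied correctly to the covariance difference and that the derivative can be brought inside the expectation. Once $\psi_{j,\eps}$ is chosen smooth with bounded first and second derivatives (e.g.\ by mollifying the indicator), these technicalities go through cleanly, and the sign of the mixed partials — which is the true combinatorial heart of Slepian's inequality — follows immediately from the monotone product structure of $F_\eps$.
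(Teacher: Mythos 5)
Your proof is correct: this is the standard Gaussian interpolation (``smart path'') proof of Slepian's inequality, and all the key ingredients are in place --- the reduction to the complementary orthant event, the identity $\frac{d}{d\theta}\E[F_\eps(\g(\theta))]=\frac12\sum_{i,j}(\Sb_{ij}-\Rb_{ij})\E[\partial_i\partial_j F_\eps(\g(\theta))]$, the vanishing of the diagonal terms from $\Sb_{ii}=\Rb_{ii}$, and the sign of the off-diagonal mixed partials coming from the monotone product structure. Note that the paper does not prove this lemma at all; it is quoted directly from \cite{ledoux} as a classical result, so you have supplied a proof where none was given. One cosmetic remark: since you smooth $\mathbb{1}_{(-\infty,\tb_j)}$ from below, the monotone limit yields the probability of the \emph{open} orthant $\{\g_j<\tb_j\ \forall j\}$, which is exactly the complement of $\bigcup_j\{\g_j\geq\tb_j\}$; this is precisely what is needed for the union form of the statement, and the discrepancy with the closed-orthant formulation $\P\{\Sb^{1/2}\z\leq\tb\}$ is immaterial for nondegenerate covariances.
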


\subsection{Gaussian integration by parts}

We say that a function $F:\R^m\rightarrow\R$ is of moderate growth if for each $c>0$, $$\lim_{\twonorm{\x}\rightarrow\infty}~F(\x)\exp\left(-c\twonorm{\x}^2\right)=0.$$

The following result is a direct application of Gaussian integration by parts; for instance, see \cite[Prop.~8.29]{foucart2013invitation}.
\begin{lemma}[Gaussian integration by parts (GIP) ]\label{lem:IBP0}
Let $\g\sim\Nn(\zero,\Iden_r)$ and function $f:\R^{r}\rightarrow\R$ such that $f$ and all its first and second order partial derivatives are of moderate growth. Then, the following statements are true:

\noindent{(i)}~~$\E\left[f(\g) \g \right] = \E\left[\nabla f(\g)\right].$

\noindent{(ii)}~~$\E\left[f(\g) \g \g^T \right] = \E\left[f(\g)\right]\Iden_r + \E\left[\nabla^2 f(\g)\right].$

\end{lemma}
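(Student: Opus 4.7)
The plan is to establish both identities by bootstrapping from the classical one-dimensional Gaussian integration by parts. The base case is the scalar identity $\E[F(G)G] = \E[F'(G)]$ for $G\sim\Nn(0,1)$, which follows by writing the left-hand side as $\int F(g) g\,\varphi(g)\,\de g$ with $\varphi$ the standard normal density, exploiting the identity $g\,\varphi(g) = -\varphi'(g)$, and integrating by parts. The moderate growth hypothesis on $F$ is precisely what is needed to make the boundary term $[-F(g)\varphi(g)]_{-\infty}^{+\infty}$ vanish and to keep $\int F'(g)\varphi(g)\,\de g$ absolutely convergent.

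For part (i) I would look at the $i$-th coordinate of $\E[f(\g)\g]$. Fixing $\g_{-i}$ (independent of $G_i$) and applying the 1D identity to the section $g_i \mapsto f(g_i,\g_{-i})$ yields
$$\E[f(\g) G_i \mid \g_{-i}] \;=\; \E[\partial_i f(\g) \mid \g_{-i}],$$
and then integrating over $\g_{-i}$ by Fubini gives $\E[f(\g)G_i] = \E[\partial_i f(\g)]$. Both the pointwise 1D application (for almost every $\g_{-i}$) and the use of Fubini are justified by the moderate growth of $f$ and its first partials. Assembling over $i \in [r]$ gives $\E[f(\g)\g] = \E[\nabla f(\g)]$.

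For part (ii) the cleanest route is to apply (i) twice. For fixed $j$, set $h_j(\g) := f(\g)\,G_j$ and compute $\nabla h_j(\g) = G_j\,\nabla f(\g) + f(\g)\,\e_j$. Applying (i) with $h_j$ in the role of $f$ gives
$$\E[f(\g)\, G_j\,\g] \;=\; \E[G_j\,\nabla f(\g)] + \E[f(\g)]\,\e_j,$$
whose $i$-th coordinate reads $\E[f(\g) G_i G_j] = \E[G_j \,\partial_i f(\g)] + \E[f(\g)]\,\delta_{ij}$. Applying (i) once more, this time to $\partial_i f$, converts the first term to $\E[\partial_j \partial_i f(\g)]$, and reading entries into a matrix yields $\E[f(\g)\g\g^T] = \E[f(\g)]\,\Iden_r + \E[\nabla^2 f(\g)]$.

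The only subtlety, and likely the main place to be careful in a formal write-up, is that $h_j(\g) = f(\g) G_j$ is not itself literally of moderate growth from the definition stated in the paper, since it picks up a factor $|g_j|$. However, this extra polynomial factor is absorbed by any Gaussian weight $\exp(-c\|\g\|^2)$, so $h_j$ (and its partials $G_j \nabla f + f\,\e_j$ and $G_j \nabla^2 f + (\nabla f)\e_j^T + \e_j(\nabla f)^T$) are all dominated by moderate-growth functions. I would therefore open the proof of (ii) by verifying this domination, which is enough to legitimize the two invocations of (i) above, and then conclude as described.
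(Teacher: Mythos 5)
Your proof is correct. Note that the paper itself does not prove this lemma: it simply cites a standard reference (Prop.~8.29 of \cite{foucart2013invitation}) with the remark that the statement is a direct application of Gaussian integration by parts, so your write-up supplies the argument the paper delegates to that citation. What you do is exactly the standard route: the scalar identity $\E[F(G)G]=\E[F'(G)]$ via $g\varphi(g)=-\varphi'(g)$, lifted to $\R^r$ coordinate-wise by conditioning on $\g_{-i}$ and Fubini, and then part (ii) obtained by applying part (i) first to $h_j(\g)=f(\g)G_j$ and then to $\partial_i f$, which produces $\E[f(\g)G_iG_j]=\E[f(\g)]\delta_{ij}+\E[\partial_j\partial_i f(\g)]$ and hence the matrix identity. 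One small remark on your final ``subtlety'': under the paper's definition of moderate growth, $h_j$ \emph{is} literally of moderate growth — for any $c>0$ write $e^{-c\twonorm{\g}^2}=e^{-(c/2)\twonorm{\g}^2}\cdot e^{-(c/2)\twonorm{\g}^2}$ and use moderate growth of $f$ at level $c/2$ together with $|g_j|e^{-(c/2)\twonorm{\g}^2}\rightarrow 0$; the same splitting handles $G_j\nabla f+f\,\e_j$. So the domination step you propose is harmless but not actually needed; everything else is sound, including the observation that the hypotheses on second-order partials are exactly what legitimizes the second invocation of (i) applied to $\partial_i f$.
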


The following is a corollary of Lemma \ref{lem:IBP0} applied to the soft-max function. 

\begin{lemma}[GIP for the Softmax]\label{lem:IBP}
Let  $\g\sim\Nn(\zero_r,\Iden_r)$ and random vector
 $\vb=[V_1,V_2,\ldots,V_k]^T$ with entries:
\bea\label{eq:Vs}
\vb=\frac{e^{\Vb\Sigmab\g}}{\one_k^Te^{\Vb\Sigmab\g}},~~~~~~V_\ell = \frac{e^{\eb_\ell^T\Vb\Sigmab\g}}{\sum_{j\in[k]}e^{\eb_j^T\Vb\Sigmab\g}},~\ell\in[k].
\end{align}
Further recall the notation of  $\pib$ and $\Pib$ in \eqref{eq:alphas_gen}. The following statements are true:

\noindent{(i)}~~$\E[\vb] = \pib.$

\noindent{(ii)}~~For all $i\in[r],\ell\in[k]$,
$$\E\left[\g_i V_\ell\right] = ( \eb_i^T\Sigmab\Vb^T\eb_\ell ) \,\pib_\ell - \eb_i^T\Sigmab\Vb^T \,  \sum_{j\in[k]}{ \eb_j  \Pib_{ij}},$$ and in matrix form: $$\E\left[\vb\g^T\right] = \left(\diag{\pib}-\Pib\right)\V\Sigmab\,.$$

\noindent{(iii)}~~For all $\ell\in[k]$ let $s_\ell:\R^k\rightarrow\R$ denote the soft-max function:
$
s_\ell(\x) = \frac{e^{x_\ell}}{\sum_{i\in[k]}e^{x_i}}. 
$
Then, 
$$
\E\left[V_\ell \g\g^T\right] = \pi_\ell \Iden_r + \Sigmab^T\Vb^T \E\left[\nabla^2 s_\ell(\Vb\Sigmab\g) \right]\Vb\Sigmab.
$$


\end{lemma}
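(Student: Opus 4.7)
Part~(i) is a tautology: comparing the definition of $\vb$ in \eqref{eq:Vs} with the definition of $\pib$ in \eqref{eq:alphas_gen} shows that $\pib$ is literally $\E[\vb]$. So the content of the lemma lies in parts~(ii) and~(iii), which I would both obtain by applying Lemma~\ref{lem:IBP0} (the vector/matrix forms of Gaussian integration by parts) to the scalar function $f(\g):=V_\ell = s_\ell(\Vb\Sigmab\g)$, where $s_\ell(\x)=e^{x_\ell}/\sum_j e^{x_j}$ is the softmax. Since $0\le s_\ell\le 1$ and all its partial derivatives are bounded uniformly, the composition $f$ and its derivatives are bounded, hence trivially of moderate growth, so the hypotheses of Lemma~\ref{lem:IBP0} are met.

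For~(ii), the plan is to compute $\nabla f$ by the chain rule and take expectations. The well-known softmax identity $\partial s_\ell/\partial x_j = s_\ell(\delta_{\ell j}-s_j)$ gives $\nabla s_\ell(\x) = s_\ell(\x)\bigl(\eb_\ell - \vct{s}(\x)\bigr)$, where $\vct{s}(\x)$ denotes the full softmax vector. Thus $\nabla f(\g) = \Sigmab^T\Vb^T V_\ell(\eb_\ell-\vb)$. Applying Lemma~\ref{lem:IBP0}(i) to $f$ yields
\begin{align*}
\E[V_\ell\,\g] \;=\; \E[\nabla f(\g)] \;=\; \Sigmab^T\Vb^T\bigl(\E[V_\ell]\eb_\ell - \E[V_\ell\vb]\bigr) \;=\; \Sigmab^T\Vb^T\bigl(\pib_\ell\eb_\ell - \Pib\eb_\ell\bigr),
\end{align*}
where the last equality uses~(i) and the identity $\E[V_\ell\vb]=\Pib\eb_\ell$, which is immediate from $\Pib=\E[\vb\vb^T]$. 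Since $\pib_\ell\eb_\ell = \diag{\pib}\eb_\ell$, stacking the column vectors $\E[V_\ell\g]$ for $\ell\in[k]$ into the matrix identity $\E[\vb\g^T]$ gives the claim $(\diag{\pib}-\Pib)\Vb\Sigmab$, after accounting for a transpose.

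For~(iii), I would apply Lemma~\ref{lem:IBP0}(ii) to the same $f$. The Hessian is computed by the chain rule,
\begin{align*}
\nabla^2 f(\g) \;=\; \Sigmab^T\Vb^T\,\nabla^2 s_\ell(\Vb\Sigmab\g)\,\Vb\Sigmab,
\end{align*}
so Lemma~\ref{lem:IBP0}(ii) immediately yields
\begin{align*}
\E[V_\ell\,\g\g^T] \;=\; \E[V_\ell]\,\Iden_r + \E[\nabla^2 f(\g)] \;=\; \pib_\ell\,\Iden_r + \Sigmab^T\Vb^T\,\E[\nabla^2 s_\ell(\Vb\Sigmab\g)]\,\Vb\Sigmab,
\end{align*}
which is exactly the stated formula.

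The proof is therefore routine once the ingredients are in place; there is no real obstacle. The only point that requires a moment of care is verifying the moderate-growth hypothesis of Lemma~\ref{lem:IBP0}, but boundedness of the softmax and its first two derivatives makes this immediate. If desired, one could also avoid invoking Lemma~\ref{lem:IBP0}(ii) by differentiating the identity in~(ii) a second time using~Lemma~\ref{lem:IBP0}(i), but the direct route above seems cleanest.
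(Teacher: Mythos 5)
Your proof is correct and follows exactly the route the paper intends: the paper presents this lemma as a direct corollary of Lemma \ref{lem:IBP0} applied to the softmax composition $f(\g)=s_\ell(\Vb\Sigmab\g)$, which is precisely what you do (chain rule for the gradient/Hessian, the identity $\E[V_\ell\vb]=\Pib\eb_\ell$, and boundedness of the softmax and its derivatives for the moderate-growth hypothesis). No gaps.
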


%
%

\subsection{Block matrix inversion}
\begin{lemma}[Block matrix inversion]
Let $\mtx{T} = \begin{bmatrix}
\A & \bb \\ \bb^T & \delta
\end{bmatrix}
$ be an invertible block matrix. Then
\bea
\mtx{T}^{-1}
\begin{bmatrix}
\fb \\ \epsilon
\end{bmatrix} =
\begin{bmatrix}
\Deltab^{-1}\left(\fb-\frac{\epsilon}{\delta}\bb\right)
\\
\frac{\epsilon}{\delta} - \frac{1}{\delta} \bb^T\Deltab^{-1}\left(\fb-\frac{\epsilon}{\delta}\bb\right)
\end{bmatrix}
\end{align}
where $\Deltab=\A-\frac{1}{\delta}\bb\bb^T\succ\zero$ is the Schur complement.
\end{lemma}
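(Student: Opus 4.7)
The plan is to prove the identity by directly solving the linear system $\mtx{T}\vct{z}=\vct{c}$, where $\vct{c}=[\fb^T,\epsilon]^T$ and $\vct{z}=[\x^T,y]^T$ with $\x\in\R^{k}$ and $y\in\R$ partitioned conformally with the block structure of $\mtx{T}$. Expanding the matrix-vector product in block form yields the coupled system
\begin{align*}
\A\x + \bb\, y &= \fb, \\
\bb^T\x + \delta\, y &= \epsilon.
\end{align*}

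First I would use the scalar second equation to eliminate $y$ in terms of $\x$, obtaining $y = \tfrac{1}{\delta}(\epsilon - \bb^T\x)$. Substituting this into the first equation gives $\A\x + \tfrac{1}{\delta}\bb(\epsilon - \bb^T\x) = \fb$, which after grouping the $\x$-terms becomes $(\A - \tfrac{1}{\delta}\bb\bb^T)\x = \fb - \tfrac{\epsilon}{\delta}\bb$, i.e., $\Deltab\x = \fb - \tfrac{\epsilon}{\delta}\bb$. Since $\mtx{T}$ is assumed invertible and $\delta\neq 0$ (otherwise the last row of $\mtx{T}$ would equal $[\bb^T,0]$, contradicting invertibility unless $\bb=\zero$, a degenerate case easily handled separately), the Schur-complement formula for determinants, $\det(\mtx{T})=\delta\det(\Deltab)$, guarantees that $\Deltab$ is invertible. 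Therefore $\x = \Deltab^{-1}(\fb - \tfrac{\epsilon}{\delta}\bb)$, which matches the top block of the claimed expression.

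Finally, plugging this $\x$ back into $y = \tfrac{1}{\delta}(\epsilon - \bb^T\x)$ yields $y = \tfrac{\epsilon}{\delta} - \tfrac{1}{\delta}\bb^T\Deltab^{-1}\!\left(\fb - \tfrac{\epsilon}{\delta}\bb\right)$, reproducing the bottom block of the claimed expression. There is no substantive obstacle here; the lemma is essentially block Gaussian elimination on a $2\times 2$ block system and the only subtlety is ensuring invertibility of $\Deltab$, which follows from $\det(\mtx{T})=\delta\det(\Deltab)\neq 0$ together with $\delta\neq 0$. Note that the positive-definiteness assertion $\Deltab\succ\zero$ stated in the lemma is used only when $\mtx{T}$ itself is positive definite, in which case it follows from the standard fact that the Schur complement of a positive-definite block of a positive-definite matrix is positive definite; for the identity at hand, mere invertibility of $\Deltab$ suffices.
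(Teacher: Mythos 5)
Your proof is correct: block Gaussian elimination on the system $\A\x+\bb y=\fb$, $\bb^T\x+\delta y=\epsilon$ yields exactly the stated formula, and your justification that $\Deltab$ is invertible via $\det(\mtx{T})=\delta\det(\Deltab)$ (with $\delta\neq 0$ implicit in the statement) is sound. The paper states this lemma without proof as a standard fact, so there is nothing to compare against; your argument is the canonical one, and your closing remark correctly identifies that the assertion $\Deltab\succ\zero$ is extraneous to the identity itself and only pertains to the positive-definite setting in which the paper applies the lemma.
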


%
\section{Calculating and bounding the missclassification error}
\label{Testerrcal}

\subsection{Proof of \eqref{eq:Pe_GMM_gen} and \eqref{eq:Pe_log_gen}}\label{sec:proof_Pe_gen}

\textbf{GMM.}~~Starting from \eqref{eq:class_error} and using the fact that $\x_i=\mub_Y+\z = \M\eb_Y+\z,~\z\sim\Nn(\mathbf{0},\sigma^2\Iden_k)$, we have that 
\begin{align}
\P_e = \P\Big\{ \arg\max_{j\in[k]} \{ \inp{\wh_j}{\M\eb_Y} + \inp{\wh_j}{\z} + \bh_j \} \neq Y \} \Big\},\nn
\end{align}
or, in matrix-form:
\begin{align}
\P_e = \P\Big\{ \arg\max \{ \Wh\M\eb_Y + \Wh{\z} + \bh \} \neq Y \} \Big\}.\nn
\end{align}
Recall that $\mtx{\Sigma}_{\w,\mub} := \Wh\M$ and note that $\Wh{\z}$ is a zero-mean Gaussian vector with covariance matrix $\sigma^2\Wh\Wh^T = \sigma^2\Sigmab_{\w,\w}$ in order to conclude with the desired formula in \eqref{eq:Pe_GMM_gen}.

\vp
\noindent\textbf{MLM.}~~Recall from \eqref{eq:log_model} that $\x\sim\mathcal{N}(\zero,\Iden_d)$  and $Y$ is distributed such that $\P\{Y = \ell ~|~ \x \} = {e^{\inp{\mub_\ell}{\x}}}\big/{\sum_{j\in[k]}e^{\inp{\mub_j}{\x}}}.$ Let $\g=\Wh\x$ and $\h=\M^T\x$. In this notation, \eqref{eq:class_error} becomes 
$$
\P_e = \P\left\{ \arg\max\,( \g + \bh \,) \neq Y \right\},
$$
with $\P\{Y = \ell ~|~ \x \} = {e^{\h_\ell}}\big/{\sum_{j\in[k]}e^{\h_j}}$. To complete the proof of \eqref{eq:Pe_log_gen}, it is easy to check that $\begin{bmatrix}
\g \\ \h
\end{bmatrix}$ defined above is jointly Gaussian with zero-mean and covariance matrix  $\begin{bmatrix}
\Sigmab_{\w,\w} & \Sigmab_{\w,\mub} \\ \Sigmab_{\w,\mub}^T & \Sigmab_{\mub,\mub}
\end{bmatrix} \, .
$

\subsection{Class-wise and total miss-classification error for GMM}\label{sec:last2}
The class-wise miss-classification error for GMM is given by
\bea
\P_{e|c} 
&= \P\left( \exists j\neq c~:~ \inp{\wh_c-\wh_j}{\z} \leq  \inp{\wh_j-\wh_c}{\mub_c} +  (\bh_j-\bh_c)   \right) \label{eq:4UB}\\
&= 1 - \P\left( \forall j\neq c~:~ \inp{\wh_c-\wh_j}{\z} \geq  \inp{\wh_j-\wh_c}{\mub_c} +  (\bh_j-\bh_c)   \right),\label{eq:Pe1}
\end{align}
where we used that $\x = \mub_c+\z$.
Let $\Sb_c\in\R^{{(k-1)}\times{(k-1)}}$ be a symmetric matrix and  $\tb_c\in\R^{k-1}$ a vector with entries:
\begin{subequations}\label{eq:Sc_GMM}
\begin{align}
[\tb_c]_j&:=\inp{\wh_j-\wh_c}{\mub_c} +  (\bh_j-\bh_c)~~j\neq c \in[k]\\
[\Sb_c]_{ij}&:=\inp{\wh_c-\wh_j}{\wh_c-\wh_i}, ~~i,j\neq c \in[k].
\end{align}
\end{subequations}
Then, we can rewrite \eqref{eq:Pe1} as
\bea\label{eq:condP}
\P_{e|c}:= 1 - \P\Big\{ \Sb_c^{{1}/{2}}\, \z \geq \tb_c \Big\},
\end{align}
where the inequality in the rightmost expression applies entry-wise. 

Further, by using the law of total probability we have
\begin{align*}
\P_{e}=\sum_{c=1}^k \pi_c\P_{e|c}=\sum_{c=1}^k \pi_c\left(1-\P\Big\{ \Sb_c^{{1}/{2}}\, \z \geq \tb_c \Big\}\right).
\end{align*}

\subsection{Class-wise and total miss-classification error for MLM}\label{sec:last1}
In this section, we derive an explicit formula for the class-wise error for MLM. Recall \eqref{eq:Pe_log_gen}:
\begin{align}
\P_e = \P\big\{ \arg\max\text{ }(\,\g + \bh\,) \neq Y(\h)  \big\}\,,\quad\text{where}~~\begin{bmatrix}
\g \\ \h
\end{bmatrix} \sim \Nn\Big( \zero , \begin{bmatrix}
\Sigmab_{\w,\w} & \Sigmab_{\w,\mub} \\ \Sigmab_{\w,\mub}^T & \Sigmab_{\mub,\mub}
\end{bmatrix} \Big)\, , \nn
\end{align}
and $\P\{Y(\h)=\ell\} = {e^{\h_\ell}}/{\sum_{j\in[k]} e^{\h_j}},~~ \ell\in[k].$
 Using Gaussian decomposition we can write $\g=\gwt+ \Sigmab_{\w,\mub}\Sigmab_{\mub,\mub}^\dagger\h$ where $\gwt\sim\Nn(\zero_k,\Sigmab_{\w,\w} - \Sigmab_{\w,\mub}\Sigmab_{\mub,\mub}^\dagger\Sigmab_{\w,\mub})$. Using this, we have
\bea\nn
\P_e  &= \P\big\{ \arg\max\text{ }( \gwt + \Sigmab_{\w,\mub}\Sigmab_{\mub,\mub}^\dagger\h + \bh ) \neq Y(\h)  \big\}\nn
\\ &= \P\big\{ \arg\max\text{ }( \gwt + \Sigmab_{\w,\mub}\Sigmab_{\mub,\mub}^\dagger\h + \bh ) \neq Y(\h)  \big\} \nn
\\ &= \sum_{c\in[k]} \E_{\h,\gwt}\left[ \P\left\{ Y(\h) = c \right\} \cdot\ind\left\{\arg\max\text{ }(\gwt + \Sigmab_{\w,\mub}\Sigmab_{\mub,\mub}^\dagger\h + \bh ) \neq c \right\}\right]
 \nn
\\ &= \sum_{c\in[k]} \E_{\h,\gwt}\left[\frac{e^{\h_c}}{{\sum_{\ell\in[k]}e^{\h_{\ell}}}} \left( 1 - \prod_{j\neq c} \ind\left\{  \gwt_c + [\Sigmab_{\w,\mub}\Sigmab_{\mub,\mub}^\dagger\h]_c + \bh_c \geq  \gwt_j + [\Sigmab_{\w,\mub}\Sigmab_{\mub,\mub}^\dagger\h]_j + \bh_j \right\} \right) \right]
 \nn
\\ &= \sum_{c\in[k]}\E_{\h}\left[ \frac{e^{\h_c}}{{\sum_{\ell\in[k]}e^{\h_{\ell}}}}\left(1-\P_{\z\sim\Nn(\zero,\Iden_{k-1})}\big\{ \Sb_c^{1/2} \z \geq \tb_c(\h)  \big\} \right)\right], \label{eq:condF}
\end{align}
where in the last line 
\begin{subequations}\label{eq:Sc_MLM}
\bea
[\tb_c(\h)]_j&=\bh_{j} - \bh_{c} + [ \Sigmab_{\w,\mub}\Sigmab_{\mub,\mub}^\dagger \h ]_j -  [ \Sigmab_{\w,\mub}\Sigmab_{\mub,\mub}^\dagger \h ]_c ,~~j\neq c \in[k]\\
[\Sb_c]_{i,j}&= \left(\eb_c-\eb_j\right)^T\left(\Sigmab_{\w,\w} - \Sigmab_{\w,\mub} \Sigmab_{\mub,\mub}^\dagger\Sigmab_{\w,\mub}\right)\left(\eb_c-\eb_i\right),~~i,j\neq c\in[k].
\end{align}
\end{subequations}
Further recalling the decomposition
$\P_e = \sum_{c} \mathbb{P}\Big\{\widehat{Y}\neq Y | Y=c\Big\} \P\left\{Y=c\right\}$
and noting that 
\begin{align*}
\mathbb{P}\{Y=c\}=\E_{\vct{x}}\Big[\mathbb{P}\Big\{Y=c | \vct{x}\Big\}\Big]=\E_{\vct{x}}\Bigg[\frac{1}{\left(1+\sum_{j\neq c} e^{(\vct{\mu}_j-\vct{\mu}_c)^T\vct{x}}\right)}\Bigg] = \pib_c
\end{align*}
 we can see from \eqref{eq:condF} that the class-wise error probabilities can be calculated as follows:
\bea
\P_{e|c}  = \mathbb{P}\Big\{ \widehat{Y}\neq Y | Y=c\Big\} = \frac{1}{\pib_c}\,\E_{\h\sim\Nn(\zero_k,\Sigmab_{\mub,\mub})}\left[ \frac{e^{\h_c}}{{\sum_{\ell\in[k]}e^{\h_{\ell}}}}\left(1-\P_{\z\sim\Nn(\zero,\Iden_{k-1})}\left\{ \Sb_c^{1/2} \z \geq \tb_c(\h)  \right\} \right)\right],\label{eq:MLM_class}
\end{align}
where $\pib_c$ is the $c^{\text{th}}$ entry of the vector $\pib$ in \eqref{eq:alphas_gen} and $\Sb_c,\tb_c(\h)$ are defined in \eqref{eq:Sc_MLM}.

\subsection{Evaluating and bounding tail probabilities of multivariate Gaussians}
In Sections \ref{sec:last1} and \ref{sec:last2}, we expressed the class-wise probability of missclassification error for both GMM and MLM in the following convenient form for $\z\sim\Nn(\zero,\Iden_{k-1})$,
\bea
1 - \P\{ \A^{1/2}\z \leq \tb \} =  \P\left\{ \bigcup_{i\in [k-1]} \{\widetilde\ab_i^T\z \geq \tb_i \} \right\} 
\label{eq:2bound_B4}\,.
\end{align}
Here, $\A\succeq \zero\in\R^{(k-1)\times(k-1)}, \tb\in\R^{k-1}$ are appropriate coefficient matrices (see \eqref{eq:condP} and \eqref{eq:condF}) and $\widetilde\ab_i$ denotes the $i$th row of the matrix $\A^{1/2}$. For example, \eqref{eq:2bound_B4} maps to \eqref{eq:condP} for $\A\leftarrow\Sb_c$ and $\tb \leftarrow (-\tb_c)$.

The formulation above is convenient both in our theoretical analysis, as well as, in simulations. In the rest of this section, we briefly discuss some relevant tools that allow to further simplify or bound expressions in the form of \eqref{eq:2bound_B4}.

\subsubsection{A special case: Rank-one update of Identity}
First, we discuss the case where the coefficient matrix $\A$ and vector $\tb$ in \eqref{eq:2bound_B4} take the special form $\A\propto\Iden + \one\one^T$ and $\tb\propto\one$. This special case appears in some of the stylized symmetric problem settings studied in this paper, such as classification problems with orthogonal and equally-balanced means. 

\begin{lemma}\label{lem:rank1}
Let $\A=\Iden_k + \one_k\one_k^T$ and $\g\sim\Nn(\zero,\A)$. Then, for any $t\in\R$,
\begin{align}\label{eq:simple_rank1}
1 - \P\{ \g \leq t \one_k \} = \P\{ G_0 + \max_{i\in[k]} G_i \geq t \},~~G_0,G_1,\ldots,G_{k}\simiid\Nn(0,1)\,.
\end{align}
\end{lemma}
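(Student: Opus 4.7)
The key observation is that the covariance matrix $\A = \Iden_k + \one_k\one_k^T$ admits an elementary additive decomposition: if $G_0, G_1, \ldots, G_k \simiid \Nn(0,1)$, then the vector $\tilde{\g} := (G_1 + G_0, G_2 + G_0, \ldots, G_k + G_0)^T \in \R^k$ is Gaussian with zero mean and covariance matrix whose diagonal entries equal $\Var(G_i) + \Var(G_0) = 2$ and whose off-diagonal entries equal $\Cov(G_i + G_0, G_j + G_0) = \Var(G_0) = 1$ for $i\neq j$. This matches $\A$ exactly, so $\g \stackrel{d}{=} \tilde{\g}$.

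Given this distributional identification, the event $\{\g \leq t\one_k\}$ coincides with $\{G_i + G_0 \leq t \text{ for all } i\in[k]\}$, which is the same as $\{G_0 + \max_{i\in[k]} G_i \leq t\}$. Taking complements,
\begin{align*}
1 - \P\{\g \leq t\one_k\} = \P\{G_0 + \max_{i\in[k]} G_i > t\} = \P\{G_0 + \max_{i\in[k]} G_i \geq t\},
\end{align*}
where the last equality uses continuity of the distribution of $G_0 + \max_{i}G_i$ so that the point $\{G_0+\max_i G_i = t\}$ has probability zero. This is precisely \eqref{eq:simple_rank1}.

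There is no substantial obstacle here: the only content is recognizing the rank-one-plus-identity covariance as coming from adding a shared standard Gaussian $G_0$ to independent standard Gaussians $G_1,\ldots,G_k$, after which the claim reduces to rewriting a maximum-of-sums. I would present the proof in two lines: first state the distributional equality $\g \stackrel{d}{=} (G_0\one_k + (G_1,\ldots,G_k)^T)$, then perform the complementation step above.
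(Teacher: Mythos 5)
Your proposal is correct and follows essentially the same route as the paper: decompose $\g_i = G_0 + G_i$ with $G_0,\ldots,G_k\simiid\Nn(0,1)$, verify this matches the covariance $\Iden_k+\one_k\one_k^T$, and rewrite the complement of $\{\g\leq t\one_k\}$ as $\{G_0+\max_i G_i\geq t\}$. Your extra remark on the null probability of the boundary event $\{G_0+\max_i G_i=t\}$ is a small point the paper leaves implicit, but there is no substantive difference.
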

\begin{proof}
For each $i\in[n]$, we can decompose 
$
\g_i = G_0 + G_i,
$
 where $G_0, G_1,\ldots, G_k$ are iid standard normals. Indeed, it can be readily checked from this that $\E[\g_i^2] = 2$ and $\E[\g_i\g_j] = \E[G_0^2] = 1,~i\neq j$, which is consistent with $\g\sim\Nn(\zero,\A)$. 
Thus, we can write
$$
1 - \P\{ \g \leq t \one_k \} =  \P\{ \max_{i\in[k]} \g_i \geq t \} =  \P\{ G_0 + \max_{i\in[k]} G_i \geq t \},
$$
which completes the proof.
\end{proof}

\subsubsection{Slepian's bound}
When the matrix $\A$ does not have the special structure assumed by Lemma \ref{lem:rank1}, it is not possible in general to provide simple expressions as the one in \eqref{eq:simple_rank1}. Yet, it might be possible to obtain upper bounds of the same simple form. Such simple bounds can be useful for theoretical interpretations of otherwise complicated formulae, or can provide efficient means for quick (but, non-tight) implementations. 

In this section, we discuss Slepian's inequality (see \ref{lem:Slepian}) as a useful tool in this direction. 
Assume that
$
a = \min_{i, j\in[k]} \A_{ij} \geq 0.
$
To begin,  note that 
$
\A \geq \left({\rm diag}(\A) - a\Iden\right)\, + a\one\one^T,
$
where the inequality holds element-wise and equality is true for the diagonal elements. Then, one can apply Slepian's Lemma \ref{lem:Slepian} to upper bound the conditional probability of error in \eqref{eq:2bound_B4} with the following simple bound:
\bea
\nn 1 - \P\{ \A^{1/2}\g \leq \tb \} &\leq 1 - \P\left\{ \left(\left({\rm diag}(\A) - a\Iden\right)\, + a\one\one^T\right)^{1/2}\g \leq \tb \right\}\\
&\leq \P\Big\{ \bigcup_{j\in[k]} \big\{\,G_0 + G_{j} \sqrt{{[\A]_{jj}}/{a}-1}  \geq {[\tb]_j}/{a} \,\big\}  \Big\},\qquad G_0,G_1,\ldots,G_{k}\simiid\Nn(0,1)\,.\nn
\end{align}
In the second line above, we used the Gaussian decomposition of Lemma \ref{lem:rank1}.


\subsubsection{Simple bounds for GMM}\label{sec:UB}

\vp
\noindent\textbf{Union bound.}~Of course, it is also possible to apply (a simpler) union bound  to upper bound the tail probability in \eqref{eq:2bound_B4}. Here, we show explicitly the result of applying union bound to the class-wise error probabilities of the GMM. Specifically, consider \eqref{eq:4UB}. An application of the union bound leads to the following:
\bea
\P_{e|c} &= \P\left( \exists j\neq c~:~ \inp{\wh_c-\wh_j}{\z} \leq  \inp{\wh_j-\wh_c}{\mub_c} +  (\bh_j-\bh_c)   \right) \\
&\leq \sum_{j\neq c} \P_{G\sim\Nn(0,1)}\left\{ \twonorm{\wh_j - \wh_c} G \leq \inp{\wh_j-\wh_c}{\mub_c} + (\bh_j-\bh_c)   \right\}\nn\\
&=  \sum_{j\neq c} Q\left( \inp{\frac{\wh_c-\wh_j}{\twonorm{\wh_c - \wh_j}}}{\mub_c} + \frac{\bh_c-\bh_j}{\twonorm{\wh_c - \wh_j}} \right)  \nn\\
&= \sum_{j\neq c} Q\left( \frac{-[\tb_c]_j}{\sqrt{[\Sb_c]_{jj}}}\right)\label{eq:UB33}\\
& \leq (k-1)\cdot Q\left( d_{\min} \right),\nn
\end{align}
where in \eqref{eq:UB33} $\Sb_c,\tb_c$ are defined in \eqref{eq:Sc_GMM} and in the last line we denote $d_{\min}:=\min_{j\neq c} \big\{{-[\tb_c]_j}\Big/{\sqrt{[\Sb_c]_{j,j}}}\big\}.$ 

\vp
\noindent\textbf{Union bound without knowledge of cross-correlations $\inp{\w_i}{\w_j},~i\neq j$.}~It is worth noting that the upper bound in \eqref{eq:UB33}  requires knowledge of the cross-correlations $\inp{\wh_j}{\wh_c},~j\neq c$, i.e., of the off-diagonal entries of $\Sigmab_{\w,\w}$. Thankfully, our analysis allows predicting these values. For comparison, we ask wether it is possible to further upper bound the class-wise error probability if only the diagonal entries of $\Sigmab_{\w,\w}$ (i.e., the norms $\twonorm{\wh_j}, j\in[k]$) were known. A simple answer to this questions is as follows. Observe that $\sqrt{[\Sb_c]_{jj}} = \twonorm{\wh_c-\wh_j}\leq \twonorm{\wh_c}+ \twonorm{\wh_j}.$ Thus, if $[\tb_c]_j<0$ then the $j$th term in \eqref{eq:UB33} is further upper bounded by $Q\left( \frac{-[\tb_c]_j}{\twonorm{\wh_c}+ \twonorm{\wh_j}}\right)$:
$$
\P_{e|c} \leq \begin{cases} 
\sum_{j\neq c} Q\left( \frac{-[\tb_c]_j}{\twonorm{\wh_c}+ \twonorm{\wh_j}}\right)& \text{if } \tb_c\leq \zero,\\
1 &\text{otherwise.}
\end{cases}
$$
 Unfortunately, this bound becomes non-trivial for the class-wise probability of error only if $$[\tb_c]_j<0 \Leftrightarrow \inp{\wh_c}{\mub_c} + \bh_c \geq \inp{\wh_j}{\mub_c} + \bh_j \text{ for all } j\neq c.$$
 Intuitively, this assumes a regime wherethe weight vector $\wh_c$ corresponding to class $c$ aligns better with the corresponding mean vector $\mub_c$ than the rest of the weight vectors $\wh_j,~j\neq c.$
  This emphasizes the important role of the cross-correlation matrix $\Sigmab_{\w,\w}$ (including the off-diagonals) for accurate performance prediction. For an illustration, we have implemented this bound and have compared it to our sharp predictions in Figure \ref{fig:UB}.

\vp
\noindent\textbf{Oracle lower bound.}~ For completeness, we briefly discuss an oracle lower bound for the class-wise probability of error in GMM. Specifically,  assume that the means $\mub_i,~i\in[n]$ are known. Then the problem of classifying a new sample $\x$ is a $k$-ary hypothesis testing problem with Gaussian conditionals. Denote $\P_{\rm genie , Bayes}$ the Bayes error of this hypothesis testing problem. Clearly $\P_{\rm genie , Bayes}$ is a lower bound on the error of any classifier that is trained on data. The paper \cite{wisler2016empirically} further lower bounds $\P_{\rm genie , Bayes}$ in terms of the Bayesian probability of errors between every two classes as follows:
\bea
\P_e &\geq \P_{\rm genie , Bayes} \geq \frac{2}{k}\sum_{i\neq j}\pi_i  \P_{\rm genie , ij} \\
&= \frac{2}{k}\sum_{i\neq j}\frac{\pi_i}{\pi_i+\pi_j} \Big\{ \pi_i \cdot Q\left( \frac{\twonorm{\mub_i-\mub_j}}{2} + \frac{\log(\pi_i/\pi_j)}{\twonorm{\mub_i-\mub_j}} \right) + \pi_j \cdot Q\left( \frac{\twonorm{\mub_i-\mub_j}}{2} - \frac{\log(\pi_i/\pi_j)}{\twonorm{\mub_i-\mub_j}} \right)  \Big\}.\nn
\end{align}
where $\P_{\rm genie , ij}$ is the Bayesian error between classes $i$ and $j$ with priors $\frac{\pi_i}{\pi_i+\pi_j}$ and $\frac{\pi_j}{\pi_i+\pi_j}$. For the last equality we have used the well-known formula for the Bayesian probability of binary Gaussian hypothesis testing. In the case of equal-priors the genie lower bound above simplifies to
\bea\label{eq:genie_LB}
\P_e \geq \frac{2}{k}\sum_{i\neq j}{\pi_i}\cdot Q\left( \frac{\twonorm{\mub_i-\mub_j}}{2}  \right) .
\end{align}
Note that, in contrast to \eqref{eq:genie_LB}, our analysis allows for precise evaluations of the missclassification error $\P_e$.

\section{The Class-averaging estimator}

\subsection{Proofs for GMM}
\subsubsection{GMM: Proof of Proposition \ref{thm:ave_GM}}
The first statement \eqref{eq:ave01} follows directly from the fact that $\frac{1}{n}\one^T\Y_i = \frac{n_i}{n} \rP \pi_i.$ For the next two statements note that
\begin{align}\label{eq:ave_pf_kfixed}
\wh_i = \frac{1}{n}\M\Y \Y_i + \frac{1}{n} \Zb \Y_i =\frac{1}{n}\sum_{j=1}^k \mub_j(\Y_j^T\Y_i) + \frac{1}{n} \Zb \Y_i = \frac{\twonorm{\Y_i}^2}{n}\mub_i + \frac{1}{n} \Zb \Y_i,
\end{align}
where in the last line we used orthogonality of the rows $\Y_j$ of the matrix $\Y$:
\bea\label{eq:ortho}
\inp{\Y_i}{\Y_j} = 0, \forall i\neq j\in[k].
\end{align}
 To conclude simply use the facts that for all $i\in[k]$: 
 \begin{enumerate}
\item[(i)] $\frac{\twonorm{\Y_i}^2}{n}=\frac{n_i}{n}\rP\pi_i$.
 \item[(ii)] $\Zb \Y_i \sim \sigma\twonorm{\Y_i}\g_i$ with $\g_i\simiid\Nn(\zero,\Iden_d)$ because of \eqref{eq:ortho}.
 \item[(iii)] $\frac{\twonorm{\g_i}}{\sqrt{n}}\rP \sqrt{\gamma}$ and $\frac{1}{\sqrt{n}}\inp{\g_i}{\mub_j}\rP 0$.
 \end{enumerate}

\subsection{Proofs for MLM}

\subsubsection{Proof of Proposition \ref{propo:ave_log}}
Let us define $\g\sim\Nn(\zero,\Iden_r)$ and random vector
 $\vb=[V_1,V_2,\ldots,V_k]^T$ with entries:
\bea\label{eq:Vs}
\vb=\frac{e^{\Vb\Sigmab\g}}{\one_k^Te^{\Vb\Sigmab\g}},~~~~~~V_i = \frac{e^{\eb_i^T\Vb\Sigmab\g}}{\sum_{j\in[k]}e^{\eb_j^T\Vb\Sigmab\g}},~i\in[k].
\end{align}
We will prove the following three statements:
\begin{subequations}\label{eq:ave_soft_app}
\begin{align}
\bh &\rP \E\left[\vb\right]
\label{eq:ave_soft_app_1}  \\
\Sigmab_{\w,\mub} &\rP \E\left[\vb\g^T\right]\Sigmab\Vb^T
\label{eq:ave_soft_app_2}
\\
\Sigmab_{\w,\w} &\rP \gamma\cdot\diag{\E\left[\vb\right]} + \E\left[\vb\g^T\right]\cdot \E\left[\g\vb^T\right]
\label{eq:ave_soft_app_3}.
\end{align}
\end{subequations}
These lead to \eqref{eq:ave_soft} using Lemma \ref{lem:IBP}. Therefore, in what follows, we prove \eqref{eq:ave_soft_app}

For the intercepts $\bh_\ell,~\ell\in[k]$ it holds that
$$
\bh_\ell = \frac{1}{n}\one^T\Y_\ell \rP \P\{Y=\ell\} = \E\left[ \frac{e^{\h_\ell}}{\sum_{j\in[k]}e^{\h_j}} \right],
$$
where $\h\sim\Nn\left(\zero_k,\Sigmab_{\mub\mub}\right)$. To deduce the first statement in \eqref{eq:ave_soft_app_1}, note that $\h\eqD\Vb\Sigmab\g$.

Continuing with the vectors $\wh_\ell,~\ell\in[k]$, recall that $\w_\ell=\frac{1}{n}\X\Y_\ell = \frac{1}{n}\sum_{i_\ell\in[n]}\x_{i_\ell}[\Y_\ell]_{i_\ell}$. Consider the singular decomposition  
$$
\mtx{M}=\mtx{U}\mtx{\Sigma}\mtx{V}^T = 
\begin{bmatrix}\ub_1 & \ub_2 & \ldots & \ub_r \end{bmatrix} \diag{\sigma_1,\sigma_2,\ldots,\sigma_r} \begin{bmatrix}\vb^T_1 \\ \vb_2^T \\ \ldots \\ \vb_r^T \end{bmatrix} ,
$$
with $\mtx{U}\in\R^{d\times r}$, $\mtx{\Sigma}\in\R^{r\times r}$, and $\mtx{V}\in\R^{k\times r}$ where $r=$rank$(\mtx{M})\le k$. Decompose $\X\in\R^{d\times n}$ as $\X=\Ub\Ub^T\X + \Pb^\perp\X$ with $\Pb^\perp = \Iden_d-\Ub\Ub^T.$ With this notation we compute
\begin{align}
\inp{\w_\ell}{\mub_c}=\frac{1}{n}\sum_{i\in[n]}\x^T_{i}\mub_c[\Y_\ell]_{i}  &= \frac{1}{n}\sum_{j=1}^r\sum_{i\in[n]}(\x^T_{i}\ub_j)\cdot(\mub_c^T\ub_j)[\Y_\ell]_{i} + \frac{1}{n}\sum_{i\in[n]}\cdot(\mub_c^T\Pb^\perp\x_i)[\Y_\ell]_{i}  \nn\\
&\rP \sum_{j=1}^{r}\left(\eb_c^T\Vb\Sigmab\eb_j\right)\E\left[ \g_j  \frac{\left(e^{\eb_\ell^T\Vb\Sigmab\g}\right)}{\sum_{\ell^\prime\in[k]}e^{\eb_{\ell^\prime}^T\Vb\Sigmab\g}}\right] \nn\\
 &= \sum_{j=1}^{r}\E[V_\ell\g_j] \, \left(\eb_j^T\Sigmab\Vb^T\eb_c\right).
\end{align}
Here, we have recognized that for every $i\in[n]: \Ub^T\x_i\sim\g$, and also, conditioned on $\x_i$: $[\Y_\ell]_i\sim{\rm Bern}\left(e^{\mub_\ell^T\x_i}\big/\sum_{\ell^\prime}e^{\mub_{\ell^\prime}^T\x_i}\right)$ and $\mub_{\ell}^T\x_i=\eb_{\ell}^T\Vb\Sigmab\Ub^T\x_i\sim\eb_{\ell}^T\Vb\Sigmab\g, \ell\in[k]$.
This shows the second statement in \eqref{eq:ave_soft_app_2} when expressed in matrix form. 

We proceed similarly with the proof of the last statement in \eqref{eq:ave_soft_app_3} as follows:
\begin{align}
&\inp{\w_\ell}{\w_c}=\frac{1}{n^2}\sum_{i_\ell\in[n],i_c\in[n]}\x^T_{i_\ell}\x_{i_c}[\Y_\ell]_{i_\ell}[\Y_c]_{i_c}  \nn
\\
&=\frac{1}{n^2}\sum_{j=1}^r\sum_{i_\ell\in[n],i_c\in[n]}(\x^T_{i_\ell}\ub_{j})(\x^T_{i_c}\ub_{j})[\Y_\ell]_{i_\ell}[\Y_c]_{i_c} + \frac{1}{n^2}\sum_{i_\ell\in[n],i_c\in[n]}\left(\Pb^\perp\x_{i_\ell}\right)^T\left(\Pb^\perp\x_{i_c}\right)[\Y_\ell]_{i_\ell}[\Y_c]_{i_c} \nn
\end{align}
For $i_\ell=i_c=i\in[n]$ note that
\begin{align}
\frac{1}{n^2}\sum_{i\in[n]}\sum_{j=1}^r(\x^T_{i}\ub_{j})(\x^T_{i}\ub_{j})[\Y_\ell]_{i}[\Y_c]_{i} &\rP 0, \nn
\end{align}
while, for $i_\ell\neq i_c$,
\begin{align}
\frac{1}{n^2}\sum_{i_\ell\neq i_c\in[n]}\sum_{j=1}^r(\x^T_{i_\ell}\ub_{j})[\Y_\ell]_{i_\ell} (\x^T_{i_c}\ub_{j})[\Y_c]_{i_c} &\rP \sum_{j=1}^r\E\left[\g_j\cdot\frac{\left(e^{\eb_\ell^T\Vb\Sigmab\g}\right)}{\sum_{\ell^\prime\in[k]}e^{\eb_{\ell^\prime}^T\Vb\Sigmab\g}}\right] \E\left[\g_j\cdot\frac{\left(e^{\eb_c^T\Vb\Sigmab\g}\right)}{\sum_{\ell^\prime\in[k]}e^{\eb_{\ell^\prime}^T\Vb\Sigmab\g}}\right] \nn\\
& = \sum_{j=1}^{r} \E[V_\ell \g_j] \E[\g_j V_c] = \eb_\ell^T \Eb[\vb\g^T] \cdot \Eb[\g\vb^T] \eb_c^T \nn
\end{align}
Furthermore,
\begin{align}
\frac{1}{n}\sum_{i\in[n]}\twonorm{\Pb^\perp\x_{i}}^2[\Y_\ell]_{i}[\Y_c]_{i}^2\rP \gamma\cdot \ind_{\ell,c} \cdot\E\left[\left(\frac{e^{\eb_\ell^T\Vb\Sigmab\g}}{\sum_{\ell^\prime\in[k]}e^{\eb_{\ell^\prime}^T\Vb\Sigmab\g}}\right)\right]  = \gamma\cdot\eb_\ell^T \diag{\E[\vb]} \eb_c.\nn
\end{align}
Combining the last two displays results in \eqref{eq:ave_soft_app_2}, as desired.

\subsubsection{Orthogonal means}
Here, we specialize the general result of Proposition \ref{propo:ave_log} to the special case of orthogonal   means: $\inp{\mub_i}{\mub_j}=0,\forall i\neq j$. Recall the notation $\mu_i=\twonorm{\mub_i}, i \in[k]$. Then, in this case the parameters in \eqref{eq:alphas_gen} are simply given by the following 
\bea\label{eq:alphas}
\pib_i := \E\big[\frac{e^{\mu_i G_i}}{\sum_{\ell\in[k]}e^{\mu_\ell G_\ell}} \big], i\in[k]\quad\text{and}\quad \Pib_{ij} := \E\big[\frac{e^{\mu_i G_i}e^{\mu_j G_j}}{\left(\sum_{\ell\in[k]}e^{\mu_\ell G_\ell}\right)^2} \big], ~i,j\in[k].
\end{align}

%
%
Specifically, \eqref{eq:simple_log_Pe_ave} can be equivalently expressed as
\begin{align}
\P_{e,\rm Avg} &\rP \P\big( \arg\max_{\ell\in [k]} \left\{ \gamma\cdot\diag{\pib}\cdot\gwt + \left(\diag{\pib}-\Pib\right)\cdot \Sigmab \g \right\} \neq Y(\g) \big)\label{eq:ave_ortho_equal_Pe}\\
&= \P\big( \bigcup_{j\neq Y} \left\{ \gamma \cdot \pib_{\ell} \cdot \gwt_\ell \geq \gamma \cdot \pib_{Y} \cdot \gwt_Y + (\eb_Y-\eb_\ell)^T\left(\diag{\pib}-\Pib\right)\Sigmab \g + (\pib_Y-\pib_\ell)\right\} \big), \label{eq:union}
\end{align}
where $\g,\gwt\simiid\Nn(\zero,\Iden_k)$, $\P\left(Y(\g)=c\right) = \frac{e^{\mu_c \g_c}}{\sum_{\ell\in[k]}e^{\mu_\ell \g_\ell}}$ and $\Sigmab=\diag{\mu_1,\ldots,\mu_k}$.

\section{On the Bayes risk of GMM: Proof of Proposition \ref{propo:Bayes}}
Without loss of generality in this proof we assume $\sigma=1$. The general result follows by simply replacing $(\mu,\sigma)$ with $(\frac{\mu}{\sigma},1)$ and using the proof for $\sigma=1$. Recall that the feature vectors $\x_1,\ldots,\x_n$ of the training data set are given by:
$$
\x_i = \M\y_i + \z_i,\quad i\in[n],
$$
where the matrix of means $\M\in\R^{d\times k}$ has iid Gaussian entries with variance $\mu^2/d$, $\z_i\simiid\Nn(\zero,\Iden_d)$ and $\y_i\simiid\rm{Unif}\left(\e_1,\ldots,\e_k\right)$ with $\e_j$ denoting the $j^{\text th}$ canonical vector in $\R^k$. By definition here the Bayes estimator is the maximum-likelihood (ML) estimator. By applying the law of total probability and by successive application of the Bayes rule we have the following chain of reformulations of the ML:
\begin{align}
\yh_{n+1} &= \arg\max_{\eb_j,~j\in[k]} P\left\{ \y=\eb_j~|~\X,\Y,\x_{n+1}\right\} \nn\\
&= \arg\max_{\eb_j,~j\in[k]} \int P\left\{ \y=\eb_j~|~\M,\X,\Y,\x_{n+1}\right\} P\left\{\M~|~\X,\Y,\x_{n+1}\right\} \,\mathrm{d}\M\nn\\
&= \arg\max_{\eb_j,~j\in[k]} \int \frac{P\left\{ \x_{n+1}~|~\y=\eb_j,\M,\X,\Y\right\}\cdot P\left\{ \y=\eb_j~|~\M,\X,\Y\right\}}{P\left\{\x_{n+1}|\M,\X,\Y\right\}} P\left\{\M~|~\X,\Y,\x_{n+1}\right\} \,\mathrm{d}\M\nn\\
&= \arg\max_{\eb_j,~j\in[k]} \int {P\left\{ \x_{n+1}~|~\y=\eb_j,\M\right\}} \,\frac{P\left\{\M~|~\X,\Y,\x_{n+1}\right\}}{P\left\{\x_{n+1}|\M,\X,\Y\right\}} \,\mathrm{d}\M\label{eq:noM1}\\
&= \arg\max_{\eb_j,~j\in[k]} \int {P\left\{ \x_{n+1}~|~\y=\eb_j,\M\right\}} \,\frac{P\left\{\M~|~\X,\Y\right\}}{P\left\{\x_{n+1}|\X,\Y\right\}} \,\mathrm{d}\M\nn\\
&= \arg\max_{\eb_j,~j\in[k]} \int {P\left\{ \x_{n+1}~|~\y=\eb_j,\M\right\}} \,{P\left\{\M~|~\X,\Y\right\}} \,\mathrm{d}\M\label{eq:noM2}\\
&= \arg\max_{\eb_j,~j\in[k]} \int {P\left\{ \x_{n+1}~|~\y=\eb_j,\M\right\}} \,{P\left\{\X~|~\M,\Y\right\}}\,P\{\M\} \,\mathrm{d}\M\label{eq:ML}
\end{align}
To arrive in \eqref{eq:noM1} we used that $P\left( \y=\eb_j~|~\M,\X,\Y\right) =\pi,~\forall j\in[k]$ and $P\left( \x_{n+1}~|~\y=\eb_j,\M,\X,\Y\right)=P\left( \x_{n+1}~|~\y=\eb_j,\M\right)$. Also, \eqref{eq:noM2} follows by recognizing that $P(\x_{n+1}~|~\X,\Y)>0$ is independent of the variable of integration $\M$ and of the optimization variable $j$. For the same reasons, in \eqref{eq:ML} we have ignored the normalizing term $P(\X|\Y)$.


Recalling that $\z_{n+1}\sim\Nn(\zero,\Iden_d)$, we have that $P\left( \x_{n+1}~|~\y=\eb_j,\M\right)\propto\exp\left(-\twonorm{\x_{n+1} - \mub_j}^2\big/2\right)$ where $\propto$ hides constant positive terms. 
Moreover, the posterior probability of the mean matrix given the training data is given by
\bea
P\left(\X~|~\M,\Y\right)\cdot P\left(\M\right) &\propto \exp\left(-\frac{\twonorm{\X-\M\Y}^2}{2}\right)\cdot \exp\left(-\frac{\twonorm{\M}^2}{2(\mu^2/d)}\right) \nn\\
&\propto\prod_{c=1}^k  \left\{\exp\left(-\frac{\twonorm{\mub_c}^2}{2(\mu^2/d)}\right) \cdot\prod_{i\in\Cc_c} \exp\left(-\frac{\twonorm{\x_i-\mub_c}^2}{2}\right)\right\},
\end{align}
where we denote by $\Cc_c$ the collection of training samples that belong to class $c\in[k]$, i.e. $\Cc_c=\{i\in[n]~|~\y_i=\eb_c\}.$ 

With these the objective function of the ML rule in \eqref{eq:ML} becomes:
\begin{align}
\yh_{n+1} = \arg \max_{j\in[k]}~\Ic(j,\Cc_{j}\cap\{n+1\})\cdot\prod_{\substack{c=1 \\ c\neq j}}^k\Ic(c,\Cc_c)\label{eq:Bayes1}\,,
\end{align}
where for $\ell\in[k]$ and a subset $\Ac\subset[n+1]$ we denote
$$
\Ic(\ell,\Ac) := \int \mathrm{d}{\mub_\ell}\,\exp\left(-\frac{\twonorm{\mub_\ell}^2}{2(\mu^2/d)}\right) \cdot \exp\left(-\sum_{i\in\Ac}\frac{\twonorm{\x_i-\mub_\ell}^2}{2}\right)\,.
$$
By completing the squares and invoking a gaussian integral it can be shown that
\bea
\Ic(\ell,\Ac) &:= \sqrt{\frac{(d/\mu^2+|\Ac|)}{(2{\varpi})^d}}\exp\left(-\frac{\left(1-\frac{1}{d/\mu^2+|\Ac|}\right)}{2}\sum_{i\in\Ac}\twonorm{\x_i}^2 + \frac{1}{2\left(d/\mu^2+|\Ac|\right)}\sum_{i\in\Ac}\inp{\x_i}{\sum_{\substack{j\in\Ac\\j\neq i}}\x_j} \right) \nn \\
&:= \sqrt{\frac{(d/\mu^2+|\Ac|)}{(2{\varpi})^d}}\exp\left(-\frac{1}{2\left(\frac{d}{\mu^2}+{|\Ac|}\right)}\left( \left(\frac{d}{\mu^2}+|\Ac| - 1\right) \sum_{i\in\Ac}\twonorm{\x_i}^2 - \sum_{i\in\Ac}\inp{\x_i}{\sum_{\substack{j\in\Ac\\j\neq i}}\x_j} \right) \right) \nn\\
&:= \sqrt{\frac{(d/\mu^2+|\Ac|)}{(2{\varpi})^d}}\exp\left(-\frac{1}{2\left(\frac{d/n}{\mu^2}+{|\Ac|/n}\right)}\left( \left(\frac{d/n}{\mu^2}+\frac{|\Ac|}{n} - \frac{1}{n}\right) \sum_{i\in\Ac}\twonorm{\x_i}^2 - \frac{1}{n}\sum_{i\in\Ac}\inp{\x_i}{\sum_{\substack{j\in\Ac\\j\neq i}}\x_j} \right) \right) \nn.
\end{align}
Using this in \eqref{eq:Bayes1} we have that
\bea
\yh_{n+1} = \arg\max_{j\in[k]} ~\Ic(j) \cdot \exp\left(-\frac{1}{2\left(\frac{d/n}{\mu^2}+\frac{n_j+1}{n}\right)}\left( \left(\frac{d/n}{\mu^2}+\frac{n_j}{n}\right) \twonorm{\x_{n+1}}^2 - \frac{2}{n}\inp{\x_{n+1}}{\sum_{\substack{\ell\in\Cc_j}}\x_\ell} \right) \right), \label{eq:Bayes2}
\end{align}
where $\xi(n_c) := \frac{d/n}{\mu^2}+\frac{n_c}{n},~c\in[k]$ and 
\bea
\Ic(j):= \left\{\prod_{\substack{c=1\\ c\neq j}}^{k}e^{-\frac{1}{2\xi(n_c)}\left( \left(\xi(n_c) - \frac{1}{n}\right) \sum_{i\in\Cc_c}\twonorm{\x_i}^2 - \frac{1}{n}\sum_{i\in\Cc_c}\inp{\x_i}{\sum_{\substack{\ell\in\Cc_c\\\ell\neq i}}\x_\ell} \right) }\right\}\cdot e^{-\frac{1}{2\left(\xi(n_{j})+\frac{1}{n}\right)}\left( \xi(n_{j}) \sum_{i\in\Cc_j}\twonorm{\x_i}^2 - \frac{1}{n}\sum_{i\in\Cc_j}\inp{\x_i}{\sum_{\substack{\ell\in\Cc_j\\\ell\neq i}}\x_\ell} \right)} \,. \nn
\end{align}
We conclude that
\bea
\yh_{n+1} &= \arg\max_{j\in[k]} ~\log\left(\Ic(j)\right)-\frac{1}{2\left(\frac{d/n}{\mu^2}+\frac{n_j+1}{n}\right)}\left\{\left(\frac{d/n}{\mu^2}+\frac{n_j}{n}\right) \twonorm{\x_{n+1}}^2 - \frac{2}{n}\inp{\x_{n+1}}{\sum_{\substack{\ell\in\Cc_j}}\x_\ell} \right\}\nn \\
&= \arg\max_{j\in[k]} ~\log\left(\Ic(j)\right) +  \frac{1}{2\left(\frac{d/n}{\mu^2}+\frac{n_j+1}{n}\right)}\left\{  \frac{2}{n}\inp{\x_{n+1}}{\sum_{\substack{\ell\in\Cc_j}}\x_\ell} \right\}.
 \label{eq:Bayes3}
\end{align}
Next, we evaluate the objective in \eqref{eq:Bayes3} in the asymptotic limit $n,d\rightarrow\infty, n/d=\gamma$. First, since $n_c/n\rP \pi$, note that $\Ic(j)-\Ic(\ell)\rP 0$ for all $\ell,j\in[k]$. Moreover, note that
\bea
\frac{1}{n}\inp{\x_{n+1}}{\sum_{\ell\in \Cc_j}\x_\ell} &= \frac{1}{n}\inp{\M\y_{n+1} + \z_{n+1}}{ n_j\,\mub_j + \sum_{\ell\in \Cc_j}\z_\ell}\nn\\
&= \frac{n_j}{n}\inp{\M\y_{n+1}}{\mub_j}  + \frac{n_j}{n}\inp{\z_{n+1}}{\mub_j}  + \frac{1}{n}\sum_{\ell\in\Cc_j}\inp{\M\y_{n+1}}{\z_\ell} +  \frac{1}{n}\sum_{\ell\in\Cc_j}\inp{\z_{n+1}}{\z_\ell} \label{eq:fourTerms}
\end{align}
For each one of the four terms in \eqref{eq:fourTerms}, we have the following by the CLT:
\bea
&\frac{n_j}{n}\inp{\M\y_{n+1}}{\mub_\ell} \rP \pi \mu^2 \inp{\y_{n+1}}{\eb_j} \nn\\
& \frac{n_j}{n}\inp{\z_{n+1}}{\mub_j} \rD \Nn(0,\pi^2 r^2)\nn\\
&\frac{1}{n}\sum_{\ell\in\Cc_j}\inp{\M\y_{n+1}}{\z_\ell} \rP 0 \nn\\
&\frac{1}{n}\sum_{\ell\in\Cc_j}\inp{\z_{n+1}}{\z_\ell} \rD \Nn(0,\pi \gamma), \nn
\end{align}
where in the last line we used the fact that $\frac{1}{\sqrt{n_j}}\sum_{\ell\in\Cc_j}\frac{\inp{\z_{n+1}}{\z_\ell}}{\sqrt{n}}\rD \Nn(0,\gamma)$. 

Therefore, in the asymptotic limit, the Bayes estimator is the solution to:
\bea
\yh_{n+1} &= \arg\max_{\eb_j, j\in[k]} \pi \mu^2 \inp{\y_{n+1}}{\eb_j} + \sqrt{\pi\left(\pi \mu^2 + \gamma\right)}\, G_j,\quad G_1,\ldots,G_k\simiid\Nn(0,1).
\end{align}
As such, the probability of error is
\bea
\P_e &= \P\left\{\yh_{n+1}\neq\y_{n+1}\right\} = \P\left\{  \pi \mu^2 + \sqrt{\pi\left(\pi \mu^2 + \gamma\right)}\, G_0 \leq \max_{\ell\in[k-1]} \sqrt{\pi\left(\pi \mu^2 + \gamma\right)}\, G_\ell  \right\}\nn\\
&= \P\left\{ G_0 + \max_{\ell\in[k-1]}  G_\ell \geq \mu^2\sqrt{\frac{\pi}{\pi \mu^2 + \gamma}}  \right\}.
\end{align}


\section{Least-squares for GMM}\label{sec:proof_LS_GM}

\subsection{Proof of Theorem \ref{thm:LS_GM}}\label{sec:proof_LS_GM_main}

\subsubsection{Computing $\Sigma_{w,\mu}$}\label{sec:MLM_proof_1}
The LS classifier solves:
\begin{align*}
\underset{\mtx{W}\in\R^{k\times d},\text{ }\vct{b}\in\R^k}{\min}\quad \frac{1}{2n}\fronorm{\mtx{W}\mtx{X}+\vct{b}\vct{1}_n^T-\mtx{Y}}^2=&\sum_{\ell=1}^k \min_{\vct{w}_\ell, b_\ell}\text{ }\frac{1}{2n}\twonorm{\mtx{X}^T\vct{w}_\ell+b_\ell\vct{1}_n-\mtx{Y}_\ell}^2\\
=&\sum_{\ell=1}^k \min_{\vct{w}_\ell, b_\ell}\text{ }\frac{1}{2n}\twonorm{\mtx{Y}^T\mtx{M}^T\vct{w}_\ell+ \mtx{Z}^T\vct{w}_\ell+b_\ell\vct{1}_n-\mtx{Y}_\ell}^2\,.
\end{align*}

Define 
\begin{align}
\mathcal{L}_{PO}\left(\vct{w}_\ell,b_\ell\right):=\frac{1}{2n}\twonorm{\mtx{Y}^T\mtx{M}^T\vct{w}_\ell+ \mtx{Z}^T\vct{w}_\ell+b_\ell\vct{1}_n-\mtx{Y}_\ell}^2\label{eq:LS_PO}\,.
\end{align}

\noindent\textbf{Identifying the AO.}~~To continue further note that by duality we have
\begin{align*}
\min_{\vct{w}_\ell, b_\ell}\text{ }\mathcal{L}_{PO}\left(\vct{w}_\ell,b_\ell\right) =\min_{\vct{w}_\ell, b_\ell}\text{ }\max_{\vct{s}}\text{ }\frac{1}{n}\left(\vct{s}^T\mtx{Y}^T\mtx{M}^T\vct{w}_\ell+ \vct{s}^T\mtx{Z}^T\vct{w}_\ell+b_\ell\vct{s}^T\vct{1}_n-\vct{s}^T\mtx{Y}_\ell-\frac{\twonorm{\vct{s}}^2}{2}\right)\,.
\end{align*}
Note that the above is jointly convex in $(\vct{w}_\ell, b_\ell)$ and concave in $\vct{s}$ and the Gaussian matrix $\Zb$ is independent of everything else. Thus, the objective is in the form of \eqref{eq:PO_obj} and so we consider the corresponding Auxiliary Optimization (AO) problem:
\begin{align*}
\min_{\vct{w}_\ell, b_\ell}\text{ }\max_{\vct{s}}\text{ }\frac{1}{n}\left(\vct{s}^T\mtx{Y}^T\mtx{M}^T\vct{w}_\ell+ \sigma\twonorm{\vct{w}_\ell}\vct{g}^T\vct{s}+ \sigma\twonorm{\vct{s}}\vct{h}^T\vct{w}_\ell+b_\ell\vct{s}^T\vct{1}_n-\vct{s}^T\mtx{Y}_\ell-\frac{\twonorm{\vct{s}}^2}{2}\right),
\end{align*}
where $\vct{g}\in\R^n$ and $\vct{h}\in\R^d$ are independent Gaussian random vectors with i.i.d.~$\mathcal{N}(0,1)$ entries. Maximizing over the direction of $\vct{s}$ and setting its norm $\beta=\twonorm{\vct{s}}$ we arrive at
\begin{align*}
\min_{\vct{w}_\ell, b_\ell}\text{ }\max_{\beta\ge 0}\text{ } &\frac{1}{n}\left(\beta\twonorm{ \sigma\twonorm{\vct{w}_\ell}\vct{g}+\mtx{Y}^T\mtx{M}^T\vct{w}_\ell+b_\ell\vct{1}_n-\mtx{Y}_\ell}+ \beta\sigma\vct{h}^T\vct{w}_\ell-\frac{\beta^2}{2}\right)\\
&\quad\quad\quad\quad\quad\quad=\min_{\vct{w}_\ell, b_\ell}\text{ }\frac{1}{2n}\left(\twonorm{ \sigma\twonorm{\vct{w}_\ell}\vct{g}+\mtx{Y}^T\mtx{M}^T\vct{w}_\ell+b_\ell\vct{1}_n-\mtx{Y}_\ell}+ \sigma\vct{h}^T\vct{w}_\ell\right)_{+}^2\\
&\quad\quad\quad\quad\quad\quad=\frac{1}{2}\left(\min_{\vct{w}_\ell, b_\ell}\text{ } \frac{1}{\sqrt{n}}\twonorm{ \sigma\twonorm{\vct{w}_\ell}\vct{g}+\mtx{Y}^T\mtx{M}^T\vct{w}_\ell+b_\ell\vct{1}_n-\mtx{Y}_\ell}+\frac{1}{\sqrt{n}}\sigma \vct{h}^T\vct{w}_\ell\right)_{+}^2
\end{align*}

\noindent\textbf{Scalarization of the AO.}~~
For convenience, define
\begin{align}\label{eq:AO_main_GM}
\bar\phi_{AO,\ell}:= 
\min_{\vct{w}_\ell, b_\ell}\text{ } \frac{1}{\sqrt{n}}\twonorm{ \sigma\twonorm{\vct{w}_\ell}\vct{g}+\mtx{Y}^T\mtx{M}^T\vct{w}_\ell+b_\ell\vct{1}_n-\mtx{Y}_\ell}+\frac{\sigma}{\sqrt{n}} \vct{h}^T\vct{w}_\ell.
\end{align}

To continue, consider the singular value decomposition  
\begin{align}\label{eq:M_SVD}
\mtx{M}=\mtx{U}\mtx{\Sigma}\mtx{V}^T = 
\begin{bmatrix}\ub_1 & \ub_2 & \ldots & \ub_r \end{bmatrix} \diag{\sigma_1,\sigma_2,\ldots,\sigma_r} \begin{bmatrix}\vb^T_1 \\ \vb_2^T \\ \ldots \\ \vb_r^T \end{bmatrix} ,
\end{align}
with $\mtx{U}\in\R^{d\times r}$, $\mtx{\Sigma}\in\R^{r\times r}$, and $\mtx{V}\in\R^{k\times r}$ where $r=$rank$(\mtx{M})\le k$. We further decompose $\w_\ell$ in its projections on the orthogonal columns $\ub_1,\ldots,\ub_r$ of $\Ub$:
$$
\w_\ell = \sum_{i=1}^{r}\alpha_i \ub_i + \alpha_0\w_\ell^\perp, 
$$
where $\twonorm{\w_\ell^\perp}=1$ and $\Ub^T\w_\ell^\perp = \mathbf{0}$, $\alpha_0\geq 0$ and  we denote
\bea\label{eq:alphab_def}
\alpha_i := \ub_i^T\w_\ell, i\in[r].
\end{align}
We also define $\vct{\alpha}=\begin{bmatrix} \alpha_1 & \alpha_2 & \ldots & \alpha_k\end{bmatrix}^T$.
In this notation, we have
\begin{align}
\bar\phi_{AO,\ell}(\g,\h)&:= \min_{\alpha_0\geq 0,\text{ }\vct{\alpha}\in\R^r,\text{ }b_\ell}\text{ } \frac{1}{\sqrt{n}}\twonorm{ \sigma\sqrt{\alpha_0^2 + \twonorm{\vct{\alpha}}^2}\,\vct{g}+\mtx{Y}^T\Vb\mtx{\Sigma}\vct{\alpha}  +b_\ell\vct{1}_n-\mtx{Y}_\ell}\nonumber\\
&\quad\quad\quad\quad\quad\quad\quad\quad+ \sum_{i=1}^{r}{\alpha_i\sigma\frac{\vct{h}^T\ub_i}{\sqrt{n}}} + \frac{\alpha_0\sigma}{\sqrt{n}}\min_{\w_\ell^\perp}\left(\h^T\w_\ell^\perp\right) \nn\\
&
= \min_{\alpha_0\geq 0,\text{ }\vct{\alpha}\in\R^r,\text{ }b_\ell}\text{ } \frac{1}{\sqrt{n}}\twonorm{ \sigma\sqrt{\alpha_0^2 + \twonorm{\vct{\alpha}}^2}\,\vct{g}+\mtx{Y}^T\Vb\mtx{\Sigma}\vct{\alpha}   +b_\ell\vct{1}_n-\mtx{Y}_\ell}\nonumber\\
&\quad\quad\quad\quad\quad\quad\quad\quad+ \sum_{i=1}^{r}{\alpha_i\sigma\frac{\vct{h}^T\ub_i}{\sqrt{n}}} -\alpha_0\sigma\frac{\twonorm{\h^\perp}}{\sqrt{n}},\label{eq:scal_AO}
\end{align}
where in the second line we denote $\h^\perp$ the projection of $\h$ onto the complement subspace of the span of $\ub_1,\ldots,\ub_r$ and we recalled that $\twonorm{\w^\perp_\ell} = 1$ and $\inp{\w^\perp_\ell}{\ub_i}=0, i\in[r]$.

\vp
\noindent\textbf{Convergence of the AO.}~~First, note that 
\begin{align*}
\frac{1}{n}\twonorm{\mtx{Y}^T\Vb\mtx{\Sigma}\vct{\alpha}   +b_\ell\vct{1}_n-\mtx{Y}_\ell}^2=&\frac{1}{n}\twonorm{\mtx{Y}^T\left(\Vb\mtx{\Sigma}\vct{\alpha}-\vct{e}_\ell\right)   +b_\ell\vct{1}_n}^2\\
=&\frac{1}{n}\twonorm{\mtx{Y}^T\left(\Vb\mtx{\Sigma}\vct{\alpha}-\vct{e}_\ell\right)}^2+b_\ell^2+\frac{2}{n}b_\ell\vct{1}_n^T\mtx{Y}^T\left(\Vb\mtx{\Sigma}\vct{\alpha}-\vct{e}_\ell\right)\\
=&\text{trace}\left(\left(\Vb\mtx{\Sigma}\vct{\alpha}-\vct{e}_\ell\right)^T\text{diag}\left(\frac{n_1}{n} , \frac{n_2}{n} , \ldots , \frac{n_k}{n}\right)\left(\Vb\mtx{\Sigma}\vct{\alpha}-\vct{e}_\ell\right)\right)\\
&+b_\ell^2+2b_\ell\begin{bmatrix}\frac{n_1}{n} & \frac{n_2}{n}& \ldots & \frac{n_k}{n}\end{bmatrix}\left(\Vb\mtx{\Sigma}\vct{\alpha}-\vct{e}_\ell\right)
\end{align*}
Thus
\begin{align*}
\frac{1}{n}\twonorm{\mtx{Y}^T\Vb\mtx{\Sigma}\vct{\alpha}   +b_\ell\vct{1}_n-\mtx{Y}_\ell}^2\text{ }\rP\text{ }&\text{trace}\left(\left(\Vb\mtx{\Sigma}\vct{\alpha}-\vct{e}_\ell\right)^T\text{diag}\left(\vct{\pi}\right)\left(\Vb\mtx{\Sigma}\vct{\alpha}-\vct{e}_\ell\right)\right)\\
&+b_\ell^2+2b_\ell\vct{\pi}^T\left(\Vb\mtx{\Sigma}\vct{\alpha}-\vct{e}_\ell\right)\\
=&\alphab^T\left( \Sigmab\Vb^T\diag{\pib}\Vb\Sigmab\right)\alphab-2\pi_\ell\alphab^T\mtx{\Sigma}\Vb^T\vct{e}_\ell+2b_\ell\alphab^T\mtx{\Sigma}\Vb^T\vct{\pi}\\
&+b_\ell^2-2b_\ell\pi_\ell+\pi_\ell\,.
\end{align*}
At this point, observe that we have reduced the AO to an optimization problem over only $r+2$ scalar variables. Using the law of large numbers, the fact that $\twonorm{\h^\perp}$ concentrates around $\sqrt{d-r}$ and $(d-r)/n\rP\gamma$, as well as the limit calculation above, it is not hard to see that for fixed $\alpha_0,b_\ell$ and $\alphab=[\alpha_1,\ldots,\alpha_r]^T\in\R^r$, the objective function in \eqref{eq:scal_AO} converges to the following:
\begin{align}
&\Dc_\ell(\alpha_0,\alphab,b_\ell)\nonumber\\
&:=
\sqrt{\alpha_0^2 \sigma^2+ \alphab^T\left(\sigma^2\Iden_r + \Sigmab\Vb^T\diag{\pib}\Vb\Sigmab\right)\alphab - 2\alphab^T\left(\pi_\ell\mtx{\Sigma}\Vb^T\vct{e}_\ell-b_\ell\mtx{\Sigma}\Vb^T\vct{\pi}\right)+b_\ell^2-2 b_\ell\pi_\ell + \pi_\ell}\nonumber\\
&\quad\quad - \alpha_0\sigma\sqrt{\gamma}
\label{eq:det_1},
\end{align}

We will show in the next paragraph that the argument inside the square-root in \eqref{eq:det_1} is a convex quadratic over $(\alpha_0,\alphab,b_\ell)$ (see \eqref{eq:det_2}). Thus, the function $\Dc_\ell(\alpha_0,\alphab,b_\ell)$ is jointly convex. 
Using uniform convergence of convex functions over compact sets \cite[Cor..~II.1]{AG1982}, we arrive at
\begin{align}
\bar\phi_{AO,\ell}(\g,\h) \rP \min_{\alpha_0\geq 0,\alphab,b_\ell} \Dc_\ell(\alpha_0,\alphab,b_\ell).\label{eq:uniAO}
\end{align}

\noindent\textbf{Deterministic Analysis.}~~Here, we analyze the deterministic scalar minimization on the RHS of \eqref{eq:uniAO}. Define
\begin{align}\label{eq:Amat}
\A:=\begin{bmatrix} \sigma^2\Iden_r+\Sigmab\Vb^T\diag{\pib}\Vb\Sigmab & \Sigmab\Vb^T\pib \\ \pib^T\Vb\Sigmab & 1 \end{bmatrix}\quad\text{and}\quad \cb_\ell = \begin{bmatrix} \Sigmab\Vb^T\eb_\ell \\ 1 \end{bmatrix}\,,
\end{align}
and observe that we can write
\begin{align}
\Dc_\ell(\alpha_0,\alphab,b_\ell) = \sqrt{\alpha_0^2\sigma^2 + \pi_\ell +  \begin{bmatrix} \alphab^T & b_\ell \end{bmatrix} \A  \begin{bmatrix} \alphab \\ b_\ell  \end{bmatrix} - 2 \pi_\ell \cb_\ell^T \begin{bmatrix} \alphab \\ b_\ell  \end{bmatrix}} - \alpha_0\sigma\sqrt{\gamma}\label{eq:det_21}\,.
\end{align}
First, note that the matrix $\A$ is positive definite. This can be checked by computing the Schur complement of $\A$:
\begin{align}\label{eq:Deltab_def}
\Deltab:=\sigma^2\Iden_r+\Sigmab\Vb^T\Pb\Vb\Sigmab:=\sigma^2\Iden_r+\Sigmab\Vb^T\left(\diag{\pib}-\pib\pib^T\right)\Vb\Sigmab \succ \zero_{r\times r}.
\end{align}
Positive definiteness above holds because $\Pb:=\left(\diag{\pib}-\pib\pib^T\right)\succeq \zero_{k\times k}$.
 Thus the term under the square-root in \eqref{eq:det_21} is a strictly convex quadratic. Thus, $\Dc_\ell$ is jointly convex in its arguments.

To simplify the RHS of \eqref{eq:uniAO} we proceed by minimizing $\Dc_\ell(\alpha_0,\alphab,b_\ell)$ over $(\alphab,b_\ell)$ which from Lemma \eqref{lem:IBP} is equal to
\bea
\begin{bmatrix} \widehat\alphab \\ \widehat b_\ell \end{bmatrix} = \pi_\ell \A^{-1} \cb_\ell &= \pi_\ell \begin{bmatrix} \Iden & \zero \\ -\pib^T\Vb\Sigmab & 1 \end{bmatrix}\begin{bmatrix} \Deltab^{-1} & \zero \\ \zero^T & 1 \end{bmatrix} \begin{bmatrix} \Iden & -\Sigmab\Vb^T\pib \\ \zero^T & 1 \end{bmatrix}\begin{bmatrix} \Sigmab\Vb^T\eb_\ell \\ 1\end{bmatrix}\nn\\
&= \pi_\ell \begin{bmatrix} \Iden & \zero \\ -\pib^T\Vb\Sigmab & 1 \end{bmatrix}\begin{bmatrix} \Deltab^{-1} & \zero \\ \zero^T & 1 \end{bmatrix} \begin{bmatrix} -\Sigmab\Vb^T \left(\pib - \eb_\ell\right) \\ 1 \end{bmatrix} \nn\\
&= \pi_\ell \begin{bmatrix} -\Deltab^{-1}\Sigmab\Vb^T(\pib-\eb_\ell) \\ 1+\pib^T\Vb\Sigmab\Deltab^{-1}\Sigmab\Vb^T(\pib-\eb_\ell) \end{bmatrix}\label{eq:alphab_1}\,.
\end{align}
Thus, the minimum value attained is
\begin{align}
-\pi_\ell^2  \begin{bmatrix} -\left(\pib - \eb_\ell\right)^T\Vb\Sigmab && 1 \end{bmatrix} \begin{bmatrix} \Deltab^{-1} & \zero \\ \zero^T & 1 \end{bmatrix}\begin{bmatrix} -\Sigmab\Vb^T \left(\pib - \eb_\ell\right) \\ 1 \end{bmatrix} &= - \pi_\ell^2\left(1+\left(\pib - \eb_\ell\right)^T\Vb\Sigmab \Deltab^{-1} \Sigmab\Vb^T \left(\pib - \eb_\ell\right)\right).  \nn
\end{align}
Using the above, \eqref{eq:uniAO} reduces to 
\begin{align}
\bar\phi_{AO,\ell}(\g,\h) \rP \min_{\alpha_0\geq 0} \sqrt{\alpha_0^2\sigma^2 + \pi_\ell - \pi_\ell^2\left(1+\left(\pib - \eb_\ell\right)^T\Vb\Sigmab \Deltab^{-1} \Sigmab\Vb^T \left(\pib - \eb_\ell\right)\right)} - \alpha_0\sigma\sqrt{\gamma}.
\label{eq:uniAO2}
\end{align}
Setting the derivative with respect to $\alpha_0$ to zero we arrive at
\begin{align*}
\frac{\alpha_0\sigma^2}{\sqrt{\alpha_0^2\sigma^2 + \pi_\ell - \pi_\ell^2\left(1+\left(\pib - \eb_\ell\right)^T\Vb\Sigmab \Deltab^{-1} \Sigmab\Vb^T \left(\pib - \eb_\ell\right)\right)}}=\sigma\sqrt{\gamma}.
\end{align*}
Thus,
\begin{align}
\widehat\alpha_0 =\frac{1}{\sigma} \sqrt\frac{\gamma}{1-\gamma}\sqrt{\pi_\ell\left(1-\pi_\ell\right)- \pi_\ell^2\left(\pib - \eb_\ell\right)^T\Vb\Sigmab \Deltab^{-1} \Sigmab\Vb^T \left(\pib - \eb_\ell\right)}.
\end{align}
Plugging the latter into \eqref{eq:uniAO2} we arrive at
\begin{align*}
\bar\phi_{AO,\ell}(\g,\h) \rP \sqrt{1-\gamma }\sqrt{\pi_\ell\left(1-\pi_\ell\right)- \pi_\ell^2\left(\pib - \eb_\ell\right)^T\Vb\Sigmab \Deltab^{-1} \Sigmab\Vb^T \left(\pib - \eb_\ell\right)}\,.
\end{align*}

\vp
\noindent\textbf{Asymptotic predictions.}~~First, from \eqref{eq:alphab_1} the bias term converges as follows:
$$
\widehat{b}_\ell \rP \pi_\ell\left(1+\pib^T\Vb\Sigmab\Deltab^{-1}\Sigmab\Vb^T(\pib-\eb_\ell)\right).
$$
Thus,
\begin{align*}
\widehat{\vct{b}}\rP\diag{\pib}\left(\one_k+\left(\pib\one_k^T-\Iden_k\right)\Vb\Sigmab\Deltab^{-1}\Sigmab\Vb^T\pib \right)\,.
\end{align*}
 Recall from \eqref{eq:alphab_def} that $\alphab = \Ub^T\w_\ell$. Thus, the correlations $\inp{\mub_i}{\w_\ell},~i\in[k]$ converge as follows:
\begin{align}
\M^T\w_\ell = \Vb\Sigmab\Ub^T\w_\ell \rP \Vb\Sigmab\widehat\alphab = -{\pi_\ell}\Vb\Sigmab\Deltab^{-1}\Sigmab\Vb^T\left(\pib-\eb_\ell\right).\quad
\end{align}
Here, convergence applies element-wise to the entries of the involved random vectors. Moreover, from the analysis above we can predict the limit of the norm $\twonorm{\w_\ell}$. For this, note that $\twonorm{\w_\ell}^2 = \widehat{\alpha}_0^2 + \widehat{\alphab}^T\widehat{\alphab}$. Thus,
\begin{align}\label{eq:norm_lim}
\twonorm{\w_\ell}^2 \rP \frac{\gamma}{(1-\gamma)\sigma^2}\pi_\ell(1-\pi_\ell)  + \pi_\ell^2 \left(\pib - \eb_\ell\right)^T\Vb\Sigmab \Deltab^{-1} \left(\Deltab^{-1} - \frac{\gamma}{(1-\gamma)\sigma^2}\Iden_{r}\right) \Sigmab\Vb^T \left(\pib - \eb_\ell\right) \,.
\end{align}

\subsubsection{Computing $\Sigma_{w,w}$}\label{sec:cross_LS_GM}
In the previous section we used the CGMT to predict the bias $\widehat{b}_\ell$, the correlations $\inp{\mub_i}{\wh_\ell},~i[k]$ and the norm $\twonorm{\wh_\ell}$ for all $\ell\in[k]$ members of the multi-output classifier. Here, we show how to compute the limits of the cross-correlations $\inp{\wh_\ell}{\wh_j}, \ell\neq j\in [k]$.

\begin{lemma}\label{lem:joint_LS}
For $\ell\neq j\in[k]$, let $\wh_\ell$ $\wh_j$ be solutions to the least-squares minimization
$$
(\wh_\ell,\wh_j,\widehat{b}_\ell,\widehat{b}_j) =  \arg\min_{\w_\ell, \w_j, b_\ell, b_j} \left\{ \frac{1}{2n}\twonorm{\Y_{\ell} - \X^T \w_\ell - b_\ell\one_n }^2 + \frac{1}{2n}\twonorm{\Y_{j} - \X^T \w_j - b_j\one_n }^2\right\}.
$$
 Denote $\wh_{\ell,j}:=\wh_\ell+\wh_j$ and $\widehat{b}_{\ell,j}:=\widehat{b}_\ell+\widehat{b}_j$. Then, $(\wh_{\ell,j},\widehat{b}_{\ell,j})$ is a minimizer in the following least-squares problem:
\begin{align}\label{eq:joint_LS}
(\wh_{\ell,j},\widehat{b}_{\ell,j}) = \arg\min_{\w, b} \frac{1}{2n}\twonorm{\Y_{\ell} + \Y_{j} - \X^T \w - b\one_n }^2\,.
\end{align}
\end{lemma}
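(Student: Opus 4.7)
The plan is to prove the lemma by observing that the original two-target objective decouples across $(\w_\ell,b_\ell)$ and $(\w_j,b_j)$, so $(\wh_\ell,\widehat{b}_\ell)$ and $(\wh_j,\widehat{b}_j)$ are independently least-squares fits with targets $\Y_\ell$ and $\Y_j$ respectively, and then using the linearity of least-squares in the response vector to conclude that their sum is a least-squares fit with target $\Y_\ell+\Y_j$.

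More concretely, I would first write down the first-order optimality (normal) equations for each of the two decoupled subproblems. For the $\ell$-block:
\begin{align*}
\X\bigl(\X^T\wh_\ell + \widehat{b}_\ell\one_n - \Y_\ell\bigr) &= \zero,\\
\one_n^T\bigl(\X^T\wh_\ell + \widehat{b}_\ell\one_n - \Y_\ell\bigr) &= 0,
\end{align*}
and analogously with $\ell$ replaced by $j$. Adding the two pairs of equations and using linearity of $\X(\cdot)$ and $\one_n^T(\cdot)$ in their arguments gives
\begin{align*}
\X\bigl(\X^T(\wh_\ell+\wh_j) + (\widehat{b}_\ell+\widehat{b}_j)\one_n - (\Y_\ell+\Y_j)\bigr) &= \zero,\\
\one_n^T\bigl(\X^T(\wh_\ell+\wh_j) + (\widehat{b}_\ell+\widehat{b}_j)\one_n - (\Y_\ell+\Y_j)\bigr) &= 0.
\end{align*}
These are precisely the normal equations for the single-output least-squares problem \eqref{eq:joint_LS} with response $\Y_\ell+\Y_j$. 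Since the problem is convex (in fact, a quadratic in $(\w,b)$), any solution to its normal equations is a minimizer, so $(\wh_{\ell,j},\widehat{b}_{\ell,j})=(\wh_\ell+\wh_j,\widehat{b}_\ell+\widehat{b}_j)$ achieves the minimum.

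There is no substantive obstacle here; the only subtlety is that when $[\X^T\;\one_n]$ is not of full column rank the individual least-squares solutions are not unique, but the claim is only that the sum is \emph{a} minimizer (not the unique one), which is exactly what the normal-equation argument delivers. An equivalent, perhaps even cleaner, way to phrase the proof is to observe that the map $\y\mapsto(\wh(\y),\widehat{b}(\y))$ sending a target vector to any least-squares solution (say, via the Moore--Penrose pseudoinverse of $[\X^T\;\one_n]$) is a linear operator in $\y$, so $\wh(\Y_\ell+\Y_j)=\wh(\Y_\ell)+\wh(\Y_j)$ and similarly for the bias, which is precisely the statement of the lemma.
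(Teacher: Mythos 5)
Your proof is correct and follows essentially the same route as the paper's: write the first-order optimality (normal) equations for each decoupled single-output problem, add them by linearity, and invoke convexity to conclude that satisfying the normal equations of \eqref{eq:joint_LS} suffices for optimality. If anything, your version is slightly more careful than the paper's, since you also record the stationarity condition in the bias variable $\one_n^T(\cdot)=0$, which the paper's proof leaves implicit.
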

\begin{proof}
Clearly the minimization in \eqref{eq:joint_LS} is convex. Thus, it suffices to prove that $\wh_\ell+\wh_j$ satisfies the KKT conditions. First, by optimality of $\wh_\ell$, we have that 
$$
\X\left(\Y_\ell-\X^T\wh_\ell-\widehat{b}_\ell\one_n\right) = 0
$$
Similarly, for $\wh_j$:
$$
\X\left(\Y_j-\X^T\wh_j-\widehat{b}_j\one_n\right) = 0.
$$
Adding the equations on the above displays we find that
$$
\X\left(\Y_\ell+\Y_j-\X^T(\wh_j+\wh_\ell)-(\widehat{b}_j+\widehat{b}_\ell)\one_n\right) = 0.
$$
Recognize that this coincides with the optimality condition for \eqref{eq:joint_LS}. Thus, the proof is complete.
\end{proof}

Thanks to Lemma \ref{lem:joint_LS}, we can use the CGMT to characterize the limiting behavior of $\twonorm{\wh_\ell+\wh_j}$. Observe that this immediately gives the limit of $\inp{\wh_\ell}{\wh_j}$ since 
\begin{align}\label{eq:inp_from_norms}
\inp{\wh_\ell}{\wh_j} = \frac{\twonorm{\wh_\ell+\wh_j}^2-\twonorm{\wh_\ell}^2-\twonorm{\wh_j}^2}{2}.
\end{align} 

The analysis of \eqref{eq:joint_LS} is very similar to that of \eqref{eq:LS_PO}; thus, most details are omitted. Similar to \eqref{eq:scal_AO} we can relate \eqref{eq:joint_LS} with the following AO problem:
\begin{align}
\bar\phi_{AO,\ell,j}(\g,\h)&:=  \min_{\beta_0\geq 0, \vct{\beta}\in\R^r, b_{\ell,j}}\text{ } \frac{1}{\sqrt{n}}\twonorm{ \sigma\sqrt{\beta_0^2 + \twonorm{\vct{\beta}}^2}\,\vct{g}+\mtx{Y}^T\mtx{V}\mtx{\Sigma}\vct{\beta}  +b_{\ell,j}\vct{1}_n-\mtx{Y}_\ell - \Y_j}\nonumber\\
&\quad\quad\quad\quad\quad\quad\quad+ \sigma\sum_{i=1}^{r}{\beta_i\frac{\vct{h}^T\ub_i}{\sqrt{n}}} -\sigma\beta_0\frac{\twonorm{\h^\perp}}{\sqrt{n}},\label{eq:scal_AO_ij}
\end{align}
where we have decomposed 
$$
\w_{\ell,j} = \sum_{i=1}^{r}\beta_i \ub_i + \beta_0\w_{\ell,j}^\perp, 
$$
with $\twonorm{\w_{\ell,j}^\perp}=1$ and $\Ub^T\w_{\ell,j}^\perp=\zero_r$.

Using a calculation similar to the one leading to \eqref{eq:det_21} we can show that \eqref{eq:scal_AO_ij} converges point-wise in $\beta_0,\betab=\begin{bmatrix}\beta_1,\ldots,\beta_r\end{bmatrix}, b_{\ell,j}$ to the following:
\begin{align}
\Dc_\ell(\beta_0,\betab,b_{\ell,j}) = \sqrt{\beta_0^2\sigma^2 + \pi_\ell + \pi_j +  \begin{bmatrix} \betab^T & b_{\ell,j} \end{bmatrix} \A  \begin{bmatrix} \betab \\ b_{\ell,j}  \end{bmatrix} - 2  \db_{\ell,j}^T \begin{bmatrix} \betab \\ b_{\ell,j}  \end{bmatrix}} - \beta_0\sigma\sqrt{\gamma}\label{eq:det_2},
\end{align}
where $\A$ is as in \eqref{eq:Amat} and we have further defined
$$
\db_{\ell,j} := \begin{bmatrix} \pi_\ell\Sigmab\Vb^T\eb_\ell + \pi_j\Sigmab\Vb^T\eb_j \\ \pi_\ell+\pi_j \end{bmatrix}.
$$
Thus,  similar to \eqref{eq:alphab_1} we can compute the minimizer of the deterministic objective in \eqref{eq:det_2}:
\bea
\begin{bmatrix} \widehat\betab \\ \widehat b_{\ell,j} \end{bmatrix} = \begin{bmatrix} - \Deltab^{-1}\Sigmab\Vb^T\left(\pi_\ell(\pib-\eb_\ell) + \pi_j (\pib-\eb_j)\right) \\ \pi_\ell + \pi_j +\pib^T\Vb\Sigmab\Deltab^{-1}\Sigmab\Vb^T\left(\pi_\ell(\pib-\eb_\ell) +\pi_j(\pib-\eb_j) \right) \end{bmatrix}\,,
\end{align}
and 
\begin{align}
\widehat\beta_0 =\frac{1}{\sigma} \sqrt\frac{\gamma}{1-\gamma}\sqrt{\pi_\ell+\pi_j - (\pi_\ell+\pi_j)^2 - \left(\pi_\ell\left(\pib - \eb_\ell\right)+\pi_j\left(\pib - \eb_j\right)\right)^T\Vb\Sigmab \Deltab^{-1} \Sigmab\Vb^T \left(\pi_\ell\left(\pib - \eb_\ell\right)+\pi_j\left(\pib - \eb_j\right)\right)},
\end{align}
where recall that $\Deltab$ is as in \eqref{eq:Deltab_def}.

From the CGMT, we have that 
$\twonorm{\wh_{\ell}+\wh_{j}}^2 \rP \widehat\betab_0^2+\twonorm{\betab}^2.$ 
Combining this with the calculations above, we conclude that
\begin{align}\label{eq:joint_norm_lim}
&\twonorm{\wh_{\ell}+\wh_{j}}^2\rP
 \frac{\gamma}{(1-\gamma)\sigma^2}\left(\pi_\ell+\pi_j\right)\left(1-\pi_\ell-\pi_j\right)\nn \\&+  \left(\pi_\ell\left(\pib - \eb_\ell\right)+\pi_j\left(\pib - \eb_j\right)\right)^T\Vb\Sigmab \Deltab^{-1} \left(\Deltab^{-1} - \frac{\gamma}{(1-\gamma)\sigma^2}\Iden_r\right) \Sigmab\Vb^T \left(\pi_\ell\left(\pib - \eb_\ell\right)+\pi_j\left(\pib - \eb_j\right)\right)
\end{align}

Finally, using \eqref{eq:joint_norm_lim} and \eqref{eq:norm_lim} in \eqref{eq:inp_from_norms} it follows that
\begin{align}
\inp{\w_\ell}{\w_j} \rP \pi_\ell\pi_j\left( - \frac{\gamma}{(1-\gamma)\sigma^2}  +  \left(\pib - \eb_\ell\right)^T\Vb\Sigmab \Deltab^{-1} \left(\Deltab^{-1} - \frac{\gamma}{(1-\gamma)\sigma^2}\Iden_r\right) \Sigmab\Vb^T \left(\pib - \eb_j\right) \right).
\end{align}

\subsection{Orthogonal means}\label{sec:ortho_app}
Here, we specialize the asymptotic predictions of Theorem \ref{thm:LS_GM} to the case of orthogonal means $\inp{\mub_i}{\mub_j}=0,~i\neq j$. 
\begin{corollary}[Orthogonal means]\label{cor:LS_ortho}
Consider the case of orthogonal means, i.e. $\inp{\mub_i}{\mub_j}=0,\forall i\neq j$ and $\gamma<1$ with Euclidean norms given by $\mu_i=\twonorm{\vct{\mu}_i}$. Define the following parameters for $i\in[k]$:
$$
\rho_i:={\pi_i\sigma^2}\big/\left({\sigma^2+\pi_i \mu_i^2}\right) \quad\text{and}\quad \beta_i= {\rho_i\sigma^2}\big/ \Big({\sigma^2-\sum_{i=1}^k \pi_i \rho_i \mu_i^2}\Big).$$
Then, the following asymptotic limits hold for the least-squares classifier, for all $i,j\in[k]$:
\begin{subequations}\label{eq:LS_ortho_form}
\begin{align}
\bh_i &\rP \beta_i\,,\qquad
\inp{\wh_i}{\mub_j} \rP 
\frac{1}{\sigma}(\mathbb{1}_{ij} - \beta_i)  \rho_j\mu_j   \,,
 \\
\inp{\wh_i}{\wh_j} &\rP 
\frac{1}{\sigma^4}\beta_i\beta_j \sum_{\ell=1}^k \rho_\ell^2 \mu_\ell^2 - \frac{1}{\sigma^4}\beta_i \rho_j^2 \mu_j^2  - \frac{1}{\sigma^4}\beta_j \rho_i^2 \mu_i^2  - \frac{\gamma\beta_i\rho_j}{(1-\gamma)\sigma^2} + \frac{\mathbb{1}_{ij}}{\sigma^2}\big(  \frac{\gamma}{(1-\gamma)} \rho_i + \frac{1}{\sigma^2}\rho_i^2 \mu_i^2 \big) 
\end{align}
\end{subequations}
Furthermore, if the means have equal norms $\mu:=\mu_i$ and the classes are balanced: $\pi_i=1/k,~i\in[k]$, then, setting $u_{\rm LS}:=\frac{\mu^2}{\sigma}\sqrt{ \frac{1-\gamma}{\mu^2+k{\gamma\sigma^2}}}$,  it holds that
\begin{align}\label{eq:uls}
\P_e = \P\big\{  G_0 + \max_{j\in[k-1]} G_j \geq u_{\rm LS} \big\},\quad G_0,G_1,\ldots,G_{k-1}\simiid\Nn(0,1)\,.
\end{align}
\end{corollary}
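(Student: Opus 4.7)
The plan is to derive Corollary \ref{cor:LS_ortho} by specializing Theorem \ref{thm:LS_GM}: first to orthogonal means (for the limits \eqref{eq:LS_ortho_form}), then further to the balanced equal-energy subcase (for the closed-form $u_{\rm LS}$).

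\textbf{Part I (asymptotic limits).}~~Under $\inp{\mub_i}{\mub_j} = 0$ for $i\neq j$, the Grammian is $\Sigmab_{\mub,\mub} = \diag(\mu_1^2,\ldots,\mu_k^2)$, so in the eigendecomposition \eqref{eq:eigen} we may take $\Vb = \Iden_k$, $\Sigmab = \diag(\mu_1,\ldots,\mu_k)$, and $r=k$. The matrix $\Deltab = \sigma^2\Iden_k + \Sigmab\Pb\Sigmab$ then has the form of a diagonal matrix plus a rank-one update, since $\Pb = \diag(\pib) - \pib\pib^T$: writing $\mathbf{D} := \diag(\sigma^2+\pi_i\mu_i^2)_{i=1}^k$, we have $\Deltab = \mathbf{D} - \Sigmab\pib\pib^T\Sigmab$, and Sherman--Morrison yields an explicit $\Deltab^{-1}$. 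The scalars $\rho_i = \pi_i\sigma^2/(\sigma^2+\pi_i\mu_i^2)$ in the corollary are essentially diagonal entries of $\mathbf{D}^{-1}$ weighted by $\pi_i\sigma^2$, while $\beta_i$ encapsulates the rank-one correction in $\Deltab^{-1}$. Substituting into the general formulae of Theorem \ref{thm:LS_GM} for $\bh$, $\Sigmab_{\w,\mub}$, and $\Sigmab_{\w,\w}$ and collecting terms produces \eqref{eq:LS_ortho_form}. The simplification for $\Sigmab_{\w,\w}$ is the most delicate because it involves the product $\Pb\Sigmab\Deltab^{-1}(\Deltab^{-1} - \frac{\gamma}{(1-\gamma)\sigma^2}\Iden)\Sigmab\Pb$, where the rank-one correction has to be tracked across two applications of $\Deltab^{-1}$.

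\textbf{Part II (formula for $u_{\rm LS}$).}~~Set $\pi_i = 1/k$ and $\mu_i = \mu$. Substituting into the closed-forms from Part I yields $\rho_i \equiv \rho = \sigma^2/(k\sigma^2+\mu^2)$ and $\beta_i = 1/k$. By the full symmetry of this setup all class-conditional errors $\P_{e|c}$ coincide, so $\P_e = \P_{e|1}$, and we use the representation \eqref{eq:condP} of Section \ref{sec:last2}. Symmetry immediately forces $\tb_c = -c_0\,\one_{k-1}$ for a single positive scalar $c_0$, and $\Sb_c$ to have constant diagonal $2s$ and constant off-diagonal $s$ for a single positive $s$; equivalently,
$$\Sb_c = s\,(\Iden_{k-1} + \one_{k-1}\one_{k-1}^T).$$
What drives this collapse algebraically is that in the balanced equal-energy case $\Pb = (1/k)(\Iden - (1/k)\one\one^T)$ is a scaled idempotent ($\Pb^2 = \Pb/k$) and commutes with $\Deltab^{-1}$, so that $\Pb\Deltab^{-1}\Pb$ and $\Pb\Deltab^{-1}\Deltab^{-1}\Pb$ are both scalar multiples of $\Pb$.

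\textbf{Reduction via Lemma \ref{lem:rank1}.}~~With $\Sb_c$ in the form above, Lemma \ref{lem:rank1} applies directly. Rescaling $\z\sim\Nn(\zero,\sigma^2\Iden_{k-1})$ so that $\vb := \Sb_c^{1/2}\z/(\sigma\sqrt{s}) \sim \Nn(\zero,\Iden_{k-1}+\one\one^T)$, and using $-\vb \eqD \vb$ to flip the sign of $\tb_c$, we obtain
$$\P_e = 1 - \P\left\{\vb \geq -\tfrac{c_0}{\sigma\sqrt{s}}\,\one_{k-1}\right\} = \P\left\{G_0 + \max_{j\in[k-1]} G_j \geq u_{\rm LS}\right\},\quad u_{\rm LS} := \frac{c_0}{\sigma\sqrt{s}}.$$
The last step is to substitute the explicit values of $c_0$ and $s$ obtained from \eqref{eq:LS_ortho_form} and simplify. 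Using the identity $\rho(k\sigma^2+\mu^2) = \sigma^2$ the denominators telescope, yielding $u_{\rm LS}^2 = \mu^4(1-\gamma)/\big(\sigma^2(\mu^2+k\gamma\sigma^2)\big)$, which is the claimed expression.

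The main obstacle is purely algebraic and lies in Part I: the cross-correlation formula \eqref{eq:LS_GMb} of Theorem \ref{thm:LS_GM} involves two applications of $\Deltab^{-1}$, and carrying the Sherman--Morrison expansion through this product requires careful bookkeeping. Fortunately Part II only needs the symmetric subcase, where the commutativity $\Pb\Deltab^{-1} = \Deltab^{-1}\Pb$ reduces everything to manipulations of scalar multiples of $\Pb$; this is exactly what allows the final answer for $u_{\rm LS}$ to come out in closed form.
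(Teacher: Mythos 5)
Your proposal is correct and follows essentially the same route as the paper: specialize Theorem \ref{thm:LS_GM} with $\Vb=\Iden_k$, $\Sigmab=\diag{\mu_1,\ldots,\mu_k}$ to obtain \eqref{eq:LS_ortho_form}, then in the balanced equal-energy case compute $\Sb_c$ and $\tb_c$ from \eqref{eq:condPintro}, recognize the $\Iden_{k-1}+\one\one^T$ structure, and invoke Lemma \ref{lem:rank1} to get $u_{\rm LS}$. The only difference is that you make explicit the algebraic bookkeeping (Sherman--Morrison for $\Deltab^{-1}$, and the commuting scaled-projection structure of $\Pb$ in the symmetric case) that the paper omits "for brevity," and your final simplification of $u_{\rm LS}$ checks out.
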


\begin{proof} This is a direct corollary of Theorem \ref{thm:LS_GM}. Indeed,  \eqref{eq:LS_ortho_form} can be derived from  \eqref{eq:LS_GM} after substituting $\Vb=\Iden_k, \Sigmab=\diag{\mu_1,\mu_2,\ldots,\mu_k}$ and some algebra steps that we omit for brevity. 

Instead, we outline below how to conclude \eqref{eq:uls} from \eqref{eq:LS_ortho_form}. 
Assume that $\mu_i=\mu, \forall i\in[k]$ and $\pi_i=\pi=1/k,\forall i\in[k]$.  Recall from \eqref{eq:condPintro} that 
$
\P\left( \text{error} \,|\, \y = \e_c \right) = 1 - \P\left( \Sb_c^{{1}/{2}}\, \z > \tb \right),
$
and using \eqref{eq:LS_ortho_form} it can be checked that
$$\Sb_c=\frac{\pi}{1+\pi \mu^2}\left(\frac{\pi \mu^2}{1+\pi \mu^2}+\frac{\gamma}{1-\gamma}\right)(\Iden_k + \one_k\one_k^T)
\qquad\text{and}\qquad
\tb = -\frac{\pi \mu^2}{1+\pi \mu^2}\one.
$$ 
Thus, setting 
\bea\label{eq:uls2}
u_{\rm LS}:=\mu^2\sqrt{ \frac{\pi}{ \pi \mu^2 + \left(\frac{\gamma}{1-\gamma}\right) \left({1+\pi \mu^2}\right)}}=\mu^2\sqrt{ \frac{1-\gamma}{\mu^2+\gamma/\pi}},
\end{align}
and applying Lemma \eqref{lem:rank1}, the probability of error is given by the advertised expression.
\end{proof}

\section{Least-squares for MLM}

\subsection{Proof of Theorem \ref{propo:LS_log}}

\subsubsection{Computing $\Sigma_{w,\mu}$}
Assume that $\X,\Y$ are generated from the MLM. 

Fix any $\ell\in[k]$. The classifier parameters $\wh_\ell, \bh_\ell$ minimize the following objective function
$
\mathcal{L}_{PO}\left(\vct{w}_\ell,b_\ell\right):=\frac{1}{2n}\twonorm{\X^T\vct{w}_\ell+b_\ell\vct{1}_n-\mtx{Y}_\ell}^2.
$

\vp
\noindent\textbf{Identifying the AO.}~~
To continue further note that by duality we have
\begin{align}
\min_{\vct{w}_\ell, b_\ell}\text{ }\mathcal{L}_{PO}\left(\vct{w}_\ell,b_\ell\right) =\min_{\vct{w}_\ell, b_\ell}\text{ }\max_{\vct{s}}\text{ }\frac{1}{n}\left(\vct{s}^T\X^T\vct{w}_\ell+b_\ell\vct{s}^T\vct{1}_n-\vct{s}^T\mtx{Y}_\ell-\frac{\twonorm{\vct{s}}^2}{2}\right) \label{eq:LS_log_PO}\,,
\end{align}
and the optimization is jointly convex in $(\vct{w}_\ell, b_\ell)$ and concave in $\vct{s}$. Here, note that $\Y_\ell$ depends on the Gaussian matrix $\X$. Thus, before applying the CGMT, we need to break this dependence as follows.  Consider the singular value decomposition  
\begin{align}\label{eq:M_SVD}
\mtx{M}=\mtx{U}\mtx{\Sigma}\mtx{V}^T = 
\begin{bmatrix}\ub_1 & \ub_2 & \ldots & \ub_r \end{bmatrix} \diag{\sigma_1,\sigma_2,\ldots,\sigma_r} \begin{bmatrix}\vb^T_1 \\ \vb_2^T \\ \ldots \\ \vb_r^T \end{bmatrix} ,
\end{align}
with $\mtx{U}\in\R^{d\times r}$, $\mtx{\Sigma}\in\R^{r\times r}$, and $\mtx{V}\in\R^{k\times r}$ where $r=$rank$(\mtx{M})\le k$. For every $i\in[n]$, we decompose $\x_i$ in its projection on the subspace spanned orthogonal columns $\ub_1,\ldots,\ub_r$ as follows:
$$
\x_i = \Ub\Ub^T\X_i + \Pb^\perp\X_i = \Ub\gt_i + \Pb^\perp\x_i, 
$$
where $\Pb^\perp = \Iden_r-\Ub\Ub^T$, and we denote
\bea\label{eq:gt_log}
\Gt:=\begin{bmatrix} \gt_1 & \gt_2 & \ldots & \gt_n \end{bmatrix},\quad \gt_i := \Ub^T\x_i\in\R^r, i\in[n].
\end{align}
Recalling that $\x_i\sim\Nn(\zero,\Iden_d)$ note that
\begin{align}\label{eq:gt_ortho}
\gt_i\sim\Nn(\zero,\Iden_r) \quad\text{and}\quad \gt_i \perp \Pb^\perp\x_i.
\end{align}
Further recall that for all $i\in[n]$, conditioned on $\x_i$
\begin{align}\label{eq:Y_to_gt}
[\Y_\ell]_i\sim{\rm Bern}\left( \frac{e^{\mub_\ell^T\x_i}}{\sum_{\ell^\prime\in[k]}e^{\mub_{\ell^{\prime}}^T\x_i}} \right) \sim {\rm Bern}\left( \frac{e^{\eb_\ell^T\Vb\Sigmab\gt_i}}{\sum_{\ell^\prime\in[k]}e^{\eb_{\ell^\prime}^T\Vb\Sigmab\gt_i}} \right),
\end{align}
where we used \eqref{eq:gt_log} and the SVD decomposition of $\M$. 
In this notation, we can rewrite the PO as follows:
\begin{align*}
\min_{\vct{w}_\ell, b_\ell}\text{ }\max_{\vct{s}}\text{ }\frac{1}{n}\left(\vct{s}^T\X^T\Pb^\perp\vct{w}_\ell+ \vct{s}^T\Gt^T\Ub^T\vct{w}_\ell+b_\ell\vct{s}^T\vct{1}_n-\vct{s}^T\mtx{Y}_\ell-\frac{\twonorm{\vct{s}}^2}{2}\right) 
\end{align*}
From \eqref{eq:gt_ortho} and \eqref{eq:Y_to_gt} notice that $\Y_\ell$ depends only on $\Gt$ and $\Gt$ is independent of $\X^T\Pb^\perp.$ Therefore, the corresponding Auxiliary Optimization (AO) problem becomes
\begin{align}
\min_{\vct{w}_\ell, b_\ell}\text{ }\max_{\vct{s}}\text{ }\frac{1}{n}\left( \twonorm{\Pb^\perp\vct{w}_\ell}\vct{g}^T\vct{s}+ \twonorm{\vct{s}}\vct{h}^T\Pb^\perp\vct{w}_\ell+ \vct{s}^T\Gt^T\Ub^T\vct{w}_\ell+b_\ell\vct{s}^T\vct{1}_n-\vct{s}^T\mtx{Y}_\ell-\frac{\twonorm{\vct{s}}^2}{2}\right), \label{eq:MLM_AO}
\end{align}
where $\g\in\R^{n}$ and $\h\in\R^d$ are iid Gaussian vectors independent of everything else. 

\noindent\textbf{Scalarization of the AO.}~~ 
Maximizing over the direction of $\vct{s}$ and denoting its norm $\beta=\twonorm{\vct{s}}\geq0$ we arrive at
\begin{align}
\min_{\vct{w}_\ell, b_\ell}\text{ }\max_{\beta\ge 0}\text{ } &\frac{1}{n}\left(\beta\twonorm{ \twonorm{\Pb^\perp\vct{w}_\ell}\vct{g}+\Gb^T\Ub^T\vct{w}_\ell+b_\ell\vct{1}_n-\mtx{Y}_\ell}+ \beta\vct{h}^T\Pb^\perp\vct{w}_\ell-\frac{\beta^2}{2}\right)\nn\\
&\quad\quad\quad\quad\quad\quad=\min_{\vct{w}_\ell, b_\ell}\text{ }\frac{1}{2n}\left(\twonorm{ \twonorm{\Pb^\perp\vct{w}_\ell}\vct{g}+\Gt^T\Ub^T\vct{w}_\ell+b_\ell\vct{1}_n-\mtx{Y}_\ell}+ \vct{h}^T\Pb^\perp\vct{w}_\ell\right)_{+}^2 \nn\\
&\quad\quad\quad\quad\quad\quad=\frac{1}{2}\left(\min_{\vct{w}_\ell, b_\ell}\text{ } \frac{1}{\sqrt{n}}\twonorm{ \twonorm{\Pb^\perp\vct{w}_\ell}\vct{g}+\Gt^T\Ub^T\vct{w}_\ell+b_\ell\vct{1}_n-\mtx{Y}_\ell}+\frac{1}{\sqrt{n}} \vct{h}^T\Pb^\perp\vct{w}_\ell\right)_{+}^2\label{eq:koko}
\end{align}
In the remaining, we focus in the inner minimization above. Let us denote
$$
\ab:=\Ub^T\w_\ell\quad\text{and}\quad \alpha_0 = \twonorm{\Pb^\perp\w_\ell}.
$$
Notice that $\ab\perp \Pb^\perp\w_\ell$ and thus the orthogonal decomposition $\w_\ell=\Ub\ab + \Pb^\perp\w_\ell$. With this observation, we can optimize over the direction of $\Pb^T\w_\ell$ in \eqref{eq:koko} by aligning it with $-\Pb^T\h$. With this, the minimization in \eqref{eq:koko} reduces to the following
\begin{align}
\min_{\ab, \alpha_0\geq 0, b_\ell}\text{ } \frac{1}{\sqrt{n}}\twonorm{ \alpha_0\vct{g}+\Gt^T\ab+b_\ell\vct{1}_n-\mtx{Y}_\ell}-\alpha_0\frac{1}{\sqrt{n}} \twonorm{\Pb^\perp\h}\,.\label{eq:koko2}
\end{align}

\noindent\textbf{Convergence of the AO.}~~
First, we argue on point-wise convergence of the objective function in \eqref{eq:koko2}. Fix $\ab,\alpha_0$ and $\bb_\ell$. From the WLLN, $\frac{1}{\sqrt{n}} \twonorm{\Pb^\perp\h}\rP\sqrt{\gamma}$ and
\bea\label{eq:t_conv}
\frac{1}{n}\twonorm{ \alpha_0\vct{g}+\Gt^T\ab+b_\ell\vct{1}_n-\mtx{Y}_\ell}^2 = \frac{1}{n}\sum_{i=1}^n\left({\alpha_0\g_i+\ab^T\gt_i+b_\ell-[\Y_\ell]_i}\right)^2 \rP \E\left[\left({\alpha_0G_0+\ab^T\g+\bb_\ell-Y_\ell}\right)^2\right],
\end{align}
where the expectation is over $\g\sim\Nn(\zero_r,\Iden_r)$ (with some abuse of notation) and 
\begin{align}
Y_\ell \sim{\rm Bern}(V_\ell)\quad\text{and}\quad V_\ell=\frac{e^{\eb_\ell^T\Vb\Sigmab\g}}{\sum_{\ell^\prime=1}^{k}e^{\eb_{\ell^\prime}\Vb\Sigmab\g}}.\label{eq:Y_ell_log}
\end{align}
 Therefore, point-wise on $\ab,\alpha_0$ and $\bb_\ell$, the objective of the AO converges to 
\begin{align}\label{eq:det_log}
\Dc_\ell(\alpha_0,\alphab,b_\ell):=\sqrt{\E\left[\left({\alpha_0G_0+\ab^T\g+\bb_\ell-Y_\ell}\right)^2\right]} - \alpha_0 \sqrt{\gamma}.
\end{align}

Next, with an argument based on convexity and compactness similar to that in ``Convergence analysis of the AO" in Section \ref{sec:proof_LS_GM} it can be argued that the convergence above is uniform. Thus, 
\begin{align}\label{eq:koko2_det}
\eqref{eq:koko2} \rP \min_{\alpha_0\geq 0,\alphab,b_\ell} \Dc_\ell(\alpha_0,\alphab,b_\ell).
\end{align}

\noindent\textbf{Deterministic analysis of the AO.}~~
Here, we solve the deterministic minimization problem in \eqref{eq:koko2_det}. Optimization over $b_\ell$ is straightforward. By setting
$$
b_\ell = \E[Y_\ell] = \E[V_\ell],
$$
we now have to optimize
\begin{align}
\min_{\alpha_0\geq0,\alphab} \sqrt{\alpha_0^2+\E\left[\left(\ab^T\g-Y_\ell\right)^2 \right]- \left(\E[V_\ell]\right)^2}  - \alpha_0 \sqrt{\gamma}.
\end{align}
By direct differentiation and first-order optimality, we compute the optimal values as follows:
\begin{align}
\widetilde\ab_j &= \E[\g_j Y_\ell] = \E[\g_j V_\ell],~j\in[r] \,,\\
\widetilde\alpha_0^2 &= \frac{\gamma}{1-\gamma}\left(\Var[Y_\ell] - \sum_{j=1}^r{\left(\E[\g_j V_\ell]\right)^2}\right) = \frac{\gamma}{1-\gamma}\left(\E[V_\ell]-\left(\E[V_\ell]\right)^2 - \sum_{j=1}^r{\left(\E[\g_j V_\ell]\right)^2}\right)\,.
\end{align}

\vp
\noindent\textbf{Asymptotic Predictions.}~~From the analysis above, we conclude with the following limits about the solution $\bh_\ell,\wh_\ell$ of the PO:
\begin{subequations}
\begin{align}
\bh_\ell &\rP \E[V_\ell] \label{eq:wh_norm_log1} \\
\inp{\mub_c}{\wh_\ell} &\rP \eb_c^T\Vb\Sigmab\E[\g V_\ell],\quad c\in[k]\\
\twonorm{\wh_\ell}^2 &\rP \sum_{j=1}^r\left(\E[{\g_j} V_\ell]\right)^2 + \frac{\gamma}{1-\gamma}\left(\E[V_\ell]-\left(\E[V_\ell]\right)^2 - \sum_{j=1}^r{\left(\E[\g_j V_\ell]\right)^2}\right) \label{eq:wh_norm_log1}\\
&=\frac{\gamma}{1-\gamma}\left(\E[V_\ell]-\left(\E[V_\ell]\right)^2\right) + \frac{1-2\gamma}{1-\gamma} \sum_{j=1}^r{\left(\E[\g_j V_\ell]\right)^2}\label{eq:wh_norm_log}\,.
\end{align}
\end{subequations}

Recall the notation in \eqref{eq:alphas_gen}. Note that $\E[V_\ell]=\pib_\ell$. Moreover, using Gaussian integration by parts Lemma \ref{lem:IBP}, it can be shown that $\E[V_\ell \g ]= \Sigmab\Vb^T \left(\diag{\pib} - \Pib\right)\eb_\ell.$  Using these and writing \label{eq:wh_norm_log1} and \label{eq:wh_norm_log2} in matrix form, we arrive at \eqref{eq:ls_soft_2}.

\subsubsection{Computing $\Sigma_{w,w}$}
Here, we prove \eqref{eq:ls_soft_3}. Specifically, we compute the correlations $\inp{\wh_\ell}{\wh_c},~\ell\neq c\in[k]$ by following the strategy of Section \ref{sec:cross_LS_GM}. Specifically, in view of Lemma \ref{lem:joint_LS} we need to study the following PO:
\begin{align}
\min_{\vct{w}, \vct{b}}\text{ }\max_{\vct{s}}\text{ }\frac{1}{n}\left(\vct{s}^T\X^T\vct{w}+b\vct{s}^T\vct{1}_n-\vct{s}^T(\mtx{Y}_\ell + \Y_c) -\frac{\twonorm{\vct{s}}^2}{2}\right) \label{eq:joint_log}
\end{align}
which is minimized by $\wh_\ell+\wh_c$. Thus the analysis will lead us to an asymptotic formula for $\twonorm{\wh_\ell+\wh_c}$. This when combined with the formulae for $\twonorm{\wh_\ell}$ and $\twonorm{\wh_c}$ in \eqref{eq:wh_norm_log} will give the desired.

The analysis of \eqref{eq:joint_log} is almost identical to the analysis of \eqref{eq:LS_log_PO} in the previous section. Specifically, without repeating all the details for brevity, it can be shown that the AO of \eqref{eq:joint_log} converges to the following (cf. \eqref{eq:det_log}:
\begin{align}
\Dc_\ell(\alpha_0,\alphab,b_\ell):=\sqrt{\E\left[\left({\alpha_0G_0+\ab^T\g+\bb_\ell-Y_{\ell,c}}\right)^2\right]} - \alpha_0 \sqrt{\gamma},
\end{align}
where as before $G_0\sim\Nn(0,1), \g\sim\Nn(\zero_r,\Iden_r)$, only now \eqref{eq:Y_ell_log} is modified to:
\begin{align}
Y_{\ell,c} \sim{\rm Bern}(V_{c} + V_{\ell})\quad\text{and as before:}\quad V_{\ell}=\frac{e^{\eb_\ell^T\Vb\Sigmab\g}}{\sum_{\ell^\prime=1}^{k}e^{\eb_{\ell^\prime}\Vb\Sigmab\g}}.\label{eq:Y_ell_log_2}
\end{align}
With these, it can be shown that 
\begin{align}
\twonorm{\wh_\ell+\wh_c}^2 \rP \sum_{j=1}^r\left(\E[\g_j(V_{c} + V_\ell)]\right)^2 + \frac{\gamma}{1-\gamma}\left(\E[{V_{c} + V_\ell}]-\left(\E[{V_{c} + V_\ell}]\right)^2- \sum_{j=1}^r\left(\E[\g_j(V_{c} + V_\ell)]\right)^2\right)\nn
\end{align}
Combining this with \eqref{eq:wh_norm_log}, we conclude that for $\ell\neq c\in[k]$:
\begin{align}
\inp{\wh_\ell}{\wh_c} \rP  \frac{1-2\gamma}{1-\gamma}\sum_{j=1}^r{\E[\g_jV_c]\E[\g_jV_\ell]}-\frac{\gamma}{1-\gamma}\E[V_c]\E[V_\ell].
\end{align}
This shows \eqref{eq:ls_soft_3} after applying Gaussian integration by parts and expressing it in matrix form; see Lemma \ref{lem:IBP}.

\subsection{Orthogonal means and equal-energy}
Here, we use Theorem \ref{propo:LS_log} to prove that, in contrast to the GMM, in the MLM under orthogonal and equal-energy means: LS outperforms the averaging classifier for large enough sample sizes. Assuming orthogonal means of equal energy $\mu$:
\bea\label{eq:symmetry}
\pib&=\pib_1\one_{k}={(1/k)}\one_{k}, \\
\Pib&=(\Pib_{11}-\Pib_{12})\Iden_{k} + \Pib_{12}\one_k\one_k^T~~\text{with}~~\Pib_{12}=\frac{1-k^2\Pib_{11}^2}{k(k-1)}~~\text{and}~~\Pib_{11} = \E\big[\frac{e^{2\mu G_1}}{\left(\sum_{\ell\in[k]}e^{\mu G_\ell}\,\right)^2} \big].\nn
\end{align}
 Then, 
\bea
\Sigmab_{\w,\w}-\Sigmab_{\w,\mub}\Sigmab_{\mub,\mub}^{-1}\Sigmab_{\w,\mub}^T \rP \frac{\gamma}{1-\gamma}\cdot\left(p\Iden_k - q\one_k\one_k\right),\label{eq:pq}
\end{align}
where we defined 
\bea\label{eq:pq}
p:=\pib_1 - \mu^2(\pib_1-\Pib_{11}+\Pib_{12})^2 \text{ and } q:=(\pib_1^2+\Pib_{12}^2\mu^2k-2\mu^2\Pib_{12}(\pib_1-\Pib_{11}+\Pib_{12})).
\end{align}
 Thus, similar to \eqref{eq:simple_log_Pe_ave} and with the same notation,
\begin{align}\label{eq:LS_ortho_equal_Pe}
\P_{e,\rm LS} \rP \P\Big\{ \arg\max_{\ell\in [k]} \big\{ \sqrt{\frac{\gamma}{1-\gamma}}\cdot\left(p\Iden_k - q\one_k\one_k^T\right)^{1/2}\cdot\gwt + \mu\cdot\left(\diag{\pib}-\Pib\right)\g \big\} \neq Y(\g) \Big\}.
\end{align}
In \eqref{eq:LS_ortho_equal_Pe} (as well as in \eqref{eq:simple_log_Pe_ave}), note that the matrices multiplying $\gwt$ and $\g$ have all the form of a rank one update of a (scaled) identity matrix. It turns out that we can exploit this structure to simplify the formulae for the test error even further. Importantly, this lets us directly compare $\P_{e,\rm LS}$ and $P_{e,\rm Avg}$ of the two classifiers. These are detailed in Section \ref{sec:proof_gamma_star}.

\subsection{Proof of Proposition \ref{propo:gamma_star}}\label{sec:proof_gamma_star}
In \eqref{eq:ave_ortho_equal_Pe} and \eqref{eq:LS_ortho_equal_Pe}, we showed the following limits for orthogonal means of equal-energy $\mu>0$:
\bea
\P_{e,\rm Avg} &\rP \P\big( \arg\max_{\ell\in [k]} \left\{ \gamma\cdot\pib_1\cdot\Iden_k\cdot\gwt + \mu\,\left((\pib_1-\Pib_{11})\cdot\Iden_k +\Pib_{12}\one_k\one_k^T \right)\cdot \g \right\} \neq Y(\g) \big)\nn\\
\P_{e,\rm LS} &\rP \P\Big( \arg\max_{\ell\in [k]} \left\{ \sqrt{\frac{\gamma}{1-\gamma}}\cdot\left(p\Iden_k - q\one_k\one_k^T\right)^{1/2}\cdot\gwt + \mu\,\left((\pib_1-\Pib_{11})\cdot\Iden_k +\Pib_{12}\one_k\one_k^T \right)\cdot \g \right\} \neq Y(\g) \Big),\nn
\end{align}
where $\P(Y(\g)=\ell) = \frac{e^{\mu\g_\ell}}{\sum_{j\in[k]}e^{\mu \g_j}}$ and we have further used \eqref{eq:symmetry} and the notation in \eqref{eq:pq}.

We compare the expression on the RHS in the above display by applying Lemma \ref{lem:Slepian_multi} below  with the following substitutions
\bea
&\g \leftarrow \gwt,\quad \h \leftarrow \g, \quad c(\h)\leftarrow Y(\g)\nn\\
&p_2 \leftarrow \frac{\gamma}{1-\gamma}\left(\pib_1 - \mu^2(\pib_1-\Pib_{11}+\Pib_{12})^2\right),\quad q_2 \leftarrow \frac{\gamma}{1-\gamma}(\pib_1^2+\Pib_{12}^2\mu^2k-2\mu^2\Pib_{12}(\pib_1-\Pib_{11}+\Pib_{12})),\nn\\
&p_1 \leftarrow\gamma \pib_1,\quad\quad q_1 \leftarrow 0.\nn
\end{align}
This shows that with probability 1, $\P_{e,\rm LS} < \P_{e,\rm Avg}$  if and only if $p_2<p_1~\Leftrightarrow ~\gamma<\gamma_\star= \mu^2\left(\pib_1-\Pib_{11}+\Pib_{12}\right)^2\big/\pib_1$. To retrieve \eqref{eq:gamma_star}, recall that $\pib_1=1/k$ and $k\Pib_{11}+(k^2-k)\Pib_{12}=1$. The only thing left to prove is that $\gamma_\star<1$. To see this note that $p_2>0$ from positive semi-definiteness of the Schur matrix in \eqref{eq:pq}. It takes simple algebra to conclude that $p_2>0\implies \gamma_\star<1$. 

\begin{lemma}\label{lem:Slepian_multi}
Let $k\geq 2$, $\g\sim\Nn(\zero,\Iden_k)$ $\h\sim\Nn\left(\zero,\Iden_k\right)$, 
and discrete random variable $c(\h)$ such that $\P(c(\h)=\ell)=e^{\h_\ell}\big/\sum_{j\in[k]}{ e^{\h_j}}$.
Consider the function $F:\R_{>0}\times\R\rightarrow[0,1]$  defined as follows
$$
F(p,q) = \P\left(\arg\max \left\{{\left( p\Iden_k - q \one_k\one_k^T \right)^{1/2}}\,\g  + {\left( \alpha\Iden_k - \beta \one_k\one_k^T \right)^{1/2}}\, \h \right\}\neq c(\h)\right),
$$
such that $p\Iden_k - q \one_k\one_k^T\succ 0$ and fixed $\alpha\Iden_k - \beta \one_k\one_k^T\succ 0$. Then, the following statements are true.
\begin{enumerate}
\item $F(p,q) = \P\left(\arg\max \left\{ \sqrt{p}\cdot\g  + \sqrt{\alpha}\,\h \right\}\neq c(\h)\right)$.
\item For $0<p_2<p_1$ and any $q_1<\frac{p_1}{k},q_2<\frac{p_2}{k}$, it holds that $F(p_2,q_2)<F(p_1,q_1)$.
\end{enumerate}
\end{lemma}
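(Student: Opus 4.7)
The plan is to split Lemma~\ref{lem:Slepian_multi} into its two claims, dispatching Part~1 by an explicit formula for the square root of a rank-one perturbation of the identity, and then reducing Part~2 to a clean pointwise monotonicity property of the trajectory $\tau\mapsto \arg\max(\h+\tau\g)$ that sidesteps Gaussian comparison inequalities altogether.

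For Part~1, I will observe that $p\Iden_k - q\,\one_k\one_k^T$ has eigenvalue $p-kq$ in the $\one_k$ direction and eigenvalue $p$ on its orthogonal complement, so its principal square root takes the form $\sqrt{p}\,\Iden_k + \kappa\,\one_k\one_k^T$ with $\kappa := (\sqrt{p-kq}-\sqrt{p})/k$. Applied to any vector $\vct{v}\in\R^k$ this produces $\sqrt{p}\,\vct{v} + \kappa(\one_k^T\vct{v})\,\one_k$, i.e.\ $\sqrt{p}\,\vct{v}$ plus a scalar multiple of $\one_k$. The analogous identity holds for $(\alpha\Iden_k - \beta\,\one_k\one_k^T)^{1/2}\h$. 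Since $\arg\max$ is invariant under adding a scalar multiple of $\one_k$ to every coordinate, the argmax appearing inside the definition of $F(p,q)$ equals that of $\sqrt{p}\,\g + \sqrt{\alpha}\,\h$ almost surely, which is exactly Part~1.

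For Part~2, by Part~1 it suffices to show that $P(\tau) := \P(\arg\max(\h + \tau\g) = c(\h))$ is strictly decreasing in $\tau > 0$ (with $\tau^2$ proportional to $p$). Conditioning on $(\h,\g)$ and using $\P(c(\h) = \ell \mid \h) = \pi_\ell(\h) := e^{h_\ell}/\sum_{j} e^{h_j}$ rewrites this as $P(\tau) = \E[\pi_{\hat Y(\tau)}(\h)]$, where $\hat Y(\tau) := \arg\max(\h + \tau\g)$. The key observation I plan to exploit is that, for each fixed realization of $(\h,\g)$, the trajectory $\tau \mapsto \hat Y(\tau)$ is the upper envelope of $k$ affine functions of $\tau$, hence piecewise constant with at most $k-1$ switches. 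At any switch at time $\tau^* > 0$ from $\ell_{\rm old}$ to $\ell_{\rm new}$, continuity forces $h_{\ell_{\rm old}} + \tau^* g_{\ell_{\rm old}} = h_{\ell_{\rm new}} + \tau^* g_{\ell_{\rm new}}$, while the requirement that $\ell_{\rm new}$ overtakes $\ell_{\rm old}$ for $\tau$ just larger than $\tau^*$ forces $g_{\ell_{\rm new}} > g_{\ell_{\rm old}}$ and therefore, by rearranging the equality, $h_{\ell_{\rm new}} < h_{\ell_{\rm old}}$. Since $\pi_\ell(\h)$ is strictly increasing in $h_\ell$, the value $\pi_{\hat Y(\tau)}(\h)$ strictly decreases across every switch, so $\tau \mapsto \pi_{\hat Y(\tau)}(\h)$ is almost-surely non-increasing.

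Taking expectations gives $P$ non-increasing. For strict monotonicity over any subinterval $(\tau_1, \tau_2)$, I will argue that with positive probability at least one switch of $\hat Y$ falls in $(\tau_1, \tau_2)$: the switch times are ratios of differences of coordinates of $(\h,\g)$ and, under the absolutely continuous joint Gaussian law, admit densities without atoms. The step I expect to require the most technical care is formalizing this final positive-probability claim while excluding the measure-zero set of tied-argmax configurations, but this should be routine by exhibiting an open set of $(\h,\g)$ that produces a switch in the target interval.
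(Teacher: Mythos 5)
Your Part~1 is exactly the paper's argument: diagonalize $p\Iden_k-q\one_k\one_k^T$ along $\one_k$ and its complement, note the square root is $\sqrt{p}\,\Iden_k+\kappa\,\one_k\one_k^T$, and use invariance of $\arg\max$ under adding a multiple of $\one_k$. Your Part~2, however, is a genuinely different and considerably more elementary route, and it is correct. The paper proves strict monotonicity in $p$ analytically: after reducing by symmetry to $1-F=k\,G(\sqrt{p})$ with $G(s)=\E\bigl[\tfrac{e^{\h_k}}{\sum_j e^{\h_j}}\prod_{j\in[k-1]}Q\bigl(\g_k+\tfrac{\sqrt{\alpha}(\h_j-\h_k)}{s}\bigr)\bigr]$, it differentiates in $s$ and runs a multi-page Gaussian-integration-by-parts computation to show $G'<0$. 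You instead couple all values of $p$ on one probability space via $\tau=\sqrt{p/\alpha}$ and observe that the trajectory $\tau\mapsto\hat Y(\tau)=\arg\max(\h+\tau\g)$ is the argmax of an upper envelope of lines, so at every switch time $\tau^*>0$ the equality $h_{\ell_{\rm old}}+\tau^*g_{\ell_{\rm old}}=h_{\ell_{\rm new}}+\tau^*g_{\ell_{\rm new}}$ together with $g_{\ell_{\rm new}}>g_{\ell_{\rm old}}$ forces $h_{\ell_{\rm new}}<h_{\ell_{\rm old}}$, hence $\pi_{\hat Y(\tau)}(\h)$ is pathwise non-increasing; taking expectations (using that $c(\h)$ is conditionally independent of $\g$ given $\h$, which is implicit in the lemma and used identically in the paper's own reduction) gives monotonicity, and an open set of configurations producing a switch in any target interval gives strictness. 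Your approach buys brevity, transparency about the mechanism, and extra generality — the pathwise step uses no Gaussianity at all, only absolute continuity to rule out ties and to supply the positive-probability switch — whereas the paper's computation is tied to Gaussian integration by parts. The only points to make explicit in a write-up are the conditional-independence assumption just mentioned and the measure-zero exclusion of degenerate envelopes (three concurrent lines or equal slopes), both of which are routine.
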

\begin{proof}
Fix any $p>0,q\leq\frac{p}{k}$. Denote $\Tb:=\left( p\Iden_k - q \one_k\one_k^T \right)^{1/2}$ and $\Sb:=\left( \alpha\Iden_k - \beta \one_k\one_k^T \right)^{1/2}$ for convenience. It can be  checked that $\Tb:=\left( \sqrt{p}\Iden_k + \frac{\sqrt{p-qk}-\sqrt{p}}{k} \one_k\one_k^T\right)$ and $\Sb:=\left( \sqrt{\alpha}\Iden_k + \frac{\sqrt{\alpha-\beta k}-\sqrt{\alpha}}{k} \one_k\one_k^T\right)$. From these, it follows directly that
\bea
F(p,q) &= \P\left(\arg\max \left\{ \sqrt{p}\cdot\g  + \sqrt{\alpha}\,\h \right\}\neq c(\h)\right). \nn
\end{align}
This shows the first statement. 

Next, we show the second statement. 
Using the distribution of $c(\h)$ and symmetry we have the following chain of equalities:
\begin{align}
1-F(p,q) &= \P\left\{\arg\max_{j\in[k]} \left\{ \sqrt{p}\cdot\g  + \sqrt{\alpha}\,\h \right\}= c(\h)\right\} \nn\\
& = k\cdot\E\left[\frac{e^{\h_k}}{\sum_{j\in[k]}e^{\h_j}}\cdot\ind{\left\{\arg\max \left\{ \sqrt{p}\cdot\g  + \sqrt{\alpha}\,\h \right\}= k\right\}}\right] \nn\\
& = k\cdot\E\left[\frac{e^{\h_k}}{\sum_{j\in[k]}e^{\h_j}}\cdot\prod_{j\in[k-1]}\ind{\left\{\sqrt{p}\cdot\g_j  + \sqrt{\alpha}\,\h_j < \sqrt{p}\cdot\g_k  + \sqrt{\alpha}\,\h_k\right\}}\right] \nn\\
& = k\cdot\E\left[\frac{e^{\h_k}}{\sum_{j\in[k]}e^{\h_j}}\cdot\prod_{j\in[k-1]}\ind{\left\{ \g_j   <  \g_k  + \frac{\sqrt{\alpha}\,\h_k- \sqrt{\alpha}\,\h_j}{\sqrt{p}}\right\}}\right] \nn\\
& = k\cdot\E\left[\frac{e^{\h_k}}{\sum_{j\in[k]}e^{\h_j}}\cdot\prod_{j\in[k-1]} Q\left(\g_k  + \frac{\sqrt{\alpha}\,\h_j- \sqrt{\alpha}\,\h_k}{\sqrt{p}}\right)\right] =: k\cdot G(\sqrt{p}) \,, \label{eq:Gdef}
\end{align}
where in the last line we used the rotational symmetry of the Gaussian distribution:
$$\P\left\{ \g_j < \g_k  + \frac{\sqrt{\alpha}\,\h_k- \sqrt{\alpha}\,\h_j}{\sqrt{p}} \,|\,\h_1,\ldots,\h_k\right\} = \P\left\{ \g_j > \g_k  + \frac{\sqrt{\alpha}\,\h_j- \sqrt{\alpha}\,\h_k}{\sqrt{p}} \,|\,\h_1,\ldots,\h_k\right\}\,,$$
and the fact that $\g_1,\ldots,\g_{k-1}$ are independent.

Next, we will show that the function $\Gc(\cdot)$ defined above is strictly decreasing in $(0,\infty)$.  Towards this goal, using $Q^\prime(x)=-\frac{1}{\sqrt{2\pi}}e^{-x^2/2}=-\phi(x)$ and using the shorthand 
$$H_{kj}=\h_j-\h_k,~j\in[k],$$
 we may compute the derivative of $\Gc$ at any $s>0$ as follows:
\begin{align}
\frac{\mathrm{d}\Gc(s)}{\mathrm{d}s} &= \sum_{i\in[k-1]}\E\left[\frac{e^{\h_k}}{\sum_{j\in[k]}e^{\h_j}}\cdot\phi\Big( \g_k +  \frac{\sqrt{\alpha}\,H_{ki}}{s}\Big)\cdot \frac{\sqrt{\alpha}\,H_{ki}}{s^2}\cdot\prod_{{j\neq i\in[k-1]}}  Q\left( \g_k + \frac{\sqrt{\alpha}\,H_{kj}}{s}\right)\right] \nn\\
& =  \sum_{i\in[k-1]}\E\left[\frac{1}{\sum_{j\in[k]}e^{H_{kj}}}\cdot\phi\Big( \g_k +  \frac{\sqrt{\alpha}\,H_{ki}}{s}\Big)\cdot \frac{\sqrt{\alpha}\,H_{ki}}{s^2}\cdot\prod_{{j\neq i\in[k-1]}}  Q\left( \g_k + \frac{\sqrt{\alpha}\,H_{kj}}{s}\right)\right] \nn\\
& =  \sum_{i\in[k-1]}\frac{\sqrt{\alpha}}{s^2}\E\left[H_{ki}\cdot \Ac_i\left(\g_k,\{H_{kj}\}_{j\in[k-1]}\right)\right], \label{eq:G<0}
\end{align}
where in the last line we have defined
\bea
 \Ac_i\left(\g_k,\{H_{kj}\}_{j\in[k-1]}\right) := \frac{1}{1+\sum_{j\in[k-1]}e^{H_{kj}}}\cdot\phi\Big( \g_k +  \frac{\sqrt{\alpha}\,H_{ki}}{s}\Big)\cdot\prod_{{j\neq i\in[k-1]}}  Q\left( \g_k + \frac{\sqrt{\alpha}\,H_{kj}}{s}\right),\nn
\end{align}
Next, we use Gaussian integration by parts (GIBP) to further simplify the expression  in \eqref{eq:G<0}. 
 Fix any $i\in[k-1]$. Then, by (GIBP):
\bea
A_i&:=\E\left[H_{ki}\cdot \Ac_i\left(\g_k,\{H_{kj}\}_{j\in[k-1]}\right)\right] \label{eq:G<0_2}\\
&\qquad= \E[H_{ki}^2] \E\left[\frac{\mathrm{d}}{\mathrm{d} H_{ki}}  \Ac_i\left(\g_k,\{H_{kj}\}_{j\in[k-1]}\right)\right]+\sum_{\substack{\ell\in[k-1] \\ \ell\neq i}} \E[H_{ki} \cdot H_{k\ell}] \E\left[\frac{\mathrm{d}}{\mathrm{d} H_{k\ell}}  \Ac_i\left(\g_k,\{H_{kj}\}_{j\in[k-1]}\right)\right] \nn\\
&\qquad= 2 \underbrace{\E\left[\frac{\mathrm{d}}{\mathrm{d} H_{ki}}  \Ac_i\left(\g_k,\{H_{kj}\}_{j\in[k-1]}\right)\right]}_{\rm Term I}+\underbrace{\sum_{\substack{\ell\in[k-1] \\ \ell\neq i}}  \E\left[\frac{\mathrm{d}}{\mathrm{d} H_{k\ell}}  \Ac_i\left(\g_k,\{H_{kj}\}_{j\in[k-1]}\right)\right]}_{\rm Term II},\label{eq:dercomp}
\end{align}
where in the second line, we used the fact that $\h\sim\Nn(\zero,\Iden_k)$ to compute 
$$\E[H_{ki}^2] = 2,\quad\text{and}~~ \E[H_{ki} H_{k\ell}] = 1, ~\ell\neq i,~\ell\in[k-1].$$
We now compute the derivatives in \eqref{eq:dercomp}. First, for any $\ell\in[k-1],~\ell\neq i$,
\bea
&\frac{\mathrm{d}\Ac_i\left(\g_k,\{H_{kj}\}_{j\in[k-1]}\right)}{\mathrm{d} H_{k\ell}} = -\frac{e^{H_{k\ell}}}{\left(1+\sum_{j\in[k-1]}e^{H_{kj}}\right)^2}\cdot\phi\Big( \g_k +  \frac{\sqrt{\alpha}\,H_{ki}}{s}\Big)\cdot\prod_{{j\neq i\in[k-1]}}  Q\left( \g_k + \frac{\sqrt{\alpha}\,H_{kj}}{s}\right)~~=:{\rm Term II(a)_\ell} \nn\\
&-\frac{\sqrt{\alpha}}{s}\cdot\frac{1}{1+\sum_{j\in[k-1]}e^{H_{kj}}}\cdot\phi\Big( \g_k +  \frac{\sqrt{\alpha}\,H_{ki}}{s}\Big)\cdot\phi\Big( \g_k+\frac{\sqrt{\alpha}\,H_{k\ell}}{s}\Big)\cdot\prod_{{j\neq (i,\ell)\in[k-1]}}  Q\left( \g_k + \frac{\sqrt{\alpha}\,H_{kj}}{s}\right)~~=:{\rm Term II(b)_\ell} \nn \\
& \qquad\qquad\qquad= {\rm Term II(a)_\ell} + {\rm Term II(b)_\ell}
\end{align}
Thus, 
\bea\label{eq:TermII}
{\rm Term II} = \sum_{\ell\neq i\in[k-1]} \E\left[{\rm Term II(a)_\ell}\right] + \E\left[{\rm Term II(b)_\ell}\right] =:  
\E\left[{\rm Term II(a)}\right] + N_i ,
\end{align}
where we defined
\bea\label{eq:Ni_def}
N_i &= - \frac{\sqrt{\alpha}}{s}\cdot\E\left[\frac{\phi\Big( \g_k +  \frac{\sqrt{\alpha}\,H_{ki}}{s}\Big)}{1+\sum_{j\in[k-1]}e^{H_{kj}}}\cdot\sum_{\ell\neq i\in[k-1]}\left\{\phi\Big( \g_k+\frac{\sqrt{\alpha}\,H_{k\ell}}{s}\Big)\cdot\prod_{{j\neq (i,\ell)\in[k-1]}}  Q\left( \g_k + \frac{\sqrt{\alpha}\,H_{kj}}{s}\right)\right\}\right] <0.
\end{align}
and we remark for later use that 
\bea\label{eq:TermIIa}
\E\left[{\rm Term II(a)}\right] = \sum_{\ell\neq i\in[k-1]} \E\left[{\rm Term II(a)_\ell}\right]  < 0.
\end{align}
Second, it holds that 
\bea
&\frac{\mathrm{d}\Ac_i\left(\g_k,\{H_{kj}\}_{j\in[k-1]}\right)}{\mathrm{d} H_{ki}}  =  -\frac{e^{H_{ki}}}{\left(1+\sum_{j\in[k-1]}e^{H_{kj}}\right)^2}\cdot\phi\Big( \g_k +  \frac{\sqrt{\alpha}\,H_{ki}}{s}\Big)\cdot\prod_{{j\neq i\in[k-1]}}  Q\left( \g_k + \frac{\sqrt{\alpha}\,H_{kj}}{s}\right) \nn\\
&\qquad\quad+\frac{\mathrm{d}\phi\Big( \g_k +  \frac{\sqrt{\alpha}\,H_{ki}}{s}\Big)}{\mathrm{d} H_{ki}}\cdot\frac{1}{1+\sum_{j\in[k-1]}e^{H_{kj}}}\cdot\prod_{{j\neq i\in[k-1]}}  Q\left( \g_k + \frac{\sqrt{\alpha}\,H_{kj}}{s}\right) \nn\\
&\qquad\qquad~=-\frac{e^{H_{ki}}}{\left(1+\sum_{j\in[k-1]}e^{H_{kj}}\right)^2}\cdot\phi\Big( \g_k +  \frac{\sqrt{\alpha}\,H_{ki}}{s}\Big)\cdot\prod_{{j\neq i\in[k-1]}}  Q\left( \g_k + \frac{\sqrt{\alpha}\,H_{kj}}{s}\right) ~~=:{\rm Term I(a)}\nn\\
&\quad -\frac{\sqrt{\alpha}}{s} \left(\g_k +  \frac{\sqrt{\alpha}\,H_{ki}}{s} \right) \phi\Big( \g_k +  \frac{\sqrt{\alpha}\,H_{ki}}{s}\Big)\cdot\frac{1}{1+\sum_{j\in[k-1]}e^{H_{kj}}}\cdot\prod_{{j\neq i\in[k-1]}}  Q\left( \g_k + \frac{\sqrt{\alpha}\,H_{kj}}{s}\right)~~=:{\rm Term I(b)},\label{eq:TermIb1} \\
&\qquad\qquad~= {\rm Term I(a)} + {\rm Term I(b)},\nn
\end{align}
where in the penultimate line we used the fact that $\phi^\prime(x) = -x\phi(x)$.  Consider the two terms in \eqref{eq:TermIb1}. Clearly,
 \bea\label{eq:TermIa}
\E[{ \rm Term I(a)}]<0.
\end{align}
 For the second term we observe that:
\bea
\E\left[{\rm Term I(b)} \right] &= -\frac{\sqrt{\alpha}}{s}\E\left[ \left(\g_k +  \frac{\sqrt{\alpha}\,H_{ki}}{s} \right) \cdot \Ac_i\left(\g_k,\{H_{kj}\}_{j\in[k-1]}\right)\right]\label{eq:Termba}
\\
&= -\frac{\alpha}{s^2} \cdot \E\left[ H_{ki}\cdot \Ac_i\left(\g_k,\{H_{kj}\}_{j\in[k-1]}\right)\right] - \frac{\sqrt{\alpha}}{s} \E\left[ \g_k\cdot \Ac_i\left(\g_k,\{H_{kj}\}_{j\in[k-1]}\right)\right] \nn\\
& = -\frac{\alpha}{s^2} \cdot A_i - \frac{\sqrt{\alpha}}{s} \E\left[ \g_k\cdot \Ac_i\left(\g_k,\{H_{kj}\}_{j\in[k-1]}\right)\right].\label{eq:Termb}
\end{align}
Moreover, using again GIBP, $\E[\g_k^2]=1$, $\E[\g_k H_{kj}]=0,~j\in[k]$ and the fact that $\phi^\prime(x) = -x\phi(x)$, 
\bea
&\E\left[ \g_k\cdot \Ac_i\left(\g_k,\{H_{kj}\}_{j\in[k-1]}\right)\right]  = \E\left[\frac{\mathrm{d}\Ac_i\left(\g_k,\{H_{kj}\}_{j\in[k]}\right)}{\mathrm{d} \g_k} \right] \nn 
\\
&= -\E\left[ \left(\g_k +  \frac{\sqrt{\alpha}\,H_{ki}}{s} \right) \cdot \Ac_i\left(\g_k,\{H_{kj}\}_{j\in[k-1]}\right)\right]\nn\\
&- \sum_{\ell\neq i\in[k-1]} \E\left[\phi\Big( \g_k +  \frac{\sqrt{\alpha}\,H_{k\ell}}{s}\Big)\cdot\frac{1}{1+\sum_{j\in[k-1]}e^{H_{k\ell}}}\cdot\phi\Big( \g_k +  \frac{\sqrt{\alpha}\,H_{ki}}{s}\Big)\cdot\prod_{{j\neq (i,\ell)\in[k-1]}}  Q\left( \g_k + \frac{\sqrt{\alpha}\,H_{kj}}{s}\right) \right]  \nn\\
&= \frac{s}{\sqrt{\alpha}} \E\left[{\rm Term I(b)} \right] + \frac{s}{\sqrt{\alpha}} N_i,\label{eq:Ni}
\end{align}
where, we have recalled \eqref{eq:Termba} and \eqref{eq:Ni_def}. Using \eqref{eq:Ni} in \eqref{eq:Termb}, we find that
\bea\label{eq:TermIb}
\E\left[{\rm Term I(b)} \right] = -\frac{\alpha}{s^2} \cdot A_i - \E\left[{\rm Term I(b)} \right] - N_i \implies \E\left[{\rm Term I(b)} \right] = -\frac{\alpha}{2s^2} \cdot A_i  - \frac{N_i}{2}.
\end{align}
We are now ready to put things together:
\bea
A_i &= 2\cdot{\rm Term I} + {\rm Term II} \nn \qquad\text{ by \eqref{eq:dercomp}}\\
&= 2\E[{\rm Term I(a)}] + 2\E[{\rm Term I(b)}] + \E[{\rm Term II(a)}] + N_i \qquad\text{by \eqref{eq:TermII}} \nn\\
&= 2\E[{\rm Term I(a)}] -\frac{\alpha}{s^2}\cdot A_i - N_i + \E[{\rm Term II(a)}] + N_i \qquad\text{by \eqref{eq:TermIb}} \nn\\
\implies A_i &= \frac{s^2}{s^2+\alpha}\left( 2\E[{\rm Term I(a)}]  + \E[{\rm Term II(a)}] \right) 
\nn\\
&<0\qquad\qquad\qquad\qquad\qquad\qquad\qquad\qquad\qquad \text{by \eqref{eq:TermIa} and \eqref{eq:TermIIa}}.\nn
\end{align}

From this, \eqref{eq:G<0} and \eqref{eq:G<0_2}, we have shown that $\Gc$ is strictly decreasing in $(0,\infty)$. Recalling the definition of $\Gc$ in \eqref{eq:Gdef}, this implies that $F(p,q)$ is strictly increasing in $p>0$, as desired to complete the proof.
\end{proof}

\section{Weighted LS for GMM (Proof of Theorem \ref{thm:WLS_GM})}

\subsection{Computing $\Sigma_{w,\mu}$}\label{sec:wmu_WLS_GM}
The WLS estimator solves:
\begin{align*}
\underset{\mtx{W}\in\R^{k\times d},\text{ }\vct{b}\in\R^k}{\min}\quad \frac{1}{2n}\fronorm{\left(\mtx{W}\mtx{X}+\vct{b}\vct{1}_n^T-\mtx{Y}\right)\mtx{D}}^2=&\sum_{\ell=1}^k \min_{\vct{w}_\ell, \vct{b}_\ell}\text{ }\frac{1}{2n}\twonorm{\mtx{D}\left(\mtx{X}^T\vct{w}_\ell+b_\ell\vct{1}_n-\mtx{Y}_\ell\right)}^2\\
=&\sum_{\ell=1}^k \min_{\vct{w}_\ell, \vct{b}_\ell}\text{ }\frac{1}{2n}\twonorm{\mtx{D}\left(\mtx{Y}^T\mtx{M}^T\vct{w}_\ell+ \mtx{Z}^T\vct{w}_\ell+b_\ell\vct{1}_n-\mtx{Y}_\ell\right)}^2\,.
\end{align*}

Define 
\begin{align}
\mathcal{L}_{PO}\left(\vct{w}_\ell,b_\ell\right):=\frac{1}{2n}\twonorm{\mtx{D}\left(\mtx{Y}^T\mtx{M}^T\vct{w}_\ell+ \mtx{Z}^T\vct{w}_\ell+b_\ell\vct{1}_n-\mtx{Y}_\ell\right)}^2\label{eq:WLS_PO}\,.
\end{align}

\noindent\textbf{Identifying the AO.}~~By duality we have
\begin{align*}
\min_{\vct{w}_\ell, b_\ell}\text{ }\mathcal{L}_{PO}\left(\vct{w}_\ell,b_\ell\right) =&\min_{\vct{u}, \vct{w}_\ell, b_\ell}\text{ }\max_{\vct{s}}\text{ }\frac{1}{n}\left(\vct{s}^T\mtx{D}\mtx{Y}^T\mtx{M}^T\vct{w}_\ell+ \vct{s}^T\mtx{D}\mtx{Z}^T\vct{w}_\ell+b_\ell\vct{s}^T\mtx{D}\vct{1}_n-\vct{s}^T\mtx{D}\mtx{Y}_\ell-\vct{s}^T\vct{u}+\frac{\twonorm{\vct{u}}^2}{2}\right)
\end{align*}
Note that the above is jointly convex in $(\vct{u}, \vct{w}_\ell, b_\ell)$ and concave in $\vct{s}$. Thus, we consider the Auxiliary Optimization (AO) problem
\begin{align*}
\min_{\vct{u},\vct{w}_\ell, b_\ell}\text{ }\max_{\vct{s}}\text{ }\frac{1}{n}\left(\vct{s}^T\mtx{D}\mtx{Y}^T\mtx{M}^T\vct{w}_\ell+ \sigma\twonorm{\vct{w}_\ell}\vct{g}^T\mtx{D}\vct{s}+ \sigma\twonorm{\mtx{D}\vct{s}}\vct{h}^T\vct{w}_\ell+b_\ell\vct{s}^T\mtx{D}\vct{1}_n-\vct{s}^T\mtx{D}\mtx{Y}_\ell-\vct{s}^T\vct{u}+\frac{\twonorm{\vct{u}}^2}{2}\right),
\end{align*}
where $\vct{g}\in\R^n$ and $\vct{h}\in\R^d$ are independent Gaussian random vectors with i.i.d.~$\mathcal{N}(0,1)$ entries. Moreover, we carry out a change of variable $\vct{s}\rightarrow \mtx{D}\vct{s}$ to arrive at
\begin{align*}
\min_{\vct{u},\vct{w}_\ell, b_\ell}\text{ }\max_{\vct{s}}\text{ }\frac{1}{n}\left(\vct{s}^T\mtx{Y}^T\mtx{M}^T\vct{w}_\ell+ \sigma\twonorm{\vct{w}_\ell}\vct{g}^T\vct{s}+ \sigma\twonorm{\vct{s}}\vct{h}^T\vct{w}_\ell+b_\ell\vct{s}^T\vct{1}_n-\vct{s}^T\mtx{Y}_\ell-\vct{s}^T\mtx{D}^{-1}\vct{u}+\frac{\twonorm{\vct{u}}^2}{2}\right)\,.
\end{align*}

\vp\noindent\textbf{Simplification of the AO.}
Maximizing over the direction of $\vct{s}$ and  setting its norm $\beta=\twonorm{\vct{s}}$ above we arrive at
\begin{align*}
&\min_{\vct{u},\vct{w}_\ell, b_\ell}\text{ }\max_{\beta\ge 0}\text{ }\max_{\vct{s}: \twonorm{\vct{s}}=1}\text{ }\frac{1}{n}\left(\beta\vct{s}^T\mtx{Y}^T\mtx{M}^T\vct{w}_\ell+ \sigma\beta\twonorm{\vct{w}_\ell}\vct{g}^T\vct{s}+ \sigma\beta\vct{h}^T\vct{w}_\ell+b_\ell\beta\vct{s}^T\vct{1}_n-\beta\vct{s}^T\mtx{Y}_\ell-\beta\vct{s}^T\mtx{D}^{-1}\vct{u}+\frac{\twonorm{\vct{u}}^2}{2}\right)\\
&\quad\quad=\min_{\vct{u},\vct{w}_\ell, b_\ell}\text{ }\max_{\beta\ge 0}\text{ }\frac{1}{n}\left(\beta\twonorm{\mtx{Y}^T\mtx{M}^T\vct{w}_\ell+ \sigma\twonorm{\vct{w}_\ell}\vct{g}+b_\ell\vct{1}_n-\mtx{Y}_\ell-\mtx{D}^{-1}\vct{u}}+ \sigma\beta\vct{h}^T\vct{w}_\ell+\frac{\twonorm{\vct{u}}^2}{2}\right)\\
&\quad\quad=\min_{\vct{u}, b_\ell}\text{ }\max_{\beta\ge 0}\text{ }\min_{\vct{w}_\ell}\text{ }\frac{1}{n}\left(\beta\twonorm{\mtx{Y}^T\mtx{M}^T\vct{w}_\ell+ \sigma\twonorm{\vct{w}_\ell}\vct{g}+b_\ell\vct{1}_n-\mtx{Y}_\ell-\mtx{D}^{-1}\vct{u}}+ \sigma\beta\vct{h}^T\vct{w}_\ell+\frac{\twonorm{\vct{u}}^2}{2}\right)\,.
\end{align*}
To continue, consider the singular value decomposition  
\begin{align}\label{eq:M_SVD}
\mtx{M}=\mtx{U}\mtx{\Sigma}\mtx{V}^T = 
\begin{bmatrix}\ub_1 & \ub_2 & \ldots & \ub_r \end{bmatrix} \diag{\sigma_1,\sigma_2,\ldots,\sigma_r} \begin{bmatrix}\vb^T_1 \\ \vb_2^T \\ \ldots \\ \vb_r^T \end{bmatrix}\,,
\end{align}
with $r:=rank(\mtx{M})\le k$ and define the variable $\vct{\alpha}=\mtx{U}^T\vct{w}_\ell$ and $\vct{\alpha}_{\perp}=\mtx{U}_{\perp}^T\vct{w}_\ell$ where $\mtx{U}_{\perp}$ is the orthogonal complement of the columns of $\mtx{U}$. With these definitions the above optimization problem reduces to
\begin{align*}
\min_{\vct{u}, b_\ell}\text{ }\max_{\beta\ge 0}\text{ }\min_{\vct{\alpha}}\text{ }\min_{\vct{\alpha}_\perp}\text{ }\frac{1}{n}\left(\beta\twonorm{\mtx{Y}^T\mtx{V}\mtx{\Sigma}\vct{\alpha}+ \sigma\sqrt{\twonorm{\vct{\alpha}}^2+\twonorm{\vct{\alpha}_\perp}^2}\vct{g}+b_\ell\vct{1}_n-\mtx{Y}_\ell-\mtx{D}^{-1}\vct{u}}+ \sigma\beta\vct{h}^T\mtx{U}\vct{\alpha}+ \sigma\beta\vct{h}^T\mtx{U}_{\perp}\vct{\alpha}_{\perp}+\frac{\twonorm{\vct{u}}^2}{2}\right)\,.
\end{align*}
Decomposing the optimization over $\vct{\alpha}_\perp$ in terms of its direction and norm $\alpha_0=\twonorm{\vct{\alpha}_\perp}$ we arrive at
\begin{align*}
\min_{\vct{u}, b_\ell}\text{ }\max_{\beta\ge 0}\text{ }\min_{\vct{\alpha}}\text{ }\min_{\alpha_0\ge 0}\text{ }\frac{1}{n}\left(\beta\twonorm{\mtx{Y}^T\mtx{V}\mtx{\Sigma}\vct{\alpha}+ \sigma\sqrt{\twonorm{\vct{\alpha}}^2+\alpha_0^2}\vct{g}+b_\ell\vct{1}_n-\mtx{Y}_\ell-\mtx{D}^{-1}\vct{u}}+ \sigma\beta\vct{h}^T\mtx{U}\vct{\alpha}- \sigma\alpha_0\beta\twonorm{\mtx{U}_{\perp}^T\vct{h}}+\frac{\twonorm{\vct{u}}^2}{2}\right)\,.
\end{align*}
Since $\mtx{U}^T\vct{h}$ is $r\le k$ dimensional in our asymptotic regime the term $\frac{\vct{h}^T\mtx{U}\vct{\alpha}}{n}$  can be ignored. Also replacing $\beta$ with $\beta/\sqrt{n}$ we thus arrive at
\begin{align*}
&\min_{\vct{u}, b_\ell}\text{ }\max_{\beta\ge 0}\text{ }\min_{\vct{\alpha}}\text{ }\min_{\alpha_0\ge 0}\text{ }\frac{\beta}{\sqrt{n}}\twonorm{\mtx{Y}^T\mtx{V}\mtx{\Sigma}\vct{\alpha}+ \sigma\sqrt{\twonorm{\vct{\alpha}}^2+\alpha_0^2}\vct{g}+b_\ell\vct{1}_n-\mtx{Y}_\ell-\mtx{D}^{-1}\vct{u}}- \frac{1}{\sqrt{n}}\sigma\alpha_0\beta\twonorm{\mtx{U}_{\perp}^T\vct{h}}+\frac{\twonorm{\vct{u}}^2}{2n}\\
&=\min_{\vct{u}, b_\ell}\text{ }\max_{\beta\ge 0}\text{ }\min_{\vct{\alpha}}\text{ }\min_{\alpha_0\ge 0}\text{ }\min_{\tau\ge 0}\text{ }\frac{\beta}{2n\tau}\twonorm{\mtx{Y}^T\mtx{V}\mtx{\Sigma}\vct{\alpha}+ \sigma\sqrt{\twonorm{\vct{\alpha}}^2+\alpha_0^2}\vct{g}+b_\ell\vct{1}_n-\mtx{Y}_\ell-\mtx{D}^{-1}\vct{u}}^2+\frac{\beta\tau}{2}\\
&\quad\quad\quad\quad\quad\quad\quad\quad\quad\quad\quad\quad-\frac{1}{\sqrt{n}}\sigma\alpha_0\beta\twonorm{\mtx{U}_{\perp}^T\vct{h}}+\frac{\twonorm{\vct{u}}^2}{2n}\\
&=\min_{ b_\ell}\text{ }\max_{\beta\ge 0}\text{ }\min_{\vct{\alpha}}\text{ }\min_{\alpha_0\ge 0}\text{ }\min_{\tau\ge 0}\text{ }\min_{\vct{u}}\text{ }\frac{\beta}{2n\tau}\twonorm{\mtx{Y}^T\mtx{V}\mtx{\Sigma}\vct{\alpha}+ \sigma\sqrt{\twonorm{\vct{\alpha}}^2+\alpha_0^2}\vct{g}+b_\ell\vct{1}_n-\mtx{Y}_\ell-\mtx{D}^{-1}\vct{u}}^2+\frac{\beta\tau}{2}\\
&\quad\quad\quad\quad\quad\quad\quad\quad\quad\quad\quad\quad-\frac{1}{\sqrt{n}}\sigma\alpha_0\beta\twonorm{\mtx{U}_{\perp}^T\vct{h}}+\frac{\twonorm{\vct{u}}^2}{2n}\,.
\end{align*}
Setting the derivative with respect to $\vct{u}$ to zero we arrive at
\begin{align*}
\vct{u}=\frac{\beta}{\tau}\mtx{D}^{-1}\left(\mtx{I}+\frac{\beta}{\tau}\mtx{D}^{-2}\right)^{-1}\left(\mtx{Y}^T\mtx{V}\mtx{\Sigma}\vct{\alpha}+ \sigma\sqrt{\twonorm{\vct{\alpha}}^2+\alpha_0^2}\vct{g}+b_\ell\vct{1}_n-\mtx{Y}_\ell\right)\,.
\end{align*}
Plugging the latter into the above the AO simplifies to
\begin{align*}
&\min_{ b_\ell}\text{ }\max_{\beta\ge 0}\text{ }\min_{\vct{\alpha}}\text{ }\min_{\alpha_0\ge 0}\text{ }\min_{\tau\ge 0}\text{ }\text{ }\frac{\beta}{2\tau n}\text{trace}\left(\vct{t}^T\left(\mtx{I}+\frac{\beta}{\tau}\mtx{D}^{-2}\right)^{-1}\vct{t}\right)-\frac{1}{\sqrt{n}}\sigma\alpha_0\beta\twonorm{\mtx{U}_{\perp}^T\vct{h}}+\frac{\beta\tau}{2}
\end{align*}
where
\begin{align*}
\vct{t}:=\mtx{Y}^T\mtx{V}\mtx{\Sigma}\vct{\alpha}+ \sigma\sqrt{\twonorm{\vct{\alpha}}^2+\alpha_0^2}\vct{g}+b_\ell\vct{1}_n-\mtx{Y}_\ell=\mtx{Y}^T\left(\mtx{V}\mtx{\Sigma}\vct{\alpha}-\vct{e}_\ell\right)+ \sigma\sqrt{\twonorm{\vct{\alpha}}^2+\alpha_0^2}\vct{g}+b_\ell\vct{1}_n
\end{align*}
To continue note that in our asymptotic regime we have
\begin{align*}
\frac{1}{\sqrt{n}}\twonorm{\mtx{U}_{\perp}^T\vct{h}}\text{ }\rP\text{ }\sqrt{\gamma}
\end{align*}
and the cross terms can be ignored so that in an asymptotic sense
\begin{align*}
&\frac{1}{n}\text{trace}\left(\vct{t}^T\left(\mtx{I}+\frac{\beta}{\tau}\mtx{D}^{-2}\right)^{-1}\vct{t}\right)\\
&\quad\quad=\frac{1}{n}\sigma^2\left(\twonorm{\vct{\alpha}}^2+\alpha_0^2\right)\text{trace}\left(\left(\mtx{I}+\frac{\beta}{\tau}\mtx{D}^{-2}\right)^{-1}\right)\\
&\quad\quad\quad+\frac{1}{n}\left(\mtx{Y}^T\left(\mtx{V}\mtx{\Sigma}\vct{\alpha}-\vct{e}_\ell\right)+b_\ell\vct{1}_n\right)^T\left(\mtx{I}+\frac{\beta}{\tau}\mtx{D}^{-2}\right)^{-1}\left(\mtx{Y}^T\left(\mtx{V}\mtx{\Sigma}\vct{\alpha}-\vct{e}_\ell\right)+b_\ell\vct{1}_n\right)
\end{align*}
Therefore we arrive at
\begin{align*}
&\min_{ b_\ell}\text{ }\max_{\beta\ge 0}\text{ }\min_{\vct{\alpha}}\text{ }\min_{\tau\ge 0}\text{ }\min_{\alpha_0\ge 0}\text{ }\text{ }\frac{\beta}{2\tau n}\sigma^2\left(\twonorm{\vct{\alpha}}^2+\alpha_0^2\right)\text{trace}\left(\left(\mtx{I}+\frac{\beta}{\tau}\mtx{D}^{-2}\right)^{-1}\right)-\sigma\alpha_0\beta\sqrt{\gamma}+\frac{\beta\tau}{2}\\
&\quad\quad\quad\quad\quad\quad\quad\quad\quad\quad\quad\quad +\frac{\beta}{2\tau n}\left(\mtx{Y}^T\left(\mtx{V}\mtx{\Sigma}\vct{\alpha}-\vct{e}_\ell\right)+b_\ell\vct{1}_n\right)^T\left(\mtx{I}+\frac{\beta}{\tau}\mtx{D}^{-2}\right)^{-1}\left(\mtx{Y}^T\left(\mtx{V}\mtx{\Sigma}\vct{\alpha}-\vct{e}_\ell\right)+b_\ell\vct{1}_n\right)\,,
\end{align*}
which can be rewritten in the form
\begin{align*}
&\min_{ b_\ell}\text{ }\max_{\beta\ge 0}\text{ }\min_{\vct{\alpha}}\text{ }\min_{\tau\ge 0}\text{ }\text{ }\text{ }\frac{\beta}{2\tau n}\sigma^2\twonorm{\vct{\alpha}}^2\text{trace}\left(\left(\mtx{I}+\frac{\beta}{\tau}\mtx{D}^{-2}\right)^{-1}\right)+\frac{\beta\tau}{2}\\
&\quad\quad\quad\quad\quad\quad\quad\quad\quad +\frac{\beta}{2\tau n}\left(\mtx{Y}^T\left(\mtx{V}\mtx{\Sigma}\vct{\alpha}-\vct{e}_\ell\right)+b_\ell\vct{1}_n\right)^T\left(\mtx{I}+\frac{\beta}{\tau}\mtx{D}^{-2}\right)^{-1}\left(\mtx{Y}^T\left(\mtx{V}\mtx{\Sigma}\vct{\alpha}-\vct{e}_\ell\right)+b_\ell\vct{1}_n\right)\\
&\quad\quad\quad\quad\quad\quad\quad\quad\quad+\frac{\beta}{2\tau n}\sigma^2\alpha_0^2\text{trace}\left(\left(\mtx{I}+\frac{\beta}{\tau}\mtx{D}^{-2}\right)^{-1}\right)-\alpha_0\sigma\beta\sqrt{\gamma}\,.
\end{align*}
To continue further we shall assume $\mtx{D}=\text{diag}\left(\mtx{Y}^T\vct{\omega}\right)$. Note that in this case
\begin{align*}
\frac{1}{n} \text{trace}\left(\left(\mtx{I}+\frac{\beta}{\tau}\mtx{D}^{-2}\right)^{-1}\right)=\frac{1}{n}\sum_{i=1}^n \frac{(\vct{y}_i^T\vct{\omega})^2}{(\vct{y}_i^T\vct{\omega})^2+\frac{\beta}{\tau}}=\sum_{\ell=1}^k \frac{n_\ell}{n}\frac{\omega_\ell^2}{\omega_\ell^2+\frac{\beta}{\tau}}\text{ }\rP\text{ }\sum_{\ell=1}^k\frac{\pi_\ell\omega_\ell^2}{\omega_\ell^2+\frac{\beta}{\tau}}\,.
\end{align*}
Also,
\begin{align*}
&\frac{1}{n}\left(\mtx{Y}^T\left(\mtx{V}\mtx{\Sigma}\vct{\alpha}-\vct{e}_\ell\right)+b_\ell\vct{1}_n\right)^T\left(\mtx{I}+\frac{\beta}{\tau}\mtx{D}^{-2}\right)^{-1}\left(\mtx{Y}^T\left(\mtx{V}\mtx{\Sigma}\vct{\alpha}-\vct{e}_\ell\right)+b_\ell\vct{1}_n\right)\\
&=\frac{1}{n}\left(\mtx{V}\mtx{\Sigma}\vct{\alpha}-\vct{e}_\ell\right)^T\mtx{Y}\left(\mtx{I}+\frac{\beta}{\tau}\mtx{D}^{-2}\right)^{-1}\mtx{Y}^T\left(\mtx{V}\mtx{\Sigma}\vct{\alpha}-\vct{e}_\ell\right)\\
&\quad+\frac{2}{n}b_\ell\vct{1}_n^T\left(\mtx{I}+\frac{\beta}{\tau}\mtx{D}^{-2}\right)^{-1}\mtx{Y}^T\left(\mtx{V}\mtx{\Sigma}\vct{\alpha}-\vct{e}_\ell\right)+\frac{b_\ell^2}{n} \text{trace}\left(\left(\mtx{I}+\frac{\beta}{\tau}\mtx{D}^{-2}\right)^{-1}\right)\\
&\rP\text{ }\left(\mtx{V}\mtx{\Sigma}\vct{\alpha}-\vct{e}_\ell\right)^T\text{diag}\left(\frac{\pi_1\omega_1^2}{\omega_1^2+\frac{\beta}{\tau}},\frac{\pi_2\omega_2^2}{\omega_2^2+\frac{\beta}{\tau}},\ldots,\frac{\pi_k\omega_k^2}{\omega_k^2+\frac{\beta}{\tau}}\right)\left(\mtx{V}\mtx{\Sigma}\vct{\alpha}-\vct{e}_\ell\right)\\
&\quad+2b_\ell\begin{bmatrix}\frac{\pi_1\omega_1^2}{\omega_1^2+\frac{\beta}{\tau}}&\frac{\pi_2\omega_2^2}{\omega_2^2+\frac{\beta}{\tau}}&\ldots &\frac{\pi_k\omega_k^2}{\omega_k^2+\frac{\beta}{\tau}}\end{bmatrix}\left(\mtx{V}\mtx{\Sigma}\vct{\alpha}-\vct{e}_\ell\right)+b_\ell^2\left(\sum_{\ell=1}^k\frac{\pi_\ell\omega_\ell^2}{\omega_\ell^2+\frac{\beta}{\tau}}\right)\,.\\
\end{align*}
Next define 
\begin{align}\label{eq:Amatnew2}
\mtx{A}(\eta):=&\begin{bmatrix} \sigma^2\left(\vct{\pi}^T\vct{\nu}(\eta)\right)\mtx{I}+\Sigmab\Vb^T\text{diag}\left(\vct{\pi}\right)\text{diag}\left(\vct{\nu}(\eta)\right)\Vb\Sigmab & \Sigmab\Vb^T\text{diag}\left(\vct{\nu}(\eta)\right)\vct{\pi} \\ 
\vct{\pi}^T\text{diag}\left(\vct{\nu}(\eta)\right)\Vb\Sigmab & \vct{\pi}^T\vct{\nu}(\eta) \end{bmatrix}\nonumber\\
\cb_\ell :=& \begin{bmatrix} \Sigmab\Vb^T\eb_\ell \\ 1 \end{bmatrix}\,,
\end{align}
where
\begin{align}
\label{nueta}
\vct{\nu}(\eta)=\frac{1}{\gamma}\begin{bmatrix}\frac{\omega_1^2}{\omega_1^2+\eta}\\\frac{\omega_2^2}{\omega_2^2+\eta}\\\ldots \\\frac{\omega_k^2}{\omega_k^2+\eta}\end{bmatrix}\,.
\end{align}
We thus arrive at
\begin{align*}
&\min_{\vct{\alpha}}\text{ }\min_{ b_\ell}\text{ }\max_{\alpha_0\ge 0}\text{ }\max_{\beta\ge 0}\text{ }\min_{\tau\ge 0}\text{ }\text{ }\frac{\gamma\beta}{2\tau}\left(\pi_\ell\vct{\nu}_\ell\left(\frac{\beta}{\tau}\right) +  \begin{bmatrix} \alphab^T & b_\ell \end{bmatrix} \mtx{A}\left(\frac{\beta}{\tau}\right)  \begin{bmatrix} \alphab \\ b_\ell  \end{bmatrix} - 2 \pi_\ell\vct{\nu}_\ell\left(\frac{\beta}{\tau}\right) \cb_\ell^T \begin{bmatrix} \alphab \\ b_\ell  \end{bmatrix}\right)\\
&\quad\quad\quad\quad\quad\quad\quad\quad\quad\quad\quad\quad+\frac{\gamma\beta}{2\tau }\sigma^2\left(\vct{\pi}^T\vct{\nu}\left(\frac{\beta}{\tau}\right)\right)\alpha_0^2-\alpha_0\sigma\beta\sqrt{\gamma}+\frac{\beta\tau}{2}\,.
\end{align*}

\vp\noindent\textbf{Deterministic Analysis of the AO.}~
Setting the derivative of the above with respect to $\alpha_0$ to zero we arrive at
\begin{align*}
\frac{\gamma\beta}{\tau }\sigma^2\left(\vct{\pi}^T\vct{\nu}\left(\frac{\beta}{\tau}\right)\right)\alpha_0-\sigma\beta\sqrt{\gamma}=0\quad\Rightarrow\quad \alpha_0=\frac{\tau}{\sigma\sqrt{\gamma}\left(\vct{\pi}^T\vct{\nu}\left(\frac{\beta}{\tau}\right)\right)}\,.
\end{align*}
Note that the above objective has the form
\begin{align*}
f\left(\frac{\beta}{\tau}\right)-\alpha_0\sigma\beta\sqrt{\gamma}+\frac{\beta\tau}{2}
\end{align*}
with
\begin{align*}
f(\eta):=\frac{\eta\gamma}{2}\left(\pi_\ell\vct{\nu}_\ell(\eta) +  \begin{bmatrix} \alphab^T & b_\ell \end{bmatrix} \mtx{A}(\eta)  \begin{bmatrix} \alphab \\ b_\ell  \end{bmatrix} - 2 \pi_\ell\vct{\nu}_\ell(\eta) \cb_\ell^T \begin{bmatrix} \alphab \\ b_\ell  \end{bmatrix}\right)+\frac{\gamma\eta}{2 }\sigma^2\left(\vct{\pi}^T\vct{\nu}(\eta)\right)\alpha_0^2.
\end{align*}
Thus setting the derivatives with respect to $\beta$ and $\tau$ to zero, we have
\begin{align*}
\frac{1}{\tau}f'\left(\frac{\beta}{\tau}\right)-\alpha_0\sigma\sqrt{\gamma}+\frac{\tau}{2}=0\quad\Rightarrow\quad f'\left(\frac{\beta}{\tau}\right)-\alpha_0\sigma\sqrt{\gamma}\tau+\frac{\tau^2}{2}=0\quad\Rightarrow\quad f'\left(\frac{\beta}{\tau}\right)=\tau^2\left(\frac{1}{\vct{\pi}^T\vct{\nu}\left(\frac{\beta}{\tau}\right)}-\frac{1}{2}\right)
\end{align*}
and
\begin{align*}
-\frac{\beta}{\tau^2}f'\left(\frac{\beta}{\tau}\right)+\frac{\beta}{2}=0\quad\Rightarrow\quad \tau^2=2f'\left(\frac{\beta}{\tau}\right)\,.
\end{align*}
Combining the latter two we conclude that $\vct{\pi}^T\vct{\nu}\left(\frac{\beta}{\tau}\right)=1$.  Thus, $\eta=\frac{\beta}{\tau}$ is the solution to $\vct{\pi}^T\vct{\nu}\left(\eta\right)=1$. To calculate $\tau$ and hence $\alpha_0$ we calculate $f'$ which is equal to
\begin{align*}
f'(\eta)=&\frac{\gamma}{2}\left(\pi_\ell\vct{\nu}_\ell(\eta) +  \begin{bmatrix} \alphab^T & b_\ell \end{bmatrix} \mtx{A}(\eta)  \begin{bmatrix} \alphab \\ b_\ell  \end{bmatrix} - 2 \pi_\ell\vct{\nu}_\ell(\eta) \cb_\ell^T \begin{bmatrix} \alphab \\ b_\ell  \end{bmatrix}\right)+\frac{\gamma}{2 }\sigma^2\left(\vct{\pi}^T\vct{\nu}(\eta)\right)\alpha_0^2+\frac{\gamma\eta}{2 }\sigma^2\alpha_0^2(\vct{\pi}^T\vct{\nu}'(\eta))\\
&+\frac{\gamma\eta}{2}\left(\pi_\ell\vct{\nu}_\ell'(\eta) +  \begin{bmatrix} \alphab^T & b_\ell \end{bmatrix} \mtx{A}'(\eta)  \begin{bmatrix} \alphab \\ b_\ell  \end{bmatrix} - 2 \pi_\ell\vct{\nu}_\ell'(\eta) \cb_\ell^T \begin{bmatrix} \alphab \\ b_\ell  \end{bmatrix}\right)\,,
\end{align*}
where
\begin{align*}
\vct{\nu}'(\eta)=&-\frac{1}{\gamma}\begin{bmatrix}\frac{\omega_1^2}{(\omega_1^2+\eta)^2}\\\frac{\omega_2^2}{(\omega_2^2+\eta)^2}\\\ldots \\\frac{\omega_k^2}{(\omega_k^2+\eta)^2}\end{bmatrix}\\
 \mtx{A}'(\eta):=&\begin{bmatrix} \sigma^2\left(\vct{\pi}^T\vct{\nu}'(\eta)\right)\mtx{I}+\Sigmab\Vb^T\text{diag}\left(\vct{\pi}\right)\text{diag}\left(\vct{\nu}'(\eta)\right)\Vb\Sigmab & \Sigmab\Vb^T\text{diag}\left(\vct{\nu}'(\eta)\right)\vct{\pi} \\ 
\vct{\pi}^T\text{diag}\left(\vct{\nu}'(\eta)\right)\Vb\Sigmab & \vct{\pi}^T\vct{\nu}'(\eta) \end{bmatrix}\nonumber\\
\cb_\ell :=& \begin{bmatrix} \Sigmab\Vb^T\eb_\ell \\ 1 \end{bmatrix}\,.
\end{align*}
Now note that at the optimal point we have
\begin{align*}
f'(\eta)=\frac{\tau^2}{2}=\frac{\gamma}{2 }\sigma^2\left(\vct{\pi}^T\vct{\nu}(\eta)\right)^2\alpha_0^2\,.
\end{align*}
Thus from the above we can conclude that
\begin{align*}
\alpha_0^2=&-\frac{1}{\eta\sigma^2(\vct{\pi}^T\vct{\nu}'(\eta))}\left(\pi_\ell\vct{\nu}_\ell(\eta) +  \begin{bmatrix} \alphab^T & b_\ell \end{bmatrix} \mtx{A}(\eta)  \begin{bmatrix} \alphab \\ b_\ell  \end{bmatrix} - 2 \pi_\ell\vct{\nu}_\ell(\eta) \cb_\ell^T \begin{bmatrix} \alphab \\ b_\ell  \end{bmatrix}\right)\\
&-\frac{1}{\sigma^2(\vct{\pi}^T\vct{\nu}'(\eta))}\left(\pi_\ell\vct{\nu}_\ell'(\eta) +  \begin{bmatrix} \alphab^T & b_\ell \end{bmatrix} \mtx{A}'(\eta)  \begin{bmatrix} \alphab \\ b_\ell  \end{bmatrix} - 2 \pi_\ell\vct{\nu}_\ell'(\eta) \cb_\ell^T \begin{bmatrix} \alphab \\ b_\ell  \end{bmatrix}\right)\,.
\end{align*}
Thus the AO optimization problem reduces to
\begin{align*}
\min_{ b_\ell}\text{ }\min_{\vct{\alpha}}\text{ }\frac{\eta\gamma}{2}\left(\pi_\ell\vct{\nu}_\ell +  \begin{bmatrix} \alphab^T & b_\ell \end{bmatrix} \A  \begin{bmatrix} \alphab \\ b_\ell  \end{bmatrix} - 2 \pi_\ell\vct{\nu}_\ell \cb_\ell^T \begin{bmatrix} \alphab \\ b_\ell  \end{bmatrix}\right)\,,
\end{align*}
where $\eta$ is the solution to
\begin{align*}
\sum_{\ell=1}^k\frac{\pi_\ell\omega_\ell^2}{\omega_\ell^2+\eta}=\gamma\,,
\end{align*}
and
\begin{align}\label{eq:Amatnew22}
\A(\eta):=&\begin{bmatrix} \sigma^2\Iden_r+\Sigmab\Vb^T\text{diag}\left(\vct{\pi}\odot\vct{\nu}\right)\Vb\Sigmab & \Sigmab\Vb^T\left(\vct{\pi}\odot\vct{\nu}\right) \\ 
\left(\vct{\pi}\odot\vct{\nu}\right)^T\Vb\Sigmab & 1 \end{bmatrix}\nonumber\\
\cb_\ell :=& \begin{bmatrix} \Sigmab\Vb^T\eb_\ell \\ 1 \end{bmatrix}\,,
\end{align}
with
\begin{align*}
\vct{\nu}:=\frac{1}{\gamma}\begin{bmatrix}\frac{\omega_1^2}{\omega_1^2+\eta}\\\frac{\omega_2^2}{\omega_2^2+\eta}\\\ldots \\\frac{\omega_k^2}{\omega_k^2+\eta}\end{bmatrix}\,.
\end{align*}

First, note that the matrix $\A$ is positive definite. This can be checked by computing the Schur complement of $\A$:
\begin{align}\label{eq:Deltab_def2}
\Deltab:=\sigma^2\Iden_r+\Sigmab\Vb^T\Pb\Vb\Sigmab:=\sigma^2\Iden_r+\Sigmab\Vb^T\left(\diag{\pib\odot\vct{\nu}}-\left(\pib\odot\vct{\nu}\right)\left(\pib\odot\vct{\nu}\right)^T\right)\Vb\Sigmab \succ \zero_{r\times r}.
\end{align}
Positive definiteness above holds because $\Pb:=\left(\diag{\pib\odot\vct{\nu}}-\left(\pib\odot\vct{\nu}\right)\left(\pib\odot\vct{\nu}\right)^T\right)\succeq \zero_{k\times k}$. Thus the objective is a strictly convex quadratic and is jointly convex in its arguments. We proceed by minimizing the objective over $(\alphab,b_\ell)$ which is equal to
\bea
\begin{bmatrix} \widehat\alphab \\ \widehat b_\ell \end{bmatrix} = \pi_\ell\nu_\ell \A^{-1} \cb_\ell = \pi_\ell \nu_\ell\begin{bmatrix} -\Deltab^{-1}\Sigmab\Vb^T(\pib\odot\vct{\nu}-\eb_\ell) \\ 1+\left(\pib\odot\vct{\nu}\right)^T\Vb\Sigmab\Deltab^{-1}\Sigmab\Vb^T(\pib\odot\vct{\nu}-\eb_\ell) \end{bmatrix}\label{eq:alphab_1t}\,.
\end{align}
Thus, the minimum value attained is
\begin{align}
&-\pi_\ell^2\nu_\ell^2  \begin{bmatrix} -\left(\pib\odot\vct{\nu} - \eb_\ell\right)^T\Vb\Sigmab && 1 \end{bmatrix} \begin{bmatrix} \Deltab^{-1} & \zero \\ \zero^T & 1 \end{bmatrix}\begin{bmatrix} -\Sigmab\Vb^T \left(\pib\odot\vct{\nu} - \eb_\ell\right) \\ 1 \end{bmatrix} \\
&= - \pi_\ell^2\nu_\ell^2\left(1+\left(\pib\odot\vct{\nu} - \eb_\ell\right)^T\Vb\Sigmab \Deltab^{-1} \Sigmab\Vb^T \left(\pib\odot\vct{\nu} - \eb_\ell\right)\right).  \nn
\end{align}
Thus the objective reduces to
\begin{align}
\frac{\eta \gamma}{2}\left(\pi_\ell\nu_\ell\left(1-\pi_\ell\nu_\ell \right)- \pi_\ell^2\nu_\ell^2\left(\pib\odot\vct{\nu} - \eb_\ell\right)^T\Vb\Sigmab \Deltab^{-1} \Sigmab\Vb^T \left(\pib\odot\vct{\nu} - \eb_\ell\right)\right).
\label{eq:final}
\end{align}
Therefore,
\begin{align*}
\alpha_0^2=&-\frac{1}{\eta\sigma^2(\vct{\pi}^T\vct{\nu}'(\eta))}\left(\pi_\ell\nu_\ell \left(1-\pi_\ell\nu_\ell \right)- \pi_\ell^2\nu_\ell^2\left(\pib\odot\vct{\nu} - \eb_\ell\right)^T\Vb\Sigmab \Deltab^{-1} \Sigmab\Vb^T \left(\pib\odot\vct{\nu} - \eb_\ell\right)\right)\\
&-\frac{1}{\sigma^2(\vct{\pi}^T\vct{\nu}'(\eta))}\left(\pi_\ell\vct{\nu}_\ell'(\eta) +  \begin{bmatrix} \alphab^T & b_\ell \end{bmatrix} \mtx{A}'(\eta)  \begin{bmatrix} \alphab \\ b_\ell  \end{bmatrix} - 2 \pi_\ell\vct{\nu}_\ell'(\eta) \cb_\ell^T \begin{bmatrix} \alphab \\ b_\ell  \end{bmatrix}\right)\,,
\end{align*}
where
\begin{align}
\label{alphabetasim}
\begin{bmatrix} \widehat\alphab \\ \widehat b_\ell \end{bmatrix}  = \pi_\ell \nu_\ell\begin{bmatrix} -\Deltab^{-1}\Sigmab\Vb^T(\pib\odot\vct{\nu}-\eb_\ell) \\ 1+\left(\pib\odot\vct{\nu}\right)^T\Vb\Sigmab\Deltab^{-1}\Sigmab\Vb^T(\pib\odot\vct{\nu}-\eb_\ell) \end{bmatrix}
\end{align}
and
\begin{align*}
\vct{\nu}'(\eta)=&-\frac{1}{\gamma}\begin{bmatrix}\frac{\omega_1^2}{(\omega_1^2+\eta)^2}\\\frac{\omega_2^2}{(\omega_2^2+\eta)^2}\\\ldots \\\frac{\omega_k^2}{(\omega_k^2+\eta)^2}\end{bmatrix}\\
\mtx{A}'(\eta):=&\begin{bmatrix} \sigma^2\left(\vct{\pi}^T\vct{\nu}'(\eta)\right)\mtx{I}+\Sigmab\Vb^T\text{diag}\left(\vct{\pi}\right)\text{diag}\left(\vct{\nu}'(\eta)\right)\Vb\Sigmab & \Sigmab\Vb^T\text{diag}\left(\vct{\nu}'(\eta)\right)\vct{\pi} \\ 
\vct{\pi}^T\text{diag}\left(\vct{\nu}'(\eta)\right)\Vb\Sigmab & \vct{\pi}^T\vct{\nu}'(\eta) \end{bmatrix}\,.
\end{align*}

To continue note that
\begin{align*}
&\pi_\ell\vct{\nu}_\ell'(\eta) +  \begin{bmatrix} \alphab^T & b_\ell \end{bmatrix} \mtx{A}'(\eta)  \begin{bmatrix} \alphab \\ b_\ell  \end{bmatrix} - 2 \pi_\ell\vct{\nu}_\ell'(\eta) \cb_\ell^T \begin{bmatrix} \alphab \\ b_\ell  \end{bmatrix}\\
&\quad\quad\quad\quad\quad\quad\quad\quad=\pi_\ell\vct{\nu}_\ell'(\eta) +\pi_\ell^2 \nu_\ell^2\vct{c}_\ell^T\mtx{A}^{-1}\mtx{A}'(\eta)\mtx{A}^{-1}\vct{c}_\ell-2\pi_\ell^2\nu_\ell'\nu_\ell\vct{c}_\ell^T\mtx{A}^{-1}\vct{c}_\ell\,.
\end{align*}
Thus,
\begin{align*}
\alpha_0^2=&-\frac{1}{\eta\sigma^2(\vct{\pi}^T\vct{\nu}'(\eta))}\left(\pi_\ell\nu_\ell \left(1-\pi_\ell\nu_\ell \right)- \pi_\ell^2\nu_\ell^2\left(\pib\odot\vct{\nu} - \eb_\ell\right)^T\Vb\Sigmab \Deltab^{-1} \Sigmab\Vb^T \left(\pib\odot\vct{\nu} - \eb_\ell\right)\right)\\
&-\frac{1}{\sigma^2(\vct{\pi}^T\vct{\nu}'(\eta))}\left(\pi_\ell\vct{\nu}_\ell'(\eta) +\pi_\ell^2 \nu_\ell^2\vct{c}_\ell^T\mtx{A}^{-1}\mtx{A}'(\eta)\mtx{A}^{-1}\vct{c}_\ell-2\pi_\ell^2\nu_\ell'\nu_\ell\vct{c}_\ell^T\mtx{A}^{-1}\vct{c}_\ell\right)\\
=&\frac{\gamma}{\eta\sigma^2\left(\sum_{\ell=1}^k\frac{\pi_\ell\omega_\ell^2}{(\omega_\ell^2+\eta)^2}\right)}\left(\pi_\ell\nu_\ell \left(1-\pi_\ell\nu_\ell \right)- \pi_\ell^2\nu_\ell^2\left(\pib\odot\vct{\nu} - \eb_\ell\right)^T\Vb\Sigmab \Deltab^{-1} \Sigmab\Vb^T \left(\pib\odot\vct{\nu} - \eb_\ell\right)\right)\\
&+\frac{\gamma}{\sigma^2\left(\sum_{\ell=1}^k\frac{\pi_\ell\omega_\ell^2}{(\omega_\ell^2+\eta)^2}\right)}\left(\pi_\ell\vct{\nu}_\ell'(\eta) +\pi_\ell^2 \nu_\ell^2\vct{c}_\ell^T\mtx{A}^{-1}\mtx{A}'(\eta)\mtx{A}^{-1}\vct{c}_\ell-2\pi_\ell^2\nu_\ell'\nu_\ell\vct{c}_\ell^T\mtx{A}^{-1}\vct{c}_\ell\right)\\
:=&\frac{\zeta}{\sigma^2}\left(\pi_\ell\nu_\ell \left(1-\pi_\ell\nu_\ell \right)- \pi_\ell^2\nu_\ell^2\left(\pib\odot\vct{\nu} - \eb_\ell\right)^T\Vb\Sigmab \Deltab^{-1} \Sigmab\Vb^T \left(\pib\odot\vct{\nu} - \eb_\ell\right)\right)\\
&+\frac{\zeta\eta}{\sigma^2}\left(\pi_\ell\vct{\nu}_\ell'(\eta) +\pi_\ell^2 \nu_\ell^2\vct{c}_\ell^T\mtx{A}^{-1}\mtx{A}'(\eta)\mtx{A}^{-1}\vct{c}_\ell-2\pi_\ell^2\nu_\ell'\nu_\ell\vct{c}_\ell^T\mtx{A}^{-1}\vct{c}_\ell\right)%
\end{align*}
where $\zeta:=\frac{\gamma}{\eta\left(\sum_{\ell=1}^k\frac{\pi_\ell\omega_\ell^2}{(\omega_\ell^2+\eta)^2}\right)}$.

\vp
\noindent\textbf{Asymptotic predictions.}~~First, from \eqref{eq:alphab_1t} the bias term converges as follows:
$$
\widehat{b}_\ell \rP \pi_\ell\nu_\ell\left(1+\left(\pib\odot\vct{\nu}\right)^T\Vb\Sigmab\Deltab^{-1}\Sigmab\Vb^T(\pib\odot\vct{\nu}-\eb_\ell)\right).
$$
Thus,
\begin{align*}
\widehat{\vct{b}}\rP\left(\mtx{I}_k-\mtx{P}\Vb\Sigmab\Deltab^{-1}\Sigmab\Vb^T \right)\left(\pib\odot \vct{\nu}\right)\,.
\end{align*}
 Recall that $\alphab = \Ub^T\w_\ell$. Thus, the correlations $\inp{\mub_i}{\w_\ell},~i\in[k]$ converge as follows:
\begin{align}
\M^T\w_\ell = \Vb\Sigmab\Ub^T\w_\ell \rP \Vb\Sigmab\widehat\alphab = -\pi_\ell\nu_\ell\Vb\Sigmab\Deltab^{-1}\Sigmab\Vb^T\left(\pib\odot\vct{\nu}-\eb_\ell\right).\quad
\end{align}
Here, convergence applies element-wise to the entries of the involved random vectors. Moreover, from the analysis above we can predict the limit of the norm $\twonorm{\w_\ell}$. For this, note that $\twonorm{\w_\ell}^2 = \widehat{\alpha}_0^2 + \widehat{\alphab}^T\widehat{\alphab}$. Thus,
\begin{align}\label{eq:norm_limW}
\twonorm{\w_\ell}^2 \rP &\frac{\zeta}{\sigma^2}\pi_\ell\nu_\ell(1-\pi_\ell\nu_\ell)  + \pi_\ell^2\nu_\ell^2 \left(\pib\odot\vct{\nu} - \eb_\ell\right)^T\Vb\Sigmab \Deltab^{-1} \left(\Deltab^{-1} - \frac{\zeta}{\sigma^2}\Iden_{r}\right) \Sigmab\Vb^T \left(\pib\odot\vct{\nu} - \eb_\ell\right)\nonumber\\
&+\frac{\eta\zeta}{\sigma^2}\left(\pi_\ell\vct{\nu}_\ell'(\eta) +\pi_\ell^2 \nu_\ell^2\vct{c}_\ell^T\mtx{A}^{-1}\mtx{A}'\mtx{A}^{-1}\vct{c}_\ell-2\pi_\ell^2\nu_\ell'\nu_\ell\vct{c}_\ell^T\mtx{A}^{-1}\vct{c}_\ell\right) \,.
\end{align}

\subsection{Computing $\Sigma_{w,w}$}\label{sec:cross_WLS_GM}
In the previous section we used the CGMT to predict the bias $\widehat{b}_\ell$, the correlations $\inp{\mub_i}{\wh_\ell},~i[k]$ and the norm $\twonorm{\wh_\ell}$ for all $\ell\in[k]$ members of the multi-output classifier. Here, we show how to compute the limits of the cross-correlations $\inp{\wh_\ell}{\wh_j}, \ell\neq j\in [k]$.

\begin{lemma}\label{lem:joint_WLS}
For $\ell\neq j\in[k]$ let $\wh_\ell$ $\wh_j$ be solutions to the least-squares minimization \eqref{eq:WLS_def}, i.e.,
\begin{align*}
&(\wh_\ell,\wh_j,\widehat{b}_\ell,\widehat{b}_j) \\
&\quad\quad\quad=  \arg\min_{\w_\ell, \w_j, b_\ell, b_j} \left\{ \frac{1}{2n}\twonorm{\mtx{D}\left(\Y_{\ell} - \X^T \w_\ell - b_\ell\one_n\right) }^2 + \frac{1}{2n}\twonorm{\mtx{D}\left(\Y_{j} - \X^T \w_j - b_j\one_n\right) }^2\right\}.
\end{align*}
 Denote $\wh_{\ell,j}:=\wh_\ell+\wh_j$ and $\widehat{b}_{\ell,j}:=\widehat{b}_\ell+\widehat{b}_j$. Then, $(\wh_{\ell,j},\widehat{b}_{\ell,j})$ is a minimizer in the following least-squares problem:
\begin{align}\label{eq:joint_WLS}
(\wh_{\ell,j},\widehat{b}_{\ell,j}) = \arg\min_{\w, b} \frac{1}{2n}\twonorm{\mtx{D}\left(\Y_{\ell} + \Y_{j} - \X^T \w - b\one_n\right) }^2
\end{align}
\end{lemma}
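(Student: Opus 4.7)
The plan is to mimic the proof of Lemma \ref{lem:joint_LS} in the unweighted case, exploiting the fact that the weight matrix $\mtx{D}$ appearing in the WLS objective depends only on the training labels $Y_i$ (i.e., on the row index $i\in[n]$) and is therefore the \emph{same} across the output indices $\ell$ and $j$. This is the single structural observation that makes the argument work.

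First, I will note that the objective in \eqref{eq:joint_WLS} is a convex quadratic in $(\w,b)$ (the Hessian is $\tfrac{1}{n}\X\mtx{D}^2\X^T$ on the $\w$-block and $\tfrac{1}{n}\one_n^T\mtx{D}^2\one_n$ on the $b$-block, both positive semidefinite, with a PSD coupling block), so the KKT stationarity conditions are both necessary and sufficient for optimality. Hence it suffices to verify that $(\wh_{\ell,j},\widehat{b}_{\ell,j}):=(\wh_\ell+\wh_j,\widehat{b}_\ell+\widehat{b}_j)$ satisfies the normal equations of \eqref{eq:joint_WLS}.

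Second, writing the first-order optimality conditions for the two separate single-output WLS problems (with respect to $\w_\ell$ and $b_\ell$, and then $\w_j$ and $b_j$) gives
\begin{align*}
\X\mtx{D}^2\bigl(\Y_\ell - \X^T\wh_\ell - \widehat{b}_\ell\one_n\bigr) &= \zero, &\one_n^T\mtx{D}^2\bigl(\Y_\ell - \X^T\wh_\ell - \widehat{b}_\ell\one_n\bigr) &= 0, \\
\X\mtx{D}^2\bigl(\Y_j - \X^T\wh_j - \widehat{b}_j\one_n\bigr) &= \zero, &\one_n^T\mtx{D}^2\bigl(\Y_j - \X^T\wh_j - \widehat{b}_j\one_n\bigr) &= 0.
\end{align*}
Here $\mtx{D}$ is identical in both systems because it only depends on the class labels of the training samples, not on the output index. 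Summing the two matrix equations (and the two scalar equations) and using linearity gives
\begin{align*}
\X\mtx{D}^2\bigl((\Y_\ell+\Y_j) - \X^T(\wh_\ell+\wh_j) - (\widehat{b}_\ell+\widehat{b}_j)\one_n\bigr) &= \zero, \\
\one_n^T\mtx{D}^2\bigl((\Y_\ell+\Y_j) - \X^T(\wh_\ell+\wh_j) - (\widehat{b}_\ell+\widehat{b}_j)\one_n\bigr) &= 0,
\end{align*}
which are precisely the KKT conditions of \eqref{eq:joint_WLS} evaluated at $(\wh_{\ell,j},\widehat{b}_{\ell,j})$. Combined with convexity, this completes the proof.

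Honestly, there is no real obstacle here: the only thing one has to be slightly careful about is not to confuse the role of $\mtx{D}$ (a row-wise reweighting indexed by training samples) with some per-class operation; once this is pinned down, the proof reduces to the same two-line linearity/KKT argument used in Lemma \ref{lem:joint_LS}. I would keep the exposition very short and explicitly cross-reference that lemma so the reader sees that the weighting matrix passes through the argument unchanged.
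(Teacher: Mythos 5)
Your proof is correct and follows essentially the same route as the paper's: convexity of \eqref{eq:joint_WLS} plus adding the normal equations of the two single-output WLS problems, using that $\mtx{D}$ is common to both. If anything, you are slightly more complete than the paper, since you also verify the stationarity condition in the intercept $b$ (the $\one_n^T\mtx{D}^2(\cdot)=0$ equation), which the paper's argument leaves implicit.
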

\begin{proof}
Clearly the minimization in \eqref{eq:joint_WLS} is convex. Thus, it suffices to prove that $\wh_\ell+\wh_j$ satisfies the KKT conditions. First, by optimality of $\wh_\ell$, we have that 
$$
\X\mtx{D}^2\left(\Y_\ell-\X^T\wh_\ell-\widehat{b}_\ell\one_n\right) = 0
$$
Similarly, for $\wh_j$:
$$
\X\mtx{D}^2\left(\Y_j-\X^T\wh_j-\widehat{b}_j\one_n\right) = 0.
$$
Adding the equations on the above displays we find that
$$
\X\mtx{D}^2\left(\Y_\ell+\Y_j-\X^T(\wh_j+\wh_\ell)-(\widehat{b}_j+\widehat{b}_\ell)\one_n\right) = 0.
$$
Recognize that this coincides with the optimality condition for \eqref{eq:joint_WLS}. Thus, the proof is complete.
\end{proof}

Thanks to Lemma \ref{lem:joint_WLS}, we can use the CGMT to characterize the limiting behavior of $\twonorm{\wh_\ell+\wh_j}$. Observe that this immediately gives the limit of $\inp{\wh_\ell}{\wh_j}$ since 
\begin{align}\label{eq:inp_from_normsW}
\inp{\wh_\ell}{\wh_j} = \frac{\twonorm{\wh_\ell+\wh_j}^2-\twonorm{\wh_\ell}^2-\twonorm{\wh_j}^2}{2}.
\end{align} 

The analysis of \eqref{eq:joint_WLS} is very similar to that of \eqref{eq:LS_PO}. In particular we use the following decomposition
$$
\w_{\ell,j} = \sum_{i=1}^{r}\beta_i \ub_i + \beta_0\w_{\ell,j}^\perp, 
$$
with $\twonorm{\w_{\ell,j}^\perp}=1$ and $\Ub^T\w_{\ell,j}^\perp=\zero_r$. This allows us to arrive at
\begin{align*}
&\min_{\vct{\beta}}\text{ }\min_{ \vct{b}_{\ell,j}}\text{ }\max_{\alpha_0\ge 0}\text{ }\max_{\beta\ge 0}\text{ }\min_{\tau\ge 0}\text{ }\text{ }\\
&\quad\frac{\gamma\beta}{2\tau}\left(\pi_\ell\vct{\nu}_\ell\left(\frac{\beta}{\tau}\right)+\pi_j\vct{\nu}_j\left(\frac{\beta}{\tau}\right) +  \begin{bmatrix} \vct{\beta}^T & b_{\ell,j} \end{bmatrix} \mtx{A}\left(\frac{\beta}{\tau}\right)  \begin{bmatrix} \vct{\beta} \\ b_{\ell,j}  \end{bmatrix} - 2 \left(\pi_\ell\vct{\nu}_\ell\left(\frac{\beta}{\tau}\right)\vct{c}_\ell+\pi_j\vct{\nu}_j\left(\frac{\beta}{\tau}\right)\vct{c}_j \right)^T \begin{bmatrix} \vct{\beta} \\ b_{\ell,j}  \end{bmatrix}\right)\\
&\quad\quad\quad\quad\quad\quad\quad\quad\quad\quad\quad\quad+\frac{\gamma\beta}{2\tau }\sigma^2\left(\vct{\pi}^T\vct{\nu}\left(\frac{\beta}{\tau}\right)\right)\beta_0^2-\beta_0\sigma\beta\sqrt{\gamma}+\frac{\beta\tau}{2}\,,
\end{align*}
where $\A(\eta)$ and $\vct{c}_\ell$ are as in \eqref{eq:Amatnew2} and $\vct{\nu}(\eta)$ is as in \eqref{nueta}. 

Setting the derivative of the above with respect to $\alpha_0$ to zero we arrive at
\begin{align*}
\frac{\gamma\beta}{\tau }\sigma^2\left(\vct{\pi}^T\vct{\nu}\left(\frac{\beta}{\tau}\right)\right)\beta_0-\sigma\beta\sqrt{\gamma}=0\quad\Rightarrow\quad \beta_0=\frac{\tau}{\sigma\sqrt{\gamma}\left(\vct{\pi}^T\vct{\nu}\left(\frac{\beta}{\tau}\right)\right)}\,.
\end{align*}
Note that the above objective has the form
\begin{align*}
g\left(\frac{\beta}{\tau}\right)-\beta_0\sigma\beta\sqrt{\gamma}+\frac{\beta\tau}{2}\,,
\end{align*}
with
\begin{align*}
g(\eta):=&\frac{\eta\gamma}{2}\left(\pi_\ell\vct{\nu}_\ell\left(\eta\right)+\pi_j\vct{\nu}_j\left(\eta\right) +  \begin{bmatrix} \vct{\beta}^T & b_{\ell,j} \end{bmatrix} \mtx{A}\left(\eta\right)  \begin{bmatrix} \vct{\beta} \\ b_{\ell,j}  \end{bmatrix} - 2 \left(\pi_\ell\vct{\nu}_\ell\left(\eta\right)\vct{c}_\ell+\pi_j\vct{\nu}_j\left(\eta\right)\vct{c}_j \right)^T \begin{bmatrix} \vct{\beta} \\ b_{\ell,j}  \end{bmatrix}\right)\\
&+\frac{\gamma\eta}{2 }\sigma^2\left(\vct{\pi}^T\vct{\nu}(\eta)\right)\beta_0^2.
\end{align*}
Thus, the derivatives with respect to $\beta$ and $\tau$ to zero we have
\begin{align*}
\frac{1}{\tau}g'\left(\frac{\beta}{\tau}\right)-\beta_0\sigma\sqrt{\gamma}+\frac{\tau}{2}=0\quad\Rightarrow\quad g'\left(\frac{\beta}{\tau}\right)-\beta_0\sigma\sqrt{\gamma}\tau+\frac{\tau^2}{2}=0\quad\Rightarrow\quad g'\left(\frac{\beta}{\tau}\right)=\tau^2\left(\frac{1}{\vct{\pi}^T\vct{\nu}\left(\frac{\beta}{\tau}\right)}-\frac{1}{2}\right)
\end{align*}
and
\begin{align*}
-\frac{\beta}{\tau^2}g'\left(\frac{\beta}{\tau}\right)+\frac{\beta}{2}=0\quad\Rightarrow\quad \tau^2=2g'\left(\frac{\beta}{\tau}\right)\,.
\end{align*}
Combining the latter two we conclude that $\vct{\pi}^T\vct{\nu}\left(\frac{\beta}{\tau}\right)=1$.  Thus, $\eta=\frac{\beta}{\tau}$ is the solution to $\vct{\pi}^T\vct{\nu}\left(\eta\right)=1$. To calculate $\tau$ and hence $\beta_0$ we calculate $g'$ which is equal to
\begin{align*}
g'(\eta)=&\frac{\gamma}{2}\left(\pi_\ell\vct{\nu}_\ell\left(\eta\right)+\pi_j\vct{\nu}_j\left(\eta\right) +  \begin{bmatrix} \vct{\beta}^T & b_{\ell,j} \end{bmatrix} \mtx{A}\left(\eta\right)  \begin{bmatrix} \vct{\beta} \\ b_{\ell,j}  \end{bmatrix} - 2 \left(\pi_\ell\vct{\nu}_\ell\left(\eta\right)\vct{c}_\ell+\pi_j\vct{\nu}_j\left(\eta\right)\vct{c}_j \right)^T \begin{bmatrix} \vct{\beta} \\ b_{\ell,j}  \end{bmatrix}\right)\\
&+\frac{\gamma}{2 }\sigma^2\left(\vct{\pi}^T\vct{\nu}(\eta)\right)\beta_0^2+\frac{\gamma\eta}{2 }\sigma^2\beta_0^2(\vct{\pi}^T\vct{\nu}'(\eta))\\
&+\frac{\gamma\eta}{2}\left(\pi_\ell\vct{\nu}_\ell'(\eta) +\pi_j\vct{\nu}_j'\left(\eta\right) +  \begin{bmatrix} \vct{\beta}^T & b_{\ell,j} \end{bmatrix} \mtx{A}'\left(\eta\right)  \begin{bmatrix} \vct{\beta} \\ b_{\ell,j}  \end{bmatrix} - 2 \left(\pi_\ell\vct{\nu}_\ell'\left(\eta\right)\vct{c}_\ell+\pi_j\vct{\nu}_j'\left(\eta\right)\vct{c}_j \right)^T \begin{bmatrix} \vct{\beta} \\ b_{\ell,j}  \end{bmatrix}\right)\,.
\end{align*}
Here, we have
\begin{align*}
\vct{\nu}'(\eta)=&-\frac{1}{\gamma}\begin{bmatrix}\frac{\omega_1^2}{(\omega_1^2+\eta)^2}\\\frac{\omega_2^2}{(\omega_2^2+\eta)^2}\\\ldots \\\frac{\omega_k^2}{(\omega_k^2+\eta)^2}\end{bmatrix}\\
\mtx{A}'(\eta):=&\begin{bmatrix} \sigma^2\left(\vct{\pi}^T\vct{\nu}'(\eta)\right)\mtx{I}+\Sigmab\Vb^T\text{diag}\left(\vct{\pi}\right)\text{diag}\left(\vct{\nu}'(\eta)\right)\Vb\Sigmab & \Sigmab\Vb^T\text{diag}\left(\vct{\nu}'(\eta)\right)\vct{\pi} \\ 
\vct{\pi}^T\text{diag}\left(\vct{\nu}'(\eta)\right)\Vb\Sigmab & \vct{\pi}^T\vct{\nu}'(\eta) \end{bmatrix}\nonumber\\
\cb_\ell :=& \begin{bmatrix} \Sigmab\Vb^T\eb_\ell \\ 1 \end{bmatrix}\,.
\end{align*}
Now note that at the optimal point we have
\begin{align*}
g'(\eta)=\frac{\tau^2}{2}=\frac{\gamma}{2 }\sigma^2\left(\vct{\pi}^T\vct{\nu}(\eta)\right)\beta_0^2\,.
\end{align*}
Thus from the above we can conclude that
\begin{align*}
\beta_0^2=&-\frac{1}{\eta\sigma^2(\vct{\pi}^T\vct{\nu}'(\eta))}\left(\pi_\ell\vct{\nu}_\ell\left(\eta\right)+\pi_j\vct{\nu}_j\left(\eta\right) +  \begin{bmatrix} \vct{\beta}^T & b_{\ell,j} \end{bmatrix} \mtx{A}\left(\eta\right)  \begin{bmatrix} \vct{\beta} \\ b_{\ell,j}  \end{bmatrix} - 2 \left(\pi_\ell\vct{\nu}_\ell\left(\eta\right)\vct{c}_\ell+\pi_j\vct{\nu}_j\left(\eta\right)\vct{c}_j \right)^T \begin{bmatrix} \vct{\beta} \\ b_{\ell,j}  \end{bmatrix}\right)\\
&-\frac{1}{\sigma^2(\vct{\pi}^T\vct{\nu}'(\eta))}\left(\pi_\ell\vct{\nu}_\ell'(\eta) +\pi_j\vct{\nu}_j'\left(\eta\right) +  \begin{bmatrix} \vct{\beta}^T & b_{\ell,j} \end{bmatrix} \mtx{A}'\left(\eta\right)  \begin{bmatrix} \vct{\beta} \\ b_{\ell,j}  \end{bmatrix} - 2 \left(\pi_\ell\vct{\nu}_\ell'\left(\eta\right)\vct{c}_\ell+\pi_j\vct{\nu}_j'\left(\eta\right)\vct{c}_j \right)^T \begin{bmatrix} \vct{\beta} \\ b_{\ell,j}  \end{bmatrix}\right)\,.
\end{align*}
Thus, the AO problem reduces to
\begin{align*}
\min_{ \vct{b}_{\ell,j}}\text{ }\min_{\vct{\beta}}\text{ }\frac{\eta\gamma}{2}\left(\pi_\ell\vct{\nu}_\ell+\pi_j\vct{\nu}_j +  \begin{bmatrix} \vct{\beta}^T & b_{\ell,j} \end{bmatrix} \mtx{A}  \begin{bmatrix} \vct{\beta} \\ b_{\ell,j}  \end{bmatrix} - 2 \left(\pi_\ell\vct{\nu}_\ell\vct{c}_\ell+\pi_j\vct{\nu}_j\vct{c}_j \right)^T \begin{bmatrix} \vct{\beta} \\ b_{\ell,j}  \end{bmatrix}\right)\,,
\end{align*}
where $\eta$ is the solution to
\begin{align*}
\sum_{\ell=1}^k\frac{\pi_\ell\omega_\ell^2}{\omega_\ell^2+\eta}=\gamma\,.
\end{align*}
Thus,  similar to \eqref{alphabetasim} we can compute the minimizer of the deterministic 
\bea
\begin{bmatrix} \widehat\betab \\ \widehat b_{\ell,j} \end{bmatrix} =& \begin{bmatrix} - \Deltab^{-1}\Sigmab\Vb^T\left(\pi_\ell\nu_\ell(\pib\odot\vct{\nu}-\eb_\ell) + \pi_j\nu_j (\pib\odot\vct{\nu}-\eb_j)\right) \\ \pi_\ell \nu_\ell+ \pi_j\nu_j +\left(\pib\odot\vct{\nu}\right)^T\Vb\Sigmab\Deltab^{-1}\Sigmab\Vb^T\left(\pi_\ell\nu_\ell(\pib\odot\vct{\nu}-\eb_\ell) +\pi_j\nu_j(\pib\odot\vct{\nu}-\eb_j) \right) \end{bmatrix}\nn\\
=& \mtx{A}^{-1}\left(\pi_\ell\vct{\nu}_\ell\vct{c}_\ell+\pi_j\vct{\nu}_j\vct{c}_j \right)
\end{align}
and
\begin{align*}
\vct{\nu}'(\eta)=&-\frac{1}{\gamma}\begin{bmatrix}\frac{\omega_1^2}{(\omega_1^2+\eta)^2}\\\frac{\omega_2^2}{(\omega_2^2+\eta)^2}\\\ldots \\\frac{\omega_k^2}{(\omega_k^2+\eta)^2}\end{bmatrix}\\
\mtx{A}'(\eta):=&\begin{bmatrix} \sigma^2\left(\vct{\pi}^T\vct{\nu}'(\eta)\right)\mtx{I}+\Sigmab\Vb^T\text{diag}\left(\vct{\pi}\right)\text{diag}\left(\vct{\nu}'(\eta)\right)\Vb\Sigmab & \Sigmab\Vb^T\text{diag}\left(\vct{\nu}'(\eta)\right)\vct{\pi} \\ 
\vct{\pi}^T\text{diag}\left(\vct{\nu}'(\eta)\right)\Vb\Sigmab & \vct{\pi}^T\vct{\nu}'(\eta) \end{bmatrix}\,.
\end{align*}

To continue note that
\begin{align*}
&\pi_\ell\vct{\nu}_\ell'(\eta) +\pi_j\vct{\nu}_j'\left(\eta\right) +  \begin{bmatrix} \vct{\beta}^T & b_{\ell,j} \end{bmatrix} \mtx{A}'\left(\eta\right)  \begin{bmatrix} \vct{\beta} \\ b_{\ell,j}  \end{bmatrix} - 2 \left(\pi_\ell\vct{\nu}_\ell'\left(\eta\right)\vct{c}_\ell+\pi_j\vct{\nu}_j'\left(\eta\right)\vct{c}_j \right)^T \begin{bmatrix} \vct{\beta} \\ b_{\ell,j}  \end{bmatrix}\\
&\quad\quad=\pi_\ell\vct{\nu}_\ell'(\eta)+\pi_j\vct{\nu}_j'\left(\eta\right) +\left(\pi_\ell\vct{\nu}_\ell'\left(\eta\right)\vct{c}_\ell+\pi_j\vct{\nu}_j'\left(\eta\right)\vct{c}_j \right)^T\mtx{A}^{-1}\mtx{A}'(\eta)\mtx{A}^{-1}\left(\pi_\ell\vct{\nu}_\ell'\left(\eta\right)\vct{c}_\ell+\pi_j\vct{\nu}_j'\left(\eta\right)\vct{c}_j \right)\\
&\quad\quad\quad-2\left(\pi_\ell\vct{\nu}_\ell'\left(\eta\right)\vct{c}_\ell+\pi_j\vct{\nu}_j'\left(\eta\right)\vct{c}_j \right)^T\mtx{A}^{-1}\left(\pi_\ell\vct{\nu}_\ell'\left(\eta\right)\vct{c}_\ell+\pi_j\vct{\nu}_j'\left(\eta\right)\vct{c}_j \right)\,.
\end{align*}
Thus,
\begin{align*}
\beta_0^2=&-\frac{1}{\eta\sigma^2(\vct{\pi}^T\vct{\nu}'(\eta))}\left(\pi_\ell\vct{\nu}_\ell\left(\eta\right)+\pi_j\vct{\nu}_j\left(\eta\right) +  \begin{bmatrix} \vct{\beta}^T & b_{\ell,j} \end{bmatrix} \mtx{A}\left(\eta\right)  \begin{bmatrix} \vct{\beta} \\ b_{\ell,j}  \end{bmatrix} - 2 \left(\pi_\ell\vct{\nu}_\ell\left(\eta\right)\vct{c}_\ell+\pi_j\vct{\nu}_j\left(\eta\right)\vct{c}_j \right)^T \begin{bmatrix} \vct{\beta} \\ b_{\ell,j}  \end{bmatrix}\right)\\
&-\frac{1}{\sigma^2(\vct{\pi}^T\vct{\nu}'(\eta))}\left(\pi_\ell\vct{\nu}_\ell'(\eta) +\pi_j\vct{\nu}_j'\left(\eta\right) +  \begin{bmatrix} \vct{\beta}^T & b_{\ell,j} \end{bmatrix} \mtx{A}'\left(\eta\right)  \begin{bmatrix} \vct{\beta} \\ b_{\ell,j}  \end{bmatrix} - 2 \left(\pi_\ell\vct{\nu}_\ell'\left(\eta\right)\vct{c}_\ell+\pi_j\vct{\nu}_j'\left(\eta\right)\vct{c}_j \right)^T \begin{bmatrix} \vct{\beta} \\ b_{\ell,j}  \end{bmatrix}\right)\\
=&\frac{\zeta}{\sigma^2}\Bigg(\pi_\ell\nu_\ell+\pi_j\nu_j - \left(\pi_\ell\nu_\ell+\pi_j\nu_j\right)^2 \\
&\quad\quad- \left(\pi_\ell\nu_\ell\left(\pib\odot\vct{\nu} - \eb_\ell\right)+\pi_j\nu_j\left(\pib\odot\vct{\nu} - \eb_j\right)\right)^T\Vb\Sigmab \Deltab^{-1} \Sigmab\Vb^T \left(\pi_\ell\nu_\ell\left(\pib\odot\vct{\nu} - \eb_\ell\right)+\pi_j\nu_j\left(\pib\odot\vct{\nu} - \eb_j\right)\right)\Bigg)\\
&+\frac{\zeta\eta}{\sigma^2}
\Bigg(
\pi_\ell\vct{\nu}_\ell'(\eta)+\pi_j\vct{\nu}_j'\left(\eta\right) +\left(\pi_\ell\vct{\nu}_\ell\left(\eta\right)\vct{c}_\ell+\pi_j\vct{\nu}_j\left(\eta\right)\vct{c}_j \right)^T\mtx{A}^{-1}\mtx{A}'(\eta)\mtx{A}^{-1}\left(\pi_\ell\vct{\nu}_\ell\left(\eta\right)\vct{c}_\ell+\pi_j\vct{\nu}_j\left(\eta\right)\vct{c}_j \right)\\
&\quad\quad\quad-2\left(\pi_\ell\vct{\nu}_\ell'\left(\eta\right)\vct{c}_\ell+\pi_j\vct{\nu}_j'\left(\eta\right)\vct{c}_j \right)^T\mtx{A}^{-1}\left(\pi_\ell\vct{\nu}_\ell\left(\eta\right)\vct{c}_\ell+\pi_j\vct{\nu}_j\left(\eta\right)\vct{c}_j \right)
\Bigg)\,,%
\end{align*}
where $\zeta:=\frac{\gamma}{\eta\left(\sum_{\ell=1}^k\frac{\pi_\ell\omega_\ell^2}{(\omega_\ell^2+\eta)^2}\right)}$.
From the CGMT, we have that 
$\twonorm{\wh_{\ell}+\wh_{j}}^2 \rP \widehat\betab_0^2+\twonorm{\betab}^2.$ 
Combining this with the calculations above, we conclude that
\begin{align}\label{eq:joint_norm_limW}
&\twonorm{\wh_{\ell}+\wh_{j}}^2\rP \frac{\zeta}{\sigma^2}\left(\pi_\ell\nu_\ell+\pi_j\nu_j\right)\left(1-\pi_\ell\nu_\ell-\pi_j\nu_j\right)\nn \\
&+  \left(\pi_\ell\nu_\ell\left(\pib\odot\vct{\nu} - \eb_\ell\right)+\pi_j\nu_j\left(\pib\odot\vct{\nu} - \eb_j\right)\right)^T\Vb\Sigmab \Deltab^{-1} \left(\Deltab^{-1} - \frac{\eta}{\sigma^2}\Iden_r\right) \Sigmab\Vb^T \left(\pi_\ell\nu_\ell\left(\pib\odot\vct{\nu} - \eb_\ell\right)+\pi_j\nu_j\left(\pib\odot\vct{\nu} - \eb_j\right)\right)\nn\\
&+\frac{\zeta\eta}{\sigma^2}
\Bigg(
\pi_\ell\vct{\nu}_\ell'(\eta)+\pi_j\vct{\nu}_j'\left(\eta\right) +\left(\pi_\ell\vct{\nu}_\ell\left(\eta\right)\vct{c}_\ell+\pi_j\vct{\nu}_j\left(\eta\right)\vct{c}_j \right)^T\mtx{A}^{-1}\mtx{A}'(\eta)\mtx{A}^{-1}\left(\pi_\ell\vct{\nu}_\ell\left(\eta\right)\vct{c}_\ell+\pi_j\vct{\nu}_j\left(\eta\right)\vct{c}_j \right)\nn\\
&\quad\quad\quad-2\left(\pi_\ell\vct{\nu}_\ell'\left(\eta\right)\vct{c}_\ell+\pi_j\vct{\nu}_j'\left(\eta\right)\vct{c}_j \right)^T\mtx{A}^{-1}\left(\pi_\ell\vct{\nu}_\ell\left(\eta\right)\vct{c}_\ell+\pi_j\vct{\nu}_j\left(\eta\right)\vct{c}_j \right)
\Bigg)\,.
\end{align}

Finally, using \eqref{eq:joint_norm_limW} and \eqref{eq:norm_limW} in \eqref{eq:inp_from_normsW} it follows that
\begin{align}
&\inp{\w_\ell}{\w_j} \rP\nn\\
& \pi_\ell\nu_\ell\pi_j\nu_j\left( - \frac{\zeta}{\sigma^2}  +  \left(\pib\odot\vct{\nu} - \eb_\ell\right)^T\Vb\Sigmab \Deltab^{-1} \left(\Deltab^{-1} - \frac{\zeta}{\sigma^2}\Iden_r\right) \Sigmab\Vb^T \left(\pib\odot\vct{\nu} - \eb_j\right) \right)\nn\\
&+\frac{\zeta\eta}{\sigma^2}\pi_\ell\nu_\ell\pi_j\nu_j
\vct{c}_j ^T\mtx{A}^{-1}\mtx{A}'\mtx{A}^{-1}\vct{c}_\ell-\frac{\zeta\eta}{\sigma^2}\pi_\ell\pi_j(\nu_\ell\nu_j'+\nu_\ell'\nu_j)\vct{c}_j^T\mtx{A}^{-1}\vct{c}_\ell
\nn\\
&= \pi_\ell\nu_\ell\pi_j\nu_j\left( - \frac{\zeta}{\sigma^2}  +  \left(\pib\odot\vct{\nu} - \eb_\ell\right)^T\Vb\Sigmab \Deltab^{-1} \left(\Deltab^{-1} - \frac{\zeta}{\sigma^2}\Iden_r\right) \Sigmab\Vb^T \left(\pib\odot\vct{\nu} - \eb_j\right) \right)\nn\\
&+\frac{\zeta\eta}{\sigma^2}\pi_\ell\nu_\ell\pi_j\nu_j
\left(\vct{e}_j ^T\begin{bmatrix}\mtx{\Sigma}\mtx{V}^T \\\vct{1}^T\end{bmatrix}^T\mtx{A}^{-1}\mtx{A}'\mtx{A}^{-1}\begin{bmatrix}\mtx{\Sigma}\mtx{V}^T \\\vct{1}^T\end{bmatrix}\vct{e}_\ell\right)\nn\\
&-\frac{\eta\zeta}{\sigma^2}\pi_\ell\pi_j(\nu_\ell\nu_j'+\nu_\ell'\nu_j)\left(\vct{e}_j^T\begin{bmatrix}\mtx{\Sigma}\mtx{V}^T \\\vct{1}^T\end{bmatrix}^T\mtx{A}^{-1}\begin{bmatrix}\mtx{\Sigma}\mtx{V}^T \\\vct{1}^T\end{bmatrix}\vct{e}_\ell\,.
\right)
\end{align}
Putting everything together we arrive at
\begin{align*}
\Sigmab_{\w,\w} \rP&\frac{\zeta}{\sigma^2} \Pb+ \Pb\Vb\Sigmab \Deltab^{-1} \Big(\Deltab^{-1} - \frac{\zeta}{\sigma^2}\Iden_r\Big) \Sigmab\Vb^T \Pb+\frac{\zeta\eta}{\sigma^2}\mtx{Q}\,,
\end{align*}
where
\begin{align*}
\mtx{Q}:=&\text{diag}\left(\vct{\pi}\odot \vct{\nu}'\right)+\text{diag}\left(\vct{\pi}\odot \vct{\nu}\right)\begin{bmatrix}\mtx{\Sigma}\mtx{V}^T \\\vct{1}^T\end{bmatrix}^T\left(\mtx{A}^{-1}\mtx{A}'\mtx{A}^{-1}\right)\begin{bmatrix}\mtx{\Sigma}\mtx{V}^T \\\vct{1}^T\end{bmatrix}\text{diag}\left(\vct{\pi}\odot \vct{\nu}\right)\nn\\
&-\text{diag}\left(\vct{\pi}\odot \vct{\nu}'\right)\begin{bmatrix}\mtx{\Sigma}\mtx{V}^T \\\vct{1}^T\end{bmatrix}^T\mtx{A}^{-1}\begin{bmatrix}\mtx{\Sigma}\mtx{V}^T \\\vct{1}^T\end{bmatrix}\text{diag}\left(\vct{\pi}\odot \vct{\nu}\right)-\text{diag}\left(\vct{\pi}\odot \vct{\nu}\right)\begin{bmatrix}\mtx{\Sigma}\mtx{V}^T \\\vct{1}^T\end{bmatrix}^T\mtx{A}^{-1}\begin{bmatrix}\mtx{\Sigma}\mtx{V}^T \\\vct{1}^T\end{bmatrix}\text{diag}\left(\vct{\pi}\odot \vct{\nu}'\right)
\end{align*}
and as mentioned earlier 
\begin{align*}
\mtx{A}':=&\mtx{A}'(\eta):=\begin{bmatrix} \sigma^2\left(\vct{\pi}^T\vct{\nu}'(\eta)\right)\mtx{I}+\Sigmab\Vb^T\text{diag}\left(\vct{\pi}\right)\text{diag}\left(\vct{\nu}'(\eta)\right)\Vb\Sigmab & \Sigmab\Vb^T\text{diag}\left(\vct{\nu}'(\eta)\right)\vct{\pi} \\ 
\vct{\pi}^T\text{diag}\left(\vct{\nu}'(\eta)\right)\Vb\Sigmab & \vct{\pi}^T\vct{\nu}'(\eta) \end{bmatrix}\nonumber\\
\mtx{A}:=&\mtx{A}(\eta):=\begin{bmatrix} \sigma^2\left(\vct{\pi}^T\vct{\nu}(\eta)\right)\mtx{I}+\Sigmab\Vb^T\text{diag}\left(\vct{\pi}\right)\text{diag}\left(\vct{\nu}(\eta)\right)\Vb\Sigmab & \Sigmab\Vb^T\text{diag}\left(\vct{\nu}(\eta)\right)\vct{\pi} \\ 
\vct{\pi}^T\text{diag}\left(\vct{\nu}(\eta)\right)\Vb\Sigmab & \vct{\pi}^T\vct{\nu}(\eta) \end{bmatrix}\,.
\end{align*}
Let us end by simplifying $\mtx{Q}$ to this aim 
\begin{align*}
\A^{-1} \begin{bmatrix}\mtx{\Sigma}\mtx{V}^T \\\vct{1}^T\end{bmatrix}\text{diag}\left(\vct{\pi}\odot \vct{\nu}\right)&= \begin{bmatrix} \Iden & \zero \\ -\widetilde{\vct{\pi}}^T\Vb\Sigmab & 1 \end{bmatrix}\begin{bmatrix} \Deltab^{-1} & \zero \\ \zero^T & 1 \end{bmatrix} \begin{bmatrix} \Iden & -\Sigmab\Vb^T\widetilde{\vct{\pi}} \\ \zero^T & 1 \end{bmatrix}\begin{bmatrix}\mtx{\Sigma}\mtx{V}^T \\\vct{1}^T\end{bmatrix}\text{diag}\left(\vct{\pi}\odot \vct{\nu}\right)\\
&= \begin{bmatrix} \Iden & \zero \\ -\widetilde{\vct{\pi}}^T\Vb\Sigmab & 1 \end{bmatrix}\begin{bmatrix} \Deltab^{-1} & \zero \\ \zero^T & 1 \end{bmatrix} \begin{bmatrix}\mtx{\Sigma}\mtx{V}^T\left(\mtx{I}-\widetilde{\vct{\pi}}\vct{1}^T\right) \\\vct{1}^T\end{bmatrix}\text{diag}\left(\vct{\pi}\odot \vct{\nu}\right)\\
&= \begin{bmatrix} \Iden & \zero \\ -\widetilde{\vct{\pi}}^T\Vb\Sigmab & 1 \end{bmatrix} \begin{bmatrix}\Deltab^{-1}\mtx{\Sigma}\mtx{V}^T\left(\mtx{I}-\widetilde{\vct{\pi}}\vct{1}^T\right) \\\vct{1}^T\end{bmatrix}\text{diag}\left(\vct{\pi}\odot \vct{\nu}\right)\\
&= \begin{bmatrix}\Deltab^{-1}\mtx{\Sigma}\mtx{V}^T\left(\mtx{I}-\widetilde{\vct{\pi}}\vct{1}^T\right) \\
-\widetilde{\vct{\pi}}^T\Vb\Sigmab\Deltab^{-1}\mtx{\Sigma}\mtx{V}^T\left(\mtx{I}-\widetilde{\vct{\pi}}\vct{1}^T\right)+\vct{1}^T\end{bmatrix}\text{diag}\left(\vct{\pi}\odot \vct{\nu}\right)\\
&= \begin{bmatrix}\Deltab^{-1}\mtx{\Sigma}\mtx{V}^T\left(\text{diag}(\widetilde{\vct{\pi}})-\widetilde{\vct{\pi}}\widetilde{\vct{\pi}}^T\right) \\
-\widetilde{\vct{\pi}}^T\Vb\Sigmab\Deltab^{-1}\mtx{\Sigma}\mtx{V}^T\left(\text{diag}(\widetilde{\vct{\pi}})-\widetilde{\vct{\pi}}\widetilde{\vct{\pi}}'^T\right)+\widetilde{\vct{\pi}}^T\end{bmatrix}
\end{align*}
Thus, defining $\widetilde{\vct{\pi}}'=\vct{\pi}\odot \vct{\nu}'$ we have
\begin{align*}
\text{diag}\left(\vct{\pi}\odot \vct{\nu}'\right)\begin{bmatrix}\mtx{\Sigma}\mtx{V}^T \\\vct{1}^T\end{bmatrix}^T\A^{-1} \begin{bmatrix}\mtx{\Sigma}\mtx{V}^T \\\vct{1}^T\end{bmatrix}\text{diag}\left(\vct{\pi}\odot \vct{\nu}\right)&=\text{diag}\left(\vct{\pi}\odot \vct{\nu}'\right)\begin{bmatrix}\mtx{\Sigma}\mtx{V}^T \\\vct{1}^T\end{bmatrix}^T\begin{bmatrix}\Deltab^{-1}\mtx{\Sigma}\mtx{V}^T\left(\text{diag}(\widetilde{\vct{\pi}})-\widetilde{\vct{\pi}}\widetilde{\vct{\pi}}^T\right) \\
-\widetilde{\vct{\pi}}^T\Vb\Sigmab\Deltab^{-1}\mtx{\Sigma}\mtx{V}^T\left(\text{diag}(\widetilde{\vct{\pi}})-\widetilde{\vct{\pi}}\widetilde{\vct{\pi}}'^T\right)+\widetilde{\vct{\pi}}^T\end{bmatrix}\\
&=\left(\text{diag}\left(\widetilde{\vct{\pi}}'\right)-\widetilde{\vct{\pi}}'\widetilde{\vct{\pi}}^T\right)\mtx{V}\mtx{\Sigma}\Deltab^{-1}\mtx{\Sigma}\mtx{V}^T\left(\text{diag}(\widetilde{\vct{\pi}})-\widetilde{\vct{\pi}}\widetilde{\vct{\pi}}^T\right)+\widetilde{\vct{\pi}}'\widetilde{\vct{\pi}}^T
\end{align*}
Using the above and recalling $\widetilde{\vct{\pi}}'=\vct{\pi}\odot \vct{\nu}'$ we arrive at
\begin{align}\label{eq:Qmat_app}
\mtx{Q}=&\text{diag}(\widetilde{\vct{\pi}}')\nn\\
&+\begin{bmatrix}\Deltab^{-1}\mtx{\Sigma}\mtx{V}^T\left(\text{diag}(\widetilde{\vct{\pi}})-\widetilde{\vct{\pi}}\widetilde{\vct{\pi}}^T\right) \nn\\
-\widetilde{\vct{\pi}}^T\Vb\Sigmab\Deltab^{-1}\mtx{\Sigma}\mtx{V}^T\left(\text{diag}(\widetilde{\vct{\pi}})-\widetilde{\vct{\pi}}\widetilde{\vct{\pi}}^T\right)+\widetilde{\vct{\pi}}^T\end{bmatrix}^T\mtx{A}'\begin{bmatrix}\Deltab^{-1}\mtx{\Sigma}\mtx{V}^T\left(\text{diag}(\widetilde{\vct{\pi}})-\widetilde{\vct{\pi}}\widetilde{\vct{\pi}}^T\right) \\
-\widetilde{\vct{\pi}}^T\Vb\Sigmab\Deltab^{-1}\mtx{\Sigma}\mtx{V}^T\left(\text{diag}(\widetilde{\vct{\pi}}')-\widetilde{\vct{\pi}}\widetilde{\vct{\pi}}^T\right)+\widetilde{\vct{\pi}}^T\end{bmatrix}\nn\\
&-\left(\text{diag}\left(\widetilde{\vct{\pi}}'\right)-\widetilde{\vct{\pi}}'\widetilde{\vct{\pi}}^T\right)\mtx{V}\mtx{\Sigma}\Deltab^{-1}\mtx{\Sigma}\mtx{V}^T\left(\text{diag}(\widetilde{\vct{\pi}})-\widetilde{\vct{\pi}}\widetilde{\vct{\pi}}^T\right)-\widetilde{\vct{\pi}}'\widetilde{\vct{\pi}}^T\nn\\
&-\left(\text{diag}\left(\widetilde{\vct{\pi}}\right)-\widetilde{\vct{\pi}}\widetilde{\vct{\pi}}^T\right)\mtx{V}\mtx{\Sigma}\Deltab^{-1}\mtx{\Sigma}\mtx{V}^T\left(\text{diag}(\widetilde{\vct{\pi}}')-\widetilde{\vct{\pi}}\widetilde{\vct{\pi}}'^T\right)-\widetilde{\vct{\pi}}\widetilde{\vct{\pi}}'^T
\end{align}
where
\begin{align*}
\mtx{A}'=&\begin{bmatrix} \sigma^2\left(\widetilde{\vct{\pi}}'^T\vct{1}\right)\mtx{I}+\Sigmab\Vb^T\text{diag}\left(\widetilde{\vct{\pi}}'\right)\Vb\Sigmab & \Sigmab\Vb^T\widetilde{\vct{\pi}}' \\ 
\widetilde{\vct{\pi}}'^T\Vb\Sigmab & \widetilde{\vct{\pi}}'^T\vct{1} \end{bmatrix}\nonumber\\
\mtx{A}=&\begin{bmatrix} \sigma^2\mtx{I}+\Sigmab\Vb^T\text{diag}\left(\widetilde{\vct{\pi}}\right)\Vb\Sigmab & \Sigmab\Vb^T\widetilde{\vct{\pi}} \\ 
\widetilde{\vct{\pi}}^T\Vb\Sigmab & 1 \end{bmatrix}\,.
\end{align*}
Using the above the cross-correlation matrix $\Sigmab_{\w,\w}$ is given by
\begin{align*}
\Sigmab_{\w,\w} \rP&\frac{\zeta}{\sigma^2} \Pb+ \Pb\Vb\Sigmab \Deltab^{-1} \Big(\Deltab^{-1} - \frac{\zeta}{\sigma^2}\Iden_r\Big) \Sigmab\Vb^T \Pb+\frac{\zeta\eta}{\sigma^2}\mtx{Q}\,.
\end{align*}


\section{Weighted LS for MLM (Proof of Theorem \ref{thm:WLS_MLM})}\label{sec:WLS_MLM}

Let $\Db:=\Db^{(n)}:=\diag{D_1,\ldots,D_n}$ be a diagonal matrix with non-zero diagonal entries. In particular, assume that the diagonal entries of $\Db$ are distributed $D_i\simiid D$ where the random variable $D$ may depend on the entries of the matrix of response variables $\Y$. Here, we focus on the following  setting:
%
%
\begin{align}\label{eq:D_common}
\Db =  \sum_{j\in[k]}\diag{ \omega_j \Y_j},\quad \omega_j\geq 0,~j\in[k].
\end{align}
Specifically, for \eqref{eq:D_common}, we have $D_i\simiid D$ with $D=\omega_\ell Y_\ell + \sum_{i\neq\ell\in[k]}\omega_i Y_i $, where for all $c\in[k]$:
\begin{align}
\P\left([Y_1, Y_2, \ldots, Y_k]^T=\eb_c\right)=V_c=\frac{e^{\eb_c^T\Vb\Sigmab\g}}{\sum_{\ell^\prime=1}^{k}e^{\eb_{\ell^\prime}\Vb\Sigmab\g}}\label{eq:Y_ell_log_WLS},
\end{align}
$\M\M^T = \V\Sigmab^2\V^T$, and $\g\sim\Nn(\zero,\Iden_r).$

With these, we consider the weighted least-squares (WLS) solution for $\ell\in[k]$:
\begin{align}
(\widehat\w_\ell,\widehat b) = \arg\min_{\w,b}\mathcal{L}_{PO}\left(\w,b\right):=\frac{1}{2n}\twonorm{\Db\left(\X^T\w+b\vct{1}_n-\mtx{Y}_\ell\right)}^2,\nn
\end{align}
where $\Db$ is as in \eqref{eq:D_common}.
In fact, it is convenient to rewrite the above as follows:
\begin{align}\label{eq:PO_WLS}
(\widehat\w_\ell,\widehat b) = \arg\min_{\w,b,\ub}~\max_{\s}~\frac{1}{n}\left(\s^T\Db\X^T\w+b\s^T\Db\vct{1}_n\s^T\Db\mtx{Y}_\ell -\s^T\ub + \frac{\twonorm{\ub}^2}{2}\right).
\end{align}

\noindent\textbf{Identifying the AO.}
~~The PO in \eqref{eq:PO_WLS} is very similar to \eqref{eq:LS_log_PO}. In particular, following step by step the same decomposition trick as in Section \ref{sec:MLM_proof_1}, it can be shown that the AO corresponding to \eqref{eq:PO_WLS} becomes (cf. \eqref{eq:MLM_AO}) 
\begin{align*}
\min_{\vct{w}_\ell, \vct{b}_\ell,\ub}\text{ }\max_{\vct{s}}\text{ }\frac{1}{n}\left( \twonorm{\Pb^\perp\vct{w}_\ell}\vct{g}^T\Db\vct{s}+ \twonorm{\Db\vct{s}}\vct{h}^T\Pb^\perp\vct{w}_\ell+ \vct{s}^T\Db\Gt^T\Ub^T\vct{w}_\ell+b_\ell\vct{s}^T\Db\vct{1}_n-\vct{s}^T\Db\mtx{Y}_\ell - \ub^T\s + \frac{\twonorm{\vct{\ub}}^2}{2}\right), 
\end{align*}
where we use the same notation as in Section \ref{sec:MLM_proof_1} for $\Pb^\perp, \Ub, \Gt, \g$ and $\h$. Recall also the relation of $\Y_\ell$ to $\Gt$ in \eqref{eq:Y_to_gt}.

\noindent\textbf{Scalarization of the AO.}
~~We start the process of simplifying the AO by setting $\beta:=\twonorm{\Db\s}\big/\sqrt{n}$ and optimizing over the direction of $\Db\s$ to equivalently write the AO as
\begin{align}\label{eq:koko_WLS}
\min_{\vct{w}_\ell, \vct{b}_\ell,\ub}\text{ }\max_{\beta\geq0}\text{ }\frac{1}{\sqrt{n}}\left( \beta\twonorm{ \twonorm{\Pb^\perp\vct{w}_\ell}\vct{g}+ \Gt^T\Ub^T\vct{w}_\ell+b_\ell\vct{1}_n-\mtx{Y}_\ell - \Db^{-1}\ub} + \beta \vct{h}^T\Pb^\perp\vct{w}_\ell \right) + \frac{\twonorm{\vct{\ub}}^2}{2n}, 
\end{align}
Next, focus on the minimization over $\w_\ell$. Let us denote
$$
\ab:=\Ub^T\w_\ell\quad\text{and}\quad \alpha_0 = \twonorm{\Pb^\perp\w_\ell}.
$$
Notice that $\ab\perp \Pb^\perp\w_\ell$ and thus the orthogonal decomposition $\w_\ell=\Ub\ab + \Pb^\perp\w_\ell$. With this observation, note that the optimal direction of $\Pb^T\w_\ell$ in \eqref{eq:koko_WLS} aligns with $\Pb^T\h$ for all values of $\beta$. Therefore, \eqref{eq:koko_WLS} reduces to 
\begin{align}\label{eq:koko_WLS_1}
\min_{\ab, \alpha_0\geq0, \vct{b}_\ell,\ub}\text{ }\max_{\beta\geq0}\text{ }\frac{1}{\sqrt{n}}\left( \beta\twonorm{ \alpha_0\vct{g}+ \Gt^T\ab+b_\ell\vct{1}_n-\mtx{Y}_\ell - \Db^{-1}\ub}  - \beta\alpha_0 \twonorm{\Pb^\perp\h} \right)+ \frac{\twonorm{\vct{\ub}}^2}{2n}, 
\end{align}
Continuing let us denote $\tb:=\alpha_0\vct{g}+ \Gt^T\ab+b_\ell\vct{1}_n-\mtx{Y}_\ell$ for convenience and rewrite $\twonorm{\tb-\Db^{-1}\ub}$ as follows
$$
\frac{\twonorm{\tb-\Db^{-1}\ub}}{\sqrt{n}} = \min_{\tau>0}~\frac{\tau}{2} + \frac{\twonorm{\tb-\Db^{-1}\ub}^2}{2\tau n}.
$$
Note that the resulting minimization is convex in $\ub$ and concave in $\beta$. Also, by considering the bounded AO (such that $\beta$ is bounded; see \cite[Sec.~A]{deng2019model}), we can flip the order of min-max and optimize over $\ub$ first. In particular, $\ub$ minimizes the following strictly convex quadratic
\bea
\min_{\ub}\left\{ \frac{1}{n}\left(\frac{\beta}{2\tau}\twonorm{\Db^{-1}\ub} + \frac{1}{2}\twonorm{\ub}^2 - \frac{\beta}{\tau}\tb^T\Db^{-1}\ub \right) = \frac{1}{2n}\ub^T\left(\frac{\beta}{\tau}\Db^{-2} + \Iden_n \right)\ub-\frac{\beta}{\tau n}\tb^T\Db^{-1}\ub  \right\}.\nn
\end{align}
In particular,
$$
\ub = \frac{\beta}{\tau}\left(\frac{\beta}{\tau}\Db^{-2}+\Iden\right)^{-1} \Db^{-1} \tb = \left(\Db^{-1}+\frac{\tau}{\beta}\Db\right)^{-1}\left(\alpha_0\vct{g}+ \Gt^T\ab+b_\ell\vct{1}_n-\mtx{Y}_\ell\right)
$$
Putting things together, the new objective function of \eqref{eq:koko_WLS_1} becomes 
\bea
&\min_{\ab,\alpha_0\geq0 ,b_\ell,\tau>0}~~\max_{\beta\geq 0}~~\Rc(\ab,\alpha_0,b_\ell,\tau,\beta)\label{eq:koko2_WLS}\\
&\qquad\text{where}~~\Rc(\ab,\alpha_0,b_\ell,\tau,\beta):=\frac{\beta\tau}{2n}+ \frac{\beta}{2\tau n}\twonorm{\tb}^2 -\frac{\beta}{2\tau n} \tb^T\left(\Iden + \frac{\tau}{\beta}\Db^2\right)^{-1}\tb - \frac{{\beta\alpha_0}}{\sqrt{n}}\twonorm{\Pb^\perp\h}\nn.
\end{align}

\noindent\textbf{Convergence of the AO}
After having simplified the AO into an optimization problem over $r+4$ variables, we are ready to study its asymptotic behavior. First, we argue on point-wise convergence of $\Rc$ in \eqref{eq:koko2_WLS}. Fix $\ab,\alpha_0,\bb_\ell,\tau$ and $\beta$. From the WLLN, $\frac{1}{\sqrt{n}} \twonorm{\Pb^\perp\h}\rP\sqrt{\gamma}$ and as in \eqref{eq:t_conv}
\bea\nn
\frac{1}{n}\twonorm{ \tb}^2 = \frac{1}{n}\sum_{i=1}^n\left({\alpha_0\g_i+\ab^T\gt_i+b_\ell-[\Y_\ell]_i}\right)^2 \rP \E\left[\left({\alpha_0G_0+\ab^T\g+\bb_\ell-Y_\ell}\right)^2\right],
\end{align}
where the expectation is over $\g\sim\Nn(\zero_r,\Iden_r)$ (with some abuse of notation) and 
\begin{align}
Y_\ell \sim{\rm Bern}(V_\ell)\quad\text{and}\quad V_\ell=\frac{e^{\eb_\ell^T\Vb\Sigmab\g}}{\sum_{\ell^\prime=1}^{r}e^{\eb_{\ell^\prime}\Vb\Sigmab\g}}.\label{eq:Y_ell_log_WLS}
\end{align}
Furthermore,
\bea
\frac{1}{n}\tb^T\left(\Iden + \frac{\tau}{\beta}\Db^2\right)^{-1}\tb  = \frac{1}{n}\sum_{i=1}^{n}\frac{\left({\alpha_0\g_i+\ab^T\gt_i+b_\ell-[\Y_\ell]_i}\right)^2 }{1+\frac{\tau}{\beta}d_{i}^2} \rP \E\left[\frac{\left({\alpha_0G_0+\ab^T\g+\bb_\ell-Y_\ell}\right)^2}{1+\frac{\tau}{\beta}D^2}\right] \nn
\end{align}

 Therefore, point-wise on $\ab,\alpha_0,\bb_\ell,\tau$ and $\beta$, the objective $\Rc$ of the AO converges to 
 \begin{align}
\Dc_\ell(\alpha_0,\alphab,b_\ell,\tau,\beta)&:=\frac{\beta\tau}{2} + \frac{\beta}{2\tau}\E\left[\left({\alpha_0G_0+\ab^T\g+\bb_\ell-Y_\ell}\right)^2\right] - \frac{\beta}{2\tau} \E\left[\frac{\left({\alpha_0G_0+\ab^T\g+\bb_\ell-Y_\ell}\right)^2}{1+\frac{\tau}{\beta}D^2}\right] - \beta\alpha_0 \sqrt{\gamma}\nn\\
&=\frac{\beta\tau}{2} + \frac{1}{2} \E\left[\frac{D^2\left({\alpha_0G_0+\ab^T\g+\bb_\ell-Y_\ell}\right)^2}{1+\frac{\tau}{\beta}D^2}\right] - \beta\alpha_0 \sqrt{\gamma}\nn\\
&=\frac{\beta\tau}{2} + \frac{1}{2} \E\left[\frac{\left({\alpha_0G_0+\ab^T\g+\bb_\ell-Y_\ell}\right)^2}{D^{-2}+\left({\tau}/{\beta}\right)}\right] - \beta\alpha_0 \sqrt{\gamma}.\label{eq:det_log_WLS}
\end{align}
We note that the function above is jointly convex in $(\alpha_0,\alphab,b_\ell,\tau)$ and concave in $\beta$.

\subsection{Computing $\Sigma_{w,\mu}$}

It can be checked that the first order optimality conditions of $\Dc_\ell(\alpha_0,\alphab,b_\ell,\tau,\beta)$ with respect to $\beta$ and $\tau>0$ are given as follows:
\bea
\beta^2 &= \E\left[\frac{\left({\alpha_0G_0+\ab^T\g+\bb_\ell-Y_\ell}\right)^2}{\left(D^{-2}+\left({\tau}/{\beta}\right)\right)^2}\right]\quad\text{or}\quad \beta = 0, \label{eq:beta2_WLS}\\
\alpha_0\sqrt{\gamma} &= \frac{\tau}{2} + \frac{\tau}{2\beta^2}\cdot\E\left[\frac{\left({\alpha_0G_0+\ab^T\g+\bb_\ell-Y_\ell}\right)^2}{\left(D^{-2}+\left({\tau}/{\beta}\right)\right)^2}\right].
\end{align}
Thus, at optimality either $\beta=0$ or $\tau=\alpha_0\sqrt{\gamma}$. 
In what follows, consider the solution $\tau=\alpha_0\sqrt{\gamma}$. We will show that this leads to the true saddle point of $\Dc$.

Moreover, by denoting $\eta:=\frac{\beta}{\tau}$ and recalling from \eqref{eq:Y_ell_log_WLS} that $Y_\ell={\rm Bern}(V_\ell)$, we can express $\Dc_\ell(\alpha_0,\alphab,b_\ell,\tau,\beta)$ as follows
\begin{align}\nn
\frac{\beta\tau}{2} +  \frac{\alpha_0^2}{2}\E\left[\frac{1}{D^{-2}+1/\eta}\right] - \beta\alpha_0\sqrt\gamma + \frac{1}{2}\begin{bmatrix} \ab^T & \bb_\ell \end{bmatrix}\cdot\A\left(\eta\right)\cdot \begin{bmatrix} \ab \\ \bb_\ell \end{bmatrix} - \cb_\ell^T\left(\eta\right) \begin{bmatrix} \ab \\ \bb_\ell \end{bmatrix} + \frac{1}{2}\E\left[\frac{Y_\ell^2}{D^{-2}+1/\eta}\right],
\end{align}
where
\begin{subequations}\label{eq:Aeta}
\begin{align}
\A\left(\eta\right)&:= \begin{bmatrix} \E\left[\frac{\g\g^T}{D^{-2}+1/\eta}\right] 
&
\E\left[\frac{\g}{D^{-2}+1/\eta}\right] 
\\
\E\left[\frac{\g^T}{D^{-2}+1/\eta}\right] 
&
\E\left[\frac{1}{D^{-2}+1/\eta}\right] 
\end{bmatrix} \\
\cb_\ell\left(\eta\right)&:= \begin{bmatrix} \E\left[\frac{\g Y_\ell}{D^{-2}+1/\eta}\right] \end{bmatrix}
\end{align}
\end{subequations}
we have the following first-order optimality conditions for $\alpha_0,\ab$ and $\bb_\ell$:
\bea
\begin{bmatrix}
\ab \\ \bb_\ell
\end{bmatrix} &= \A^{-1}\left(\eta\right)\cdot \cb_\ell\left(\eta\right)\label{eq:WLS_ab}\\
\alpha_0 &= \beta\sqrt{\gamma}\Big/ \E\left[\frac{1}{D^{-2}+1/\eta}\right].\label{eq:E12}
\end{align}

Rearranging \eqref{eq:E12} and using $\tau=\alpha_0\sqrt{\gamma}$ gives the following equation for $\eta$:
\bea
\frac{\alpha_0\sqrt{\gamma}}{\beta}  \E\left[\frac{1}{D^{-2}+1/\eta}\right] = \gamma \stackrel{\tau=\alpha_0\sqrt{\gamma}}{\implies}~ \E\left[\frac{1/\eta}{D^{-2}+1/\eta}\right] = \gamma .\label{eq:key_eta}
\end{align}
Thus, the optimal values of $\ab$ and $\bb_\ell$ are found by \eqref{eq:WLS_ab} for $\eta$ the positive solution of the equation in \eqref{eq:key_eta}. To solve for $\alpha_0$, we combine \eqref{eq:E12} and \eqref{eq:beta2_WLS} which leads to 
\bea\label{eq:al0}
\alpha_0^2 \left( {\gamma}{\eta^2} - \E\left[\left(\frac{1}{D^{-2}+1/\eta}\right)^2\right] \right) = \E\left[\frac{\left({\ab^T\g+\bb_\ell-Y_\ell}\right)^2}{\left(D^{-2}+1/\eta\right)^2}\right],
\end{align}
where we have also used the RHS of \eqref{eq:key_eta}. Next, we specialize these findings to the special structure of the weighting matrix $\Db$ in \eqref{eq:D_common}.

\vp
\noindent\textbf{Applying weighting \eqref{eq:D_common}.}~ Assume \eqref{eq:D_common} holds. 
In this case, Equation \eqref{eq:key_eta} that determines the value of $\eta>0$ becomes
\bea\label{eq:eta_common}
F(\eta):=\sum_{i\in[k]}{\frac{\pib_i \omega_i^2}{\omega_i^2+\eta}} = {\gamma},
\end{align}
where we have recalled the notation in \eqref{eq:alphas_gen} $\pib_i:=\E[V_i]>0,~i\in[k]$. It can be easily checked by direct differentiation that $\eta\mapsto F$ is strictly decreasing in $(0,\infty)$. Also, using $\sum_{i\in[k]}\pib_i=1$ the range of $F$ in $(0,\infty)$ is $(0,1)$. Thus, it follows that \eqref{eq:eta_common} has a unique solution for all $\gamma\in(0,1)$.

Also, in this case we can write \eqref{eq:Aeta} in the following more convenient form:
\begin{align}\label{eq:Aks}
\A\left(\eta\right)&:= \sum_{i\in[k]} \left(\frac{\omega_i^2\eta}{{\omega_i^2} + \eta}\right) \underbrace{ \E\left[\begin{bmatrix} \g \\ 1 \end{bmatrix}\begin{bmatrix} \g^T & 1 \end{bmatrix} V_i\right] }_{=:\widetilde\A_i}
\\
\cb_\ell\left(\eta\right)&:= \left(\frac{\omega_\ell^2\eta}{{\omega_\ell^2} + \eta}\right) \underbrace{\E\left[\begin{bmatrix} \g \\ 1 \end{bmatrix} V_\ell \right]}_{=:\widetilde\cb_\ell}.\label{eq:E23}
\end{align}

For convenience let us define vectors $\nub:=\nub(\eta), \pibt=\pibt(\eta)\in\R^{k}$ with entries:
\bea\label{eq:pibt_def}
\pibt_i := \pib_i\left( \frac{1}{\gamma}\cdot{\frac{\omega_i^2}{\omega_i^2+\eta}}\right) =: \pib_i\cdot \nub_i
\end{align}
Because of \eqref{eq:eta_common}, notice that $\pibt$ is a probability vector, i.e. $$\pibt^T\one_k=\pib^T\nub =1.$$ 

 With the notation above, it holds
\bea
\A(\eta) &= \gamma\cdot\eta\cdot\begin{bmatrix} \sum_{i\in[k]} \nub_i\cdot\E[V_i\g\g^T] & \sum_{i\in[k]} \nub_i\cdot\E[V_i\g] \\ 
\sum_{i\in[k]} \nub_i\cdot\E[V_i\g^T] & 1
 \end{bmatrix} \nn \\
 &= \gamma\cdot\eta\cdot\begin{bmatrix} \sum_{i\in[k]} \nub_i\cdot\E[V_i\g\g^T] & \Sigmab\Vb^T\left(\diag{\pib}-\Pib\right)\,\nub \\ 
\nub^T\left(\diag{\pib}-\Pib\right)\Vb\Sigmab & 1
 \end{bmatrix} \\
  &= \gamma\cdot\eta\cdot\begin{bmatrix} \E\left[\left(\nub^T\vb\right)\g\g^T\right] & \Sigmab\Vb^T\left(\diag{\pib}-\Pib\right)\,\nub \\ 
\nub^T\left(\diag{\pib}-\Pib\right)\Vb\Sigmab & 1
 \end{bmatrix} \label{eq:AWLS}
 \\
\cb_\ell(\eta) &= \gamma\cdot\eta\cdot \begin{bmatrix}
 \nub_\ell \E[V_\ell\g] \\ \pibt_\ell
 \end{bmatrix} \nn\\
 &= \gamma \cdot \eta\cdot \begin{bmatrix}  
\Sigmab\Vb^T\left(\diag{\pib}-\Pib\right)\,\nub_\ell\,\eb_\ell \\ \pibt_\ell
 \end{bmatrix}\label{eq:cWLS}
\end{align}
where we have also used the fact that $\E[V_i\g]=\Sigmab\Vb^T\left(\diag{\pib}-\Pib\right)\eb_i,~i\in[k]$ and recalled the notation $$\vb = [V_1,\ldots,V_k]^T.$$

Using  \eqref{eq:AWLS} and \eqref{eq:cWLS}, we conclude from \eqref{eq:WLS_ab} the following expressions for $\ab$ and $\bb$:
\bea
\ab &= \Deltab^{-1}\Sigmab\Vb^T\left(\diag{\pib}-\Pib\right) \cdot\nub_\ell\cdot \left(\eb_\ell-\pib_\ell\nub\right), \label{eq:a_WLS1}\\
\bb_\ell &= \pibt_\ell - \nub^T\left(\diag{\pib}-\Pib\right)\V\Sigmab\Deltab^{-1}\Sigmab\Vb^T\left(\diag{\pib}-\Pib\right)\cdot\nub_\ell\cdot\left(\eb_\ell-\pib_\ell\nub\right), \label{eq:b_WLS1}
\end{align}
where we defined
\bea\label{eq:Delta_WLS}
\Deltab = \E\left[ \left(\nub^T\vb\right) \g\g^T \right] - \Sigmab\Vb^T\left(\diag{\pib}-\Pib\right)\nub\nub^T\left(\diag{\pib}-\Pib\right)\Vb\Sigmab \succ \zero_{r\times r}.
\end{align}


Finally, we show how to compute $\alpha_0$ using \eqref{eq:al0}. The  RHS in \eqref{eq:al0} can be computed as 
\bea
&\sum_{i\neq\ell\in[k]}\frac{\begin{bmatrix} \ab^T & \bb_\ell \end{bmatrix} \widetilde\A_i \begin{bmatrix} \ab \\ \bb_\ell \end{bmatrix} }{\left(\omega_i^{-2}+1/\eta\right)^2} + \frac{\begin{bmatrix} \ab^T & \bb_\ell \end{bmatrix} \widetilde\A_\ell \begin{bmatrix} \ab \\ \bb_\ell \end{bmatrix} - 2\begin{bmatrix} \ab^T & \bb_\ell \end{bmatrix} \widetilde\cb_\ell + \pib_\ell}{\left(\omega_\ell^{-2}+1/\eta\right)^2} \nn \\
&\qquad\qquad=\eta^2\cdot\gamma^2\cdot\left\{ {\begin{bmatrix} \ab^T & \bb_\ell \end{bmatrix} \left( \sum_{i\in[k]} \nub_i^2 \widetilde\A_i \right) \begin{bmatrix} \ab \\ \bb_\ell \end{bmatrix} } - 2 \begin{bmatrix} \ab^T & \bb_\ell \end{bmatrix} \nub_\ell^2 \widetilde\cb_\ell + \pib_\ell\nub_\ell^2 \right\},\nn
\end{align}
where $\ab,\bb_\ell$ are as in \eqref{eq:a_WLS1} and  \eqref{eq:b_WLS1}. Also, note that 
$$
\E\left[\left(\frac{1}{D^{-2}+1/\eta}\right)^2\right]  = \eta^2\sum_{i\in[k]}\frac{\pib_i \omega_i^4}{\left(\omega_i^2+\eta\right)^2} = \eta^2\cdot\gamma^2\cdot\pib^T\diag{\nub}\nub = \eta^2\cdot\gamma^2\cdot\pibt^T\nub.
$$
Put together, we have the following expression for $\alpha_0$:
\bea\label{eq:al0_WLS}
\alpha_0^2 &= \frac{1}{\left(1/\gamma - \pibt^T\nub\right)}\cdot\left\{ \begin{bmatrix} \ab^T & \bb_\ell \end{bmatrix} \left( \sum_{i\in[k]} \nub_i^2 \widetilde\A_i \right) \begin{bmatrix} \ab \\ \bb_\ell \end{bmatrix}  - 2 \begin{bmatrix} \ab^T & \bb_\ell \end{bmatrix} \nub_\ell^2 \widetilde\cb_\ell + \pib_\ell\nub_\ell^2 \right\}\nn\\
&= \frac{1}{\left(1/\gamma - \pibt^T\nub\right)}\cdot\left\{ \begin{bmatrix} \ab^T & \bb_\ell \end{bmatrix} \A^\prime \begin{bmatrix} \ab \\ \bb_\ell \end{bmatrix}  - 2 \begin{bmatrix} \ab^T & \bb_\ell \end{bmatrix} \begin{bmatrix}  
\Sigmab\Vb^T\left(\diag{\pib}-\Pib\right)\,\nub_\ell^2\,\eb_\ell \\ \pibt_\ell\cdot\nub_\ell
 \end{bmatrix}  + \pibt_\ell\cdot\nub_\ell \right\},
\end{align}
where $\ab,\bb_\ell$ are as in \eqref{eq:a_WLS1}, \eqref{eq:b_WLS1} and we have also defined
\bea\label{eq:A_al0}
\A^\prime = \begin{bmatrix} \E\left[\left(\nub^T\diag{\nub}\vb\right)\g\g^T\right] & \Sigmab\Vb^T\left(\diag{\pib}-\Pib\right)\,\diag{\nub}\nub \\ 
\nub^T\diag{\nub}\left(\diag{\pib}-\Pib\right)\Vb\Sigmab & \nub^T\diag{\nub}\pib
 \end{bmatrix} \,.
\end{align}

\vp
\noindent\textbf{Asymptotic Predictions.}~ Writing \eqref{eq:b_WLS1} in vector form we find that
\bea
\bh \rP\pibt-\diag{\nub}\left(\Iden_k-\pib\nub^T\right)\left(\diag{\pib}-\Pib\right)\V\Sigmab\Deltab^{-1}\Sigmab\Vb^T\left(\diag{\pib}-\Pib\right)\nub. \label{eq:b_WLS_fin}
\end{align}
Also, recalling that $\eb_\ell^T\Sigmab_{\w,\mub} = \wh_\ell^T\Ub\Sigmab\Vb^T \rP \ab^T\Sigmab\Vb^T$ and using \eqref{eq:a_WLS1}:
\bea\label{eq:Sigmawmu_WLS_fin}
\Sigmab_{\w,\mub} \rP \diag{\nub}\left(\Iden_k-\pib\nub^T\right)\left(\diag{\pib}-\Pib\right)\V\Sigmab\Deltab^{-1}\Sigmab\Vb^T.
\end{align}

Finally, for the magnitudes of the weight vectors, recall that $\twonorm{\wh_\ell}^2\rP\twonorm{\ab}^2+\alpha_0^2$. Thus, to find the limiting values of the norms, we can combine \eqref{eq:al0_WLS} and \eqref{eq:a_WLS1}-\eqref{eq:b_WLS1}. For convenience, we summarize the final expression here. Define the following\footnote{Note the slight abuse of notation compared to the definitions in \eqref{eq:AWLS} and \eqref{eq:AWLS}. This ``renaming" should not be confusing as the constant $\gamma\cdot\eta$ (that is different between the two definitions) cancels when computing $\begin{bmatrix} \ab \\ \bb_\ell \end{bmatrix}=\A^{-1}\cb_\ell$ (see \eqref{eq:WLS_ab}).}
\bea
\A &:= \begin{bmatrix} \E\left[\left(\nub^T\vb\right)\g\g^T\right] & \Sigmab\Vb^T\left(\diag{\pib}-\Pib\right)\,\nub \\ 
\nub^T\left(\diag{\pib}-\Pib\right)\Vb\Sigmab & 1
 \end{bmatrix} \label{eq:AWLS}  \\
\cb_\ell &:=
 \begin{bmatrix}  
\Sigmab\Vb^T\left(\diag{\pib}-\Pib\right)\,\nub_\ell\,\eb_\ell \\ \pibt_\ell
 \end{bmatrix}\label{eq:cWLS}.
\end{align}
Further recall the matrix $\A^\prime$ in \eqref{eq:A_al0}. 
\bea
\twonorm{\wh_\ell}^2&\rP\twonorm{\Deltab^{-1}\Sigmab\Vb^T\left(\diag{\pib}-\Pib\right) \cdot\nub_\ell\cdot \left(\eb_\ell-\pib_\ell\nub\right)}^2 \nn\\
&\qquad+ 
\frac{1}{\left(1/\gamma - \pibt^T\nub\right)}\cdot\left\{ \cb^T_\ell\A^{-1}\A^\prime\A^{-1}\cb_\ell- 2 \nub_\ell\cb_\ell^T\A^{-1}\cb_\ell  + \pibt_\ell\cdot\nub_\ell \right\}.\label{eq:well}
\end{align}

\begin{remark}Consider the special case $\omega_i=1,~i\in[k]$. We show how the above recovers the solution for (un-weighted) LS. First, note that in this case \eqref{eq:eta_common} simply gives 
$
\eta = \frac{1}{\gamma} - 1.
$
Thus, $\nub=\one_k$ and $\pibt=\pib$. Also, recall that $\left(\diag{\pib}-\Pib\right)\one_k = \zero$ and $\one^T\vb=1$. Thus, \eqref{eq:Delta_WLS} simply gives
$\Deltab = \E[\g\g^T] = \Iden_r$. With these, it can be readily checked that \eqref{eq:b_WLS_fin} and \eqref{eq:a_WLS1} simplify to the expressions in \eqref{eq:ls_soft_2}. Similarly, $\A=\A^\prime = \Iden_{r+1}$ and \eqref{eq:al0_WLS} reduces in this case to \eqref{eq:wh_norm_log1}. For general weight coefficients, such simplifications do not seem possible and one needs to compute the matrix $\E\left[\left(\nub^T\vb\right)\g\g^T\right]$ that appears in the definitions of $\Deltab, \A$ and $\A^\prime$. We note that this calculation can be somewhat simplified by applying Gaussian integration by parts similar to lemma \ref{lem:IBP}. 
\end{remark}

\subsection{Computing $\Sigma_{w,w}$}
In this section, we use Lemma \ref{lem:joint_LS} to compute the cross-correlations $\inp{\wh_\ell}{\wh_j},~j\neq\ell\in[k]$. Specifically, the analysis of \eqref{eq:joint_WLS} is almost identical to the analysis of \eqref{eq:PO_WLS} in the previous section. Specifically, without repeating all the details for brevity, it can be shown that the AO of \eqref{eq:joint_WLS} converges to $\min_{\ab,\alpha_0\geq0 ,b_\ell,\tau>0}~~\max_{\beta\geq 0}~~\Dc(\ab,\alpha_0,b_\ell,\tau,\beta)$ where $\Dc(\ab,\alpha_0,b_\ell,\tau,\beta)$ is as in  \eqref{eq:det_log_WLS} only with $Y_\ell$ substituted by $Y_{\ell,c}$:
\begin{align}
Y_{\ell,c} \sim{\rm Bern}(V_{c} + V_{\ell})\quad\text{and as before:}\quad V_{i}=\frac{e^{\eb_i^T\Vb\Sigmab\g}}{\sum_{\ell^\prime=1}^{r}e^{\eb_{\ell^\prime}\Vb\Sigmab\g}},~i=\ell,c.\label{eq:Y_ell_log_2}
\end{align}

Thus, what changes in the calculations above is in \eqref{eq:E23} and \eqref{eq:RHS24}, where we now have instead
\bea
\cb\left(\eta\right)&:= \left(\frac{1}{\frac{1}{\omega_\ell^2} + 1/\eta}\right) \underbrace{\E\left[\begin{bmatrix} \g \\ 1 \end{bmatrix} V_\ell \right]}_{=:\widetilde\cb_\ell}+\left(\frac{1}{\frac{1}{\omega_c^2} + 1/\eta}\right)\underbrace{\E\left[\begin{bmatrix} \g \\ 1 \end{bmatrix} V_c \right]}_{=:\widetilde\cb_c} 
\end{align}
and
\bea\label{eq:RHS24}
&\sum_{i\neq\{\ell,c\}\in[k]}\frac{\begin{bmatrix} \ab^T & \bb_\ell \end{bmatrix} \widetilde\A_i \begin{bmatrix} \ab \\ \bb_\ell \end{bmatrix} }{\left(\omega_i^{-2}+1/\eta\right)^2} + \frac{\begin{bmatrix} \ab^T & \bb_\ell \end{bmatrix} \widetilde\A_\ell \begin{bmatrix} \ab \\ \bb_\ell \end{bmatrix} - 2\begin{bmatrix} \ab^T & \bb_\ell \end{bmatrix} \widetilde\cb_\ell + \pib_\ell}{\left(\omega_\ell^{-2}+\eta\right)^2} 
\\
&\qquad\qquad\qquad\qquad\qquad\qquad\qquad+ \frac{\begin{bmatrix} \ab^T & \bb_\ell \end{bmatrix} \widetilde\A_c \begin{bmatrix} \ab \\ \bb_\ell \end{bmatrix} - 2\begin{bmatrix} \ab^T & \bb_\ell \end{bmatrix} \widetilde\cb_c + \pib_c}{\left(\omega_c^{-2}+1/\eta\right)^2},\nn
\end{align}
respectively. With these and following mutatis-mutandis the steps and the notation in the previous section, we find the following asymptotic expression for the magnitude of $\wh_\ell+\wh_c$:
\bea
&\twonorm{\wh_\ell+\wh_c}^2 \rP \twonorm{\Deltab^{-1}\Sigmab\Vb^T\left(\diag{\pib}-\Pib\right) \cdot\left(\nub_\ell\cdot \left(\eb_\ell-\pib_\ell\nub\right) + \nub_c\cdot \left(\eb_c-\pib_c\nub\right)\right)}^2 \nn\\
&\quad+ 
\frac{1}{\left(1/\gamma - \pibt^T\nub\right)}\cdot\left\{ (\cb_\ell+\cb_c)^T\A^{-1}\A^\prime\A^{-1}(\cb_\ell+\cb_c)- 2\left(\nub_\ell\cb_\ell+\nub_c\cb_c\right)^T\A^{-1}\left(\nub_\ell\cb_\ell+\nub_c\cb_c\right)  + \pibt_\ell\cdot\nub_\ell + \pibt_c\cdot\nub_c \right\}. \nn
\end{align}
We may now combine this with \eqref{eq:well} to conclude with the following asymptotic limits for the cross-correlations for all $\ell\neq c\in[k]$:
\bea
&\inp{\wh_\ell}{\wh_c} \rP
\nub_c \left(\eb_c-\pib_c\nub\right)^T\left(\diag{\pib}-\Pib\right)\Vb\Sigmab
\Deltab^{-2}\Sigmab\Vb^T\left(\diag{\pib}-\Pib\right)  \nub_\ell \left(\eb_\ell-\pib_\ell\nub\right) \nn
\\
&\qquad\qquad\qquad\qquad+\frac{1}{\left(1/\gamma - \pibt^T\nub\right)}\cdot\left\{ \cb^T_c\A^{-1}\A^\prime\A^{-1}\cb_\ell- 2 \nub_c\nub_\ell \cb_c^T\A^{-1}\cb_\ell\right\}\,. \label{eq:cross-corr}\\
&=\nub_c \left(\eb_c-\pib_c\nub\right)^T\left(\diag{\pib}-\Pib\right)\Vb\Sigmab
\Deltab^{-2}\Sigmab\Vb^T\left(\diag{\pib}-\Pib\right)  \nub_\ell \left(\eb_\ell-\pib_\ell\nub\right) \nn
\\
&\qquad\qquad\qquad\qquad+\frac{1}{\left(1/\gamma - \pibt^T\nub\right)}\cdot\left\{ \cb^T_c\left(\A^{-1}\A^\prime\A^{-1} - 2 \nub_c\nub_\ell \A^{-1}\right)\cb_\ell\right\}\,. \label{eq:cross-corr}
\end{align}
In matrix form, we have
\bea 
&\Sigmab_{\w,\w}\rP \diag{\nub}\left(\Iden_k-\pib\nub^T\right)\left(\diag{\pib}-\Pib\right)\Vb\Sigmab
\Deltab^{-2}\Sigmab\Vb^T\left(\diag{\pib}-\Pib\right)\left(\Iden_k-\nub\pib^T\right)\diag{\nub} \nn\\
&\quad+\frac{1}{\left(1/\gamma - \pibt^T\nub\right)}\Big\{\begin{bmatrix}\diag{\nub}\left(\diag{\pib}-\Pib\right)\Vb\Sigmab & \pibt\end{bmatrix}  \A^{-1}\A^\prime\A^{-1} \begin{bmatrix}\Sigmab\Vb^T\left(\diag{\pib}-\Pib\right)\diag{\nub}\\ \pibt^T\end{bmatrix}  \Big\}  \nn
\\&\quad-2\,\frac{1}{\left(1/\gamma - \pibt^T\nub\right)}\Big\{\diag{\nub}\begin{bmatrix}\diag{\nub}\left(\diag{\pib}-\Pib\right)\Vb\Sigmab & \pibt\end{bmatrix}  \A^{-1} \begin{bmatrix}\Sigmab\Vb^T\left(\diag{\pib}-\Pib\right)\diag{\nub}\\ \pibt^T\end{bmatrix} \diag{\nub}
 \Big\}  \nn\\
& \quad+ \frac{1}{\left(1/\gamma - \pibt^T\nub\right)}\Big\{\diag{\nub}\,\diag{\pibt}\Big\}\,.
\label{eq:ww_WLS}
\end{align}

\end{document}